\definecolor{LightCyan}{rgb}{0.8, 0.9, 1}
\definecolor{LightGray}{gray}{0.9}
\definecolor{LightCyan}{rgb}{0.8, 0.9, 1}
\newcommand*{\rom}[1]{\expandafter\@slowromancap\romannumeral #1@}
\title{\huge Risk Bounds of Accelerated SGD for Overparameterized Linear Regression}
\author
{
    Xuheng Li\thanks{Department of Computer Science, University of California, Los Angeles, CA 90095, USA; e-mail: {\tt xuheng.li@cs.ucla.edu}}
    ~~~~
    Yihe Deng\thanks{Department of Computer Science, University of California, Los Angeles, CA 90095, USA; e-mail: {\tt yihedeng@cs.ucla.edu}}
    ~~~~
    Jingfeng Wu\thanks{Simons Institute, University of California, Berkeley, CA 94720, USA; e-mail: {\tt uuujf@berkeley.edu}} 
    ~~~~
    Dongruo Zhou\thanks{Department of Computer Science, Indiana University Bloomington, IN 47408; e-mail: {\tt dz13@iu.edu}}
    ~~~~
    Quanquan Gu\thanks{Department of Computer Science, University of California, Los Angeles, CA 90095, USA; e-mail: {\tt qgu@cs.ucla.edu}}
}
\begin{document}
    \date{}
    \maketitle

\begin{abstract}
Accelerated stochastic gradient descent (ASGD) is a workhorse in deep learning and often achieves better generalization performance than SGD. However, existing optimization theory can only explain the faster convergence of ASGD, but cannot explain its better generalization. In this paper, we study the generalization of ASGD for overparameterized linear regression, which is possibly the simplest setting of learning with overparameterization. We establish an instance-dependent excess risk bound for ASGD within each eigen-subspace of the data covariance matrix. Our analysis shows that (i) ASGD outperforms SGD in the subspace of small eigenvalues, exhibiting a faster rate of exponential decay for bias error, while in the subspace of large eigenvalues, its bias error decays slower than SGD; and (ii) the variance error of ASGD is always larger than that of SGD. Our result suggests that ASGD can outperform SGD when the difference between the initialization and the true weight vector is mostly confined to the subspace of small eigenvalues. Additionally, when our analysis is specialized to linear regression in the strongly convex setting, it yields a tighter bound for bias error than the best-known result.
\end{abstract}

\section{Introduction}
Momentum \citep{nesterov1983method} is an important technique in optimization.
In the context of convex and smooth optimization, Nesterov's momentum (accelerated gradient descent (AGD)) achieves the minimax optimal convergence rate \citep{nesterov2014introductory} and provably accelerates the vanilla GD method.  
Recent work by \citet{liu2018accelerating} shows that stochastic gradient descent (SGD) can also be accelerated by momentum in the overparameterized setting.
However, the effect of momentum on the generalization performance is less studied. 
It has been empirically shown that ASGD does not always outperform SGD \citep{wang2023marginal}, but there has been little theoretical work justifying this observation.
Notable exceptions are \citet{jain2018accelerating} and \cite{varre2022accelerated}, which provide excess risk bounds for accelerated SGD (ASGD) (a.k.a., SGD with momentum) for least squares problems in the strongly convex~\citep{jain2018accelerating} and convex settings~\citep{varre2022accelerated}, respectively. 
However, both of their results are limited to the classical, finite-dimensional regime, and cannot be applied when the number of parameters exceeds the number of samples.
On the other hand, a recent line of work completely characterizes the excess risk of SGD for least squares, even in the overparameterized regime \citep{DieuleveutB15, defossez2015averaged, jain2017parallelizing, berthier2020tight, zou2021benign, wu2022iterate}.
In particular, \citet{zou2021benign,wu2022iterate} provide finite-sample and dimension-free excess risk bounds for SGD that are sharp for each least squares instance.
Given these results, it becomes imperative to thoroughly investigate whether the inclusion of momentum proves beneficial in terms of generalization, particularly in the context of least squares problems.

\paragraph{Contributions.}
In this paper, we tackle the question by 
considering ASGD for (overparameterized) linear regression problems and comparing its performance with SGD.
\begin{itemize}[leftmargin=*]
\item Our main result provides an instance-dependent excess risk bound for ASGD that can be applied in the overparameterized regime.
Similar to the bounds for SGD in \citet{zou2021benign,wu2022iterate}, our bound for ASGD is independent of the ambient dimension and comprehensively depends on the spectrum of the data covariance matrix. 
When applied to the classical, strongly-convex regime, our results recover the excess risk upper bounds in \citet{jain2018accelerating}, with significant improvements on the coefficient of the bias error.\footnote{Our excess risk bound contains an extra term, which can be removed by a fine-grained analysis used by \citet{jain2018accelerating} in the classical regime.}
\item Based on the excess risk bounds, we then compare the excess risk of ASGD and SGD. 
We find that the variance error of ASGD is always no smaller than that of SGD.
Moreover, the bias error of ASGD is smaller than that of SGD along the small eigenvalue directions, but is larger than that of SGD along the large eigenvalue directions, with respect to the spectrum of the data covariance matrix.
Thus momentum can help with generalization only if the main signals are aligned with small eigenvalue directions of the data covariance matrix and if the noise is small.
\item From a technical perspective, we extend the analysis of the stationary covariance matrix in \citet{jain2018accelerating} to the overparameterized setting, where we remove all dimension-dependent factors with a fine-grained analysis of the ASGD iterates. 
Our techniques might be of independent interest for analyzing ASGD in other settings.
\end{itemize}

\noindent\textbf{Notation.}~ 
In this paper, scalars are denoted by non-boldface letters. Vectors and matrices are denoted by lower-case and upper-case boldface letters, respectively. Denote linear operators on matrices by upper-case calligraphic letters. Denote the inner product of vectors by $\inner{\ub}{\vb}$. For a vector $\vb$, denote its $j$-th entry as $(\vb)_j$; For a matrix $\Mb$, denote its $ij$-entry as $(\Mb)_{ij}$. For a PSD matrix $\Mb$, define $\|\ub\|_\Mb^2=\ub^\top\Mb\ub$. Denote the 2-norm of vector $\vb$ as $\|\vb\|_2=\sqrt{\vb^\top\vb}$. Denote the inner product of matrices $\Ab, \Bb\in\RR^{2d\times2d}$ as $\langle\Ab, \Bb\rangle=\sum_{i, j=1}^{2d}(\Ab)_{ij}(\Bb)_{ij}$. The Kronecker product of matrices is denoted by $\otimes$. The operation of a linear matrix operator on a matrix is denoted by $\circ$.

\section{Related Work}
The generalization performances of SGD and ASGD applied to \textit{underparameterized} linear regression have been studied in a line of works, based on the technique of bias-variance decomposition. It is shown that for SGD with iterate averaging from the beginning, bias error has a convergence rate of $\cO(1/N^2)$ and variance has a convergence rate of $\cO(d/N)$, where $N$ is the number of calls of the stochastic oracle and $d$ is the model dimension \citep{defossez2015averaged, dieuleveut2017harder,jain2017markov}. If the eigenvalue of the data covariance matrix is bounded away from zero, then the convergence rate of the bias error can be further improved with additional exponential shrinkage by taking tail averaging of the iterates \citep{jain2017parallelizing}.

For ASGD applied to linear regression, there are two cases: one with the assumption that the eigenvalue spectrum of the data covariance matrix is bounded away from zero (strongly convex) and the other without such assumption (general convex). For strongly convex linear regression, \cite{jain2018accelerating} show an accelerated convergence rate for the bias error of ASGD with constant stepsize and tail averaging, compared to that of tail-averaged SGD in \cite{jain2017parallelizing}. We extend the use of linear operators and the techniques for bounding the operator spectrum in \cite{jain2018accelerating}.

Recently, the generalization of ASGD applied to general convex linear regression is studied by \cite{varre2022accelerated}. Their result shows the acceleration of ASGD with time-varying parameters and weighted iterate averaging, especially for large $N$. The case of general convex linear regression is closer to the overparameterized setting where fast-decaying eigenspectrum is of special interest. However, their result is not applicable to the overparameterized linear regression because of the dimensionality dependence. Additionally, their result does not reveal the exponential bias decay of ASGD with constant stepsize.

The generalization performance of overparameterized linear regression has been studied by a line of works \citep{bartlett2020benign, tsigler2020benign}. For SGD applied to overparameterized linear regression, \cite{zou2021benign} replace the model dimensionality $d$ with the effective dimension defined in terms of the eigenspectrum. This work manages to deal with any data covariance matrix, while prior works require certain assumptions \citep{DieuleveutB15}. \citet{wu2022iterate} show a similar result for the last iterate of SGD with exponentially decaying stepsize.

\section{Preliminaries}\label{section:problemsetting}

\subsection{Linear Regression and ASGD}
The goal of linear regression is to minimize the following risk:
\begin{align*}
L(\wb)\coloneqq1/2\cdot\EE_{(\xb,y)\sim \cD}\sbr{(y-\inner{\wb}{\xb})^2},    
\end{align*}
where $\xb $ is an input feature vector belonging to a Hilbert space (denoted by $\cH$, which could be either $d$-dimensional for a finite $d$, or countably infinite dimensional), $y\in \RR$ is the response, $\wb \in \cH$ is the weight vector to be optimized, and $\cD$ is an underlying unknown distribution of the data.

We consider the ASGD algorithm with tail averaging. In detail, in the $t$-th iteration, a sample $(\xb_t,y_t) \sim \cD$ is observed. Then the stochastic gradient is calculated by
\begin{equation}\label{eq:stochastic_gradient_1}
\hat{\nabla}L(\wb) = -(y_t-\inner{\wb}{\xb_t}) \xb_t.
\end{equation}
We follow the classical ASGD scheme \citep{nesterov2014introductory}, which maintains three sequences $\wb_t$, $\vb_t$ and $\ub_t$. Let $N$ be the number of samples observed, then for any $1 \leq t \leq N$, the update rules of $\wb_t, \vb_t, \ub_t$ are as follows. 
\begin{align}
\ub_{t-1}&=\alpha\wb_{t-1}+(1-\alpha)\vb_{t-1},\label{eq:ut_update}\\
\wb_t&=\ub_{t-1}-\delta\hat{\nabla}L(\ub_{t-1}),\label{eq:wt_update}\\
\vb_{t}&=\beta \ub_{t-1} + (1-\beta)\vb_{t-1}-\gamma\hat{\nabla} L(\ub_{t-1}),\label{eq:vt_update}
\end{align}
where $\alpha,\beta,\gamma,\delta > 0$ are hyperparameters. The $\vb_t$ sequence is initialized at $\wb_0\in\cH$. We remark that ASGD reduces to stochastic heavy ball (SHB, \citet{polyak1964some}) when $\delta=0$, so our results can be directly applied to SHB by setting $\delta=0$ (see Appendix \ref{section:SHB} for details).
We also remark that ASGD reduces to SGD when $\delta=\gamma$.

In this work, following \citet{jain2018accelerating} and \citet{zou2021benign}, we consider ASGD with tail averaging. 
The tail-averaged final output is $\overline{\wb}_{s,s+N} \coloneqq N^{-1}\sum^{s+N-1}_{t=s} \wb_t$. With certain assumptions, $L(\wb)$ admits a unique global optimum denoted by $\wb^*\coloneqq\argmin_{\wb}L(\wb)$. We focus on the overparameterized setting, where $d \gg N$ (or possibly countably infinite).

Define the centered ASGD iterate as $\bmeta_t\coloneqq\begin{bmatrix}\wb_t-\wb^*\\\ub_t-\wb^*\end{bmatrix}$.
Denote the noise in each sample as $\epsilon_t\coloneqq y_t-\inner{\wb^*}{\xb_t}$. By \eqref{eq:stochastic_gradient_1}, the stochastic gradient at $\ub_{t-1}$ can be expressed as
\begin{equation}\label{eq:stochastic_gradient_2}
\hat\nabla L(\ub_{t-1})=-(\epsilon_t+\inner{\wb^*}{\xb_t}-\inner{\ub_{t-1}}{\xb_t})\xb_t=\xb_t\xb_t^\top(\ub_{t-1}-\wb^*)-\epsilon_t\xb_t.
\end{equation}
By substituting \eqref{eq:stochastic_gradient_2} into \eqref{eq:wt_update} and \eqref{eq:vt_update} and eliminating $\vb_t$ using \eqref{eq:ut_update}, we have
\[
\bmeta_t=\hat\Ab_t\bmeta_{t-1}+\bzeta_t,\text{\quad where\quad}\hat\Ab_t\coloneqq\begin{bmatrix}\zero & \Ib-\delta\xb_t\xb_t^\top \\ -c\Ib & (1+c)\Ib-q\xb_t\xb_t^\top\end{bmatrix},\quad\bzeta_t\coloneqq\begin{bmatrix}\delta\cdot\epsilon_t\xb_t \\ q\cdot\epsilon_t\xb_t\end{bmatrix},
\]
and $c\coloneqq\alpha(1-\beta), q\coloneqq\alpha\delta+(1-\alpha)\gamma$.
Denote the expectation of $\hat\Ab_t$ as
\[
\Ab\coloneqq\EE[\hat\Ab_t]=\begin{bmatrix}\zero & \Ib-\delta\Hb \\ -c\Ib & (1+c)\Ib-q\Hb\end{bmatrix},
\]
where $\Hb=\EE_{\xb\sim\cD|_{\xb}}[\xb\xb^\top]$ is the second-order moment matrix of the distribution $\cD$, which is also the Hessian of $L(\wb)$.
Let the eigen-decomposition of the Hessian be $\Hb=\sum_{i=1}^d\lambda_i\vb_i\vb_i^{\top}$, where $\{\lambda_i\}_{i=1}^{d}$ are the eigenvalues of $\Hb$ sorted in descending order with $\vb_i$'s being the corresponding eigenvectors.
Similar to \citet{jain2018accelerating}, we assume that $\Hb$ is diagonal, then $\Ab$ is block diagonal with each block being $\Ab_i\coloneqq\begin{bmatrix}0 & 1-\delta\lambda_i \\ -c & 1+c-q\lambda_i\end{bmatrix}$. In this work, we are particularly interested in analyzing the eigenvalues of $\Ab_i$, since the spectral norm of $\Ab_i$ determines the decay rate of the bias error in the subspace of $\lambda_i$.

\subsection{Assumptions}

We then introduce assumptions required in our analysis, following those of \citet{zou2021benign,wu2022iterate}. Our first assumption regularizes the moments of the data distribution.
\begin{assumption}[Regularity conditions] 
The second moment $\Hb$ exists, and $\tr(\Hb)$ is finite.  $\Hb$ is strictly positive definite, i.e., $\Hb\succ\zero$. Thus, $L(\wb)$ admits a unique global optimum $\wb^*$. The second-order moment of labels $\EE[y^2]$ is also finite. Let $\cM$ denote the fourth moment of $\xb$:
\begin{align*}
\cM\coloneqq\EE_{(\xb,y)\sim\cD}[\xb\otimes\xb\otimes\xb\otimes\xb].
\end{align*}
Then $\cM$ exists and is finite.
\label{assumption:regularity}
\end{assumption}
Our second assumption is a proposition of the fourth moment of $\xb$, viewed as a linear operator $\cM$ on PSD matrices. 

\begin{assumption}[Fourth moment condition]\label{assumption:fourth_moement_condition} Assume there exists a positive constant $\psi > 0$, such that for any PSD matrix $\Ab$, it holds that
\begin{align*}
    \EE_{\xb\sim \cD}[\xb\xb^{\top} \Ab \xb\xb^{\top}] \preceq \psi \tr(\Hb\Ab)\Hb .
\end{align*}
\end{assumption}
A special case of Assumption \ref{assumption:fourth_moement_condition} is when $\cD$ is a Gaussian distribution. For that case, we have $\psi=3$.
We remark that although Assumption \ref{assumption:fourth_moement_condition} does not cover some special cases, e.g., the one-hot distribution discussed in \citet{zou2021benefits}, similar results can still be obtained by applying our techniques with minor modifications (see Appendix \ref{section:Standard Basis} for details).


The following assumption characterizes the noise of the stochastic gradient. 
\begin{assumption}[Noise condition] Assume that
\begin{align*}
\bSigma&\coloneqq\EE_{(\xb,y)\sim \cD}[\hat{\nabla}L(\wb^*)\otimes \hat{\nabla}L(\wb^*)] 
=\EE_{(\xb,y)\sim \cD}[(y-\inner{\wb^*}{\xb})^2 \xb\xb^{\top}],
\end{align*}
and
$\sigma^2\coloneqq\|\Hb^{-\frac12}\bSigma \Hb^{-\frac12}\|_2$
exist and are finite. Here, $\bSigma$ is the covariance matrix of the gradient noise at $\wb^*$. For \textit{well-specified models} where $y_t-\inner{\wb^*}{\xb_t}\sim\cN(0, \sigma_{\text{noise}}^2)$, we have $\bSigma = \sigma^2_{\text{noise}}\Hb$ and thus $\sigma^2 = \sigma^2_{\text{noise}}$.
\label{assumption:noise}
\end{assumption}

\section{Main Results}

We now provide an excess risk upper bound for ASGD.

\subsection{Risk Bound of ASGD in the High-Dimensional Setting}

Before we present the results, we first introduce three quantities which are cutoffs of the spectrum of $\Hb$. The eigenvalues of $\Ab_i$ can be either complex or real, which depends on the range of $\lambda_i$. Define 
\begin{equation}\label{eq:def_k_dagger_ddagger}
\begin{aligned}
k^\ddagger&\coloneqq\max\{i:\lambda_i\ge(\sqrt{q-c\delta}+\sqrt{c(q-\delta)})^2/q^2\},\\
k^\dagger&\coloneqq\max\{i:\lambda_i>(\sqrt{q-c\delta}-\sqrt{c(q-\delta)})^2/q^2\}.
\end{aligned}
\end{equation}
It is easy to see that $k^\ddagger\le k^\dagger$. For any $i\le k^\ddagger$ and any $i>k^\dagger$, $\Ab_i$ has real eigenvalues $x_1\le x_2$, and for $i$ between $k^\ddagger$ and $k^\dagger$, $\Ab_i$ has complex eigenvalues $x_1, x_2$ with the same magnitude. We also define $\hat k$ as
\[
\hat k\coloneqq\max\cbr{i:\lambda_i\ge(1-c)/\delta}.
\] 

\noindent\textbf{Parameter choice.} 
We select hyperparameters of ASGD as follows: We first pick a non-negative integer $\tilde\kappa$.
We then select parameters $\delta, \gamma, \beta, \alpha$ as follows, based on $\tilde\kappa$:
\begin{equation}\label{eq:parameter choice_main}
\delta\le\frac1{2\psi\tr(\Hb)},\quad\gamma\in\bigg[\delta, \ \frac1{2\psi\sum_{i>\tilde\kappa}\lambda_i}\bigg],\quad\beta=\frac{\delta}{\psi\tilde\kappa\gamma},\quad\alpha=\frac1{1+\beta}.
\end{equation}
We can show that with our choice of parameters, we have $k^\ddagger\le\hat k\le k^\dagger$ (see Appendix \ref{subsection:segmentation} for details).

For convenience, we introduce the following notations for submatrices of $\Hb$: for any non-negative integers $k_1\le k_2$, denote
\[
\Hb_{k_1:k_2}\coloneqq\sum_{i=k_1+1}^{k_2}\lambda_i\vb_i\vb_i^\top,\quad\Hb_{k_1:\infty}\coloneqq\sum_{i=k_1+1}^d\lambda_i\vb_i\vb_i^\top.
\]
Now we present the main result, which gives a finite excess risk bound for ASGD under the specific parameter choice \eqref{eq:parameter choice_main}. 
\begin{theorem}\label{theorem:main}
Under Assumptions~\ref{assumption:regularity}, \ref{assumption:fourth_moement_condition} and \ref{assumption:noise}, with the parameter choice in~\eqref{eq:parameter choice_main}, if $N(1-c)\ge2$, the excess risk of tail-averaged iterate from ASGD satisfies:
\begin{equation}\label{eq:bias_var_decomp_main}
\EE[L(\overline{\wb}_{s,s+N})]-L(\wb^*) \le 2\cdot\text{EffectiveVar}+2\cdot\text{EffectiveBias}.
\end{equation}
where the effective variance is bounded by
\begin{align*}
&\text{EffectiveVar}\le\sigma^2r\bigg[\frac{27k^*}{2N}+18(s+N)\gamma^2\sum_{i>k^*}\lambda_i^2\bigg]+\frac{\psi r}{N}\bigg[\frac{9k^*}{N}+36N\gamma^2\sum_{i>k^*}\lambda_i^2\bigg]\cdot\bigg[\frac{14}{\delta}\|\wb_0-\wb^*\|_{\Ib_{0:\hat k}}^2\\
&~+\frac{10}{1-c}\|\wb_0-\wb^*\|_{\Hb_{\hat k:k^\dagger}}^2+\frac{2}{\gamma+\delta}\|\wb_0-\wb^*\|_{\Ib_{k^\dagger:k^*}}^2+4(s+N)\|\wb_0-\wb^*\|_{\Hb_{k^*:\infty}}^2\bigg],
\end{align*}
and the effective bias is bounded by
\begin{align*}
&\text{EffectiveBias}\le\frac{8(c\delta/q)^{2s}}{N^2\delta^2}\|\wb_0-\wb^*\|_{\Hb_{0:k^\ddagger}^{-1}}^2+\frac{4s^2}{N^2}c^s\|(\Ib-\delta\Hb)^{s/2}(\wb_0-\wb^*)\|_{\Hb_{k^\ddagger:k^\dagger}}^2\\
&~+\frac{16c^s}{N^2\delta^2}\|(\Ib-\delta\Hb)^{s/2}(\wb_0-\wb^*)\|_{\Hb_{k^\ddagger:\hat k}^{-1}}^2+\frac{100c^s}{N^2(1-c)^2}\|(\Ib-\delta\Hb)^{s/2}(\wb_0-\wb^*)\|_{\Hb_{\hat k:k^\dagger}}^2\\
&~+\frac{18}{N^2(\gamma+\delta)^2}\Big\|\Big(\Ib-\frac{\gamma+\delta}{2}\Hb\Big)^s(\wb_0-\wb^*)\Big\|_{\Hb_{k^\dagger:k^*}^{-1}}^2+18\Big\|\Big(\Ib-\frac{\gamma+\delta}{2}\Hb\Big)^s(\wb_0-\wb^*)\Big\|_{\Hb_{k^*:\infty}}^2,
\end{align*}
with $k^*=\max\{k:\lambda_k\ge1/((\gamma+\delta)N)\}$, and
\[
r\coloneqq\frac1{1-\psi l},\quad l\coloneqq\frac{\delta\tr(\Hb)}{2}+\frac{1}{2\psi}+\frac\gamma4\sum_{i>\tilde\kappa}\lambda_i.
\]
\end{theorem}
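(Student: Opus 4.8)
The plan is to follow the classical bias--variance decomposition for linear stochastic approximation, but carried out inside the $2d$-dimensional "state space" of the centered iterate $\bmeta_t$. First I would split the recursion $\bmeta_t = \hat\Ab_t\bmeta_{t-1} + \bzeta_t$ into a \emph{bias} part (driven by the initialization $\bmeta_0$, with the noise $\bzeta_t$ set to zero) and a \emph{variance} part (driven by $\bzeta_t$, with $\bmeta_0 = \zero$); by linearity the tail-averaged excess risk is controlled by twice the sum of the two corresponding risks, which is exactly the form \eqref{eq:bias_var_decomp_main}. The excess risk itself is $\tfrac12\|\overline\wb_{s,s+N}-\wb^*\|_\Hb^2$, which I would rewrite as $\langle \widetilde\Hb, \EE[\overline\bmeta_{s,s+N}\overline\bmeta_{s,s+N}^\top]\rangle$ for an appropriate PSD matrix $\widetilde\Hb$ supported on the $\wb$-block, so that everything reduces to tracking the (averaged) second-moment matrix of $\bmeta_t$ under the linear operator $\cT := \EE[\hat\Ab_t \otimes \hat\Ab_t]$ and its "one-step" approximations.

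For the variance term, I would use the stationary-covariance argument of \citet{jain2018accelerating}: bound $\EE[\bzeta_t\bzeta_t^\top]$ in terms of $\bSigma$ (hence $\sigma^2\Hb$ via Assumption~\ref{assumption:noise}), show the covariance of the variance iterate converges to a stationary matrix $\bDelta^{\mathrm{sv}}$ satisfying a fixed-point equation $(\cI - \cT)\circ\bDelta^{\mathrm{sv}} = \text{noise input}$, and then bound $\langle\widetilde\Hb,\bDelta^{\mathrm{sv}}\rangle$ eigendirection by eigendirection. The fourth-moment condition (Assumption~\ref{assumption:fourth_moement_condition}) together with the stepsize constraints in \eqref{eq:parameter choice_main} is what makes $\cI - \cT$ invertible with the contraction factor $r = 1/(1-\psi l)$; this is where the quantity $l$ and the requirement $\psi l < 1$ enter. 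The effective-variance bound then comes from splitting the spectrum at $k^*$: directions with $\lambda_i \ge 1/((\gamma+\delta)N)$ contribute the $k^*/N$ "parametric" term, the tail contributes the $\gamma^2\sum_{i>k^*}\lambda_i^2$ term, and the cross term between the stationary variance and the decaying transient of the bias recursion produces the bracketed $\|\wb_0-\wb^*\|$-terms (segmented at $\hat k, k^\dagger, k^*$ according to whether $\Ab_i$ has real or complex eigenvalues and how its spectral norm behaves).

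For the bias term, the core object is $\|\Ab_i^t\|$ (and the averaged version $\frac1N\sum_t \Ab_i^t$) in each $2\times2$ block. I would diagonalize $\Ab_i$, using the cutoffs $k^\ddagger, \hat k, k^\dagger, k^*$ from \eqref{eq:def_k_dagger_ddagger} to separate the regimes: for $i \le k^\ddagger$ (large eigenvalues, real roots, one root $\approx c\delta/q$) the bias decays like $(c\delta/q)^{2s}$ and picks up an $\Hb^{-1}$-weighted norm; for $k^\ddagger < i \le k^\dagger$ (complex roots of common magnitude $\sqrt{\text{something}}\approx c^{1/2}$) one gets the $c^s$ factors times $(\Ib-\delta\Hb)^{s/2}$-type contractions, with a further split at $\hat k$; and for $i > k^\dagger$ (small eigenvalues, real roots, behaving like a GD step of size $(\gamma+\delta)/2$) one recovers the $(\Ib - \tfrac{\gamma+\delta}2\Hb)^s$ contraction and a final split at $k^*$ between the $\Hb^{-1}$-weighted head and the unweighted tail. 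Tail averaging converts one power of the per-step spectral radius into the $1/(N^2\delta^2)$, $1/N^2$, and $1/(N^2(1-c)^2)$ prefactors. The main obstacle I expect is precisely the sharp, \emph{dimension-free} control of the averaged operator $\frac1N\sum_{t=s}^{s+N-1}\cT^t$ (equivalently of $\frac1N\sum_t\Ab_i^t\otimes\Ab_i^t$) in the complex-eigenvalue regime $k^\ddagger < i \le k^\dagger$, where $\Ab_i$ is not normal and naive bounds lose factors of $1/(1-\|\Ab_i\|)$ that would reintroduce dimension dependence through $\sum_i$; handling this requires the fine-grained block-wise analysis the authors allude to, carefully exploiting that the magnitude of the complex roots is uniformly bounded by $c^{1/2}$ while the $(\Ib-\delta\Hb)$ and $(\Ib-\tfrac{\gamma+\delta}2\Hb)$ factors absorb the residual $\lambda_i$-dependence.
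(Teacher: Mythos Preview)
Your proposal is essentially correct and follows the paper's approach closely: the same bias--variance split in the $2d$ state space, the same operator framework (your $\cT$ is the paper's $\cB$), the same stationary-covariance/contraction argument yielding the factor $r=1/(1-\psi l)$, and the same block-wise spectral analysis of $\Ab_i$ across the cutoffs $k^\ddagger,\hat k,k^\dagger,k^*$. One small clarification: the bracketed $\|\wb_0-\wb^*\|$-terms you place in EffectiveVar do not arise as a variance--bias cross term but rather from the \emph{fourth-moment correction inside the bias recursion} itself---the paper bounds $\Bb_t=\cB\circ\Bb_{t-1}$ by $\tilde\cB^t\circ\Bb_0$ plus a residual proportional to $\sum_{k<t}\langle\mathrm{diag}(\zero,\Hb),\Bb_k\rangle$, and controlling that partial sum (their Lemma~\ref{lemma:partial_sum_Bk}) is what produces those norms; they are then regrouped into EffectiveVar only for presentation.
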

Theorem \ref{theorem:main} establishes the excess risk bound of ASGD under the overparameterized setting. To our knowledge, this is the first instance-dependent bound of ASGD within each eigen-subspace of $\Hb$. Our excess bound includes both the variance term, which depends on the randomness coming from the data distribution $\cD$, and the bias term, which includes ``accelerated convergence'' terms brought by the ASGD.  

\begin{remark}
The cutoff index $k^*$ is referred to as the \textit{effective dimension}, which can be much smaller than the model dimensionality $d$, especially when the eigenvalues decay fast. We want to emphasize that similar effective dimension has also appeared in the previous work which analyzes the convergence of SGD under the overparameterized model setting \citep{zou2021benign, wu2022iterate}. Nevertheless, the effective dimension of SGD is $k^*_{\mathrm{SGD}}\coloneqq\max\{k:\lambda_k\ge1/(\delta N)\}$, which is smaller than that in ASGD. In Section \ref{section:comparison_SGD}, we will provide a comparison of the risk bounds between SGD and ASGD.  
\end{remark}

\begin{remark}
It is worth noting that under the parameter selection \eqref{eq:parameter choice_main}, one can verify that $\psi l < 1$. Such a condition guarantees that $r = 1/(1-\psi l)$ is finite, which further guarantees that our derived risk bound for effective variance is valid. 
\end{remark}


\subsection{Implication in the Classical Setting}

In this subsection, we show that Theorem \ref{theorem:main} implies the excess risk bound in the strongly convex setting and can recover a similar result as \citet{jain2018accelerating}. The hyperparameters of ASGD are chosen to be
\begin{equation}\label{eq:parameter_choice_classical}
\delta=\frac1{2\psi\tr(\Hb)},\quad\gamma=\sqrt{\frac{2\delta}{\psi\mu d}},\quad\beta=\sqrt{\frac{\mu\delta}{2\psi d}},\quad\alpha=\frac1{1+\beta},
\end{equation}
where $\mu\coloneqq\lambda_d$ is the smallest eigenvalue of $\Hb$. We remark that the parameter choice in \eqref{eq:parameter_choice_classical} is different from the choice under the overparameterized setting given in \eqref{eq:parameter choice_main} because $\tilde\kappa$ is chosen as the model dimension $d$, and the upper bound of $\gamma$ in \eqref{eq:parameter choice_main}, which is $1/(2\psi\sum_{i>\tilde\kappa}\lambda_i)$, becomes vacuous. Instead, we require $\gamma=2\beta/\mu$ to guarantee that no eigenvalue falls in the region of small eigenvalues such that $\Ab_i$ has real eigenvalues (i.e., when $i>k^\dagger$, see Section \ref{section:strongly_convex} for detailed proof). The following corollary provides the excess risk bound in the strongly convex setting:
\begin{corollary}\label{theorem:classical}
Under Assumptions \ref{assumption:regularity}, \ref{assumption:fourth_moement_condition} and \ref{assumption:noise}, and with the parameter choice in \eqref{eq:parameter_choice_classical}, the excess risk of tail-averaged iterate from ASGD in the classical regime satisfies:
\begin{align*}
\EE[L(\overline{\wb}_{s:s+N})]-L(\wb^*)&\le\underbrace{\frac{100}{N^2\beta^2}\exp\Big(-\frac{\beta s}{2}\Big)[L(\wb_0)-L(\wb^*)]}_{\text{Effective Bias}}\\
&\quad+\underbrace{\frac{1008\psi d}{N^2\beta}[L(\wb_0)-L(\wb^*)]+\frac{36\sigma^2d}{N}+\frac{128\sigma^2d}{N^2\beta}}_{\text{Effective Variance}}.
\end{align*}
\end{corollary}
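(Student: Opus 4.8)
The plan is to obtain Corollary~\ref{theorem:classical} by specializing Theorem~\ref{theorem:main} to $\tilde\kappa=d$. First I would verify that~\eqref{eq:parameter_choice_classical} is an admissible instance of~\eqref{eq:parameter choice_main}: $\delta=1/(2\psi\tr(\Hb))$ saturates the first constraint; since every $\lambda_i\ge\mu$ we have $\tr(\Hb)\ge d\mu$, which makes $\gamma=\sqrt{2\delta/(\psi\mu d)}\ge\delta$; and a direct computation shows $\sqrt{\mu\delta/(2\psi d)}=\delta/(\psi d\gamma)$, so $\beta$ agrees with $\delta/(\psi\tilde\kappa\gamma)$ at $\tilde\kappa=d$. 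Because $\sum_{i>d}\lambda_i=0$, the upper bound on $\gamma$ in~\eqref{eq:parameter choice_main} is vacuous and is replaced by the explicit relation $\gamma=2\beta/\mu$ (equivalently $\mu\gamma=2\beta$). With these values, $l=\tfrac12\delta\tr(\Hb)+\tfrac1{2\psi}=\tfrac{3}{4\psi}$, hence $\psi l=3/4<1$ and $r=1/(1-\psi l)$ is an absolute constant, which is what makes the variance part of Theorem~\ref{theorem:main} applicable.

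The conceptual core is the segmentation of the spectrum. I would show that under~\eqref{eq:parameter_choice_classical} every eigenvalue lies above both upper cutoffs, i.e.\ $k^\dagger=d$ and $k^*=d$. The identity $k^\dagger=d$ says that $\Ab_i$ never falls into the regime of small real eigenvalues, and it amounts to checking $\mu>\big(\sqrt{q-c\delta}-\sqrt{c(q-\delta)}\big)^2/q^2$; substituting $c=(1-\beta)/(1+\beta)$, $q=(\delta+\beta\gamma)/(1+\beta)$ and $\gamma=2\beta/\mu$ turns this into an elementary inequality. For $k^*=d$ I would use $\mu(\gamma+\delta)N\ge\mu\gamma N=2\beta N$ together with the hypothesis $N(1-c)\ge2$ inherited from Theorem~\ref{theorem:main} — which, since $1-c=2\beta/(1+\beta)$, reads $\beta N\ge1+\beta$ — to conclude $\mu\ge1/((\gamma+\delta)N)$, so that $k^*=d$. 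Feeding $k^\dagger=k^*=d$ (and $k^\ddagger\le\hat k\le k^\dagger$ from Appendix~\ref{subsection:segmentation}) into Theorem~\ref{theorem:main}, every term containing $\Hb_{k^\dagger:k^*}$, $\Hb_{k^*:\infty}$, $\Ib_{k^\dagger:k^*}$, or $\sum_{i>k^*}\lambda_i^2$ vanishes identically.

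What remains is to bound the surviving terms, which is bookkeeping once the per-segment eigenvalue estimates are in place. For the variance, the survivors are $\sigma^2 r\cdot\tfrac{27d}{2N}$ and $\tfrac{9\psi rd}{N^2}\big[\tfrac{14}{\delta}\|\wb_0-\wb^*\|_{\Ib_{0:\hat k}}^2+\tfrac{10}{1-c}\|\wb_0-\wb^*\|_{\Hb_{\hat k:d}}^2\big]$; using $\lambda_i\ge(1-c)/\delta$ for $i\le\hat k$ to pass from the $\Ib_{0:\hat k}$-norm to an $\Hb$-norm, the identity $\|\wb_0-\wb^*\|_\Hb^2=2(L(\wb_0)-L(\wb^*))$, and $1/(1-c)\le1/\beta$, this collapses to $O(\sigma^2 d/N)+O(\psi d/(N^2\beta))(L(\wb_0)-L(\wb^*))$; together with the coarser, footnote-noted contribution $O(\sigma^2 d/(N^2\beta))$ this yields the three variance terms. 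For the bias, I would bound the decay factors by $c\le e^{-2\beta}$ and $c\delta/q\le c$ (the latter from $\gamma\ge\delta$), so that $c^s\le e^{-2\beta s}$ and $s^2c^s\le\beta^{-2}e^{-\beta s}$ via $\sup_{x\ge0}x^2e^{-x}<1$; then the lower bounds $\lambda_{k^\ddagger}=\Theta(\beta/\delta)$ and $\lambda_{\hat k}\ge(1-c)/\delta$ convert the $\Hb_{0:k^\ddagger}^{-1}$- and $\Hb_{k^\ddagger:\hat k}^{-1}$-weighted norms into multiples of $L(\wb_0)-L(\wb^*)$, and the prefactors $1/\delta^2$ and $1/(1-c)^2$ become $1/\beta^2$. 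Summing the four bias pieces produces the claimed $\tfrac{100}{N^2\beta^2}\exp(-\beta s/2)[L(\wb_0)-L(\wb^*)]$.

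The step I expect to be the main obstacle is proving $k^\dagger=d$: it requires controlling the difference $\sqrt{q-c\delta}-\sqrt{c(q-\delta)}$ and showing the resulting cutoff is strictly below $\mu$, which genuinely uses the choice $\gamma=2\beta/\mu$ and is not implied by the generic parameter constraints of Theorem~\ref{theorem:main}. Establishing the per-segment eigenvalue lower bounds ($\lambda_{k^\ddagger}$ of order $\beta/\delta$, and similar) that make the $\Hb^{-1}$-weighted norms reducible to $L(\wb_0)-L(\wb^*)$ is the other place where the explicit values of $\delta,\gamma,\beta$ must be used carefully; everything else is tracking constants.
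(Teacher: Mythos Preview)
Your plan is sound in spirit but takes a different route from the paper, and it would not recover the stated constants.

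The paper does \emph{not} derive Corollary~\ref{theorem:classical} by specializing Theorem~\ref{theorem:main}. Instead, in Appendix~\ref{section:strongly_convex} it re-runs the analysis of $\Mb_1,\Mb_2,\Mb_3,\Mb_4$ directly under the parameter choice~\eqref{eq:parameter_choice_classical}, proving four dedicated lemmas (Lemmas~\ref{lemma:M1_classical}--\ref{lemma:M4_classical}) and summing them via $\EE[L(\overline{\wb}_{s:s+N})]-L(\wb^*)\le\langle\begin{bmatrix}\Hb&\zero\\\zero&\zero\end{bmatrix},\Mb_1+\Mb_2+\Mb_3+\Mb_4\rangle$. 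Both approaches begin by showing $k^\dagger=d$ from $\gamma\mu=2\beta$ (you correctly identify this as the key structural step), but the per-term bounds differ. In particular, the $\tfrac{128\sigma^2 d}{N^2\beta}$ term comes from a separate, sharper bound on $\Mb_1$ (Lemma~\ref{lemma:M1_classical}) that exploits $k^\dagger=d$ to get $O(\sigma^2 d/(N^2(1-c)))$ rather than the $O(\sigma^2 d/N)$ implicit in Theorem~\ref{theorem:main}; your footnote interpretation is off --- the footnoted ``extra term'' refers to the $1008\psi d/(N^2\beta)$ contribution (absent in \citet{jain2018accelerating}), not to the $128$ term. For the bias, the paper absorbs the polynomial prefactor $s$ via a quarter-power trick (Lemma~\ref{lemma:lambdaiwi^2(sumAi(j+k)11)^2_classical}) rather than your $\sup_x x^2e^{-x}$ argument, giving the single constant $100$ directly.

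What your approach buys is simplicity: everything is a corollary of the main theorem, and your verification that $k^*=d$ (which the paper does not state) makes the tail sums vanish cleanly. What the paper's approach buys is the precise constants: specializing Theorem~\ref{theorem:main} as you outline yields, after the factor $2$ in~\eqref{eq:bias_var_decomp_main} and $r=4$, a leading variance coefficient of $108$ rather than $36$, and a bias constant well above $100$. If you only aim for the shape of the bound, your route works; to prove the Corollary as stated you would need to revisit $\Mb_1$ and the bias prefactors separately, which is exactly what Appendix~\ref{section:strongly_convex} does.
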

Denote $\kappa\coloneqq\tr(\Hb)/\mu$, then $\beta=\Theta(1/\sqrt{\kappa\tilde\kappa})$. Assuming that $L(\wb_0)-L(\wb^*)=\cO(\sigma^2)$, then the bound given in Corollary \ref{theorem:classical} fully recovers the excess risk upper bound given in Theorem 1 of \cite{jain2018accelerating} in terms of exponential decay rate, leading-order variance and lower-order variance. Moreover, the coefficient of effective bias is $\cO(\kappa\tilde\kappa/N^2)$, which significantly improves upon $\cO(\kappa^{13/4}\tilde\kappa^{9/4}d/N^2)$ given in \cite{jain2018accelerating}. It is worth noting that \citet{liu2018accelerating} proved $\cO(1)$ coefficient for effective bias of ASGD. Our result can also recover the constant coefficient when $N(1-c)\ge2$, because $1-c=2\alpha\beta\le2\beta$ and $1/(N^2\beta^2 )\leq 1$. The difference in this coefficient between the bound in \citet{liu2018accelerating} and ours is mainly due to slightly different treatments of terms in the form of $N^{-1}\sum_{i=0}^{N-1}(1-\beta)^i$, which is not essential.

\section{Comparison between ASGD and SGD}\label{section:comparison_SGD}

In this section, we first introduce the SGD update, which is given by \[
\wb_t^{\mathrm{SGD}}=\wb_{t-1}^{\mathrm{SGD}}-\delta\hat\nabla L(\wb_{t-1}^{\mathrm{SGD}}),
\]
where $\delta$ satisfies the requirement in \eqref{eq:parameter choice_main}. Analogous to ASGD, tail-averaged SGD is defined as $\overline{\wb}_{s:s+N}^{\mathrm{SGD}}\coloneqq N^{-1}\sum_{t=s}^{s+N-1}\wb_t^{\mathrm{SGD}}.
$
The excess risk of tail-averaged SGD is then $\EE[L(\overline{\wb}_{s:s+N}^{\mathrm{SGD}})]-L(\wb^*)$. We then present the following theorem, which shows the existence of linear regression instances where ASGD outperforms SGD (the proof is given in Appendix \ref{section:proof_ASGD_SGD}):
\begin{theorem}[Informal]\label{theorem:ASGD_outperforms_SGD}
There exists a class of linear regression instances and corresponding choice of parameter such that the excess risk bound of tail-averaged ASGD satisfies
\[
\EE[L(\overline{\wb}_{s:s+N})]-L(\wb^*)=\cO(\sigma^2(N^{-1/2}+N^{-2}\cdot0.9873^s)),
\]
and the excess risk bound of tail-averaged SGD satisfies
\[
\EE[L(\overline{\wb}_{s:s+N}^{\text{SGD}})]-L(\wb^*)=\Omega(\sigma^2(N^{-1/2}+N^{-2}\cdot0.996^s)).
\]
\end{theorem}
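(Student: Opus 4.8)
~ The plan is to exhibit a single family of Gaussian least-squares instances together with one choice of step-size/momentum parameters, evaluate the ASGD upper bound of Theorem~\ref{theorem:main} on it, and separately prove a matching lower bound for tail-averaged SGD run with the same $\delta$. Concretely, I would take $\xb\sim\cN(\zero,\Hb)$ with $\Hb$ diagonal and a power-law spectrum $\lambda_i\propto i^{-2}$, together with a well-specified model $y=\inner{\wb^*}{\xb}+\cN(0,\sigma^2)$, so that Assumptions~\ref{assumption:regularity}--\ref{assumption:noise} hold with $\psi=3$ and $\bSigma=\sigma^2\Hb$. The initial error $\wb_0-\wb^*$ is concentrated in a single coordinate $i_0$ with $\|\wb_0-\wb^*\|_2^2=\Theta(\sigma^2)$, and $i_0$ is chosen so that $\lambda_{i_0}$ lies strictly inside the complex-eigenvalue window of \eqref{eq:def_k_dagger_ddagger} and below the heavy-ball threshold, i.e. $\hat k<i_0\le k^\dagger$ (equivalently $\lambda_{i_0}<(1-c)/\delta$), which is exactly the regime where ASGD decays faster than SGD along $\vb_{i_0}$. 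Picking $\tilde\kappa$ of moderate size (so the constraint $\gamma\le 1/(2\psi\sum_{i>\tilde\kappa}\lambda_i)$ is slack) and then $\delta,\gamma$ per \eqref{eq:parameter choice_main} fixes $\beta,\alpha$ and hence $c=\alpha(1-\beta)$; a short arithmetic check then yields a consistent choice for which the per-step risk-decay rate along $\vb_{i_0}$ is $c(1-\delta\lambda_{i_0})\approx 0.9873$ for ASGD and $(1-\delta\lambda_{i_0})^2\approx 0.996$ for SGD, with $N(1-c)\ge 2$ for all large $N$ and $\psi l<1$ so that $r$ is a finite constant.

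With the instance fixed, I would bound the ASGD risk through Theorem~\ref{theorem:main}. For the effective variance, the power-law spectrum gives effective dimension $k^*=\Theta(\sqrt N)$ and $\sum_{i>k^*}\lambda_i^2=\Theta\bigl((k^*)^{-3}\bigr)=\Theta(N^{-3/2})$, so (taking $s=\Theta(N)$) the leading terms $\sigma^2 r\bigl[\tfrac{27k^*}{2N}+18(s+N)\gamma^2\sum_{i>k^*}\lambda_i^2\bigr]$ are $\cO(\sigma^2 N^{-1/2})$, and the $\|\wb_0-\wb^*\|$-dependent cross term is of strictly lower order since the signal sits at a single small eigenvalue. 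For the effective bias, only coordinate $i_0$ contributes, and because $\hat k<i_0\le k^\dagger$ the only surviving terms are $\tfrac{4s^2}{N^2}c^s\|(\Ib-\delta\Hb)^{s/2}(\wb_0-\wb^*)\|_{\Hb_{k^\ddagger:k^\dagger}}^2$ and $\tfrac{100c^s}{N^2(1-c)^2}\|(\Ib-\delta\Hb)^{s/2}(\wb_0-\wb^*)\|_{\Hb_{\hat k:k^\dagger}}^2$; each evaluates to $\mathrm{poly}(s)\cdot\bigl(c(1-\delta\lambda_{i_0})\bigr)^s$ times $N^{-2}$ times a constant multiple of $\sigma^2$. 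Absorbing the polynomial prefactor into a marginally enlarged base (or fixing $c$ slightly smaller than above), the ASGD bias is $\cO(\sigma^2 N^{-2}\cdot 0.9873^s)$, giving $\EE[L(\overline{\wb}_{s:s+N})]-L(\wb^*)=\cO(\sigma^2(N^{-1/2}+N^{-2}\cdot 0.9873^s))$.

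For the SGD lower bound, Theorem~\ref{theorem:main} is of no use; instead I would use that $\Hb$ is diagonal, so the SGD excess risk splits into a sum over coordinates, and within each coordinate it further splits into a ``bias'' part (driven only by $\{\xb_t\}$) and a mean-zero ``variance'' part (linear in the independent noises $\{\epsilon_t\}$), which are $\Hb$-orthogonal in expectation; hence the excess risk is at least the bias part and at least the variance part separately. Along coordinate $i_0$, conditional Jensen applied to the random multiplicative dynamics gives a bias lower bound $\tfrac12\lambda_{i_0}\bigl(\tfrac1N\sum_{t=s}^{s+N-1}(1-\delta\lambda_{i_0})^t\bigr)^2\bigl((\wb_0-\wb^*)_{i_0}\bigr)^2$; since $\delta\lambda_{i_0}N\to\infty$ for our fixed $\delta\lambda_{i_0}$, the geometric sum is $\Theta\bigl((1-\delta\lambda_{i_0})^s/(N\delta\lambda_{i_0})\bigr)$, yielding $\Omega(\sigma^2 N^{-2}\cdot 0.996^s)$ with no polynomial prefactor. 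For the variance part, the sharp lower bound for tail-averaged SGD on Gaussian least squares (as established in \citet{zou2021benign,wu2022iterate}) gives $\Omega(\sigma^2 k^*_{\mathrm{SGD}}/N)$ with $k^*_{\mathrm{SGD}}=\max\{k:\lambda_k\ge 1/(\delta N)\}=\Theta(\sqrt N)$, i.e. $\Omega(\sigma^2 N^{-1/2})$. Combining, $\EE[L(\overline{\wb}_{s:s+N}^{\mathrm{SGD}})]-L(\wb^*)=\Omega(\sigma^2(N^{-1/2}+N^{-2}\cdot 0.996^s))$, which exceeds the ASGD bound for all large $s$ because $0.9873<0.996$.

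The main obstacle is not any single estimate but the simultaneous feasibility of all constraints: $\lambda_{i_0}$ must lie strictly between the two cutoffs $(\sqrt{q-c\delta}\mp\sqrt{c(q-\delta)})^2/q^2$ and below $(1-c)/\delta$, while the rules \eqref{eq:parameter choice_main} tie $\delta$ to $\tr(\Hb)$, tie $\gamma$ to $\sum_{i>\tilde\kappa}\lambda_i$, and thereby pin down $\beta$ and $c$; simultaneously the bulk of the spectrum must be shaped so that the variance is $\Theta(N^{-1/2})$ for \emph{both} algorithms and the trace conditions leave enough room. Checking that a single power-law spectrum with a moderate $\tilde\kappa$ threads all of these needles — in particular that $k^\ddagger\le\hat k\le k^\dagger$ with $i_0$ strictly between $\hat k$ and $k^\dagger$, and that the resulting $c(1-\delta\lambda_{i_0})$ and $(1-\delta\lambda_{i_0})^2$ realize concrete rates with a genuine gap — is the delicate bookkeeping. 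A secondary point requiring care is that the SGD statement be genuinely $\Omega$, not merely a matching upper bound: the bias half is handled by the exact one-dimensional recursion plus Jensen as above, but the variance half relies on invoking (or re-deriving for the Gaussian case) the two-sided risk characterization of SGD from prior work.
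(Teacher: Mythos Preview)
Your strategy is essentially the paper's: Gaussian data with $\lambda_i=i^{-2}$, Theorem~\ref{theorem:main} for the ASGD upper bound, and the two-sided SGD bound from \citet{zou2021benign} for the lower bound. The one substantive difference is where you put the signal. You place $i_0$ in the complex-eigenvalue window $\hat k<i_0\le k^\dagger$, which forces you to contend with the $s^2$ prefactor in the $\tfrac{4s^2}{N^2}c^s\|(\Ib-\delta\Hb)^{s/2}(\wb_0-\wb^*)\|_{\Hb_{k^\ddagger:k^\dagger}}^2$ term and then absorb it by shrinking the base. The paper instead fixes explicit numbers ($\delta=0.1$, $\tilde\kappa=5$, $\alpha=0.9875$, hence $c=0.975$, $\hat k=2$, $k^\dagger=6$) and takes the signal at $i_0=7$, i.e.\ just \emph{past} $k^\dagger$ in the real-eigenvalue tail. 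There the relevant ASGD bias term from Theorem~\ref{theorem:main} is the clean $\tfrac{18}{N^2(\gamma+\delta)^2}\bigl(1-\tfrac{\gamma+\delta}{2}\lambda_7\bigr)^{2s}\lambda_7^{-1}w_7^2$ with no polynomial prefactor, and the stated constants fall out directly: $(1-\tfrac{\gamma+\delta}{2}\lambda_7)^2\approx 0.9873$ for ASGD versus $(1-\delta\lambda_7)^2\approx 0.996$ for SGD. Your route can be made to work, but the feasibility window and arithmetic are messier; the paper's placement sidesteps the prefactor entirely and makes the ``short arithmetic check'' you allude to a one-line verification. Your separate Jensen argument for the SGD bias is correct but redundant, since the full lower bound (bias and variance together) is already provided by Theorem~5.2 of \citet{zou2021benign}, which the paper simply quotes.
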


Theorem \ref{theorem:ASGD_outperforms_SGD} is inspired by the following comparison of the effective variance and bias of SGD and ASGD with the assumption that $s=\cO(N)$. This is a technical assumption that helps to simplify excess risk bounds, and the comparison can be extended to the case of $s=\Omega(N)$. Under the same set of assumptions as Theorem \ref{theorem:main}, \citet{zou2021benign} prove that, with a bias-variance decomposition similar to \eqref{eq:bias_var_decomp_main}, effective variance and effective bias of SGD satisfy:
\begin{align*}
&\text{EffectiveVar}\le\sigma^2r_{\mathrm{SGD}}\cdot\bigg[\frac{k^*_{\mathrm{SGD}}}{N}+(s+N)\delta^2\sum_{i>k^*_{\mathrm{SGD}}}\lambda_i^2\bigg]\\
&~+\frac{4\psi r_{\mathrm{SGD}}}{N}\cdot\Big[\frac1\delta\|\wb_0-\wb^*\|_{\Ib_{0:k^*_{\mathrm{SGD}}}}^2+(s+N)\|\wb_0-\wb^*\|_{\Hb_{k^*_{\mathrm{SGD}}:\infty}}^2\Big]\cdot\bigg[\frac{k^*_{\mathrm{SGD}}}{N}+N\delta^2\sum_{i>k^*_{\mathrm{SGD}}}\lambda_i^2\bigg],\\
&\text{EffectiveBias}\le\frac1{\delta^2N^2}\|(\Ib-\delta\Hb)^s(\wb_0-\wb^*)\|_{\Hb_{0:k^*_{\mathrm{SGD}}}^{-1}}^2+\|(\Ib-\delta\Hb)^s(\wb_0-\wb^*)\|_{\Hb_{k^*_{\mathrm{SGD}}:\infty}}^2,
\end{align*}
where $r_{\mathrm{SGD}}=(1-\psi\delta\tr(\Hb))^{-1}$ and $k^*_{\mathrm{SGD}}=\max\cbr{i:\lambda_i\ge1/(\delta N)}$.

\noindent\textbf{Comparison of effective variance.} Assuming that the initial variance $\wb_0-\wb^*$ is bounded, the effective variance of ASGD is dominated by
\[
\sigma^2r\bigg[\frac{24k^*}{N}+18(s+N)\gamma^2\sum_{i>k^*}\lambda_i^2\bigg],
\]
and effective variance of SGD is dominated by
\[
\sigma^2r_{\mathrm{SGD}}\bigg[\frac{k^*_{\mathrm{SGD}}}{N}+(s+N)\delta^2\sum_{i>k^*_{\mathrm{SGD}}}\lambda_i^2\bigg].
\]
Thus, ignoring $\sigma^2$, $r$ and $r_{\mathrm{SGD}}$ and constants, effective variance of ASGD in the subspace of $\lambda_i$ is $\cO(\min\cbr{1/N, N\gamma^2\lambda_i^2})$, compared to $\cO(\min\cbr{1/N, N\delta^2\lambda_i^2})$ for SGD. With $\gamma\ge\delta$ according to the choice of parameters in \eqref{eq:parameter choice_main}, we conclude that the excess variance of ASGD in every subspace is larger than that of SGD.

The following corollary characterizes the effective variance of ASGD when the eigenvalue spectrum decays with a polynomial or exponential rate. These examples have been studied for SGD in \citet{zou2021benign} and \cite{wu2022iterate}.
\begin{corollary}\label{corollary:compare_variance}
Under the same assumptions as Theorem \ref{theorem:main}, suppose that $\|\wb_0-\wb^*\|_2$ is bounded.
\begin{itemize}[leftmargin=*]
\item[1.] If the spectrum is $\lambda_i=i^{-(1+r)}$ for some $r>0$, then the effective variance is $\cO((\tilde\kappa/N)^{r/(1+r)})$.
\item[2.] If the spectrum is $\lambda_i=e^{-i}$, then the effective variance is $\cO((\tilde\kappa+\log N)/N)$.
\end{itemize}
\end{corollary}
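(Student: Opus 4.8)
The plan is to instantiate the effective-variance bound of Theorem~\ref{theorem:main} for the two prescribed eigenspectra. Since $\|\wb_0-\wb^*\|_2$ is assumed bounded and each of the submatrices $\Ib_{0:\hat k},\Hb_{\hat k:k^\dagger},\Ib_{k^\dagger:k^*},\Hb_{k^*:\infty}$ appearing in the second, initialization-dependent block of that bound is dominated by $\Ib$ (or by $\lambda_1\Ib$), every weighted norm there is $\cO(1)$ up to the scalar prefactors $1/\delta$, $1/(1-c)$, $1/(\gamma+\delta)$ and $s+N$, and this whole block carries an additional factor $1/N$ relative to the first block. Hence it suffices to track the leading ``data'' term $\sigma^2(1-\psi l)^{-1}\big[\tfrac{27k^*}{2N}+18(s+N)\gamma^2\sum_{i>k^*}\lambda_i^2\big]$, note that $(1-\psi l)^{-1}=\cO(1)$ under~\eqref{eq:parameter choice_main} (which forces $\psi l<1$, with each of $\delta\tr(\Hb)/2$, $1/(2\psi)$, $\tfrac\gamma4\sum_{i>\tilde\kappa}\lambda_i$ bounded), use $s=\cO(N)$ to absorb $s+N$ into $\cO(N)$, and then verify a posteriori that the initialization block is of lower order. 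Everything thus reduces to estimating the effective dimension $k^*$ and the tail $\sum_{i>k^*}\lambda_i^2$ as functions of $N,\tilde\kappa,\gamma$.

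For the polynomial spectrum $\lambda_i=i^{-(1+r)}$ (here $r$ is the decay exponent, not the constant $(1-\psi l)^{-1}$), an integral comparison gives $\tr(\Hb)=\cO(1)$, so $\delta\asymp1$ is admissible, and $\sum_{i>\tilde\kappa}\lambda_i\asymp\tilde\kappa^{-r}$, so we may take $\gamma$ at its maximal admissible value $\gamma\asymp\tilde\kappa^{r}$. Then $k^*=\max\{k:k^{-(1+r)}\ge1/((\gamma+\delta)N)\}\asymp(\tilde\kappa^{r}N)^{1/(1+r)}$, whence $k^*/N\asymp(\tilde\kappa/N)^{r/(1+r)}$; and since $\sum_{i>k^*}\lambda_i^2\asymp(k^*)^{-(1+2r)}$, substituting the value of $k^*$ gives $N\gamma^2\sum_{i>k^*}\lambda_i^2\asymp(\tilde\kappa/N)^{r/(1+r)}$ as well. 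The two contributions are of the same order, so $\text{EffectiveVar}=\cO((\tilde\kappa/N)^{r/(1+r)})$; one then checks, using $1/\delta\asymp1$, $1/(1-c)=1/(2\alpha\beta)\asymp\tilde\kappa^{1+r}$, $1/(\gamma+\delta)\asymp\tilde\kappa^{-r}$ and $\lambda_{k^*+1}\le1/((\gamma+\delta)N)$, that the initialization block is $\cO\big(N^{-1}\tilde\kappa^{1+r}(\tilde\kappa/N)^{r/(1+r)}\big)$, hence dominated.

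For the exponential spectrum $\lambda_i=e^{-i}$, again $\tr(\Hb)=\cO(1)$ and $\sum_{i>\tilde\kappa}\lambda_i\asymp e^{-\tilde\kappa}$, so we take $\gamma\asymp e^{\tilde\kappa}$. Then $k^*\asymp\log((\gamma+\delta)N)\asymp\tilde\kappa+\log N$, so $k^*/N\asymp(\tilde\kappa+\log N)/N$; and $\sum_{i>k^*}\lambda_i^2\asymp e^{-2k^*}\asymp(\gamma N)^{-2}$, so $N\gamma^2\sum_{i>k^*}\lambda_i^2\asymp N^{-1}$, which is dominated by $(\tilde\kappa+\log N)/N$. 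Hence $\text{EffectiveVar}=\cO((\tilde\kappa+\log N)/N)$, and the initialization block is again lower order by the analogous estimate, now with $1/(1-c)\asymp\tilde\kappa e^{\tilde\kappa}$.

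The routine integral estimates for $k^*$ and the tail sums are not the difficulty; the point needing care is to confirm that the initialization-dependent block is genuinely dominated by the leading term. This amounts to expressing $1/\delta$, $1/(1-c)=1/(2\alpha\beta)$ and $1/(\gamma+\delta)$ in terms of $\tilde\kappa$ and the spectrum via the identities $\beta=\delta/(\psi\tilde\kappa\gamma)$, $\alpha=1/(1+\beta)$ (and checking admissibility such as $\beta\le1$), and then showing that their product with the common factor $N^{-1}\big(k^*/N+N\gamma^2\sum_{i>k^*}\lambda_i^2\big)$ stays below $k^*/N+N\gamma^2\sum_{i>k^*}\lambda_i^2$; this holds exactly when $N(1-c)$ is bounded below, which is precisely the hypothesis $N(1-c)\ge2$ of Theorem~\ref{theorem:main}. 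A final, easy point is the uniform bound $(1-\psi l)^{-1}=\cO(1)$, which is immediate from~\eqref{eq:parameter choice_main}.
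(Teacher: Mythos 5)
Your proposal is correct and proceeds the way one would expect: read off the dominant "data" term $\sigma^2 r\big[\tfrac{27k^*}{2N}+18(s+N)\gamma^2\sum_{i>k^*}\lambda_i^2\big]$ from Theorem~\ref{theorem:main}, set $\gamma$ at its maximal admissible value (which is, as you can verify, also the variance-maximizing and hence worst-case choice within the admissible range), and estimate $k^*$ and $\sum_{i>k^*}\lambda_i^2$ by integral comparison. The algebra for the polynomial spectrum — $k^*\asymp(\tilde\kappa^rN)^{1/(1+r)}$, so $k^*/N\asymp(\tilde\kappa/N)^{r/(1+r)}$, and $N\gamma^2\sum_{i>k^*}\lambda_i^2\asymp(\tilde\kappa/N)^{r/(1+r)}$ — and for the exponential spectrum — $k^*\asymp\tilde\kappa+\log N$, with $N\gamma^2\sum_{i>k^*}\lambda_i^2\asymp1/N$ dominated — both check out, as does the uniform boundedness of $(1-\psi l)^{-1}$ under~\eqref{eq:parameter choice_main}.

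One remark on the treatment of the initialization-dependent block. You bound $\tfrac{10}{1-c}\|\wb_0-\wb^*\|_{\Hb_{\hat k:k^\dagger}}^2$ as $\cO(1/(1-c))=\cO(\tilde\kappa^{1+r})$ and then invoke $N(1-c)\ge2$ to show the resulting $\cO\big(N^{-1}\tilde\kappa^{1+r}(\tilde\kappa/N)^{r/(1+r)}\big)$ is dominated. That is correct but unnecessarily pessimistic: by the definition of $\hat k$ in~\eqref{eq:def_hatk}, every eigenvalue $\lambda_i$ with $\hat k<i\le k^\dagger$ satisfies $\lambda_i<(1-c)/\delta$, so $\|\wb_0-\wb^*\|_{\Hb_{\hat k:k^\dagger}}^2<\tfrac{1-c}{\delta}\|\wb_0-\wb^*\|_2^2$ and the $1/(1-c)$ cancels, giving $\cO(1/\delta)=\cO(1)$. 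With this the whole bracketed initialization block is $\cO(1/\delta+1/\gamma)=\cO(1)$ (since $\tr(\Hb)=\cO(1)$ forces $\delta\asymp1$), and the second term of the effective-variance bound is then smaller than the leading one by a clean factor of $1/N$, without leaning on $N(1-c)\ge2$. Your version still reaches the right conclusion, so this is only a matter of sharpness, not a gap.
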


\begin{remark}
For SGD, the effective variance is $\cO((1/N)^{r/(1+r)})$ if the eigenvalue spectrum is $\lambda_i=i^{-(1+r)}$, and $\cO(\log N/N)$ if the eigenvalue spectrum is $\lambda_i=e^{-i}$~\citep{zou2021benign}. Therefore, the effective variance of ASGD is larger than that of SGD under both eigenvalue spectra.
\end{remark}

\noindent\textbf{Comparison of effective bias.}
Effective bias of both SGD and ASGD decay exponentially in $s$ within each subspace. The decay rate of SGD is $(1-\delta\lambda_i)^s$ in the subspace of $\lambda_i$. For ASGD,
\begin{itemize}[leftmargin=*]
    \item[1.] When $i\le k^\ddagger$, the decay rate in the subspace of $\lambda_i$ is $(c\delta/q)^s$. By definition of $k^\ddagger$, we have $1-\delta\lambda_i\le c\delta/q$ (see Appendix \ref{subsection:segmentation} for detailed proof).
    \item[2.] When $k^\ddagger<i\le k^\dagger$, the decay rate in the subspace of $\lambda_i$ is $[c(1-\delta\lambda_i)]^{s/2}$. According to the definition of $\hat k$, when $k^\ddagger<i\le\hat k$, we have $1-\delta\lambda_i\le\sqrt{c(1-\delta\lambda_i)}$; When $\hat k<i\le k^\dagger$, we have $1-\delta\lambda_i\ge\sqrt{c(1-\delta\lambda_i)}$.
    \item[3.] When $i>k^\dagger$, the decay rate in the subspace of $\lambda_i$ is $\rbr{1-(\gamma+\delta)\lambda_i/2}^s$. By the choice of parameters \eqref{eq:parameter choice_main}, we have $\gamma\ge\delta$, so $1-(\gamma+\delta)\lambda_i/2\le1-\delta\lambda_i$.
\end{itemize}
\begin{wrapfigure}{r}{0.44\textwidth}
\centering
\includegraphics[width=0.44\textwidth]{./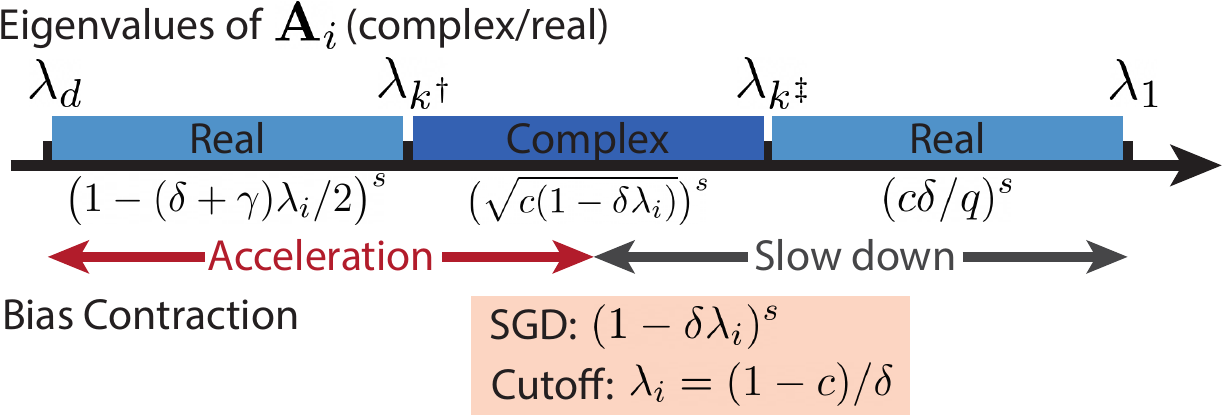}
\caption{Illustration of the eigenspectrum.}
\label{fig:my_label}
\end{wrapfigure}

Combining the three cases above, we conclude that the effective bias of ASGD decays faster than that of SGD in eigen-subspaces of $\lambda_i$ where $i>\hat k$, while it decays slower than SGD in subspaces of $\lambda_i$ where $i\le\hat k$. This phenomenon is illustrated in Figure \ref{fig:my_label}. Therefore, ASGD can perform better than SGD if $\wb_0-\wb^*$ is mostly refined to the eigen-subspaces of $\lambda_i$ where $i>\hat k$.

We remark that this result is consistent with the acceleration of bias decay presented in \citet{jain2018accelerating}. Without instance-specific analysis, the exponential decay rate of bias is determined by the decay rate in subspace of the smallest eigenvalue. As the effective bias of ASGD decays faster than that of SGD in the eigen-subspace of small eigenvalues, the worst-case decay rate of the bias error of ASGD enjoys acceleration compared to SGD.

\section{Experiments}\label{section:experiments}

In this section, we empirically verify that ASGD can outperform SGD when $\wb_0-\wb^*$ is mainly confined to the eigen-subspace of small eigenvalues.

\noindent{\textbf{Data model.}}~Our experiments are based on the setting of overparameterized linear regression, where the model dimension is $d=2000$.
The data covariance matrix $\Hb$ is diagonal with eigenvalues $\lambda_i=i^{-2}$.
The input $\xb_t$ follows Gaussian distribution $\cN(\zero, \Hb)$, so Assumption \ref{assumption:fourth_moement_condition} holds with $\psi=3$.
The ground truth weight vector is $\wb^*=\zero$, and the label $y_t$ follows the distribution $\cN(0, \sigma^2)$ where $\sigma^2=0.01$.

\noindent{\textbf{Hyperparameters of ASGD and SGD.}}~We select parameters of ASGD so that it satisfies the requirements in \eqref{eq:parameter choice_main}.
We first let $\tilde\kappa=5$.
According to \eqref{eq:parameter choice_main}, $\delta$ satisfies $\delta\le1/\pi^2$, so we pick $\delta=0.1$, which is also the stepsize of SGD.
We then let $\alpha=0.9875$, so that $(1-c)/\delta=2(1-\alpha)/\delta=0.25=\lambda_2$, which implies that $\hat k=2$.
Finally, we select $\beta=(1-\alpha)/\alpha$ and $\gamma=\delta/(\psi\tilde\kappa\beta)$.
We can verify that the parameters satisfy all requirements in \eqref{eq:parameter choice_main}.

We fix the length of tail averaging as $N=500$, and conduct experiments on different $s$ where $s=50, 100, 150, \dots, 500$.
In each experiment, we measure $
\overline{\wb}_{s:s+N}^\top\Hb\overline{\wb}_{s:s+N}$.
For each $s$, we run the experiment 10 times and take the average of the test results.

We examine three different initializations: (a) $\wb_0=10\cdot\eb_1$, representing the case where $\wb_0-\wb^*$ is mainly refined to the subspace of large eigenvalues, (b) $\wb_0=10\cdot\eb_2$, representing the case where $\wb_0-\wb^*$ is mainly refined to the subspace of $\lambda_{\hat k}$, and (c) $\wb_0=10\cdot\eb_{20}$, representing the case where $\wb_0-\wb^*$ is mainly refined to the subspace of small eigenvalues. Experiment results are shown in Figure \ref{fig:experimment}.
We observe that ASGD indeed outperforms SGD in the scenario where $\wb_0-\wb^*$ is mostly refined to the subspace of small eigenvalues, and performs worse than SGD when $\wb_0-\wb^*$ is refined to the subspace of large eigenvalues. Additionally, the excess risks of SGD and ASGD are similar when $\wb_0-\wb^*$ aligns with the subspace corresponding to $\lambda_{\hat k}$, which is also aligns with the implication of Theorem \ref{theorem:main}.

\begin{figure}[htb]
\centering
\begin{minipage}[t]{.3\textwidth}
\centering
\includegraphics[width=\linewidth]{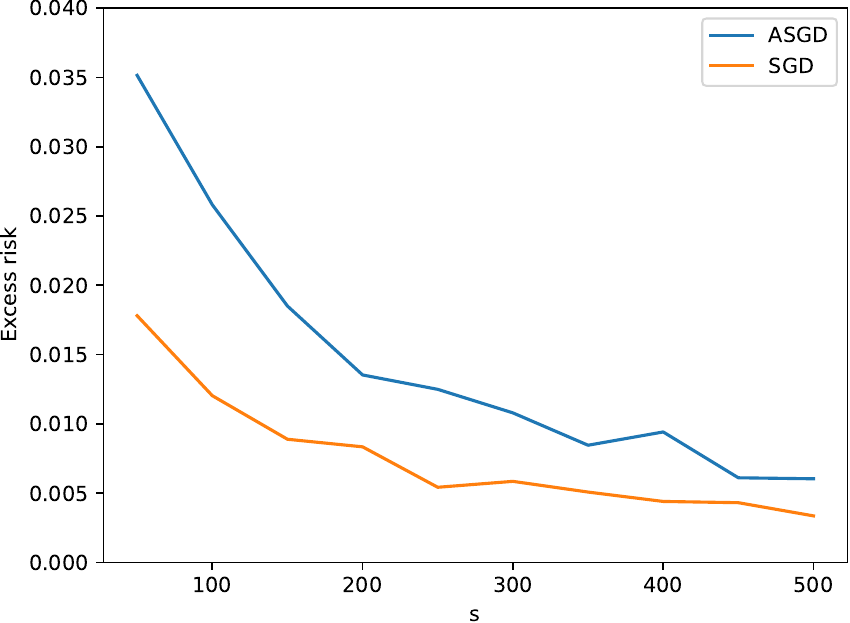}
\caption*{(a) $\wb_0=10\cdot\eb_1$.}
\end{minipage}
\hfill
\begin{minipage}[t]{.3\textwidth}
\centering
\includegraphics[width=\linewidth]{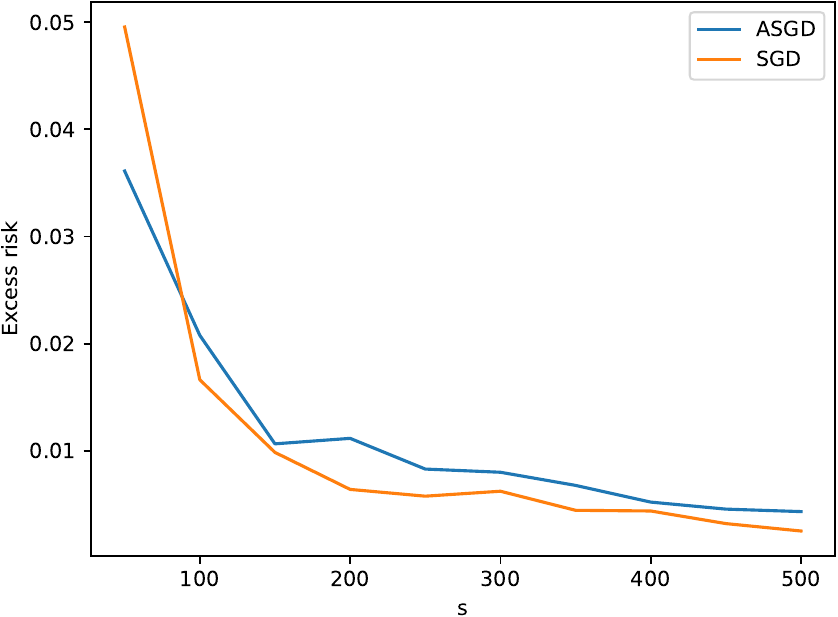}
\caption*{(b) $\wb_0=10\cdot\eb_2$.}
\end{minipage}
\hfill
\begin{minipage}[t]{.3\textwidth}
\centering
\includegraphics[width=\linewidth]{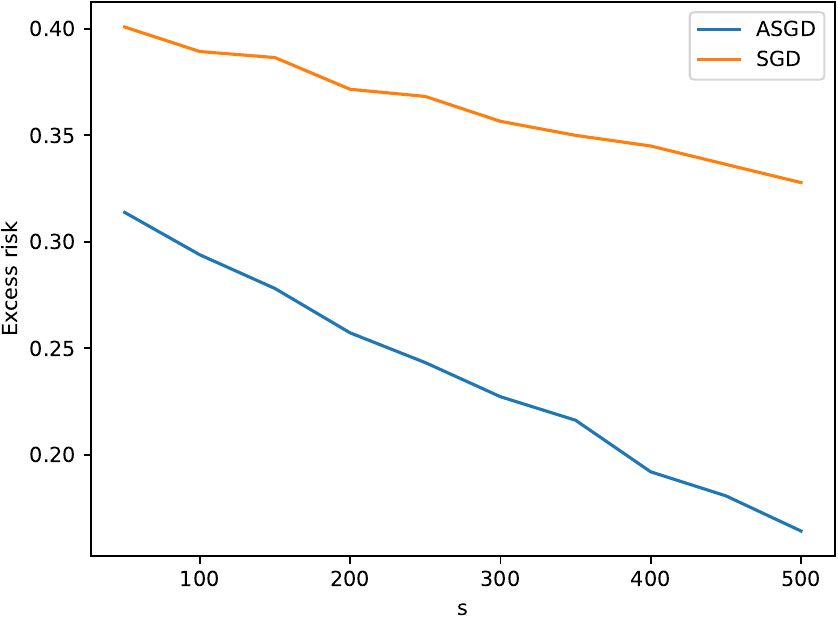}
\caption*{(c) $\wb_0=10\cdot\eb_{20}$.}
\end{minipage}
\caption{Comparison of excess risk of ASGD and SGD. The noise scale is $\sigma^2=0.01$. We run each experiment 10 times and take the average of the excess risk in the 10 trials.}
\label{fig:experimment}
\end{figure}

\section{Proof Sketch}

We first highlight the technical challenges we faced when we prove the main result. The most significant challenge is that, different from previous methods analyzing SGD \citep{zou2021benign, wu2022iterate}, the monotonicity of the second moment for the bias error $\Bb_t$ is violated in ASGD. Therefore, we need to calculate the explicit expression of $\Ab_i^k$, and analyze quantities like those in Lemmas \ref{lemma:Ai(j+k)_deltaq}, \ref{lemma:sumAi(j+k)_11} and \ref{lemma:sum(Ai(j+k)11)^2}.

Another challenge is the identification of the eigenvalue cutoffs $k^\dagger$, $k^\ddagger$, $\hat k$ and $k^*$. This is much more complicated than SGD because $\lambda_i$ significantly affects the decay rate of $\Ab_i^k$. Specifically, the eigenvalues of $\Ab_i$ can be complex or real, which also depends on $\lambda_i$. As a comparison, in the SGD results \citep{zou2021benign, wu2022iterate}, there are only two cutoffs for eigenvalue.

Last but not least, we develop a new choice of parameters \eqref{eq:parameter choice_main} for ASGD in the overparameterized regime. This choice of parameters is obtained through a fine-grained analysis of the effect of fourth-moment (see details in Appendix \ref{section:analysis_4th}), where we manage to avoid the model dimension $d$ in our choice of parameters.

We now present the high-level ideas in our proof. We mainly introduce two main ideas of the proof, including (i) bias-variance decomposition, and (ii) analysis of excess risk bounds within each eigen-subspace, based on the eigenvalues of $\Ab_i$.


Define the tail averaged centered ASGD iterate as $\overline{\bmeta}_{s,s+N} \coloneqq N^{-1}\sum^{s+N-1}_{t=s} \bmeta_t$.
The excess risk is then
\begin{align*}
\EE[L(\overline{\wb}_{s,s+N})] - L(\wb^*)=\frac{1}{2}\inner{\begin{bmatrix} \Hb & \zero \\ \zero & \zero \end{bmatrix}}{\EE[\overline{\bmeta}_{s,s+N} \otimes \overline{\bmeta}_{s,s+N}]}.
\end{align*}
We also define the linear operators $\cB\coloneqq\EE[\hat{\Ab}_t\otimes\hat{\Ab}_t]$ and $\Tilde{\cB}\coloneqq\Ab\otimes\Ab$, which are both PSD operators. Additionally, the difference $\cB-\tilde\cB$ is also a PSD operator, which contributes to the effect of the fourth moment in the excess risk bound. The reader can refer to Appendix \ref{app:fourth} for details of the linear operators.

\subsection{Bias-Variance Decomposition}

Following the technique used extensively in previous works \citep{DieuleveutB15, jain2018accelerating, zou2021benign, wu2022iterate, liang2020just}, we decompose the centered iterate $\bmeta_t$ into the bias sequence $\bmeta_t^{\text{bias}}$ and the variance sequence $\bmeta_t^{\text{var}}$, defined recursively as
\begin{gather}
\bmeta_t^{\text{bias}}=\hat{\Ab}_t\bmeta_{t-1}^{\text{bias}},\quad \bmeta_0^{\text{bias}}=\bmeta_0;\label{def:eta bias}\\
\bmeta_t^{\text{var}}=\hat{\Ab}_t \bmeta_{t-1}^{\text{var}}+\bzeta_t,\quad \bmeta_0^{\text{var}}=\zero.\label{def:eta variance}
\end{gather}
The tail averaged iterate is then $\overline{\bmeta}_{s:s+N}= \overline{\bmeta}_{s:s+N}^{\text{bias}} + \overline{\bmeta}_{s:s+N}^{\text{var}}$, where
\begin{equation}\label{eq:tail average eta}
\overline{\bmeta}_{s:s+N}^{\text{bias}} \coloneqq \frac{1}{N} \sum^{s+N-1}_{t=s} \bmeta_t^{\text{bias}},\quad\overline{\bmeta}_{s:s+N}^{\text{var}} \coloneqq \frac{1}{N}\sum_{t=s}^{s+N-1} \bmeta_t^{\text{var}}.
\end{equation}
The excess risk can be decomposed into bias and variance:
\begin{align*}
\EE[L(\overline{\wb}_{s:s+N})]-L(\wb^*)&=\frac12\inner{\begin{bmatrix}\Hb & \zero \\ \zero & \zero\end{bmatrix}}{\EE[\overline{\bmeta}_{s:s+N}\otimes\overline{\bmeta}_{s:s+N}]}\le2\cdot\text{Bias}+2\cdot\text{Variance},
\end{align*}
where
\[
\text{Bias} \coloneqq\frac12\inner{ \begin{bmatrix} \Hb & \zero \\ \zero & \zero \end{bmatrix}}{ \EE[\overline{\bmeta}_{s:s+N}^{\text{bias}} \otimes \overline{\bmeta}_{s:s+N}^{\text{bias}}]},~\text{Variance} \coloneqq \frac{1}{2} \inner{\begin{bmatrix} \Hb & \zero \\ \zero & \zero \end{bmatrix}}{ \EE[\overline{\bmeta}_{s:s+N}^{\text{var}} \otimes \overline{\bmeta}_{s:s+N}^{\text{var}}]}.
\]

Define the covariance matrices $\Bb_t\coloneqq\EE[\bmeta_t^{\text{bias}}\otimes\bmeta_t^{\text{bias}}]$ and $\Cb_t\coloneqq\EE[\bmeta_t^{\text{var}}\otimes\bmeta_t^{\text{var}}]$. The recursive forms of $\Bb_t$ and $\Cb_t$ then satisfy 
\begin{gather}
\Bb_t=\cB\circ\Bb_{t-1},\quad\Bb_0=\bmeta_0\otimes\bmeta_0;\label{eq:iterate Bt}\\
\Cb_t=\cB\circ\Cb_{t-1}+\hat{\bSigma},\quad\Cb_0=\mathbf{0} \label{eq:iterate Ct}.
\end{gather}

\subsection{Proof of the Bias Bound}

In this part, we provide an overview of the analysis of the bias bound in a simplified problem setting. We consider the last bias iterate (i.e., $N=1$) and assume that $\cB=\tilde\cB$. The analysis of the general cases is given in Appendix \ref{section:bias}. According to the recursive form of $\Bb_s$ in \eqref{eq:iterate Bt}, we have $\Bb_s=\cB^s\circ\Bb_0$. With the assumptions that $\cB=\tilde\cB$, we have
\[
\Bb_s=\tilde\cB^s\circ\Bb_0=\Ab^s\Big(\begin{bmatrix}1&1\\1&1\end{bmatrix}\otimes(\wb_0-\wb^*)(\wb_0-\wb^*)^\top\Big)(\Ab^s)^\top.
\]
Note that $\Ab$ is block-diagonal with each block being $\Ab_i$, so bias can be expressed as
\begin{align*}
\text{Bias}&=\frac12\inner{ \begin{bmatrix} \Hb & \zero \\ \zero & \zero \end{bmatrix}}{ \EE[\overline{\bmeta}_{s:s+N}^{\text{bias}} \otimes \overline{\bmeta}_{s:s+N}^{\text{bias}}]}=\frac12\inner{\begin{bmatrix}\Hb&\zero\\\zero&\zero\end{bmatrix}}{\Bb_s}=\frac12\sum_{i=1}^d\lambda_iw_i^2\bigg(\Ab_i^s\begin{bmatrix}1\\1\end{bmatrix}\bigg)_1^2,
\end{align*}
where $w_i\coloneqq(\wb_0-\wb^*)_i$. The following lemma explicitly characterizes $\Ab_i^k$:
\begin{lemma}\label{lemma:A_ik_main}
Let the eigenvalues of $\Ab_i$ be $x_1$ and $x_2$. Then, for any integer $k\ge1$, we have
\[
\Ab_i^k=\begin{bmatrix}
-c(1-\delta\lambda_i)\cdot\frac{x_2^{k-1}-x_1^{k-1}}{x_2-x_1} & (1-\delta\lambda_i)\cdot\frac{x_2^k-x_1^k}{x_2-x_1}\\
-c\cdot\frac{x_2^k-x_1^k}{x_2-x_1} & \frac{x_2^{k+1}-x_1^{k+1}}{x_2-x_1}
\end{bmatrix}.
\]
\end{lemma}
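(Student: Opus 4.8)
The plan is to diagonalize $\Ab_i$ (or, when the two eigenvalues coincide, pass to a limiting argument) and compute $\Ab_i^k$ explicitly in terms of the eigenvalues $x_1,x_2$. Recall that $\Ab_i=\begin{bmatrix}0 & 1-\delta\lambda_i\\ -c & 1+c-q\lambda_i\end{bmatrix}$, so by the Cayley--Hamilton theorem $\Ab_i$ satisfies $\Ab_i^2=(x_1+x_2)\Ab_i-x_1x_2\Ib$, where we use $x_1+x_2=1+c-q\lambda_i=\tr(\Ab_i)$ and $x_1x_2=c(1-\delta\lambda_i)=\det(\Ab_i)$ (the latter identity for the determinant is the key algebraic fact that makes the entries come out clean). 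It follows by a standard induction that $\Ab_i^k=p_k\Ab_i-x_1x_2\,p_{k-1}\Ib$ for the scalar sequence $p_k\coloneqq\frac{x_2^k-x_1^k}{x_2-x_1}$ (with $p_0=0$, $p_1=1$), since $p_k$ satisfies the same two-term recurrence $p_{k+1}=(x_1+x_2)p_k-x_1x_2\,p_{k-1}$.

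Given this, I would simply substitute. The $(2,2)$ entry is $p_k\cdot(1+c-q\lambda_i)-x_1x_2\,p_{k-1}=p_k(x_1+x_2)-x_1x_2\,p_{k-1}=p_{k+1}$, giving $\frac{x_2^{k+1}-x_1^{k+1}}{x_2-x_1}$. The $(2,1)$ entry is $p_k\cdot(-c)-0=-c\,p_k=-c\cdot\frac{x_2^k-x_1^k}{x_2-x_1}$. The $(1,2)$ entry is $p_k\cdot(1-\delta\lambda_i)-0=(1-\delta\lambda_i)\cdot\frac{x_2^k-x_1^k}{x_2-x_1}$. The $(1,1)$ entry is $p_k\cdot 0-x_1x_2\,p_{k-1}=-c(1-\delta\lambda_i)\,p_{k-1}=-c(1-\delta\lambda_i)\cdot\frac{x_2^{k-1}-x_1^{k-1}}{x_2-x_1}$, using $x_1x_2=c(1-\delta\lambda_i)$. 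This matches the claimed matrix exactly. For $k=1$ one checks directly that $p_1=1$, $p_0=0$ recover $\Ab_i$ itself, anchoring the induction.

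The only subtlety — and the one place that needs a sentence of care rather than pure computation — is the case $x_1=x_2$, where the formulas $\frac{x_2^k-x_1^k}{x_2-x_1}$ are to be read as their limit $k\,x_1^{k-1}$. Here the cleanest route is to observe that both sides of the claimed identity are polynomials in $x_1,x_2$ (after clearing the removable singularity, i.e. interpreting $p_k$ as the complete homogeneous symmetric polynomial $h_{k-1}(x_1,x_2)=\sum_{j=0}^{k-1}x_1^j x_2^{k-1-j}$), and the recurrence argument above goes through verbatim with $p_k$ replaced by $h_{k-1}$, since $h_{k-1}$ satisfies the same recurrence and initial conditions; continuity in $(x_1,x_2)$ then covers the coincident case without a separate Jordan-block computation. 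I do not expect any real obstacle here; the proof is essentially the Cayley--Hamilton recurrence plus the determinant identity $\det(\Ab_i)=c(1-\delta\lambda_i)$, and the bookkeeping of the four entries is routine.
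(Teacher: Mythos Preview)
Your proof is correct and essentially the same as the paper's: both are inductions on $k$ driven by the Vieta identities $x_1+x_2=1+c-q\lambda_i$ and $x_1x_2=c(1-\delta\lambda_i)$. The only difference is organizational --- the paper verifies the four entries of $\Ab_i^{k+1}=\Ab_i\cdot\Ab_i^k$ directly in the induction step, whereas you factor the induction through the scalar sequence $p_k$ via Cayley--Hamilton, which is slightly cleaner but not substantively different.
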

The detailed proof of Lemma \ref{lemma:A_ik_main} is given as the proof of Lemma \ref{lemma:A_ik}. With Lemma \ref{lemma:A_ik_main}, we have
\[
\mathrm{I}\coloneqq\bigg(\Ab_i^s\begin{bmatrix}1\\1\end{bmatrix}\bigg)_1=(1-\delta\lambda_i)\frac{x_2^{s-1}(x_2-c)-x_1^{s-1}(x_1-c)}{x_2-x_1}.
\]
For $i\le k^\ddagger$ and $i>k^\dagger$, i.e., $\Ab_i$ has real eigenvalues $x_1<x_2$, $\mathrm{I}$ decays exponentially with the same rate of $x_2^s$. For $k^\ddagger<i\le k^\dagger$, i.e., $\Ab_i$ has complex eigenvalues with $|x_1|=|x_2|$, $|\mathrm{I}|$ is bounded by
\begin{align*}
|\mathrm{I}|&=(1-\delta\lambda_i)\abr{\frac{x_2^{s-1}(x_2-c)-x_1^{s-1}(x_1-c)}{x_2-x_1}}\le\abr{\frac{x_1^{s-1}+x_2^{s-1}}{2}+\frac{x_1+x_2-2c}{2}\cdot\frac{x_2^{s-1}-x_1^{s-1}}{x_2-x_1}}\\
&\le|x_2|^{s-1}+\frac{|x_1+x_2-2c|}{2}\cdot\abr{\frac{x_2^{s-1}-x_1^{s-1}}{x_2-x_1}},
\end{align*}
where the first inequality holds because $0\le1-\delta\lambda_i\le1$, and the second inequality holds due to triangle inequality. For the term $|(x_2^{s-1}-x_1^{s-1})/(x_2-x_1)|$, note that
\[
\abr{\frac{x_2^{s-1}-x_1^{s-1}}{x_2-x_1}}=\bigg|\sum_{k=0}^{s-2}x_2^kx_1^{s-2-k}\bigg|\le\sum_{k=0}^{s-2}|x_2^k|\cdot|x_2^{s-2-k}|=\sum_{k=0}^{s-2}|x_2|^k\cdot|x_1|^{s-2-k}=(s-1)|x_2|^{s-2},
\]
where the inequality holds due to triangle inequality, and the second inequality holds because $|x_1|=|x_2|$. Therefore, the exponential decay rate of $|\mathrm{I}|$ is $|x_2|^s$. The following lemma provides tight bounds of $x_2$, thus characterizing the exponential rate of bias decay within each eigen-subspace:
\begin{lemma}\label{lemma:Ai_spectral_main}
Let $x_1, x_2$ be the eigenvalues of $\Ab_i$. Then
\begin{itemize}[leftmargin=*]
    \item[(a)] When $i\le k^\ddagger$, $(c\delta-\sqrt{c(q-\delta)(q-c\delta)})/q\le x_2\le c\delta/q$.
    \item[(b)] When $k^\ddagger<i\le k^\dagger$, $|x_2|=\sqrt{c(1-\delta\lambda_i)}$.
    \item[(c)] When $i>k^\dagger$, $1-(\gamma+\delta)\lambda_i\le x_2\le1-(\gamma+\delta)\lambda_i/2$.
\end{itemize}
\end{lemma}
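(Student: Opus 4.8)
The plan is to work entirely with the characteristic polynomial of $\Ab_i$,
\[
p_i(x)\coloneqq x^2-(1+c-q\lambda_i)\,x+c(1-\delta\lambda_i),
\]
whose roots are $x_1,x_2$ and which satisfies $x_1+x_2=1+c-q\lambda_i$ and $x_1x_2=c(1-\delta\lambda_i)$. Viewing its discriminant as a function of $\lambda_i$ gives the convex quadratic $\Delta_i(\lambda)=q^2\lambda^2-2\big(q(1+c)-2c\delta\big)\lambda+(1-c)^2$; using the identities $\big(q(1+c)-2c\delta\big)^2-q^2(1-c)^2=4c(q-\delta)(q-c\delta)$ and $\big(\sqrt{q-c\delta}\pm\sqrt{c(q-\delta)}\big)^2=q(1+c)-2c\delta\pm2\sqrt{c(q-\delta)(q-c\delta)}$, the two roots of $\Delta_i(\cdot)=0$ are exactly the two cutoff values in \eqref{eq:def_k_dagger_ddagger}. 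Hence $\Delta_i\ge0$ when $i\le k^\ddagger$ or $i>k^\dagger$ and $\Delta_i<0$ in between, which is the segmentation already recorded above the lemma. Part~(b) then follows at once: when $\Delta_i<0$ the eigenvalues are non-real conjugates, so $|x_2|^2=x_1x_2=c(1-\delta\lambda_i)$, which is in particular positive, giving $|x_2|=\sqrt{c(1-\delta\lambda_i)}$.

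For parts~(a) and~(c) the eigenvalues are real with $x_1\le x_2$, and I would invoke the elementary fact that, writing $v_i\coloneqq(1+c-q\lambda_i)/2$ for the vertex of the upward parabola $p_i$, for any real $t$ one has $x_2\le t\iff\big(p_i(t)\ge0\ \text{and}\ t\ge v_i\big)$ and $x_2\ge t\iff\big(p_i(t)\le0\ \text{or}\ t\le v_i\big)$; everything then reduces to evaluating $p_i$ at the candidate bounds and comparing them with $v_i$. For part~(a), a direct computation gives the ($\lambda_i$-free) value $p_i(c\delta/q)=c(q-\delta)(q-c\delta)/q^2\ge0$ (using $q\ge\delta$ and $q>c\delta$, both from $\gamma\ge\delta$ and $c<1$), while $c\delta/q\ge v_i$ is equivalent to $\lambda_i\ge\big(q(1+c)-2c\delta\big)/q^2$, which holds since for $i\le k^\ddagger$ the eigenvalue $\lambda_i$ is at least the larger cutoff; this proves $x_2\le c\delta/q$. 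For the lower bound I set $L\coloneqq\big(c\delta-\sqrt{c(q-\delta)(q-c\delta)}\big)/q$, write $p_i(L)=\big(L^2-(1+c)L+c\big)+\lambda_i\,(qL-c\delta)$ with $qL-c\delta=-\sqrt{c(q-\delta)(q-c\delta)}\le0$, so $p_i(L)$ is affine and nonincreasing in $\lambda_i$; since at $\lambda_i$ equal to the larger cutoff the (double) root equals $v_i=L$, $p_i(L)$ vanishes there, whence $p_i(L)\le0$ for all $\lambda_i$ at least the larger cutoff, giving $x_2\ge L$.

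For part~(c) the key identity is the expansion $p_i(1-a\lambda_i)=\lambda_i\big((q-c\delta-a(1-c))+\lambda_i\,a(a-q)\big)$ (the constant term cancels since $1-(1+c)+c=0$). Taking $a=(\gamma+\delta)/2$ and using $q-c\delta=\alpha\beta(\gamma+\delta)$ and $1-c=2\alpha\beta$ (both consequences of $\alpha=1/(1+\beta)$, $c=\alpha(1-\beta)$), the $\lambda_i$-linear term vanishes and $p_i\big(1-\tfrac{\gamma+\delta}{2}\lambda_i\big)=\lambda_i^2\cdot\tfrac{\gamma+\delta}{2}\big(\tfrac{\gamma+\delta}{2}-q\big)\ge0$, since $q\le(\gamma+\delta)/2$ (equivalently $(\alpha-\tfrac{1}{2})(\delta-\gamma)\le0$); moreover $1-\tfrac{\gamma+\delta}{2}\lambda_i\ge v_i$ reduces to $\lambda_i\le 2\beta/(\gamma+\beta\delta)$, which holds for every $i>k^\dagger$ because $\lambda_i$ is then at most the smaller cutoff and, after substituting the parameter identities, the smaller cutoff is $\le 2\beta/(\gamma+\beta\delta)$ (this last step reduces to $(1-\beta^2)(\gamma^2-\delta^2)\le(\gamma+\beta\delta)^2$, which is immediate). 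This gives $x_2\le1-(\gamma+\delta)\lambda_i/2$. For the lower bound, taking $a=\gamma+\delta$ in the same expansion gives $p_i\big(1-(\gamma+\delta)\lambda_i\big)=\lambda_i\big(-(q-c\delta)+\lambda_i\,(\gamma+\delta)(\gamma+\delta-q)\big)$, which is $\le0$ exactly when $\lambda_i\le\beta/(\gamma+\beta\delta)$; and when instead $\lambda_i>\beta/(\gamma+\beta\delta)$ one checks $\beta/(\gamma+\beta\delta)\ge(1-c)/\big(2(\gamma+\delta)-q\big)$ (equivalently $\alpha(\gamma+\beta\delta)\le\gamma+\delta$, true as $\beta\le1$), so $1-(\gamma+\delta)\lambda_i\le v_i\le x_2$. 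Since these two ranges cover all $\lambda_i>0$, $x_2\ge1-(\gamma+\delta)\lambda_i$ always.

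The routine part of this is the polynomial bookkeeping; the genuinely delicate point I expect is the lower bound of part~(c). Unlike parts~(a) and~(b), $p_i$ evaluated at the candidate lower bound $1-(\gamma+\delta)\lambda_i$ is not sign-definite over the relevant range of $\lambda_i$, so the argument must split according to whether $\lambda_i$ lies below $\beta/(\gamma+\beta\delta)$ (use root-bracketing) or above it (use the vertex comparison), and one must check that these two ranges overlap — which is exactly where the parameter relations $\alpha=1/(1+\beta)$, $c=\alpha(1-\beta)$, $\gamma\ge\delta$, $\beta\le1$ of \eqref{eq:parameter choice_main} enter in an essential way, together with the cutoff estimates established in Appendix~\ref{subsection:segmentation}.
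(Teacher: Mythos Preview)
Your proof is correct and takes a genuinely different route from the paper's. The paper (Lemma~\ref{lemma:Ai_spectral}) works with the explicit quadratic-formula expressions: for part~(a) it rewrites $c-x_2$ as a ratio and bounds the denominator using the range of $(1-c)/(q\lambda_i)$; for part~(c) it writes $1-x_2=\tfrac{2(q-c\delta)\lambda_i}{1-c+q\lambda_i+\sqrt{\Delta_i}}$, shows the denominator is monotone in $\lambda_i$ by differentiating, and evaluates at the endpoints of the relevant $\lambda_i$-interval. Your approach instead never touches the square root: you evaluate the characteristic polynomial $p_i$ at each candidate bound and combine the sign of $p_i$ with a vertex comparison. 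This is cleaner and more systematic---all four bounds in (a) and (c) reduce to the single identity $p_i(1-a\lambda_i)=\lambda_i\big[(q-c\delta-a(1-c))+\lambda_i\,a(a-q)\big]$ or its analogue at $c\delta/q$, and the parameter relations $(q-c\delta)/(1-c)=(\gamma+\delta)/2$, $q\le(\gamma+\delta)/2$ drop out naturally. The price you pay is the case split for the lower bound in~(c), which the paper's monotonicity argument avoids; conversely, the paper's derivative computation is messier than any single step you do. Two of your stated intermediate equivalences are slightly off (the cutoff comparison for the upper bound in~(c) actually reduces to $\sqrt{(1-\beta^2)(\gamma^2-\delta^2)}\ge0$, and the overlap check $\beta/(\gamma+\beta\delta)\ge(1-c)/(2(\gamma+\delta)-q)$ reduces to $(1-\alpha)\gamma+\alpha\delta\ge0$, not to $\alpha(\gamma+\beta\delta)\le\gamma+\delta$), but both the stated and the correct inequalities are trivially true, so nothing is lost.
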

The detailed proof of Lemma \ref{lemma:Ai_spectral_main} is given in Appendix \ref{subsection:segmentation}. We can thus obtain the exponential decay rate of the effective bias.

\section{Conclusion}

In this work, we consider accelerated SGD with tail averaging for overparameterized linear regression. We provide instance-dependent risk bounds for accelerated SGD that are comprehensively dependent on the spectrum of the data covariance matrix.
We show that the variance error of accelerated SGD is always larger than that of SGD. 
We also show that the bias error of accelerated SGD is smaller than that of SGD along the small eigenvalues subspace but is larger than that of SGD along the small eigenvalues subspace.
These together suggest that accelerated SGD outperforms SGD only if the signals mostly align with the small eigenvalues subspaces of the data covariance and that the noise is small.
Our results also improve a best-known bound for accelerated SGD in the classic regime \citep{jain2018accelerating}. 

\appendix
\section{Additional Experiments}\label{section:additional_exp}

In this section, we provide additional experiments that justify the theoretical results provided in Theorem \ref{theorem:main}.

\noindent\textbf{Data model.} Similar to the experiments provided in Section \ref{section:experiments}, the model dimension is set to be $d=2000$, and the input $\xb_t$ follows Gaussian distribution $\cN(\zero, \Hb)$. We consider $\Hb$ with three types of spectrum: (i) $\lambda_k=k^{-2}$, (ii) $\lambda_k=k\log(k+1)$, and (iii) $\lambda_k=e^{-k/2}$. The ground truth weight vector is $\wb^*=0$, and the label $y_t$ follows the distribution $y_t\sim\cN(0, \sigma^2)$ where $\sigma=0.2$.

\noindent\textbf{Hyperparameters.} We select the same hyperparameters of ASGD and SGD as the choice in Section~\ref{section:experiments}, i.e., $\psi=3$, $\tilde\kappa=5$, $\delta=0.1$, $\alpha=0.9875$, $\beta=(1-\alpha)/\alpha$ and $\gamma=\delta/(\psi\tilde\kappa\beta)$. We fix $N=500$ and conduct experiments on different $s$ where $s=50, 100,\dots, 500$.

In each experiment, we measure both the bias error $(\overline{\wb}_{s:s+N}^{\text{bias}})^\top\Hb\overline{\wb}_{s:s+N}^{\text{bias}}$ and the variance error $(\overline{\wb}_{s:s+N}^{\text{var}})^\top\Hb\overline{\wb}_{s:s+N}^{\text{var}}$. For each $s$, we run the experiment 10 times, and take the average of the test results. We examine two initializations: (a) $\wb_0=10\cdot\eb_1$, which is the case where $\wb_0-\wb^*$ is mainly refined to the subspace of large eigenvalues, and (b) $\wb_0=10\cdot\eb_{10}$, which is the case where $\wb_0-\wb^*$ is mainly refined to the subspace of small eigenvalues.

The experimental results are shown in Figures \ref{fig:k-2}, \ref{fig:klogk} and \ref{fig:expk}. In all experiments, the variance error of ASGD is larger than that of SGD. However, the bias error of ASGD decays faster than that of SGD when $\wb_0-\wb^*$ is mainly refined to the subspace of small eigenvalues.

\begin{figure}[t!]
    \centering
    \begin{minipage}[t]{.45\linewidth}
    \includegraphics[width=\linewidth]{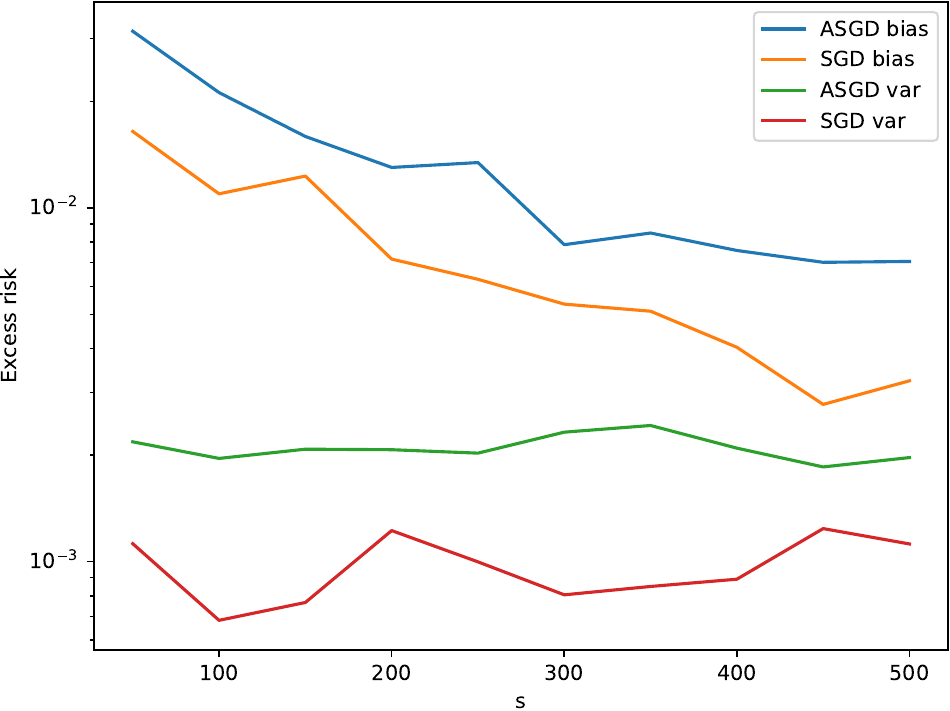}
    \caption*{(a) $w_0=10\cdot e_{1}$}
    \label{fig:k-2_1}
    \end{minipage}
    \begin{minipage}[t]{.45\linewidth}
    \includegraphics[width=\linewidth]{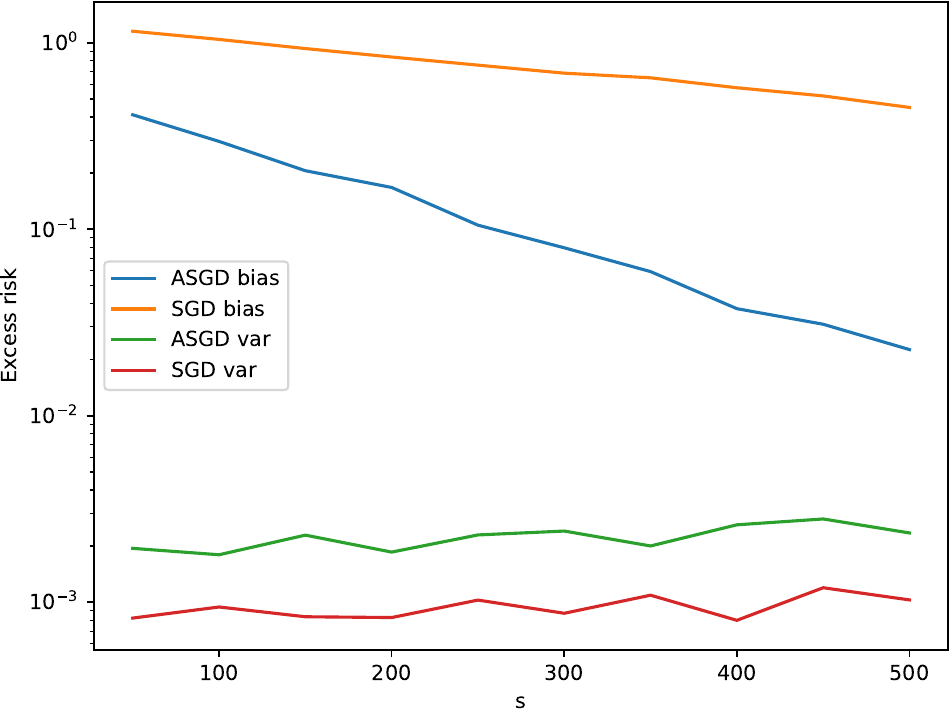}
    \caption*{(a) $w_0=10\cdot e_{10}$}
    \label{fig:k-2_10}
    \end{minipage}
    \caption{Comparison of bias error and variance error of ASGD and SGD. The spectrum of $\Hb$ is $\lambda_k=k^{-2}$.}\label{fig:k-2}
\end{figure}

\begin{figure}[htb!]
    \centering
    \begin{minipage}[t]{.45\linewidth}
    \includegraphics[width=\linewidth]{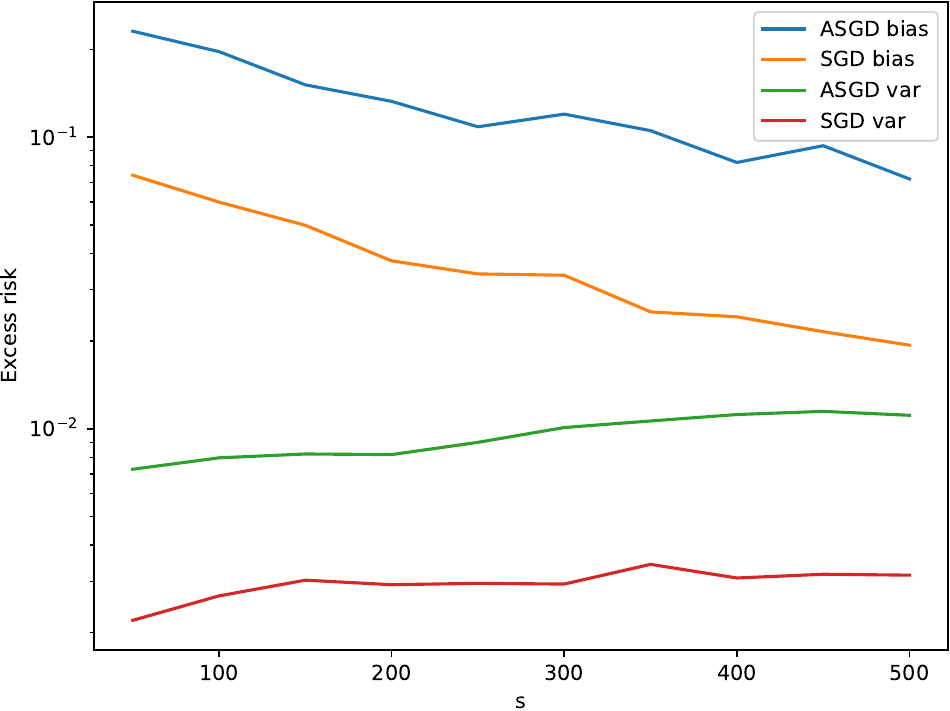}
    \caption*{(a) $w_0=10\cdot e_{1}$}
    \label{fig:klogk_1}
    \end{minipage}
    \begin{minipage}[t]{.45\linewidth}
    \includegraphics[width=\linewidth]{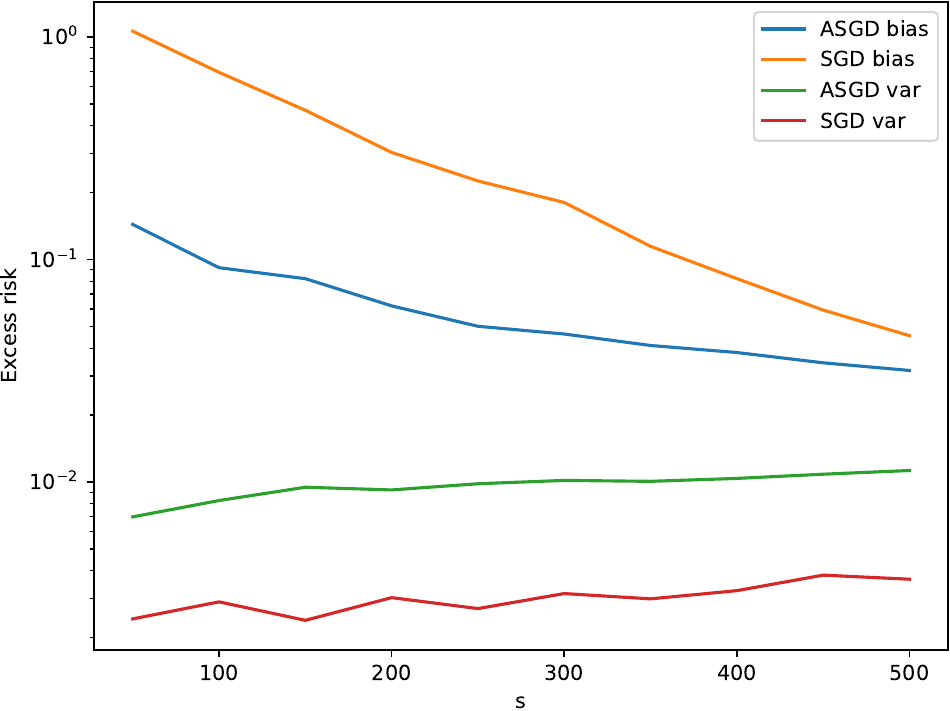}
    \caption*{(a) $w_0=10\cdot e_{10}$}
    \label{fig:klogk_10}
    \end{minipage}
    \caption{Comparison of bias error and variance error of ASGD and SGD. The spectrum of $\Hb$ is $\lambda_k=k\log(k+1)$.}\label{fig:klogk}
\end{figure}

\begin{figure}[ht]
    \centering
    \begin{minipage}[t]{.45\linewidth}
    \includegraphics[width=\linewidth]{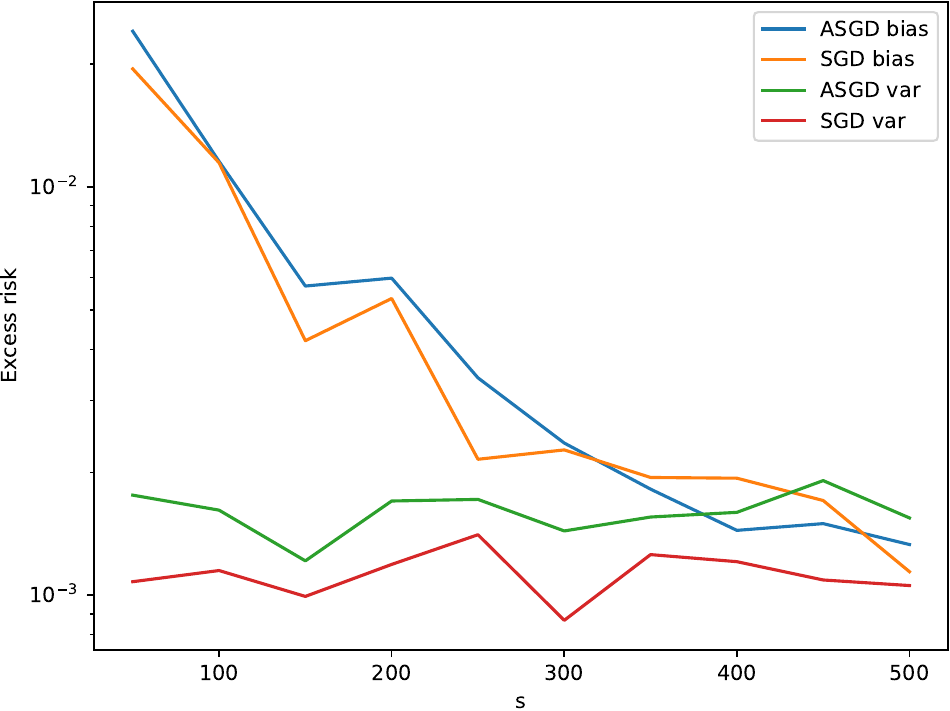}
    \caption*{(a) $w_0=10\cdot e_{1}$}
    \label{fig:expk_1}
    \end{minipage}
    \begin{minipage}[t]{.45\linewidth}
    \includegraphics[width=\linewidth]{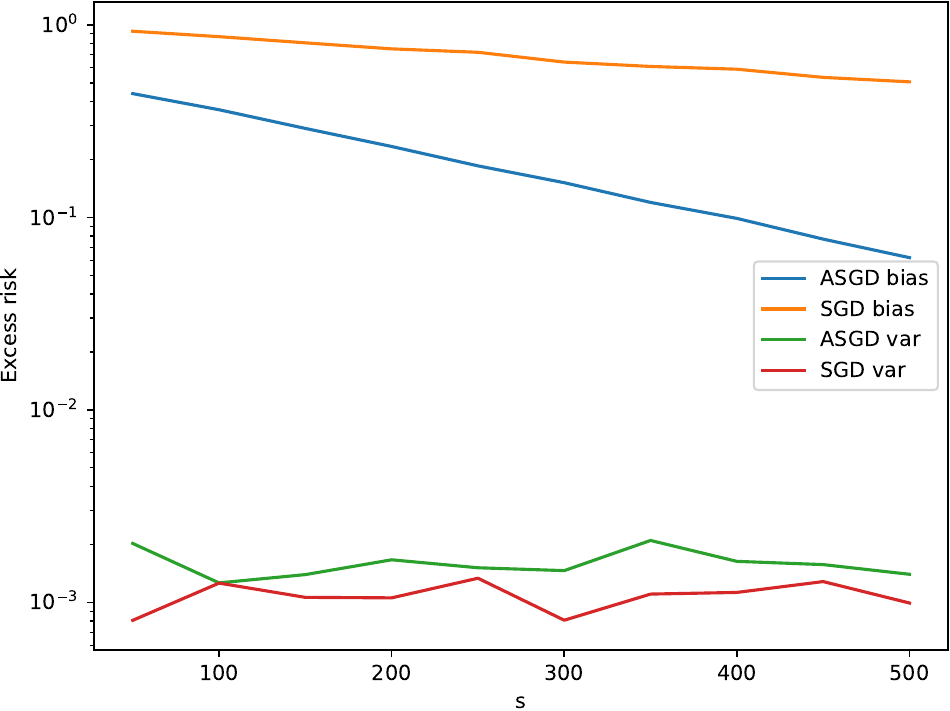}
    \caption*{(a) $w_0=10\cdot e_{10}$}
    \label{fig:expk_10}
    \end{minipage}
    \caption{Comparison of bias error and variance error of ASGD and SGD. The specturm of $\Hb$ is $\lambda_k=e^{-k/2}$.}\label{fig:expk}
\end{figure}

\section{Parameter Choice}\label{app:para_choice}

\subsection{Derivation of Parameter Choice}

Following the optimization literature \citep{nesterov1983method}, we first fix the relationship between $\alpha$ and $\beta$ as
\begin{equation}\label{eq:alpha_beta}
\alpha=\frac1{1+\beta}.
\end{equation}
We then fix
\begin{equation}\label{eq:delta_beta_gamma}
\delta=\psi\tilde\kappa\beta\gamma,
\end{equation}
following \cite{jain2018accelerating}. We remark that introducing $\tilde\kappa$ prevents the effect of fourth moment from blowing up (see proof of Lemma \ref{lemma:U property}). Furthermore, we require $\gamma\ge\delta$ to enforce acceleration. Then, from the requirement $\psi l<1$, we require
\[
\frac{\delta\psi\tr(\Hb)}{2}+\frac12+\frac{\psi\gamma}{4}\sum_{i>\tilde\kappa}\lambda_i<1.
\]
Therefore, it suffices to take
\begin{equation}\label{eq:delta_gamma}
\delta\le\frac1{2\psi\tr(\Hb)},\quad\gamma\le\frac1{2\psi\sum_{i>\tilde\kappa}\lambda_i}.
\end{equation}
Combining \eqref{eq:alpha_beta}, \eqref{eq:delta_beta_gamma} and \eqref{eq:delta_gamma}, we derive the choice of parameters in \eqref{eq:parameter choice_main}.

We remark that we get rid of dimension dependency by merit of the term $\psi\gamma/4\cdot\sum_{i>\tilde\kappa}\lambda_i$. Without this term, $\tilde\kappa$ should be chosen as the model dimension $d$ (as in \citet{jain2018accelerating}).

\subsection{Discussion of Parameter Choice}

In the parameter choice \eqref{eq:parameter choice_main}, $\tilde\kappa$ is a free parameter. In this section, we discuss how the choice of $\tilde\kappa$ affects the excess risk bound. Suppose that both equalities are attained in \eqref{eq:delta_gamma}. We focus the impact of $\tilde\kappa$ on (i) eigenvalue cutoff $\hat k$, and (ii) bias decay rate.

Note that
\[
\gamma=\frac1{2\psi\sum_{i>\tilde\kappa}\lambda_i},
\]
so $\gamma$ increases as $\kappa$ increases. Furthermore,
\[
\beta=\frac{\delta}{\psi\tilde\kappa\gamma},
\]
so $\beta$ decreases as $\tilde\kappa$ increases. We also have
\[
c=\alpha(1-\beta)=\frac{1-\beta}{1+\beta},
\]
so $c$ increases as $\tilde\kappa$ increases.

$\hat k$ is defined as $\hat k=\max\{k:\lambda_k\ge(1-c)/\delta\}$, so $\hat k$ increases as $\tilde\kappa$ increases; The bias decay rate in the subspace of the smallest eigenvalues (i.e., $i>k^\dagger$) is $1-(\gamma+\delta)\lambda_i/2$, so the decay rate accelerates for larger $\tilde\kappa$. However, for the subspace of $\lambda_i$ where $k^\dagger<i\le k^\ddagger$, the bias decay rate is $[c(1-\delta\lambda_i)]^{s/2}$, so the decay rate slows down for larger $\tilde\kappa$.

Combining all the above, we conclude that the choice of $\tilde\kappa$ is subject to the eigenvalue spectrum of the data covariance matrix. Additionally, choosing a small $\tilde\kappa$ will make the algorithm perform more like SGD.

\section{Implication for Stochastic Heavy Ball Method}\label{section:SHB}

In this section, we extend the results we obtained for ASGD to By taking $\delta=0$ in \eqref{eq:wt_update} and eliminating $\vb_t$ and $\ub_t$ using \eqref{eq:ut_update} and \eqref{eq:vt_update}, we get
\[
\wb_{t+1}=\wb_t-(1-\alpha)\gamma\cdot\hat\nabla L(\wb_t)+\alpha(1-\beta)\cdot(\wb_t-\wb_{t-1}),
\]
which is exactly the form of the stochastic heavy ball (SHB) update. Therefore, the excess risk bound we presented in Theorem \ref{theorem:main} can be directly applied to SHB.

As there are three free parameters but only two combinations $(1-\alpha)\gamma$ and $\alpha(1-\beta)$ are used, we enforce that $\beta=(1-\alpha)/\alpha$ and define $c=\alpha(1-\beta)$ and $q=(1-\alpha)\gamma$, similar to ASGD. By \eqref{eq:def_k_dagger_ddagger} and the definition of $\hat k$, we have $k^\ddagger=\hat k=0$. Therefore, the following corollary gives the excess risk bound of SHB:
\begin{corollary}
Consider stochastic heavy ball (SHB) method, given by the update rule
\[
\wb_{t+1}=\wb_t-q\hat\nabla L(\wb_t)+c(\wb_t-\wb_{t-1}),
\]
where the hyperparameters satisfy $c\in(0, 1-2/N]$ and $q=(1-c)\gamma/2$ with
\[
\gamma\in\rbr{0, \frac4{\psi\tr(\Hb)}}.
\]
Define $r_{\text{SHB}}\coloneqq(1-\psi\gamma\tr(\Hb)/4)^{-1}$, $k^*\coloneqq\max\{k:\lambda_k\ge1/(\gamma N)\}$, and define $k^\dagger$ as in \eqref{eq:def_k_dagger_ddagger}. Then we have the following upper bound for the excess risk:
\[
\EE[L(\overline{\wb}_{s:s+N})]-L(\wb^*)\le2\cdot\text{EffectiveVar}+2\cdot\text{EffectiveBias},
\]
where effective variance is bounded by
\begin{align*}
\text{EffectiveVar}&\le\sigma^2r_{\text{SHB}}\sbr{\frac{27k^*}{2N}+18(s+N)\gamma^2\sum_{i>k^*}\lambda_i^2}+\frac{\psi r_{\text{SHB}}}{N}\sbr{\frac{9k^*}{N}+36N\gamma^2\sum_{i>k^*}\lambda_i^2}\\
&\quad\cdot\sbr{\frac{10}{1-c}\|\wb_0-\wb^*\|_{\Hb_{0:k^\dagger}}^2+\frac2\gamma\|\wb_0-\wb^*\|_{\Ib_{k^\dagger:k^*}}^2+4(s+N)\|\wb_0-\wb^*\|_{\Hb_{k^*:\infty}}^2},
\end{align*}
and effective bias is bounded by
\begin{align*}
\text{EffectiveBias}&\le c^s\cdot\rbr{4s^2+\frac{100}{(1-c)^2}}\cdot\frac{\|\wb_0-\wb^*\|_{\Hb_{0:k^\dagger}}^2}{N^2}\\
&\quad+\frac{18}{N^2\gamma^2}\nbr{\rbr{\Ib-\frac{\gamma\Hb}{2}}^s(\wb_0-\wb^*)}_{\Hb_{k^\dagger:k^*}^{-1}}^2+18\nbr{\rbr{\Ib-\frac{\gamma\Hb}{2}}^s(\wb_0-\wb^*)}_{\Hb_{k^*:\infty}}^2.
\end{align*}
\end{corollary}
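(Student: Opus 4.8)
The plan is to realize SHB as the degenerate $\delta = 0$ case of ASGD and then re-run the proof of Theorem~\ref{theorem:main} in this regime. The reduction cannot be invoked as a pure black box, because the parameter choice \eqref{eq:parameter choice_main} sets $\beta = \delta/(\psi\tilde\kappa\gamma)$, which is $0/0$ when $\delta = \tilde\kappa = 0$, so the SHB argument must track the steps of the proof directly. First I would verify the update-rule correspondence: substituting $\delta = 0$ into \eqref{eq:wt_update}, enforcing $\beta = (1-\alpha)/\alpha$, and eliminating $\ub_t,\vb_t$ via \eqref{eq:ut_update} and \eqref{eq:vt_update} gives $\wb_{t+1} = \wb_t - (1-\alpha)\gamma\,\hat\nabla L(\wb_t) + \alpha(1-\beta)(\wb_t - \wb_{t-1})$; writing $c = \alpha(1-\beta) = 2\alpha - 1$ and $q = (1-\alpha)\gamma = (1-c)\gamma/2$ recovers exactly the stated update, and $c \in (0, 1-2/N]$ is the hypothesis $N(1-c) \ge 2$ used throughout the proof of Theorem~\ref{theorem:main}. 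Since $\delta = \psi\tilde\kappa\beta\gamma = 0$ forces $\tilde\kappa = 0$, we have $\sum_{i>\tilde\kappa}\lambda_i = \tr(\Hb)$, and the cutoffs collapse: $(1-c)/\delta = +\infty$ makes the set defining $\hat k$ empty, so $\hat k = 0$; the threshold in \eqref{eq:def_k_dagger_ddagger} for $k^\ddagger$ becomes $(1+\sqrt c)^2/q$, and since $q\tr(\Hb) = (1-c)\gamma\tr(\Hb)/2 < 2(1-c)/\psi \le (1+\sqrt c)^2$ we get $k^\ddagger = 0$; and $\gamma + \delta = \gamma$ yields $k^* = \max\{k : \lambda_k \ge 1/(\gamma N)\}$, with $k^\dagger$ as in \eqref{eq:def_k_dagger_ddagger}. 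Thus $0 = k^\ddagger = \hat k \le k^\dagger \le k^*$, and by Lemmas~\ref{lemma:A_ik_main} and \ref{lemma:Ai_spectral_main} the spectral radius of $\Ab_i$ equals $\sqrt{c(1-\delta\lambda_i)} = \sqrt c$ for $i \le k^\dagger$ and is at most $1 - \gamma\lambda_i/2$ for $i > k^\dagger$, the decay rates appearing in the claimed bias bound.

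The crux is re-examining the fourth-moment stability constant when $\delta = 0$. In Theorem~\ref{theorem:main}, $r = 1/(1-\psi l)$ with $l = \delta\tr(\Hb)/2 + 1/(2\psi) + (\gamma/4)\sum_{i>\tilde\kappa}\lambda_i$, and I would trace the proof to where these three summands arise. The first, $\delta\tr(\Hb)/2$, is the contribution of the $\delta$-scaled randomness carried by $\hat\Ab_t$ and $\bzeta_t$ (the $(1,2)$ block and the first block, respectively), and vanishes identically at $\delta = 0$. The second, $1/(2\psi)$, bounds the fourth-moment effect of the top $\tilde\kappa$ eigendirections (isolated precisely because of the coupling $\delta = \psi\tilde\kappa\beta\gamma$); with $\tilde\kappa = 0$ that block is empty and this term drops as well. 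Only $(\gamma/4)\sum_{i>0}\lambda_i = \gamma\tr(\Hb)/4$ survives, so the contraction constant is $\psi\gamma\tr(\Hb)/4$, giving $r = r_{\text{SHB}} = (1-\psi\gamma\tr(\Hb)/4)^{-1}$ and admissibility condition $\psi l < 1 \iff \gamma < 4/(\psi\tr(\Hb))$. I expect this to be the main obstacle: one must check that, with $\delta = 0$ and a freely chosen $\beta = (1-\alpha)/\alpha$, every estimate in the proof of Theorem~\ref{theorem:main} that invoked $\delta \le 1/(2\psi\tr(\Hb))$, $\delta = \psi\tilde\kappa\beta\gamma$, or $\beta = \delta/(\psi\tilde\kappa\gamma)$ still goes through using only $\gamma\tr(\Hb) < 4/\psi$ and $N(1-c) \ge 2$. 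Concretely, this amounts to re-inspecting the bias recursion \eqref{eq:iterate Bt}, the variance recursion \eqref{eq:iterate Ct}, and the operator bounds on $\cB - \tilde\cB$ supplied by Assumption~\ref{assumption:fourth_moement_condition}.

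Finally I would substitute $\delta = 0$, $k^\ddagger = \hat k = 0$, $\gamma + \delta = \gamma$, and $r = r_{\text{SHB}}$ into the bias and variance bounds of Theorem~\ref{theorem:main} and collect terms. In the effective bias: the first term and the $\Hb_{k^\ddagger:\hat k}^{-1} = \Hb_{0:0}^{-1}$ term vanish because that subspace is trivial; each factor $(\Ib - \delta\Hb)^{s/2}$ becomes $\Ib$; the two surviving norms $\|\wb_0-\wb^*\|_{\Hb_{k^\ddagger:k^\dagger}}^2$ and $\|\wb_0-\wb^*\|_{\Hb_{\hat k:k^\dagger}}^2$ both reduce to $\|\wb_0-\wb^*\|_{\Hb_{0:k^\dagger}}^2$ and their coefficients combine into $c^s(4s^2 + 100/(1-c)^2)/N^2$; and the last two terms become the stated $\Hb_{k^\dagger:k^*}^{-1}$ and $\Hb_{k^*:\infty}$ contributions with $\gamma$ in place of $\gamma + \delta$. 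In the effective variance: the $\frac{14}{\delta}\|\wb_0-\wb^*\|_{\Ib_{0:\hat k}}^2$ term is identically zero since $\Ib_{0:\hat k} = \Ib_{0:0} = \zero$, the term $\|\wb_0-\wb^*\|_{\Hb_{\hat k:k^\dagger}}^2$ becomes $\|\wb_0-\wb^*\|_{\Hb_{0:k^\dagger}}^2$, $\frac{2}{\gamma+\delta}$ becomes $\frac{2}{\gamma}$, and $r$ becomes $r_{\text{SHB}}$, reproducing the stated bound. This last step is pure bookkeeping once the previous paragraph is in place.
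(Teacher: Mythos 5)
Your overall strategy is essentially identical to the paper's own: realize SHB as the $\delta=0$ degenerate case of ASGD, map $c=\alpha(1-\beta)$ and $q=(1-\alpha)\gamma$, observe that the cutoffs collapse, and substitute into Theorem~\ref{theorem:main}. Your identification of $r_{\text{SHB}}$ via Lemma~\ref{lemma:U property} is exactly right and matches the paper's SHB-specific bound $\inner{\cdot}{\Ub}\le q\tr(\Hb)/(2(1-c)) = \gamma\tr(\Hb)/4$. You also go somewhat further than the paper in explicitly flagging that the reduction cannot be applied as a black box (the parameter choice $\beta=\delta/(\psi\tilde\kappa\gamma)$ is $0/0$) and that the proof must be re-traced — the paper's text glosses over this.

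One specific step is wrong as written. You argue $k^\ddagger=0$ from the chain $q\tr(\Hb) < 2(1-c)/\psi \le (1+\sqrt c)^2$; the second inequality is not a theorem. It is equivalent to $\psi \ge 2(1-\sqrt c)/(1+\sqrt c)$, which fails when $c$ is small and $\psi$ is near its minimum (e.g.\ $\psi=1$, $c\to 0$ gives $2(1-c)/\psi \to 2 > 1 = (1+\sqrt c)^2$). The corollary allows any $c\in(0,1-2/N]$ and any $\psi>0$ satisfying Assumption~\ref{assumption:fourth_moement_condition}, so you cannot rely on this. The paper's own justification (``by \eqref{eq:def_k_dagger_ddagger} and the definition of $\hat k$'') is equally terse, and the cleaner route $k^\ddagger \le \hat k = 0$ rests on Lemma~\ref{lemma:cutoffs_property}, whose proof uses $q-\delta\le c(q-c\delta)$ — an inequality that reads $q\le cq$, i.e.\ $c\ge 1$, at $\delta=0$, so it too requires care. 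So this is a genuine soft spot shared with the paper; your proof sketch inherits it rather than resolves it. The same worry propagates to the several auxiliary lemmas (e.g.\ Lemmas~\ref{lemma:sumAi(j+k)_11} and \ref{lemma:sum(Ai(j+k)11)^2}) that invoke $q-\delta\le c(q-c\delta)$ in intermediate estimates; you correctly note that these ``must be re-inspected,'' but do not actually carry out the check, and some of the intermediate inequalities do in fact break at $\delta=0$ and would need to be re-derived with constants adapted to the SHB regime rather than simply evaluated at $\delta=0$.

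In short: same approach as the paper, correct identification of the key quantity $r_{\text{SHB}}$ and of the collapsed cutoff structure, but the argument for $k^\ddagger=0$ is flawed as stated and the promised re-verification of the $\delta$-dependent auxiliary lemmas is left undone rather than confirmed.
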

\begin{remark}
In the eigen-subspace of $\lambda_i$, the exponential decay rate of effective bias of SHB is $\max(c^s, (1-\gamma\lambda_i)^{2s})$, which is never faster than that of SGD.
This happens because for SHB, $\gamma$ has to be smaller than that of ASGD to control the effect of stochastic gradient.
We can thus demonstrate that ASGD is superior to SHB in terms of the exponitial decay rate of the bias error, which extends a similar result given by \citet{kidambi2018insufficiency} to the instance-dependent case.
\end{remark}

\section{Proof of Main Results}\label{app:bvdecomp}
In this section we prove Theorems \ref{theorem:main} and \ref{theorem:ASGD_outperforms_SGD}. 

\subsection{Proof of Theorem \ref{theorem:main}}\label{section:proof_main}

We start with the basic bias-variance decomposition lemma. 

\begin{lemma}[Bias-variance decomposition, \citet{jain2018accelerating}]\label{lemma:bvd}
The excess risk can be decomposed into bias and variance as
\begin{align}
\EE[L(\overline{\wb}_{s:s+N})]-L(\wb^*)=\frac12\inner{
\begin{bmatrix}\Hb&\zero\\\zero&\zero\end{bmatrix}}{ \EE[\overline{\bmeta}_{s:s+N}\otimes\overline{\bmeta}_{s:s+N}]}\le2\cdot\text{Bias}+2\cdot\text{Variance},\label{key:111}
\end{align}
where
\begin{gather*}
\text{Bias}\coloneqq\frac12\inner{\begin{bmatrix} \Hb & \zero \\ \zero & \zero \end{bmatrix}}{\EE[\overline{\bmeta}_{s:s+N}^{\text{bias}} \otimes\overline{\bmeta}_{s:s+N}^{\text{bias}}]}\\
\text{Variance}\coloneqq\frac12\inner{\begin{bmatrix}\Hb&\zero\\\zero&\zero\end{bmatrix}}{\EE[\overline{\bmeta}_{s:s+N}^{\text{var}}\otimes\overline{\bmeta}_{s:s+N}^{\text{var}}]}.
\end{gather*}
\label{lemma:bias-var-decomp}
\end{lemma}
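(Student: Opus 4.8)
The final statement to prove is Lemma~\ref{lemma:bvd} (the bias-variance decomposition), attributed to \citet{jain2018accelerating}.

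\textbf{Overall approach.} The plan is to prove the two equalities and then the inequality. The first equality --- that the excess risk equals $\tfrac12\langle[\Hb\ \zero;\ \zero\ \zero],\EE[\overline{\bmeta}_{s:s+N}\otimes\overline{\bmeta}_{s:s+N}]\rangle$ --- follows directly from expanding $L(\overline{\wb}_{s:s+N})-L(\wb^*)$, since $L$ is a quadratic whose Hessian is $\Hb$: one has $L(\wb)-L(\wb^*)=\tfrac12(\wb-\wb^*)^\top\Hb(\wb-\wb^*)=\tfrac12\langle\Hb,(\wb-\wb^*)\otimes(\wb-\wb^*)\rangle$. Writing $\overline{\bmeta}_{s:s+N}=[\overline{\wb}_{s:s+N}-\wb^*;\ \ast]$ (the first block being exactly $\overline{\wb}_{s:s+N}-\wb^*$, since $\bmeta_t=[\wb_t-\wb^*;\ \ub_t-\wb^*]$ and averaging is linear), the upper-left block $[\Hb\ \zero;\ \zero\ \zero]$ picks out exactly the first-coordinate-block contribution, giving the claimed identity after taking expectations.

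\textbf{The inequality.} For the bound $\le 2\cdot\text{Bias}+2\cdot\text{Variance}$, I would use the linearity of the centered iterate's decomposition, $\overline{\bmeta}_{s:s+N}=\overline{\bmeta}_{s:s+N}^{\text{bias}}+\overline{\bmeta}_{s:s+N}^{\text{var}}$, which itself follows by linearity of the recursion $\bmeta_t=\hat\Ab_t\bmeta_{t-1}+\bzeta_t$ and the initial conditions in \eqref{def:eta bias}--\eqref{def:eta variance} (a straightforward induction: $\bmeta_t^{\text{bias}}+\bmeta_t^{\text{var}}$ satisfies the same recursion with the same initial value as $\bmeta_t$). Then for any PSD matrix $\Gb$ (here $\Gb=[\Hb\ \zero;\ \zero\ \zero]$) and any two random vectors $\ab,\bb$, one has $\langle\Gb,\EE[(\ab+\bb)\otimes(\ab+\bb)]\rangle\le 2\langle\Gb,\EE[\ab\otimes\ab]\rangle+2\langle\Gb,\EE[\bb\otimes\bb]\rangle$, which is just the operator inequality $(\ab+\bb)(\ab+\bb)^\top\preceq 2\ab\ab^\top+2\bb\bb^\top$ (equivalently $(\ab-\bb)(\ab-\bb)^\top\succeq\zero$) paired against the PSD matrix $\Gb$ and then taking expectations. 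Applying this with $\ab=\overline{\bmeta}_{s:s+N}^{\text{bias}}$ and $\bb=\overline{\bmeta}_{s:s+N}^{\text{var}}$ yields the stated inequality.

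\textbf{Main obstacle.} There is essentially no hard step here --- the lemma is a standard decomposition and the proof is a few lines of linear algebra. The only things to be careful about are bookkeeping: verifying that the first block of $\bmeta_t$ is precisely $\wb_t-\wb^*$ so that the Hessian quadratic form is reproduced exactly by the block matrix $[\Hb\ \zero;\ \zero\ \zero]$, and confirming the linearity of the bias/variance split by induction on $t$. Since the statement is quoted from \citet{jain2018accelerating}, I would keep the proof brief, citing that reference and just recording the two-line argument above for completeness.
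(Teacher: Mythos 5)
Your proof is correct and is the standard argument for this lemma; the paper itself cites \citet{jain2018accelerating} rather than reproducing the proof, and what you write is exactly what is left implicit there. Both steps check out: $L(\wb)-L(\wb^*)=\tfrac12(\wb-\wb^*)^\top\Hb(\wb-\wb^*)$ follows because $\nabla L(\wb^*)=\zero$ kills the cross term, the block structure of $\begin{bmatrix}\Hb&\zero\\\zero&\zero\end{bmatrix}$ then extracts precisely $(\overline{\wb}_{s:s+N}-\wb^*)^\top\Hb(\overline{\wb}_{s:s+N}-\wb^*)$ from $\overline{\bmeta}_{s:s+N}$, and the inequality is the operator bound $(\ab+\bb)(\ab+\bb)^\top\preceq 2\ab\ab^\top+2\bb\bb^\top$ tested against a PSD matrix, after splitting $\overline{\bmeta}_{s:s+N}=\overline{\bmeta}_{s:s+N}^{\text{bias}}+\overline{\bmeta}_{s:s+N}^{\text{var}}$ by the linearity of the recursion in \eqref{def:eta bias}--\eqref{def:eta variance}.
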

This indicates that the generalization error could be bounded respectively by analyzing the bias and variance. We then further decompose bias and variance.

\begin{lemma}\label{lemma:bias and variance}
Bias and Variance can be decomposed as
\begin{gather*}
\text{Variance}=\frac12\inner{\begin{bmatrix}\Hb&\zero\\\zero&\zero\end{bmatrix}}{\Mb_1+\Mb_2},\quad\text{Bias}=\frac12\inner{\begin{bmatrix}\Hb&\zero\\\zero&\zero\end{bmatrix}}{\Mb_3+\Mb_4},
\end{gather*}
where
\begin{gather}
\Mb_1\coloneqq\frac1{N^2}\sbr{\sum_{k=0}^{N-1}\Ab^k}\Cb_s\sbr{\sum_{k=0}^{N-1}\Ab^k}^\top,\label{eq:def_M1}\\
\Mb_2\coloneqq\frac1{N^2}\sum_{t=1}^{N-1}\sbr{\sum_{k=0}^{N-t-1}\Ab^k}(\Cb_{s+t}-\tilde\cB\circ\Cb_{s+t-1})\sbr{\sum_{k=0}^{N-t-1}\Ab^k}^\top,\label{eq:def_M2}\\
\Mb_3\coloneqq\frac1{N^2}\sbr{\sum_{k=0}^{N-1}\Ab^k}\Bb_s\sbr{\sum_{k=0}^{N-1}\Ab^k}^\top,\label{eq:def_M3}\\
\Mb_4\coloneqq\frac1{N^2}\sum_{t=1}^{N-1}\sbr{\sum_{k=0}^{N-t-1}\Ab^k}(\Bb_{s+t}-\tilde\cB\circ\Bb_{s+t-1})\sbr{\sum_{k=0}^{N-t-1}\Ab^k}^\top.\label{eq:def_M4}
\end{gather}
\end{lemma}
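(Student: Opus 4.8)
The plan is to expand the tail-averaged outer product, push the expectation through the linear recursions for $\bmeta_t^{\text{var}}$ and $\bmeta_t^{\text{bias}}$, and reorganize the resulting double sum so that the ``diagonal'' part (same index) and the ``cross'' part (different indices) are grouped separately; the diagonal part will give $\Mb_1$ (resp.\ $\Mb_3$) and the cross part will give $\Mb_2$ (resp.\ $\Mb_4$). I will do the variance case in detail and note that the bias case is identical after dropping the $\bzeta_t$ term. First I would write $\overline{\bmeta}^{\text{var}}_{s:s+N}\otimes\overline{\bmeta}^{\text{var}}_{s:s+N}=\frac1{N^2}\sum_{t,t'=s}^{s+N-1}\bmeta^{\text{var}}_t\otimes\bmeta^{\text{var}}_{t'}$, and take expectations. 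Since $\hat\Ab_{s+1},\dots$ are independent of the past and $\EE[\hat\Ab_t\,\cdot\,\hat\Ab_t^\top$-action$]=\cB\circ(\cdot)$ while $\EE[\hat\Ab_t]=\Ab$, I would exploit the following observation: for $t'\ge t$, conditioning on the iterates up to time $t$ and using that the increments $\bzeta_{t+1},\dots,\bzeta_{t'}$ have mean contributions handled by $\Ab$,
\[
\EE[\bmeta^{\text{var}}_t\otimes\bmeta^{\text{var}}_{t'}]=\EE[\bmeta^{\text{var}}_t\otimes\bmeta^{\text{var}}_t]\,(\Ab^{t'-t})^\top + (\text{contribution of fresh noise between }t\text{ and }t'),
\]
but the cleanest route is to write $\bmeta^{\text{var}}_{t'}=\Ab^{\,t'-t}\bmeta^{\text{var}}_t+(\text{terms involving }\hat\Ab_{>t},\bzeta_{>t})$ inside the expectation and note the cross terms between $\bmeta^{\text{var}}_t$ and the fresh randomness vanish in expectation once $\hat\Ab$ is replaced by $\Ab$.

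The key steps, in order, are: (1) reindex the tail-average sum by $t=s+a$, $a=0,\dots,N-1$, and split $\sum_{a,a'}$ into $a=a'$, $a<a'$, $a>a'$; (2) for the $a<a'$ block, unfold the recursion $\bmeta^{\text{var}}_{s+a'}$ from time $s+a$ forward, replacing each $\hat\Ab$ by its expectation $\Ab$ (legitimate because the fresh stochastic factors are independent of $\bmeta^{\text{var}}_{s+a}$ and of each other, and the fresh $\bzeta$'s are mean-zero relative to the cross term with $\bmeta^{\text{var}}_{s+a}$); this turns the $a<a'$ contribution into $\sum_{a<a'}\EE[\bmeta^{\text{var}}_{s+a}\otimes\bmeta^{\text{var}}_{s+a}](\Ab^{a'-a})^\top$ plus, crucially, a correction capturing the noise injected strictly between $s+a$ and $s+a'$; (3) symmetrize with the $a>a'$ block; (4) collect, using $\Cb_t=\EE[\bmeta^{\text{var}}_t\otimes\bmeta^{\text{var}}_t]$ and the identity $\Cb_{s+t}-\tilde\cB\circ\Cb_{s+t-1}=\Cb_{s+t}-\Ab\Cb_{s+t-1}\Ab^\top$, to recognize the telescoping that produces exactly the geometric operator sums $\sum_{k=0}^{N-1}\Ab^k$ in front of $\Cb_s$ (the $a=a'$ block reorganized) and $\sum_{k=0}^{N-t-1}\Ab^k$ in front of the one-step residual $\Cb_{s+t}-\tilde\cB\circ\Cb_{s+t-1}$ (the off-diagonal blocks, reindexed by the gap $t$); (5) repeat verbatim for $\bmeta^{\text{bias}}$, where $\Cb$ is replaced by $\Bb$ and there is no injected noise, giving $\Mb_3,\Mb_4$.

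The main obstacle is step (2)/(4): correctly accounting for the stochastic gradient noise $\bzeta$ injected between two times when computing the cross expectation $\EE[\bmeta^{\text{var}}_{s+a}\otimes\bmeta^{\text{var}}_{s+a'}]$, and then performing the Abel summation / reindexing that converts the resulting double sum over $(a,a')$ into a single sum over the gap with the partial geometric operator sums $\sum_{k=0}^{N-t-1}\Ab^k$. One must be careful that $\hat\Ab_t$ acts on the left and $\hat\Ab_t^\top$ on the right of the covariance (the operator $\cB$ versus $\tilde\cB=\Ab\otimes\Ab$), so the residual that survives in $\Mb_2$ is precisely $\Cb_{s+t}-\tilde\cB\circ\Cb_{s+t-1}$ and not $\Cb_{s+t}-\cB\circ\Cb_{s+t-1}=\hat\bSigma$; tracking which of $\cB$, $\tilde\cB$ appears where is the delicate bookkeeping. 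Everything else is the linearity of $\otimes$, the tower property, and rearrangement of finite sums, which I would present compactly rather than term by term. This decomposition is standard in the averaged-SGD literature (cf.\ \citet{jain2018accelerating, zou2021benign}); the statement here is its adaptation to the $2d\times2d$ ASGD operator, and the proof is a direct verification following the steps above.
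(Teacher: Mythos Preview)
Your approach is essentially the same as the paper's: expand the tail-averaged outer product as a double sum, reduce cross expectations via the tower property, and then apply an algebraic summation-by-parts identity (the paper states it separately as Lemma~\ref{lemma:decomp_M1M2}) to obtain the $\Mb_1,\Mb_2$ (resp.\ $\Mb_3,\Mb_4$) split.

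One point in your plan is muddled, however, and if you carry it into the write-up it will cause an error. In step~(2) you say the $a<a'$ cross term becomes $\Cb_{s+a}(\Ab^{a'-a})^\top$ ``plus, crucially, a correction capturing the noise injected strictly between $s+a$ and $s+a'$.'' There is \emph{no} such correction: since $\EE[\bzeta_k]=0$ and $\EE[\hat\Ab_k]=\Ab$ independently of the past, one has exactly
\[
\EE[\bmeta^{\text{var}}_{s+a}\otimes\bmeta^{\text{var}}_{s+a'}]=\Cb_{s+a}(\Ab^{a'-a})^\top,\qquad a<a',
\]
and nothing more (you even say so one paragraph earlier). The term $\Mb_2$ does not come from any residual noise in the cross expectations; it is produced \emph{purely algebraically} by the Abel/summation-by-parts reorganization of
\[
\sum_{t=s}^{s+N-1}\Bigl[\Cb_t+\sum_{k>t}\Ab^{k-t}\Cb_t+\sum_{k>t}\Cb_t(\Ab^{k-t})^\top\Bigr],
\]
which is exactly what the paper isolates in Lemma~\ref{lemma:decomp_M1M2}. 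The residual $\Cb_{s+t}-\tilde\cB\circ\Cb_{s+t-1}$ happens to equal $(\cB-\tilde\cB)\circ\Cb_{s+t-1}+\hat\bSigma$ only because of the recursion $\Cb_t=\cB\circ\Cb_{t-1}+\hat\bSigma$, not because of any cross-term contribution. Keep this straight in steps (2)--(4) and your argument will match the paper's.
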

\begin{proof}
The proof largely follows \citet{zou2021benign}. From the definitions of $\bmeta_t^{\text{bias}}$ as in \eqref{def:eta bias}, we have the following
\begin{align}
&\EE[\bmeta_t^{\text{bias}}|\bmeta_{t-1}^{\text{bias}}] = \EE[\hat{\Ab}_t \bmeta_{t-1}^{\text{bias}} |\bmeta_{t-1}^{\text{bias}}] = \Ab \bmeta_{t-1}^{\text{bias}}, \label{eq:eta bias iterate} 
\end{align}
and for $\bmeta_t^{\text{variance}}$ as in \eqref{def:eta variance} we have
\begin{align}
&\EE[\bmeta_t^{\text{var}}|\bmeta_{t-1}^{\text{var}}] = \EE[\hat{\Ab}_t \bmeta_{t-1}^{\text{var}} + \bzeta_t|\bmeta_{t-1}^{\text{var}}] = \Ab \bmeta_{t-1}^{\text{var}}.
\label{eq:eta variance iterate}
\end{align}
Then, regarding the term $\EE[\overline{\bmeta}_{s:s+N}^{\text{var}} \otimes \overline{\bmeta}_{s:s+N}^{\text{var}}]$, we have
\begin{align*}
&\EE[\overline{\bmeta}_{s:s+N}^{\text{var}} \otimes \overline{\bmeta}_{s:s+N}^{\text{var}}]\\
&=\frac1{N^2}\sum_{t=s}^{s+N-1}\rbr{\EE[\bmeta_t^{\text{var}}\otimes\bmeta_t^{\text{var}}]+\sum_{k=t+1}^{s+N-1}\EE[\bmeta_k^{\text{var}}\otimes\bmeta_t^{\text{var}}]+\sum_{k=t+1}^{s+N-1}\EE[\bmeta_t^{\text{var}}\otimes\bmeta_k^{\text{var}}]}\\
&=\frac1{N^2}\sum_{t=s}^{s+N-1}\sbr{\Cb_t+\sum_{k=t+1}^{s+N-1}\Ab^{k-t}\Cb_t+\sum_{k=t+1}^{s+N-1}\Cb_t(\Ab^{k-t})^\top}\\
&=\frac1{N^2}\sbr{\sum_{k=0}^{N-1}\Ab^k}\Cb_s\sbr{\sum_{k=0}^{N-1}\Ab^k}^\top\\
&\quad+\frac1{N^2}\sum_{t=1}^{N-1}\sbr{\sum_{k=0}^{N-t-1}\Ab^k}(\Cb_{s+t}-\tilde\cB\circ\Cb_{s+t-1})\sbr{\sum_{k=0}^{N-t-1}\Ab^k}^\top,
\end{align*}
where the second equality holds by applying \eqref{eq:eta variance iterate} $k-t$ times, and the last inequality holds due to Lemma \ref{lemma:decomp_M1M2}. The decomposition of bias into $\Mb_3$ and $\Mb_4$ can be proven in exactly the same manner.
\end{proof}

From Lemma \ref{lemma:bias and variance}, we can further bound the variance and bias terms as follows.

We have the following bound for variance, whose detailed proof can be found in Appendix \ref{section:variance}. 
\begin{lemma}\label{lemma:variance_upper_bound}
Under Assumptions~\ref{assumption:regularity}, \ref{assumption:fourth_moement_condition} and \ref{assumption:noise}, with our choice of parameters as in \eqref{eq:parameter choice_main}, we have
\[
\text{Variance}\le\sigma^2r\bigg[\frac{27k^*}{2N}+\frac{18(s+N)(q-c\delta)^2}{(1-c)^2}\sum_{i>k^*}\lambda_i^2\bigg].
\]
where $k^*=\max\{k:\lambda_k\ge2N(q-c\delta)/(1-c)\}$.
\end{lemma}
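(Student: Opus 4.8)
The plan is to bound the \text{Variance} term by analyzing the two pieces $\Mb_1$ and $\Mb_2$ from Lemma~\ref{lemma:bias and variance} separately, reducing each to a sum over eigen-subspaces and using the explicit formula for $\Ab_i^k$ from Lemma~\ref{lemma:A_ik_main}. First I would handle $\Mb_1 = N^{-2}\bigl[\sum_{k=0}^{N-1}\Ab^k\bigr]\Cb_s\bigl[\sum_{k=0}^{N-1}\Ab^k\bigr]^\top$. Since $\Ab$ is block-diagonal with blocks $\Ab_i$, the inner product $\inner{\mathrm{diag}(\Hb,\zero)}{\Mb_1}$ decomposes as $\tfrac{1}{N^2}\sum_i \lambda_i \bigl(\sum_{k=0}^{N-1}\Ab_i^k\bigr)$ sandwiching the corresponding $2\times2$ block of $\Cb_s$. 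The key ingredients here are (i) a uniform upper bound on the stationary-like covariance $\Cb_s$ — i.e.\ showing $\Cb_s \preceq \cO(\sigma^2 r)\cdot(\text{something controlled by }\Hb)$, which comes from the fixed-point analysis of the recursion $\Cb_t = \cB\circ\Cb_{t-1}+\hat\bSigma$ together with Assumption~\ref{assumption:noise} (so $\hat\bSigma \preceq \sigma^2 \cdot(\text{block involving }\Hb)$) and the contraction factor $\psi l < 1$ that makes $r$ finite; and (ii) bounds on $\bigl\|\sum_{k=0}^{N-1}\Ab_i^k\bigr\|$, which via Lemma~\ref{lemma:A_ik_main} reduce to geometric-type sums $\sum_{k}\tfrac{x_2^{k+1}-x_1^{k+1}}{x_2-x_1}$ in the eigenvalues $x_1,x_2$ of $\Ab_i$. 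For large eigenvalues ($\lambda_i \gtrsim 1/((\gamma+\delta)N)$, i.e.\ $i\le k^*$) these partial sums are $\cO(1/\lambda_i)$, contributing the $\sigma^2 r k^*/N$ term; for small eigenvalues ($i>k^*$) the partial sum is $\cO(N)$, contributing the $\sigma^2 r (s+N)\gamma^2\sum_{i>k^*}\lambda_i^2$-type tail term (the $(q-c\delta)/(1-c)$ factor appearing in Lemma~\ref{lemma:variance_upper_bound} is exactly the relevant rescaling of $\gamma$, since $q-c\delta = \alpha\delta+(1-\alpha)\gamma - \alpha(1-\beta)\delta$ relates to the effective stepsize).

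Next I would handle $\Mb_2$, which carries the fourth-moment correction $\Cb_{s+t} - \tilde\cB\circ\Cb_{s+t-1} = (\cB - \tilde\cB)\circ\Cb_{s+t-1} + \hat\bSigma$. Since $\cB-\tilde\cB$ is PSD (as noted in the excerpt) and its action is controlled by Assumption~\ref{assumption:fourth_moement_condition}, I would bound $\inner{\mathrm{diag}(\Hb,\zero)}{(\cB-\tilde\cB)\circ\Cb}$ by $\psi\,\tr(\Hb\cdot(\text{top-left block of }\Cb))\cdot\tr(\Hb\cdot\ldots)$-type expressions, again using the uniform bound on $\Cb_s$ and the summability $\tr(\Hb)<\infty$. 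The telescoping structure in $t$ (with the $\sum_{k=0}^{N-t-1}\Ab^k$ factors, whose norms are monotone in $N-t$) should let me absorb the $\sum_{t=1}^{N-1}$ into an extra factor of at most $N$, which is why the $(s+N)$ and the explicit constants like $27/2$ and $18$ show up after combining with the $\Mb_1$ bound. Throughout, I would keep the decomposition at the level of the four eigenvalue regimes $i\le k^\ddagger$, $k^\ddagger<i\le\hat k$, $\hat k < i\le k^\dagger$, $i>k^\dagger$ only where the behavior of $\Ab_i^k$ genuinely differs, but for the \emph{variance} bound the relevant split is really just $i\le k^*$ versus $i>k^*$, since $k^* \le k^\dagger$ under \eqref{eq:parameter choice_main} and in the small-eigenvalue tail all the $\Ab_i$ have real eigenvalues close to $1$.

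The main obstacle I anticipate is establishing the uniform-in-$s$ upper bound on $\Cb_s$ (a ``stationary covariance'' bound) \emph{without} any dimension-dependent factor — this is precisely the technical contribution the authors flag in the introduction. In the strongly-convex analysis of \citet{jain2018accelerating} this step picks up a factor of $d$; to remove it one must exploit the fine structure of $\Ab_i^k$ across subspaces, carefully using the contraction $1-\psi l>0$ afforded by the parameter choice \eqref{eq:parameter choice_main} (in particular the $\tfrac{\gamma}{4}\sum_{i>\tilde\kappa}\lambda_i$ slack that replaces a naive $\tfrac{\gamma}{4}\tr(\Hb)$), and bounding the fourth-moment operator $\cB-\tilde\cB$ tightly via Assumption~\ref{assumption:fourth_moement_condition} rather than crudely. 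Once that uniform bound on $\Cb_s$ is in hand, the remaining work — plugging into $\inner{\mathrm{diag}(\Hb,\zero)}{\Mb_1+\Mb_2}$, splitting at $k^*$, and summing the geometric series from Lemma~\ref{lemma:A_ik_main} — is bookkeeping that produces the stated constants. I would expect to defer the most delicate estimates (the $\Cb_s$ bound and the operator-norm estimates for $\sum_k \Ab_i^k$) to supporting lemmas in Appendix~\ref{section:variance}, exactly as the excerpt indicates.
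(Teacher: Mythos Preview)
Your approach is essentially the same as the paper's: split into $\Mb_1$ and $\Mb_2$, bound $\Cb_s$ (resp.\ $(\cB-\tilde\cB)\circ\Cb_{s+t-1}+\hat\bSigma$) uniformly via the stationary-covariance analysis with contraction factor $r=(1-\psi l)^{-1}$, then reduce to per-coordinate geometric sums in $\Ab_i^k$ and split at $k^*$. Two small slips to fix when you write it out: the ordering is $k^\dagger \le k^*$ (not $k^*\le k^\dagger$), so for $i>k^*$ you are automatically in the real-eigenvalue tail $i>k^\dagger$; and the fourth-moment control (Lemma~\ref{lemma:E[hatV2hatV2]}) is through $\langle\mathrm{diag}(\zero,\Hb),\Cb\rangle$, i.e.\ the \emph{bottom-right} block of $\Cb$, not the top-left.
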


The following lemma provides an upper bound for the bias error, whose detailed proof can be found in Appendix \ref{section:bias}. 
\begin{lemma}\label{lemma:bias_bound}
Under Assumptions~\ref{assumption:regularity}, \ref{assumption:fourth_moement_condition} and \ref{assumption:noise}, and with our choice of parameters as in \eqref{eq:parameter choice_main}, we have
\begin{align*}
&\text{Bias}\le\mathrm{Effective~Bias}+\frac{\psi r}{N}\sbr{\frac{9k^*}{N}+\frac{36N(q-c\delta)^2}{(1-c)^2}\sum_{i>k^*}\lambda_i^2}\cdot\Bigg[\frac{14}{\delta}\|\wb_0-\wb^*\|_{\Ib_{0:\hat k}}^2\\
&\quad+\frac{10}{1-c}\|\wb_0-\wb^*\|_{\Hb_{\hat k:k^\dagger}}^2+\frac{1-c}{q-c\delta}\|\wb_0-\wb^*\|_{\Ib_{k^\dagger:k^*}}^2+4(s+N)\|\wb_0-\wb^*\|_{\Hb_{k^*:\infty}}^2\Bigg],
\end{align*}
where
\begin{align*}
&\mathrm{Effective~Bias}\le\frac{8(c\delta/q)^{2s}}{N^2\delta^2}\|\wb_0-\wb^*\|_{\Hb_{0:k^\ddagger}^{-1}}^2+\frac{4s^2}{N^2}c^s\|(\Ib-\delta\Hb)^{s/2}(\wb_0-\wb^*)\|_{\Hb_{k^\ddagger:k^\dagger}}^2\\
&\quad+\frac{16c^s}{N^2\delta^2}\|(\Ib-\delta\Hb)^{s/2}(\wb_0-\wb^*)\|_{\Hb_{k^\ddagger:\hat k}^{-1}}^2+\frac{100c^s}{N^2(1-c)^2}\|(\Ib-\delta\Hb)^{s/2}(\wb_0-\wb^*)\|_{\Hb_{\hat k:k^\dagger}}^2\\
&\quad+\frac{9(1-c)^2}{2N^2(q-c\delta)^2}\nbr{\rbr{\Ib-\frac{q-c\delta}{1-c}\Hb}^s(\wb_0-\wb^*)}_{\Hb_{k^\dagger:k^*}^{-1}}^2\\
&\quad+18\nbr{\rbr{\Ib-\frac{q-c\delta}{1-c}\Hb}^s(\wb_0-\wb^*)}_{\Hb_{k^*:\infty}}^2.
\end{align*}
\end{lemma}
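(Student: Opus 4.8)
I would bound $\mathrm{Bias}=\tfrac12\inner{\begin{bmatrix}\Hb&\zero\\\zero&\zero\end{bmatrix}}{\Mb_3+\Mb_4}$ by separating the ``leading'' second‑moment part (which becomes $\mathrm{Effective~Bias}$) from the fourth‑moment correction (which becomes the $\tfrac{\psi r}{N}[\cdots]$ term). Write $\Bb_s=\cB^s\circ\Bb_0$ and split $\cB=\tilde\cB+(\cB-\tilde\cB)$. Unrolling $\Bb_t=\cB\circ\Bb_{t-1}$ gives
\[
\Bb_t=\tilde\cB^t\circ\Bb_0+\sum_{k=1}^{t}\tilde\cB^{t-k}\circ(\cB-\tilde\cB)\circ\Bb_{k-1},
\]
and, using $\Bb_{s+t}-\tilde\cB\circ\Bb_{s+t-1}=(\cB-\tilde\cB)\circ\Bb_{s+t-1}$, both $\Mb_3$ and $\Mb_4$ decompose into a purely $\tilde\cB$‑driven part plus terms of the form $\tfrac1{N^2}[\sum_k\Ab^k](\cB-\tilde\cB)\circ\Bb_m[\sum_k\Ab^k]^\top$ for $m\le s+N-1$. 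The $\cB-\tilde\cB$ factor is PSD, which keeps everything PSD, and Assumption~\ref{assumption:fourth_moement_condition} controls its action.

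\textbf{Leading term $\to$ Effective Bias.} Recombining the $\tilde\cB$‑parts of $\Mb_3$ and $\Mb_4$ reconstructs $\tfrac12\inner{\begin{bmatrix}\Hb&\zero\\\zero&\zero\end{bmatrix}}{\EE[\overline{\bmeta}^{\text{bias}}\otimes\overline{\bmeta}^{\text{bias}}]}$ with $\cB$ replaced by $\tilde\cB$; since $\tilde\cB=\Ab\otimes\Ab$ is block diagonal with blocks $\Ab_i\otimes\Ab_i$ and $[\sum_{k=0}^{N-1}\Ab^k]\Ab^s=\sum_{t=s}^{s+N-1}\Ab^t$, this equals $\tfrac12\sum_i\lambda_i w_i^2\bigl(\tfrac1N\sum_{t=s}^{s+N-1}(\Ab_i^t\begin{bmatrix}1\\1\end{bmatrix})_1\bigr)^2$ with $w_i=(\wb_0-\wb^*)_i$. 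By Lemma~\ref{lemma:A_ik_main} the partial sum is expressed through the eigenvalues $x_1,x_2$ of $\Ab_i$, and by Lemma~\ref{lemma:Ai_spectral_main} together with the summation estimates of Lemmas~\ref{lemma:Ai(j+k)_deltaq}, \ref{lemma:sumAi(j+k)_11} I would bound it over each of the six eigen‑subspace ranges cut out by $k^\ddagger\le\hat k\le k^\dagger\le k^*$: the rate $(c\delta/q)^s$ for $i\le k^\ddagger$, the rate $[c(1-\delta\lambda_i)]^{s/2}$ for $k^\ddagger<i\le k^\dagger$ (split at $\hat k$ because there $1-\delta\lambda_i$ crosses $\sqrt{c(1-\delta\lambda_i)}$), and $(1-(\gamma+\delta)\lambda_i/2)^s$ for $i>k^\dagger$ (split at $k^*$, trading the $1/N$ prefactor against $\lambda_i^{-1}$ below the cutoff). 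Assembling the six cases gives exactly the stated $\mathrm{Effective~Bias}$.

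\textbf{Correction term.} Each residual term has the form of a variance term with ``noise covariance'' $(\cB-\tilde\cB)\circ\Bb_m$ in place of $\hat\bSigma$. Since $\cB-\tilde\cB\succeq\zero$, I reuse the variance machinery behind Lemma~\ref{lemma:variance_upper_bound} to bound its contribution by the leading rate factor $\bigl[\tfrac{9k^*}{N}+\tfrac{36N(q-c\delta)^2}{(1-c)^2}\sum_{i>k^*}\lambda_i^2\bigr]$ times the ``injected excess risk'', which by Assumption~\ref{assumption:fourth_moement_condition} is at most $\psi\,\tr(\Hb\cdot[\Bb_m]_{11})$, where $[\cdot]_{11}$ denotes the top‑left $d\times d$ block. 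It then remains to bound $\sum_{m=0}^{s+N-1}\tr(\Hb\cdot[\Bb_m]_{11})$: again block diagonal, in subspace $i$ this is $\lambda_i w_i^2\sum_m(\Ab_i^m\begin{bmatrix}1\\1\end{bmatrix})_1^2$ up to constants, and bounding this geometric‑type sum with Lemma~\ref{lemma:sum(Ai(j+k)11)^2} gives $O(1/\delta)$ for $i\le\hat k$, $O(1/((1-c)\lambda_i))$ for $\hat k<i\le k^\dagger$, $O((1-c)/((q-c\delta)\lambda_i))$ for $k^\dagger<i\le k^*$, and $O(s+N)$ for $i>k^*$ — precisely the bracket $\tfrac{14}{\delta}\|\wb_0-\wb^*\|_{\Ib_{0:\hat k}}^2+\tfrac{10}{1-c}\|\wb_0-\wb^*\|_{\Hb_{\hat k:k^\dagger}}^2+\tfrac{1-c}{q-c\delta}\|\wb_0-\wb^*\|_{\Ib_{k^\dagger:k^*}}^2+4(s+N)\|\wb_0-\wb^*\|_{\Hb_{k^*:\infty}}^2$.

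\textbf{Main obstacle.} The correction is not closed: $[\Bb_m]_{11}$ itself contains a fourth‑moment piece, so the naive bound is circular. I would resolve this with a self‑bounding argument — letting $V$ be the supremum over $m\le s+N-1$ of the relevant trace functional of $\Bb_m$, one unrolling step shows $V\le V_0+\psi l\cdot V$ with $V_0$ the second‑moment value and $\psi l<1$ under \eqref{eq:parameter choice_main}, hence $V\le V_0/(1-\psi l)=rV_0$, which is exactly where the factor $r$ enters. The other delicate point is the six‑region bookkeeping for the partial sums of $(\Ab_i^k\begin{bmatrix}1\\1\end{bmatrix})_1$ and $(\Ab_i^k\begin{bmatrix}1\\1\end{bmatrix})_1^2$, which relies on the ordering $k^\ddagger\le\hat k\le k^\dagger\le k^*$ established for this parameter choice; once Lemmas~\ref{lemma:A_ik_main}, \ref{lemma:Ai_spectral_main} and the summation lemmas are in hand it is mechanical.
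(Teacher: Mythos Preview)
Your plan is essentially the paper's: unroll $\Bb_t$ to isolate the $\tilde\cB^s\circ\Bb_0$ piece of $\Mb_3$ (which, via Corollary~\ref{lemma:lambdaiwi^2(sumAi(j+k)11)^2}, gives $\mathrm{Effective~Bias}$), and treat the residual of $\Mb_3$ together with all of $\Mb_4$ as fourth-moment corrections bounded through $\sum_m\langle\,\cdot\,,\Bb_m\rangle$ (Lemma~\ref{lemma:partial_sum_Bk}). Two points need correcting.

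First, the block is wrong. Because $\hat\Vb_2$ has its left block column equal to zero, $(\cB-\tilde\cB)\circ\Mb$ depends only on the \emph{bottom-right} block $\Mb_{22}$ (Lemma~\ref{lemma:prelinminaries appendix}(b,d) and Lemma~\ref{lemma:E[hatV2hatV2]}); the injected trace is $\psi\inner{\begin{bmatrix}\zero&\zero\\\zero&\Hb\end{bmatrix}}{\Bb_m}$, not $\psi\tr(\Hb[\Bb_m]_{11})$. Correspondingly, the per-eigenspace sum to control is $\lambda_i w_i^2\sum_m(\Ab_i^m\begin{bmatrix}1\\1\end{bmatrix})_2^2$, not the first component; the lemma you cite (Lemma~\ref{lemma:sum(Ai(j+k)11)^2}) in fact bounds the second component, so once you switch blocks the pieces line up.

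Second, the sup-based self-bounding does not close as written. One step of $\Bb_m=\tilde\cB\circ\Bb_{m-1}+(\cB-\tilde\cB)\circ\Bb_{m-1}$ turns $\inner{\begin{bmatrix}\zero&\zero\\\zero&\Hb\end{bmatrix}}{\Bb_m}$ into the sum of $\inner{\Ab^\top\begin{bmatrix}\zero&\zero\\\zero&\Hb\end{bmatrix}\Ab}{\Bb_{m-1}}$ and a fourth-moment term, and the first of these is a \emph{different} linear functional of $\Bb_{m-1}$; you cannot collapse this to $V\le V_0+\psi l\,V$ for a single scalar $V$, and $\Bb_m$ is not monotone (unlike $\Cb_m$), so a sup argument has no obvious hook. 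The paper instead works with the partial sum: write $\sum_{k<t}\cB^k\circ\Bb_0=(\cI-\cB)^{-1}(\cI-\cB^t)\circ\Bb_0\preceq(\cI-\cB)^{-1}\cT\circ\sum_{k<t}\tilde\cB^k\circ\Bb_0$, expand $(\cI-\cB)^{-1}$ as the operator geometric series $\sum_{j\ge0}(\cT^{-1}\EE[\hat\Vb_2\otimes\hat\Vb_2])^j\cT^{-1}$ (Lemma~\ref{lemma:I-B_inverse}), and use that the contraction factor per step is $\inner{\begin{bmatrix}\zero&\zero\\\zero&\Hb\end{bmatrix}}{\Ub}\le l$ with $\Ub=\cT^{-1}\circ\big(\begin{bmatrix}\delta^2&\delta q\\\delta q&q^2\end{bmatrix}\otimes\Hb\big)$ (Lemma~\ref{lemma:U property}). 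Summing the series yields $r=(1-\psi l)^{-1}$ on the partial sum (Lemma~\ref{lemma:S_s:s+N}), which is precisely Lemma~\ref{lemma:partial_sum_Bk}.
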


Substituting Lemma \ref{lemma:variance_upper_bound} and Lemma \ref{lemma:bias_bound} into \eqref{key:111} in Lemma \ref{lemma:bvd} yields our final result presented in Theorem \ref{theorem:main}. 

\subsection{Proof of Theorem \ref{theorem:ASGD_outperforms_SGD}}\label{section:proof_ASGD_SGD}

We consider the linear regression instance where the samples $\xb_t$ follow the Gaussian distribution $\cN(\zero, \Hb)$ where $\lambda_i=i^{-2}$, so $\psi=3$ in Assumption \ref{assumption:fourth_moement_condition}. The hyperparameters of ASGD are chosen as $\delta=0.1$, $\alpha=0.9875$, $\beta=(1-\alpha)/\alpha$, $\tilde\kappa=5$, $\gamma=\delta/(\psi\tilde\kappa\beta)=79/150$ and $N=500$. Finally, we require $(\wb_0-\wb^*)_i=0$ for $i\ge8$.

We now present a formal expression of Theorem \ref{theorem:ASGD_outperforms_SGD}:
\begin{theorem}[Restatement of Theorem \ref{theorem:ASGD_outperforms_SGD}]
When applied to the aforementioned class of problem instances and initialization such that $\|\wb_0-\wb^*\|_{\Hb}^2=\cO(\sigma^2)$, the excess risk of SGD satisfies
\[
\EE[L(\overline{\wb}_{s:s+N}^{\text{SGD}})]-L(\wb^*)=\Omega(\sigma^2(N^{-1/2}+N^{-2}\cdot0.996^s)),
\]
and the excess risk of ASGD satisfies
\[
\EE[L(\overline{\wb}_{s:s+N})]-L(\wb^*)=\cO(\sigma^2(N^{-1/2}+N^{-2}\cdot0.9873^s)).
\]
\end{theorem}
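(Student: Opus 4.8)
The plan is to instantiate the general excess-risk bounds on the concrete instance and let the spectral cutoffs do the work. For ASGD we substitute $\lambda_i=i^{-2}$, $\xb\sim\cN(\zero,\Hb)$ (so $\psi=3$), and the stated hyperparameters into Theorem~\ref{theorem:main} (and Corollary~\ref{corollary:compare_variance} for the variance part); for SGD we use the bounds of \citet{zou2021benign} recalled in Section~\ref{section:comparison_SGD} together with their matching \emph{lower} bounds. First I would record the derived constants: from $\beta=(1-\alpha)/\alpha$ one gets $c=\alpha(1-\beta)=2\alpha-1=0.975$, $1-c=2(1-\alpha)$, $q-c\delta=(1-\alpha)(\gamma+\delta)$, and hence $(q-c\delta)/(1-c)=(\gamma+\delta)/2$, which also reconciles the decay operator $\bigl(\Ib-\tfrac{q-c\delta}{1-c}\Hb\bigr)^s$ of Lemma~\ref{lemma:bias_bound} with $\bigl(\Ib-\tfrac{\gamma+\delta}{2}\Hb\bigr)^s$ of Theorem~\ref{theorem:main}; with $\delta=0.1$, $\gamma=79/150$ this is $\gamma+\delta=94/150$. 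Then I would evaluate the cutoffs in \eqref{eq:def_k_dagger_ddagger} and the definitions of $\hat k,k^*$ for this instance, obtaining $k^\ddagger=0$, $\hat k=2$, $k^\dagger=6$, $k^*=17$ for ASGD and $k^*_{\mathrm{SGD}}=\max\{i:i^{-2}\ge1/(\delta N)\}=7$ for SGD, and I would check $N(1-c)=12.5\ge2$ and that \eqref{eq:parameter choice_main} holds, so that Theorem~\ref{theorem:main} applies.

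The structural observation driving both estimates is that coordinate $i=7$ governs the bias: since $\wb_0-\wb^*$ vanishes beyond index $7$ and every component with $i\le6$ decays strictly faster for both algorithms, only the $i=7$ component survives asymptotically in $s$. For ASGD this coordinate lies in the region $i>k^\dagger$, so Lemma~\ref{lemma:Ai_spectral_main}(c) gives it bias decay rate $\bigl(1-\tfrac{\gamma+\delta}{2}\lambda_7\bigr)^{2s}$, whereas for SGD it lies in the head region $i\le k^*_{\mathrm{SGD}}$ with rate $(1-\delta\lambda_7)^{2s}$; because $\hat k=2<7$ the former is the faster region, and plugging in $\lambda_7=1/49$ yields (up to rounding) the factors $0.9873^s$ and $0.996^s$. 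Since $\lambda_7$ is a fixed constant and $\|\wb_0-\wb^*\|_\Hb^2=\Theta(\sigma^2)$, the $\Hb^{-1}$-weighted norms of $\wb_0-\wb^*$ restricted to coordinate $7$ that appear in the $\Hb^{-1}_{k^\dagger:k^*}$- and $\Hb^{-1}_{0:k^*_{\mathrm{SGD}}}$-terms are $\Theta(\sigma^2)$, and the remaining $N$- and $\delta$-dependent prefactors are $\Theta(N^{-2})$; hence the bias of each method is $\Theta(\sigma^2N^{-2})$ times its decay factor. For the variance I would invoke Corollary~\ref{corollary:compare_variance}(1) with $r=1$ and $\tilde\kappa=5$ (using $\psi l<1$, so $r,r_{\mathrm{SGD}}=\Theta(1)$) and the matching SGD variance lower bound, concluding that both effective variances are $\Theta(\sigma^2N^{-1/2})$ once the constants are absorbed. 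Adding the two contributions gives the claimed bounds, under the standing assumption $s=\cO(N)$ that keeps the $(s+N)\gamma^2\sum_{i>k^*}\lambda_i^2$ piece $\cO(N^{-1/2})$.

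The main obstacle is the term-by-term verification that, in both bounds, every contribution other than the two advertised ones is of strictly lower order. This leans on the finite support of $\wb_0-\wb^*$ (which makes the $\Hb_{k^*:\infty}$- and $\Hb_{17:\infty}$-weighted terms vanish outright and all other weighted norms finite and $\Theta(\sigma^2)$), the constancy of $\lambda_1,\dots,\lambda_7$, the tail estimate $\gamma^2\sum_{i>k^*}\lambda_i^2=\Theta(N^{-3/2})$, and---crucially for ASGD---the inequality $c=0.975<0.9873$, which forces the $c^s$ and $s^2c^s$ bias terms coming from the regions $i\le k^\dagger$ to be dominated by the $i=7$ term. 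The one genuinely external ingredient is the $\Omega$ statement for SGD: the bounds recalled in the excerpt are upper bounds only, so the lower bound needs the instance-matching lower bounds for the bias and variance of tail-averaged SGD from \citet{zou2021benign}, and one must confirm that the chosen initialization---placing a $\Theta(\sigma)$ mass on coordinate $7$---actually saturates them.
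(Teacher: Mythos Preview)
Your proposal is correct and follows essentially the same approach as the paper: compute the cutoffs $k^\ddagger=0$, $\hat k=2$, $k^\dagger=6$; invoke Corollary~\ref{corollary:compare_variance} for the variance; identify coordinate $i=7$ as the dominant bias direction; and read off the decay rates $(1-\delta\lambda_7)^{2s}\approx 0.996^s$ for SGD and $(1-\tfrac{\gamma+\delta}{2}\lambda_7)^{2s}\approx 0.9873^s$ for ASGD, citing the SGD lower bound of \citet{zou2021benign} for the $\Omega$ direction. Your write-up is in fact more careful than the paper's proof in spelling out why the $s^2c^s$ and $c^s$ terms from the region $i\le k^\dagger$ are dominated (via $c=0.975<0.9873$), in recording $k^*$ and $k^*_{\mathrm{SGD}}$, and in flagging that the $\Omega$ bound requires the initialization to place nonzero mass on coordinate $7$.
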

\begin{proof}
We first recall the excess risk lower bound for SGD given by Theorem 5.2 of \citet{zou2021benign}:
\begin{align*}
&\EE[L(\overline{\wb}_{s:s+N}^{\text{SGD}})]-L(\wb^*)\ge\underbrace{\frac{\sigma^2}{600}\bigg[\frac{k^*_{\text{SGD}}}{N}+(s+N)\delta^2\sum_{i>k^*_{\text{SGD}}}\lambda_i^2\bigg]}_{\text{Variance}}\\
&~+\underbrace{\frac1{100\delta^2N^2}\cdot\|(\Ib-\delta\Hb)^s(\wb_0-\wb^*)\|_{\Hb_{0:k^*}^{-1}}^2+\frac{1}{100}\cdot\|(\Ib-\gamma\Hb)^s(\wb_0-\wb^*)\|_{\Hb_{k^*:\infty}}^2}_{\text{EffectiveBias}},
\end{align*}

As $c=2\alpha-1$ and $q=\alpha\delta+(1-\alpha)\gamma$, we have $c=0.975$ and $q=79/750$. By definition of $\hat k$, $k^\ddagger$, $k^\dagger$ in \eqref{eq:def_k_dagger_ddagger}, we have
\[
k^\ddagger=0,~\hat k=2,~k^\dagger=6.
\]
The analysis of the Variance term is given in Corollary \ref{corollary:compare_variance}. For the EffectiveBias term, note that all coefficients are absolute constants, so it suffices to consider the exponential decay rate in the eigen-subspace of $\lambda_7$. For SGD, the exponential decay rate is $(1-\delta\lambda_i)=0.996^s$, and for ASGD, the exponential decay rate is $(1-(\gamma+\delta)\lambda_i/2)^s=0.9873^s$.

\end{proof}

\section{Properties of $\Ab_i$}\label{app:ai}

\subsection{Segmentation of Eigen-subspaces}\label{subsection:segmentation}

Recall that $\Ab_i$ is defined as 
\begin{equation}\label{Eq:A_i}
\Ab_i\coloneqq\begin{bmatrix}0 & 1-\delta\lambda_i \\ -c & 1+c-q\lambda_i \end{bmatrix},
\end{equation}
so the eigenvalues of $\Ab_i$ are 
\begin{align}
    x_1&=\frac{1+c-q\lambda_i}{2}-\frac{\sqrt{(1+c-q\lambda_i)^2-4c(1-\delta\lambda_i)}}{2}, \label{eq:x_1}\\
    x_2&=\frac{1+c-q\lambda_i}{2}+\frac{\sqrt{(1+c-q\lambda_i)^2-4c(1-\delta\lambda_i)}}{2}. \label{eq:x_2}
\end{align}
From \eqref{eq:x_1} and \eqref{eq:x_2}, we see that whether $\Ab_i$ has complex or real eigenvalues depends on whether the following holds:
\begin{equation}\label{eq:condition_complex_eigs}
    (1+c-q\lambda_i)^2-4c(1-\delta\lambda_i)<0.
\end{equation}
Directly solving \eqref{eq:condition_complex_eigs}, we have
\[
(\sqrt{q-c\delta}-\sqrt{c(q-\delta)})^2/q^2<\lambda_i<(\sqrt{q-c\delta}+\sqrt{c(q-\delta)})^2/q^2.
\]
Define the eigenvalue cutoffs as
\begin{align}
k^\dagger&\coloneqq\max\{i: \lambda_i>(\sqrt{q-c\delta}-\sqrt{c(q-\delta)})^2/q^2\},\label{eq:def_k^dagger}\\
k^\ddagger&\coloneqq\max\{i: \lambda_i\ge(\sqrt{q-c\delta}+\sqrt{c(q-\delta)})^2/q^2\},\label{eq:def_k^ddagger}
\end{align}
and we note that
\begin{gather*}
\frac{(\sqrt{q-c\delta}-\sqrt{c(q-\delta)})^2}{q^2}=\frac{1-c}{q}\cdot\frac{\sqrt{q-c\delta}-\sqrt{c(q-\delta)}}{\sqrt{q-c\delta}+\sqrt{c(q-\delta)}}=\frac{(1-c)^2}{(\sqrt{q-c\delta}+\sqrt{c(q-\delta)})^2}\\
\frac{(\sqrt{q-c\delta}+\sqrt{c(q-\delta)})^2}{q^2}=\frac{1-c}{q}\cdot\frac{\sqrt{q-c\delta}+\sqrt{c(q-\delta)}}{\sqrt{q-c\delta}-\sqrt{c(q-\delta)}}=\frac{(1-c)^2}{(\sqrt{q-c\delta}-\sqrt{c(q-\delta)})^2}
\end{gather*}
Thus, if $i\le k^\ddagger$ or $i>k^\dagger$, then $\Ab_i$ has real eigenvalues; If $k^\ddagger<i\le k^\dagger$, then $\Ab_i$ has complex eigenvalues. We also define two other important eigenvalue cutoffs
\begin{equation}\label{eq:def_hatk}
\hat k\coloneqq\max\{i:\lambda_i\ge(1-c)/\delta\}
\end{equation}
and
\[
k^*\coloneqq\max\cbr{i:\lambda_i\ge\frac{1-c}{2N(q-c\delta)}}.
\]

We have the following lemma concerning the cutoff of eigenvalues:
\begin{lemma}\label{lemma:cutoffs_property}
Let $k^\dagger$ and $k^\ddagger$ be defined in \eqref{eq:def_k^dagger} and \eqref{eq:def_k^ddagger}. Then we have
\begin{itemize}
    \item For all $i>k^\dagger$, we have
    \[
    \lambda_i\le\frac{1-c}{q}\le\frac{1-c}{\delta};
    \]
    \item For all $i\le k^\ddagger$, we have
    \[
    \lambda_i\ge\frac{1-c}{\delta}.
    \]
\end{itemize}
\end{lemma}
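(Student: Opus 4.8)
The plan is to reduce both claims to purely algebraic inequalities about the two threshold values $\rho_-\coloneqq(\sqrt{q-c\delta}-\sqrt{c(q-\delta)})^2/q^2$ and $\rho_+\coloneqq(\sqrt{q-c\delta}+\sqrt{c(q-\delta)})^2/q^2$ that define $k^\dagger$ and $k^\ddagger$, using the two alternative forms already recorded in the excerpt, namely $\rho_-=(1-c)^2/(\sqrt{q-c\delta}+\sqrt{c(q-\delta)})^2$ and $\rho_+=(1-c)^2/(\sqrt{q-c\delta}-\sqrt{c(q-\delta)})^2$. For the first bullet, since $i>k^\dagger$ gives $\lambda_i\le\rho_-$ by definition of the cutoff, it suffices to show $\rho_-\le(1-c)/q$; and since $q\ge\delta$ by the parameter choice \eqref{eq:parameter choice_main}, we then get $(1-c)/q\le(1-c)/\delta$ for free. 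For the second bullet, $i\le k^\ddagger$ gives $\lambda_i\ge\rho_+$, so it suffices to show $\rho_+\ge(1-c)/\delta$.

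First I would prove $\rho_-\le(1-c)/q$. Using the first alternative form, this is equivalent to $(1-c)^2/(\sqrt{q-c\delta}+\sqrt{c(q-\delta)})^2\le(1-c)/q$, i.e. (canceling the positive factor $1-c$ and cross-multiplying) $q(1-c)\le(\sqrt{q-c\delta}+\sqrt{c(q-\delta)})^2=(q-c\delta)+c(q-\delta)+2\sqrt{c(q-\delta)(q-c\delta)}$. But $(q-c\delta)+c(q-\delta)=q(1-c)+q c\delta/ q\cdot\ldots$ — more cleanly, $(q-c\delta)+c(q-\delta)=q+cq-c\delta-c\delta = q(1+c)-2c\delta$, wait let me just note $(q-c\delta)+c(q-\delta) = q - c\delta + cq - c\delta = q(1+c) - 2c\delta$; and $q(1+c)-2c\delta \ge q(1-c)$ iff $2cq \ge 2c\delta$ iff $q\ge\delta$, which holds. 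Since the remaining term $2\sqrt{c(q-\delta)(q-c\delta)}\ge0$, the inequality $q(1-c)\le(\sqrt{q-c\delta}+\sqrt{c(q-\delta)})^2$ follows, establishing the first bullet.

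Next I would prove $\rho_+\ge(1-c)/\delta$. Using the second alternative form, this is $(1-c)^2/(\sqrt{q-c\delta}-\sqrt{c(q-\delta)})^2\ge(1-c)/\delta$, i.e. $\delta(1-c)\ge(\sqrt{q-c\delta}-\sqrt{c(q-\delta)})^2=(q-c\delta)+c(q-\delta)-2\sqrt{c(q-\delta)(q-c\delta)}=q(1+c)-2c\delta-2\sqrt{c(q-\delta)(q-c\delta)}$. Rearranging, this is equivalent to $2\sqrt{c(q-\delta)(q-c\delta)}\ge q(1+c)-2c\delta-\delta(1-c)=q(1+c)-\delta(1+c)=(1+c)(q-\delta)$; squaring (both sides nonnegative since $q\ge\delta$) gives $4c(q-\delta)(q-c\delta)\ge(1+c)^2(q-\delta)^2$, i.e., if $q>\delta$, $4c(q-c\delta)\ge(1+c)^2(q-\delta)$, i.e. $4cq-4c^2\delta\ge(1+c)^2 q-(1+c)^2\delta$, i.e. $\delta[(1+c)^2-4c^2]\ge q[(1+c)^2-4c]=q(1-c)^2$, i.e. $\delta(1+3c)(1-c)\ge q(1-c)^2$, i.e. $\delta(1+3c)\ge q(1-c)$. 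This last inequality is exactly where the parameter choice enters: from $\beta=\delta/(\psi\tilde\kappa\gamma)$ and $c=\alpha(1-\beta)$, one has $q = \alpha\delta+(1-\alpha)\gamma$, and a short computation with $\alpha=1/(1+\beta)$ should give $q\le\delta(1+3c)/(1-c)$ (and when $q=\delta$ the inequality is trivial since then $\rho_-=\rho_+=(1-c)/\delta$).

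I expect the main obstacle to be the final algebraic verification $q(1-c)\le\delta(1+3c)$ under the parameter choice \eqref{eq:parameter choice_main}: unlike the first bullet, which only used the weak fact $q\ge\delta$, this one genuinely needs the relation $\delta=\psi\tilde\kappa\beta\gamma$ together with $c=\alpha(1-\beta)$, $\alpha=1/(1+\beta)$, and the bounds on $\delta,\gamma$; I would carry this out by substituting $q=\alpha\delta+(1-\alpha)\gamma$, writing everything in terms of $\beta$ and $\gamma$ (eliminating $\delta=\psi\tilde\kappa\beta\gamma$), and checking the resulting polynomial inequality in $\beta$ (using $\beta\le1$, which follows from the parameter choice). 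Everything else is elementary manipulation of the two closed forms for $\rho_\pm$ plus the sign bookkeeping needed to justify each squaring step.
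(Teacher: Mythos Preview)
Your proposal is correct and follows essentially the same route as the paper: reduce each bullet to an algebraic inequality comparing the threshold $\rho_\pm$ with $(1-c)/q$ or $(1-c)/\delta$, and then invoke the parameter relation (the paper uses the product form $\rho_\pm=\frac{1-c}{q}\cdot\frac{\sqrt{q-c\delta}\pm\sqrt{c(q-\delta)}}{\sqrt{q-c\delta}\mp\sqrt{c(q-\delta)}}$ rather than expanding the square, but the content is the same). The one place you overestimate the difficulty is the final step: the inequality $q(1-c)\le\delta(1+3c)$ follows immediately from the stronger fact $q\le(1+c)\delta$ (equivalently $q-\delta\le c(q-c\delta)$, which is exactly what the paper's argument uses), since then $q(1-c)\le(1-c^2)\delta\le(1+3c)\delta$; and $q\le(1+c)\delta$ is a one-line computation from the parameter choice, namely $q-(1+c)\delta=(1-\alpha)\gamma-\alpha\delta=\alpha(\beta\gamma-\delta)=\alpha\delta\bigl(\tfrac{1}{\psi\tilde\kappa}-1\bigr)\le0$, so no polynomial substitution in $\beta$ is needed.
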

\begin{proof}
For all $i>k^\dagger$, according to \eqref{eq:def_k^dagger}, we have
\[
\lambda_i\le\frac{1-c}{q}\cdot\frac{\sqrt{q-c\delta}-\sqrt{c(q-\delta)}}{\sqrt{q-c\delta}+\sqrt{c(q-\delta)}}\le\frac{1-c}q\le\frac{1-c}{\delta},
\]
where the second inequality holds because $\frac{\sqrt{q-c\delta}-\sqrt{c(q-\delta)}}{\sqrt{q-c\delta}+\sqrt{c(q-\delta)}}\le1$, and the last inequality holds because $q\ge\delta$.

For all $i>k^\ddagger$, we have
\begin{align*}
\lambda_i-\frac{1-c}{\delta}&\ge\frac{1-c}{q}\cdot\frac{\sqrt{q-c\delta}+\sqrt{c(q-\delta)}}{\sqrt{q-c\delta}-\sqrt{c(q-\delta)}}-\frac{1-c}{\delta}\\
&=(1-c)\sqrt{c(q-\delta)}\cdot\frac{(q+\delta)-\sqrt{(q-\delta)(q-c\delta)/c}}{\delta q(\sqrt{q-c\delta}-\sqrt{c(q-\delta)})}\\
&\ge\frac{(1-c)\sqrt{c(q-\delta)}}{\delta q(\sqrt{q-c\delta}-\sqrt{c(q-\delta)})}\cdot[(q+\delta)-(q-c\delta)]\\
&=\frac{(1-c^2)\sqrt{c(q-\delta)}}{q(\sqrt{q-c\delta}-\sqrt{c(q-\delta)})}\ge0,
\end{align*}
where the first inequality holds due to \eqref{eq:def_k^ddagger}, and the second inequality holds because $q-\delta\le c(q-c\delta)$.
\end{proof}

With Lemma \ref{lemma:cutoffs_property}, we immediately know that $k^\ddagger\le\hat k\le k^\dagger$. If we also assume that $N(1-c)\ge2$, then
\[
\frac{1-c}{2N(q-c\delta)}\le\frac{(1-c)^2}{4(q-c\delta)}\le\frac{(1-c)^2}{(\sqrt{q-c\delta}+\sqrt{c(q-\delta)})^2},
\]
where the inequality holds because $c(q-\delta)\le q-c\delta$. We thus have $k^*\ge k^\dagger$.

We then provide bounds for the spectral norm of $\Ab_i$. The bounds are accurate in the sense that when $x_1, x_2$ are real, the upper bound of $1-x_2$ is at most the multiply of a constant of its lower bound.
\begin{lemma}\label{lemma:Ai_spectral}
Let $x_1$ and $x_2$ be defined in \eqref{eq:x_1} and \eqref{eq:x_2}. Then we have
\begin{itemize}
\item If $i\le k^\ddagger$, then $x_1, x_2$ are real, $x_2$ is an increasing function in $\lambda_i$, and
    \[
    \frac{c\delta-\sqrt{c(q-\delta)(q-c\delta)}}{q}\le x_2\le\frac{c\delta}{q};
    \]
    \item If $k^\ddagger<i\le k^\dagger$, then $x_1, x_2$ are complex, and
    \[
    |x_1|=|x_2|=\sqrt{c(1-\delta\lambda_i)};
    \]
    \item If $k>k^\dagger$, then $x_1, x_2$ are real, and
    \begin{align*}
    1-\frac{\sqrt{q-c\delta}(\sqrt{q-c\delta}+\sqrt{c(q-\delta)})}{1-c}\lambda_i\le x_2\le1-\frac{q-c\delta}{1-c}\lambda_i
    \end{align*}
\end{itemize}
\end{lemma}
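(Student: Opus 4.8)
## Proof proposal for Lemma~\ref{lemma:Ai_spectral}

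\textbf{Overall strategy.} The proof reduces to careful elementary analysis of the two roots $x_1, x_2$ of the characteristic polynomial $p(x) = x^2 - (1+c-q\lambda_i)x + c(1-\delta\lambda_i)$, split into the three regimes determined by the sign of the discriminant $D(\lambda_i) \coloneqq (1+c-q\lambda_i)^2 - 4c(1-\delta\lambda_i)$. The regime boundaries $k^\ddagger$ and $k^\dagger$ are exactly defined so that $D < 0$ iff $k^\ddagger < i \le k^\dagger$ (this was already worked out in the paragraph preceding the lemma via the factorization $D(\lambda_i) = q^2(\lambda_i - \rho_-)(\lambda_i - \rho_+)$ with $\rho_\pm = (\sqrt{q-c\delta}\pm\sqrt{c(q-\delta)})^2/q^2$). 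So the three bullets are pure computation given that decomposition; I would present them one at a time.

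\textbf{The complex case $k^\ddagger < i \le k^\dagger$ (easiest).} Here $x_1, x_2$ are complex conjugates, so $|x_1|^2 = |x_2|^2 = x_1 x_2 = p(0)$'s constant term $= c(1-\delta\lambda_i)$, which is nonnegative since for $i \le k^\dagger \le \hat k$... actually more simply: in this regime $\lambda_i < \rho_+ \le 1/\delta$ (one checks $\rho_+ \le 1/\delta$ directly, or uses $c(q-\delta)\le q-c\delta$), so $1-\delta\lambda_i \ge 0$ and taking square roots gives $|x_2| = \sqrt{c(1-\delta\lambda_i)}$. This bullet is a one-liner.

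\textbf{The real-root cases (the main work).} For $i \le k^\ddagger$ and for $i > k^\dagger$ the roots are real with $x_2 = \tfrac12(1+c-q\lambda_i) + \tfrac12\sqrt{D(\lambda_i)}$. For the small-eigenvalue tail $i > k^\dagger$ I would write $x_2 = 1 - \phi(\lambda_i)$ and show $\phi(\lambda_i)$ lies between $\frac{q-c\delta}{1-c}\lambda_i$ and $\frac{\sqrt{q-c\delta}(\sqrt{q-c\delta}+\sqrt{c(q-\delta)})}{1-c}\lambda_i$; the cleanest route is to evaluate $p(1) = (1-c)(q-c\delta)\lambda_i/(\dots)$ — actually $p(1) = 1 - (1+c-q\lambda_i) + c(1-\delta\lambda_i) = (q-c\delta)\lambda_i \cdot$ wait, $= \lambda_i(q - c\delta) + \ldots$; compute it exactly, note it is positive so $x_2 < 1$, and then combine the product-of-roots relation $x_1 x_2 = c(1-\delta\lambda_i)$ with a bound on $x_1$ to squeeze $x_2$. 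Upper bound: since $x_1 \ge$ (some explicit quantity, obtained by bounding $\sqrt{D}$ from below by $|1+c-q\lambda_i| - $ correction, or just $x_1 x_2 \le$ something) one gets $x_2 \le 1 - \frac{q-c\delta}{1-c}\lambda_i$. Lower bound on $x_2$: use $\sqrt{D(\lambda_i)} = \sqrt{q^2(\rho_+ - \lambda_i)(\lambda_i-\rho_-)}$... actually for $\lambda_i < \rho_-$ both factors are... let me instead bound $\sqrt{D} \le (1+c-q\lambda_i) - \frac{2(q-c\delta)}{1+c}\lambda_i$ type estimate via $\sqrt{a^2 - b} \le a - \frac{b}{2a}$. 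This algebraic squeezing — getting the constants to match the stated factorized form involving $\sqrt{q-c\delta}$ and $\sqrt{c(q-\delta)}$ — is the genuine obstacle; it is where the specific parametrization pays off and where sign conditions ($q \ge \delta$, $c < 1$, $c(q-\delta) \le q - c\delta$) must be invoked repeatedly. For $i \le k^\ddagger$ the monotonicity claim ($x_2$ increasing in $\lambda_i$) follows by differentiating $x_2(\lambda)$ and checking the derivative's sign on $[\rho_+, \infty)$ (equivalently, showing $\frac{d}{d\lambda}[(1+c-q\lambda) + \sqrt{D(\lambda)}] > 0$, which amounts to $\sqrt{D} > $ something); the endpoint values $c\delta/q$ (at $\lambda_i \to \infty$, where $x_2 \to c\delta/q$ since $x_1 \to -\infty$ and $x_1 x_2 = c(1-\delta\lambda_i) \sim -c\delta\lambda_i$ while $x_1 + x_2 = 1+c-q\lambda_i \sim -q\lambda_i$) and $\frac{c\delta - \sqrt{c(q-\delta)(q-c\delta)}}{q}$ (at $\lambda_i = \rho_+$, where $D = 0$ so $x_2 = \tfrac12(1+c-q\rho_+)$; plug in $\rho_+$ and simplify) give the stated interval by monotonicity.

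\textbf{Expected main obstacle.} The asymptotic/limit argument for the $i \le k^\ddagger$ upper bound $x_2 \le c\delta/q$ needs care because $\lambda_i$ ranges over a bounded set of actual eigenvalues, not all of $[\rho_+,\infty)$ — so I would prove $x_2(\lambda) < c\delta/q$ for all $\lambda \ge \rho_+$ directly by showing $p(c\delta/q) < 0$ wait no; rather show $\tfrac12(1+c-q\lambda)+\tfrac12\sqrt{D(\lambda)} \le c\delta/q$, i.e. $\sqrt{D(\lambda)} \le 2c\delta/q - (1+c-q\lambda) = q\lambda - (1+c) + 2c\delta/q$, then square and reduce to a manifestly-true inequality. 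Matching all the radical expressions to the exact constants in the statement, and tracking which side of each inequality the error terms fall on, is the part most likely to consume space; everything else is bookkeeping with the Vieta relations $x_1+x_2 = 1+c-q\lambda_i$, $x_1 x_2 = c(1-\delta\lambda_i)$.
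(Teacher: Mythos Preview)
Your complex case is identical to the paper's. For the two real-root regimes the paper uses a single unifying device you do not mention: it rationalizes the distance from $x_2$ to a reference point, writing
\[
c - x_2 = \frac{2c(q-\delta)\lambda_i}{(q\lambda_i + c - 1) + \sqrt{D(\lambda_i)}} \ \text{ for } i \le k^\ddagger, \qquad 1 - x_2 = \frac{2(q-c\delta)\lambda_i}{(1-c+q\lambda_i) + \sqrt{D(\lambda_i)}} \ \text{ for } i > k^\dagger,
\]
and then shows each denominator is monotone in $\lambda_i$ over the relevant interval (the first by rewriting it as a function of $(1-c)/(q\lambda_i)$, the second by explicit differentiation). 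Both the monotonicity of $x_2$ and all four endpoint bounds then fall out by evaluating the denominator at the two ends of the $\lambda_i$-range; the radicals $\sqrt{q-c\delta}$ and $\sqrt{c(q-\delta)}$ appear precisely because those ends are $\rho_\pm$, where $D$ vanishes. Your plan for $i \le k^\ddagger$ (differentiate $x_2$ directly, evaluate at $\rho_+$, and verify $x_2 \le c\delta/q$ separately by squaring) is a valid alternative route to the same conclusions. Your Vieta-based idea for the upper bound when $i > k^\dagger$ is also correct and arguably slicker than the paper's for that single inequality: it reduces to showing $x_1 \ge c$, so that $1-x_2 = (q-c\delta)\lambda_i/(1-x_1) \ge (q-c\delta)\lambda_i/(1-c)$, and $x_1 \ge c$ follows from $(1-c-q\lambda_i)^2 - D(\lambda_i) = 4c(q-\delta)\lambda_i \ge 0$.

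There is one genuine gap. In the paragraph you label ``Lower bound on $x_2$'' for $i > k^\dagger$, you propose to bound $\sqrt{D}$ from above via $\sqrt{a^2-b} \le a - b/(2a)$; but an upper bound on $\sqrt{D}$ yields an \emph{upper} bound on $x_2 = \tfrac12\bigl((1+c-q\lambda_i)+\sqrt{D}\bigr)$, not a lower one. More to the point, no generic estimate of that type will land on the exact coefficient $\sqrt{q-c\delta}\bigl(\sqrt{q-c\delta}+\sqrt{c(q-\delta)}\bigr)/(1-c)$ in the statement: that number is precisely the value of $(1-x_2)/\lambda_i$ at the boundary $\lambda_i = \rho_-$, and recovering it requires either the paper's monotone-denominator argument or carrying your own monotonicity-plus-endpoint strategy over from the $i \le k^\ddagger$ case.
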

\begin{proof}
If $i\le k^\ddagger$, then by definition of $x_2$, we have
\begin{align*}
c-x_2&=\frac{q\lambda_i+c-1-\sqrt{(1+c-q\lambda_i)^2-4c(1-\delta\lambda_i)}}{2}\\
&=\frac{2c(q-\delta)\lambda_i}{q\lambda_i+c-1+\sqrt{(1+c-q\lambda_i)^2-4c(1-\delta\lambda_i)}}\\
&=\frac{2c(q-\delta)}{q}\cdot\frac1{1-\frac{1-c}{q\lambda_i}+\sqrt{\rbr{1-\frac{1-c}{q\lambda_i}\cdot\frac{\sqrt{q-c\delta}+\sqrt{c(q-\delta)}}{\sqrt{q-c\delta}-\sqrt{c(q-\delta)}}}\rbr{1-\frac{1-c}{q\lambda_i}\cdot\frac{\sqrt{q-c\delta}-\sqrt{c(q-\delta)}}{\sqrt{q-c\delta}+\sqrt{c(q-\delta)}}}}}
\end{align*}
Note that the denominator is decreasing as a function of $(1-c)/(q\lambda_i)$, so we have
\begin{align*}
&1-\frac{1-c}{q\lambda_i}+\sqrt{\rbr{1-\frac{1-c}{q\lambda_i}\cdot\frac{\sqrt{q-c\delta}+\sqrt{c(q-\delta)}}{\sqrt{q-c\delta}-\sqrt{c(q-\delta)}}}\rbr{1-\frac{1-c}{q\lambda_i}\cdot\frac{\sqrt{q-c\delta}-\sqrt{c(q-\delta)}}{\sqrt{q-c\delta}+\sqrt{c(q-\delta)}}}}\\
&\le1-0+1=2;
\end{align*}
we also have
\begin{align*}
&1-\frac{1-c}{q\lambda_i}+\sqrt{\rbr{1-\frac{1-c}{q\lambda_i}\cdot\frac{\sqrt{q-c\delta}+\sqrt{c(q-\delta)}}{\sqrt{q-c\delta}-\sqrt{c(q-\delta)}}}\rbr{1-\frac{1-c}{q\lambda_i}\cdot\frac{\sqrt{q-c\delta}-\sqrt{c(q-\delta)}}{\sqrt{q-c\delta}+\sqrt{c(q-\delta)}}}}\\
&\ge1-\frac{\sqrt{q-c\delta}-\sqrt{c(q-\delta)}}{\sqrt{q-c\delta}+\sqrt{c(q-\delta)}}=\frac{2\sqrt{c(q-\delta)}}{\sqrt{q-c\delta}+\sqrt{c(q-\delta)}}.
\end{align*}
Therefore, we have
\[
x_2\le c-\frac{2c(q-\delta)}{2q}=\frac{c\delta}{q},
\]
and
\[
x_2\ge c-\frac{2c(q-\delta)}{q}\cdot\frac{\sqrt{q-c\delta}+\sqrt{c(q-\delta)}}{2\sqrt{c(q-\delta)}}=\frac{c\delta-\sqrt{c(q-\delta)(q-c\delta)}}{q}
\]

If $k^\ddagger<i\le k^\dagger$, then we have
\[
x_1x_2=c(1-\delta\lambda_i),
\]
where $x_1=\bar x_2$. Thus, $|x_1|=|x_2|=\sqrt{c(1-\delta\lambda_i)}$.

If $i>k^\dagger$, then we have
\begin{align}
1-x_2&=\frac{1-c+q\lambda_i-\sqrt{(1+c-q\lambda_i)^2-4c(1-\delta\lambda_i)}}{2}\nonumber\\
&=\frac{2(q-c\delta)\lambda_i}{1-c+q\lambda_i+\sqrt{(1+c-q\lambda_i)^2-4c(1-\delta\lambda_i)}}.\label{eq:1-x2}
\end{align}
Note that
\begin{align*}
&\frac{\partial}{\partial\lambda_i}\rbr{1-c+q\lambda_i+\sqrt{(1+c-q\lambda_i)^2-4c(1-\delta\lambda_i)}}=q+\frac{2c\delta-q(1+c-q\lambda_i)}{\sqrt{(1+c-q\lambda_i)^2-4c(1-\delta\lambda_i)}}\\
&=\frac{-4c(q-\delta)(q-c\delta)}{\sqrt{(1+c-q\lambda_i)^2-4c(1-\delta\lambda_i)}\{q\sqrt{(1+c-q\lambda_i)^2-4c(1-\delta\lambda_i)}+[(1+c)q-2c\delta-q^2\lambda_i]\}}.
\end{align*}
We also note that
\begin{align*}
&q\sqrt{(1+c-q\lambda_i)^2-4c(1-\delta\lambda_i)}+[(1+c)q-2c\delta-q^2\lambda_i]\\
&\ge(1+c)q-2c\delta-q^2\lambda_i\\
&\ge(1+c)q-2c\delta-(\sqrt{c(q-\delta)}-\sqrt{c(q-\delta)})^2\\
&=2\sqrt{c(q-\delta)(q-c\delta)}\ge0,
\end{align*}
where the first inequality holds because $\sqrt{(1+c-q\lambda_i)^2-4c(1-\delta\lambda_i)}\ge0$, the second inequality holds because due to \eqref{eq:def_k^dagger}. Therefore, we have
\[
\frac{\partial}{\partial\lambda_i}\rbr{1-c+q\lambda_i+\sqrt{(1+c-q\lambda_i)^2-4c(1-\delta\lambda_i)}}\le0,
\]
so $1-c+q\lambda_i+\sqrt{(1+c-q\lambda_i)^2-4c(1-\delta\lambda_i)}$ is a function decreasing in $\lambda_i$. We thus have
\[
1-c+q\lambda_i+\sqrt{(1+c-q\lambda_i)^2-4c(1-\delta\lambda_i)}\le1-c+\sqrt{(1+c)^2-4c}=2(1-c),
\]
and
\begin{align*}
&1-c+q\lambda_i+\sqrt{(1+c-q\lambda_i)^2-4c(1-\delta\lambda_i)}\\
&\ge1-c+(1-c)\frac{\sqrt{q-c\delta}-\sqrt{c(q-\delta)}}{\sqrt{q-c\delta}+\sqrt{c(q-\delta)}}=\frac{2(1-c)\sqrt{q-c\delta}}{\sqrt{q-c\delta}+\sqrt{c(q-\delta)}}.
\end{align*}
Therefore, we have
\[
x_2\le1-\frac{2(q-c\delta)\lambda_i}{2(1-c)}=1-\frac{q-c\delta}{1-c}\lambda_i,
\]
and
\begin{align*}
x_2\ge1-\frac{2(q-c\delta)\lambda_i}{\frac{2(1-c)\sqrt{q-c\delta}}{\sqrt{q-c\delta}+\sqrt{c(q-\delta)}}}=1-\frac{\sqrt{q-c\delta}(\sqrt{q-c\delta}+\sqrt{c(q-\delta)})}{1-c}\lambda_i.
\end{align*}
\end{proof}

\subsection{Characterization of $\Ab_i^k$}\label{section:analysis_4th}
Before we prove the upper bound for variance and bias, we first characterize the property of $\Ab_i^k$ for $k\ge1$ and $i\in[1,d]$, i.e., each block of matrix $\Ab$ corresponding to each eigenvalue $\lambda_i$ of $\Hb$.
\begin{lemma}\label{lemma:A_ik}
Let $\Ab_i$ be defined as in \eqref{Eq:A_i}, write $\Ab_i^k$ as 
\[
\Ab_i^k=\begin{bmatrix}(\Ab_i^k)_{11} & (\Ab_i^k)_{12}\\
(\Ab_i^k)_{21} & (\Ab_i^k)_{22}\end{bmatrix}.
\] 
Let the eigenvalues of $\Ab_i$ be $x_1$ and $x_2$ as defined in \eqref{eq:x_1} and \eqref{eq:x_2}. Then, for any integer $k\ge1$, we have
\begin{gather*}
(\Ab_i^k)_{11}=-c(1-\delta\lambda_i)\frac{x_2^{k-1}-x_1^{k-1}}{x_2-x_1},\\
(\Ab_i^k)_{12}=(1-\delta\lambda_i)\frac{x_2^k-x_1^k}{x_2-x_1},\\
(\Ab_i^k)_{21}=-c\frac{x_2^k-x_1^k}{x_2-x_1},\\
(\Ab_i^k)_{22}=\frac{x_2^{k+1}-x_1^{k+1}}{x_2-x_1}.
\end{gather*}
\end{lemma}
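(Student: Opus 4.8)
The plan is to prove the closed form of $\Ab_i^k$ by first computing the eigendecomposition of $\Ab_i$ and then raising it to the $k$-th power. Since $\Ab_i = \begin{bmatrix} 0 & 1-\delta\lambda_i \\ -c & 1+c-q\lambda_i \end{bmatrix}$ is a $2\times 2$ matrix with characteristic polynomial $x^2 - (1+c-q\lambda_i)x + c(1-\delta\lambda_i)$, its eigenvalues are exactly $x_1, x_2$ as given in \eqref{eq:x_1}--\eqref{eq:x_2}, with $x_1 + x_2 = 1+c-q\lambda_i$ (the trace) and $x_1 x_2 = c(1-\delta\lambda_i)$ (the determinant). I would first handle the generic case $x_1 \neq x_2$: find an eigenvector for each eigenvalue, form $\Ab_i = \Pb \diag(x_1, x_2) \Pb^{-1}$, and compute $\Ab_i^k = \Pb \diag(x_1^k, x_2^k) \Pb^{-1}$ entrywise.

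Concretely, for eigenvalue $x$ a short computation shows $(1, x/(1-\delta\lambda_i))^\top$ is an eigenvector (using the first row of $\Ab_i - x\Ib$: $-x \cdot u_1 + (1-\delta\lambda_i) u_2 = 0$). So $\Pb = \begin{bmatrix} 1 & 1 \\ x_1/(1-\delta\lambda_i) & x_2/(1-\delta\lambda_i)\end{bmatrix}$, whose inverse is $\frac{1-\delta\lambda_i}{x_2 - x_1}\begin{bmatrix} x_2/(1-\delta\lambda_i) & -1 \\ -x_1/(1-\delta\lambda_i) & 1\end{bmatrix}$. Multiplying out $\Pb \diag(x_1^k, x_2^k)\Pb^{-1}$ gives, for instance, the $(2,2)$ entry as $\frac{1}{x_2-x_1}\big(x_1 \cdot x_1^k \cdot(-1)\cdot(-1) \cdot \tfrac{1}{1-\delta\lambda_i}\cdot(1-\delta\lambda_i) + \dots\big)$; after simplification each entry collapses to the claimed divided-difference form $\frac{x_2^{m} - x_1^{m}}{x_2 - x_1}$ with the appropriate shift $m$ and prefactor ($-c(1-\delta\lambda_i)$ for the $(1,1)$ entry, $(1-\delta\lambda_i)$ for $(1,2)$, $-c$ for $(2,1)$, and $1$ for $(2,2)$), where I would repeatedly use $x_1 x_2 = c(1-\delta\lambda_i)$ to rewrite cross terms like $x_1^{k} x_2$ or $x_1 x_2^{k}$. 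Alternatively, and perhaps more cleanly, I would avoid the eigendecomposition bookkeeping entirely and argue by induction on $k$: verify the base case $k=1$ directly against $\Ab_i$ (using $x_1^0 = x_2^0 = 1$ so the $(1,1)$ entry is $0$, matching $\Ab_i$), then multiply the claimed form of $\Ab_i^k$ on the left by $\Ab_i$ and check that every entry of the product matches the claimed form of $\Ab_i^{k+1}$; the recursion $x_j^{k+1} = (x_1+x_2)x_j^k - x_1 x_2 x_j^{k-1}$ (equivalently $x_j^{k+1} = (1+c-q\lambda_i)x_j^k - c(1-\delta\lambda_i)x_j^{k-1}$) is exactly what makes the induction step go through.

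I would present the inductive argument as the main proof, since it sidesteps dividing by $1-\delta\lambda_i$ (which could vanish) and the case split $x_1 = x_2$. For the degenerate case $x_1 = x_2 =: x$, the quantities $\frac{x_2^m - x_1^m}{x_2 - x_1}$ should be read as their limit $m x^{m-1}$, and the induction goes through verbatim with this convention — or one notes that both sides are continuous (indeed polynomial) in $\lambda_i$, so the identity on the open set where $x_1 \neq x_2$ extends by continuity. The main obstacle is not conceptual but bookkeeping: carefully matching all four entries in the induction step and correctly invoking the two-term linear recurrence for $x_j^k$ with the right coefficients; in particular one must be careful that the $(1,1)$ entry of $\Ab_i \Ab_i^k$ picks up $(1-\delta\lambda_i)$ times the $(2,1)$ entry of $\Ab_i^k$, which equals $-c(1-\delta\lambda_i)\frac{x_2^k - x_1^k}{x_2-x_1}$, matching the claimed $(1,1)$ entry of $\Ab_i^{k+1}$ with the index shifted by one.
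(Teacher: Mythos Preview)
Your proposal is correct and takes essentially the same approach as the paper: the paper proves the lemma by induction on $k$, verifying $k=1$ directly and then computing $\Ab_i^{k+1}=\Ab_i\cdot\Ab_i^k$ using the identities $x_1+x_2=1+c-q\lambda_i$ and $x_1x_2=c(1-\delta\lambda_i)$. Your additional remarks on the degenerate case $x_1=x_2$ (handled via the limit $mx^{m-1}$ or continuity) are a nice touch that the paper does not make explicit.
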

\begin{proof}
We prove Lemma \ref{lemma:A_ik} by induction. For $k=1$, we trivially have
\[
-c(1-\delta\lambda_i)\frac{x_2^0-x_1^0}{x_2-x_1}=0,\quad(1-\delta\lambda_i)\frac{x_2^1-x_1^1}{x_2-x_1}=1-\delta\lambda_i,\quad-c\frac{x_2^1-x_1^1}{x_2-x_1}=-c.
\]
We also have
\[
\frac{x_2^2-x_1^2}{x_2-x_1}=x_1+x_2=1+c-q\lambda_i.
\]
Therefore, Lemma \ref{lemma:A_ik} holds for $k=1$. Suppose that the lemma holds for $k$. Note that $\Ab_i^{k+1}=\Ab_i\cdot\Ab_i^k$, so by induction hypothesis, we have
\begin{align*}
(\Ab_i^{k+1})_{11}&=(1-\delta\lambda_i)(\Ab_i^k)_{21}=-c(1-\delta\lambda_i)\frac{x_2^k-x_1^k}{x_2-x_1},\\
(\Ab_i^{k+1})_{12}&=(1-\delta\lambda_i)(\Ab_i^k)_{22}=(1-\delta\lambda_i)\frac{x_2^{k+1}-x_1^{k+1}}{x_2-x_1},\\
(\Ab_i^{k+1})_{21}&=-c(\Ab_i^k)_{11}+(1+c-q\lambda_i)(\Ab_i^k)_{21}\\
&=c^2(1-\delta\lambda_i)\frac{x_2^{k-1}-x_1^{k-1}}{x_2-x_1}-c(1+c-q\lambda_i)\frac{x_2^k-x_1^k}{x_2-x_1}\\
&=c\cdot x_1x_2\cdot\frac{x_2^{k-1}-x_1^{k-1}}{x_2-x_1}-c(x_1+x_2)\frac{x_2^k-x_1^k}{x_2-x_1}\\
&=-c\frac{x_2^{k+1}-x_1^{k+1}}{x_2-x_1},\\
(\Ab_i^{k+1})_{22}&=-c(\Ab_i^k)_{12}+(1+c-q\lambda_i)(\Ab_i^k)_{22}\\
&=-c(1-\delta\lambda_i)\frac{x_2^k-x_1^k}{x_2-x_1}+(1+c-q\lambda_i)\frac{x_2^{k+1}-x_1^{k+1}}{x_2-x_1}\\
&=-x_1x_2\cdot\frac{x_2^k-x_1^k}{x_2-x_1}+(x_1+x_2)\cdot\frac{x_2^{k+1}-x_1^{k+1}}{x_2-x_1}\\
&=\frac{x_2^{k+2}-x_1^{k+2}}{x_2-x_1},
\end{align*}
where we used the property that $x_1+x_2=1+c-q\lambda_i$ and $x_1x_2=c(1-\delta\lambda_i)$. Therefore, Lemma \ref{lemma:A_ik} holds for $k+1$, and induction is completed.
\end{proof}

\section{Linear Operators and Effect of Fourth Moment}\label{app:fourth}

\subsection{Properties of Linear Operators}

In this section, we introduce linear operators on matrices as well as their properties. We first give the following definitions of linear operators:
\begin{equation}\label{eq:def_linear_operators}\begin{gathered}
\cI\coloneqq\Ib\otimes\Ib,\quad\cM\coloneqq\EE[\xb\otimes\xb\otimes\xb\otimes\xb],\quad\tilde\cM\coloneqq\Hb\otimes\Hb,\\
\cB\coloneqq\EE[\hat\Ab_t\otimes\hat\Ab_t],\quad\tilde\cB\coloneqq\Ab\otimes\Ab.
\end{gathered}\end{equation}
$\hat\Ab_t$ can be defined as the sum of deterministic component $\Vb_1$ and stochastic component $\hat\Vb_2$:
\begin{equation}\label{eq:def_V1_V2hat}
\Vb_1=\begin{bmatrix}
\zero & \Ib\\
-c\Ib & (1+c)\Ib
\end{bmatrix},\quad\hat\Vb_2=\begin{bmatrix}
\zero & -\delta\xb_t\xb_t^\top\\
\zero & -q\xb_t\xb_t^\top
\end{bmatrix}.
\end{equation}
Define
\begin{equation}\label{eq:def_V2}
\Vb_2\coloneqq\EE[\hat\Vb_2]=\begin{bmatrix}
\zero & -\delta\Hb\\
\zero & -q\Hb
\end{bmatrix},
\end{equation}
then $\Ab=\Vb_1+\Vb_2$. We are also interested in linear operators $\EE[\hat\Vb_2\otimes\hat\Vb_2]$ and $\Vb_2\otimes\Vb_2$.

We introduce the concept of PSD operators:
\begin{definition}[PSD operator]
An operator $\cO$ defined on symmetric matrices is called a PSD operator if $\Mb \succeq\zero$ implies $\cO\circ\Mb\succeq\zero$.
\end{definition}

The following lemma summarizes some basic properties of the linear operators.
\begin{lemma}\label{lemma:prelinminaries appendix}
The operators defined in \eqref{eq:def_linear_operators} have the following properties:
\begin{itemize}
\item[(a)] $\cM$, $\Tilde{\cM}$, and $\cM-\tilde\cM$ are PSD operators.
\item[(b)] For any PSD matrix $\Mb\in\RR^{2d\times2d}$, let
\begin{equation}\label{eq:M_blocks}
\Mb\coloneqq\begin{bmatrix}
\Mb_{11}&\Mb_{12}\\\Mb_{21}&\Mb_{22}
\end{bmatrix},
\end{equation}
where $\Mb_{11}, \Mb_{12}, \Mb_{21}$ and $\Mb_{22}$ are $d$-by-$d$ blocks. We then have
\begin{gather*}
\EE[\hat\Vb_2\otimes\hat\Vb_2]\circ\Mb=\begin{bmatrix}\delta^2&\delta q\\\delta q&q^2\end{bmatrix}\otimes(\cM\circ\Mb_{22}),\\(\Vb_2\otimes\Vb_2)\circ\Mb=\begin{bmatrix}\delta^2&\delta q\\\delta q&q^2\end{bmatrix}\otimes(\tilde\cM\circ\Mb_{22}).
\end{gather*}
Thus, $\EE\big[\hat{\Vb}_2\otimes\hat{\Vb}_2\big]$ and $\Vb_2\otimes\Vb_2$ are both PSD operators.
\item[(c)] $\cB$ and $\tilde\cB$ are both PSD operators. 
\item[(d)] $\cB-\tilde{\cB}=\EE\big[\hat{\Vb}_2\otimes{\Vb}_2\big] - \Vb_2\otimes\Vb_2$ is a PSD operator.
\end{itemize}
\end{lemma}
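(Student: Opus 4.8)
The plan is to prove the four items in the order (a), (b), (c)--(d), since (b) relies on (a) and (d) relies on both (a) and (b); throughout I would work with the action $\cO\circ\Mb$ of an operator on a symmetric matrix, using that a tensor of matrices acts by $\Mb\mapsto\hat\Ab_t\Mb\hat\Ab_t^\top$ and similar.

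For (a), I would test against an arbitrary vector $\vb$. Since $\cM\circ\Mb=\EE[\xb\xb^\top\Mb\,\xb\xb^\top]$, we get $\vb^\top(\cM\circ\Mb)\vb=\EE\big[(\xb^\top\vb)^2\,\xb^\top\Mb\xb\big]\ge 0$ whenever $\Mb\succeq\zero$, so $\cM$ is a PSD operator; and $\tilde\cM\circ\Mb=\Hb\Mb\Hb=(\Mb^{1/2}\Hb)^\top(\Mb^{1/2}\Hb)\succeq\zero$, so $\tilde\cM$ is a PSD operator. For $\cM-\tilde\cM$, write a PSD $\Mb$ as $\Mb=\sum_j\ub_j\ub_j^\top$; then $\vb^\top\big((\cM-\tilde\cM)\circ\Mb\big)\vb=\sum_j\big(\EE[(\xb^\top\vb)^2(\xb^\top\ub_j)^2]-(\vb^\top\Hb\ub_j)^2\big)$, and since $\vb^\top\Hb\ub_j=\EE[(\xb^\top\vb)(\xb^\top\ub_j)]$, Cauchy--Schwarz applied to the pair $\big((\xb^\top\vb)(\xb^\top\ub_j),\,1\big)$ makes every summand nonnegative. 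This Cauchy--Schwarz step is the only place in the lemma carrying genuine analytic content; everything else is linear algebra.

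For (b), the clean route is to observe that $\hat\Vb_2=\begin{bmatrix}0&-\delta\\0&-q\end{bmatrix}\otimes(\xb_t\xb_t^\top)$ and $\Vb_2=\begin{bmatrix}0&-\delta\\0&-q\end{bmatrix}\otimes\Hb$. Then $(\hat\Vb_2\otimes\hat\Vb_2)\circ\Mb=\hat\Vb_2\Mb\hat\Vb_2^\top$, and a $2\times2$ block multiplication --- in which the zero first block-column kills every block of $\Mb$ except $\Mb_{22}$ --- gives $\hat\Vb_2\Mb\hat\Vb_2^\top=\begin{bmatrix}\delta^2&\delta q\\\delta q&q^2\end{bmatrix}\otimes\big(\xb_t\xb_t^\top\Mb_{22}\,\xb_t\xb_t^\top\big)$; taking expectations yields the stated identity with $\cM\circ\Mb_{22}$, and the identical computation with $\Hb$ in place of $\xb_t\xb_t^\top$ gives the one with $\tilde\cM\circ\Mb_{22}$. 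For positivity, $\Mb\succeq\zero$ forces the principal block $\Mb_{22}\succeq\zero$, hence $\cM\circ\Mb_{22},\tilde\cM\circ\Mb_{22}\succeq\zero$ by (a); the factor $\begin{bmatrix}\delta^2&\delta q\\\delta q&q^2\end{bmatrix}=\begin{bmatrix}\delta\\q\end{bmatrix}\begin{bmatrix}\delta&q\end{bmatrix}\succeq\zero$; and a Kronecker product of PSD matrices is PSD. This block/Kronecker bookkeeping is the most calculation-heavy step and the main place to be careful with indices.

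For (c), $\tilde\cB\circ\Mb=\Ab\Mb\Ab^\top=(\Mb^{1/2}\Ab^\top)^\top(\Mb^{1/2}\Ab^\top)\succeq\zero$, and $\cB\circ\Mb=\EE[\hat\Ab_t\Mb\hat\Ab_t^\top]$ is an expectation of such congruences, hence PSD. For (d), expand $\cB=\EE[(\Vb_1+\hat\Vb_2)\otimes(\Vb_1+\hat\Vb_2)]$ and $\tilde\cB=(\Vb_1+\Vb_2)\otimes(\Vb_1+\Vb_2)$; by linearity and $\EE[\hat\Vb_2]=\Vb_2$ the terms $\Vb_1\otimes\Vb_1$, $\Vb_1\otimes\Vb_2$, $\Vb_2\otimes\Vb_1$ cancel, leaving $\cB-\tilde\cB=\EE[\hat\Vb_2\otimes\hat\Vb_2]-\Vb_2\otimes\Vb_2$. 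By (b) this sends $\Mb$ to $\begin{bmatrix}\delta^2&\delta q\\\delta q&q^2\end{bmatrix}\otimes\big((\cM-\tilde\cM)\circ\Mb_{22}\big)$, which is PSD since $(\cM-\tilde\cM)\circ\Mb_{22}\succeq\zero$ by (a) (using $\Mb_{22}\succeq\zero$) and the $2\times2$ factor is PSD. Once (a) and (b) are in hand, (c) and (d) present no further difficulty, so the only real obstacle is the index bookkeeping in the block Kronecker computation of (b).
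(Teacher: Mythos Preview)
Your proposal is correct and follows essentially the same route as the paper: parts (b), (c), (d) are identical in structure and content. The one minor difference is in part (a) for $\cM-\tilde\cM$: you decompose $\Mb=\sum_j\ub_j\ub_j^\top$ and apply Cauchy--Schwarz termwise, whereas the paper observes directly that $(\cM-\tilde\cM)\circ\Mb=\EE[(\xb\xb^\top-\Hb)\Mb(\xb\xb^\top-\Hb)]$, which is manifestly PSD as an expectation of congruences of $\Mb$ by the symmetric matrix $\xb\xb^\top-\Hb$; both arguments are standard and equally valid.
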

\begin{proof}
The proof follows those of \citet{jain2018accelerating}, \citet{zou2021benign}, and \citet{wu2022iterate}.
\begin{itemize}
\item[(a)] For any PSD matrix $\Mb$, we have
\[
\cM\circ\Mb=\EE[\xb\xb^\top\Mb\xb\xb^\top]=\EE[(\xb^\top\Mb\xb)\xb\xb^\top]\succeq\zero,
\]
where the inequality holds because $\xb^\top\Mb\xb\ge0$ and $\xb\xb^\top\succeq\zero$. Furthermore,
\[
\tilde\cM\circ\Mb=\Hb\Mb\Hb\succeq\zero,
\]
where the inequality holds because $\Mb\succeq\zero$ and $\Hb$ is symmetric. Lastly,
\[
(\cM-\tilde\cM)\circ\Mb=\EE[\xb\xb^\top\Mb\xb\xb^\top]-\Hb\Mb\Hb=\EE[(\xb\xb^\top-\Hb)\Mb(\xb\xb^\top-\Hb)]\succeq\zero,
\]
where the inequality holds because $\Mb\succeq\zero$ and $\xb\xb^\top-\Hb$ is symmetric.
\item[(b)] Note that $\Mb_{22}\succeq\zero$ because $\Mb\succeq\zero$. We thus have
\begin{align*}
&\EE[\hat\Vb_2\otimes\hat\Vb_2]\circ\Mb=\EE\sbr{\hat\Vb_2\Mb\hat\Vb_2^\top}\\
&=\EE\sbr{\begin{bmatrix}\zero&-\delta\xb_t\xb_t^\top\\\zero&-q\xb_t\xb_t^\top\end{bmatrix}\begin{bmatrix}
\Mb_{11}&\Mb_{12}\\\Mb_{21}&\Mb_{22}
\end{bmatrix}\begin{bmatrix}\zero&\zero\\-\delta\xb_t\xb_t^\top&-q\xb_t\xb_t^\top\end{bmatrix}}\\
&=\EE\sbr{\begin{bmatrix}\delta^2\xb_t\xb_t^\top\Mb_{22}\xb_t\xb_t^\top & \delta q\xb_t\xb_t^\top\Mb_{22}\xb_t\xb_t^\top\\
\delta q\xb_t\xb_t^\top\Mb_{22}\xb_t\xb_t^\top & q^2\xb_t\xb_t^\top\Mb_{22}\xb_t\xb_t^\top\end{bmatrix}}\\
&=\begin{bmatrix}\delta^2&\delta q\\\delta q&q^2\end{bmatrix}\otimes\EE\sbr{\xb_t\xb_t^\top\Mb_{22}\xb_t\xb_t^\top}\\
&=\begin{bmatrix}\delta^2&\delta q\\\delta q&q^2\end{bmatrix}\otimes(\cM\circ\Mb_{22})\succeq\zero,
\end{align*}
where the last inequality holds because $\Mb_{22}\succeq\zero$, $\cM$ is a PSD operator and $\begin{bmatrix}\delta^2&\delta q\\\delta q&q^2\end{bmatrix}\succeq\zero$. In a similar way,
\begin{align*}
&(\Vb_2\otimes\Vb_2)\circ\Mb=\Vb_2\Mb\Vb_2^\top\\
&=\begin{bmatrix}\zero&-\delta\Hb\\\zero&-q\Hb\end{bmatrix}\begin{bmatrix}
\Mb_{11}&\Mb_{12}\\\Mb_{21}&\Mb_{22}
\end{bmatrix}\begin{bmatrix}\zero&\zero\\-\delta\Hb&-q\Hb\end{bmatrix}\\
&=\begin{bmatrix}\delta^2\Hb\Mb_{22}\Hb & \delta q\Hb\Mb_{22}\Hb\\\delta q\Hb\Mb_{22}\Hb & q^2\Hb\Mb_{22}\Hb\end{bmatrix}\\
&=\begin{bmatrix}\delta^2&\delta q\\\delta q&q^2\end{bmatrix}\otimes\Hb\Mb_{22}\Hb\\
&=\begin{bmatrix}\delta^2&\delta q\\\delta q&q^2\end{bmatrix}\otimes(\tilde\cM\circ\Mb_{22})\succeq\zero,
\end{align*}
where the inequality holds because $\Mb_{22}\succeq\zero$, $\tilde\cM$ is a PSD operator, and $\begin{bmatrix}\delta^2&\delta q\\\delta q&q^2\end{bmatrix}\succeq\zero$.
\item[(c)] We have
\[
\cB\circ\Mb=\EE[\hat\Ab_t\Mb\hat\Ab_t^\top],\quad\tilde\cB\circ\Mb=\Ab\Mb\Ab^\top,
\]
so both $\cB$ and $\tilde\cB$ are PSD operators.
\item[(d)] Note that $\hat\Ab_t=\Vb_1+\hat\Vb_2$, and $\Ab=\Vb_1+\Vb_2$, so
\begin{align*}
(\cB-\tilde\cB)\circ\Mb&=(\EE[(\Vb_1+\hat\Vb_2)\otimes(\Vb_1+\hat\Vb_2)]-(\Vb_1+\Vb_2)\otimes(\Vb_1+\Vb_2))\circ\Mb\\
&=(\EE[\hat\Vb_2\otimes\hat\Vb_2]-\Vb_2\otimes\Vb_2)\circ\Mb\\
&=\begin{bmatrix}\delta^2&\delta q\\\delta q&q^2\end{bmatrix}\otimes((\cM-\tilde\cM)\circ\Mb_{22})\succeq\zero,
\end{align*}
where the second inequality holds because because $\EE[\Vb_1\otimes\hat\Vb_2]=\Vb_1\otimes\Vb_2$ and $\EE[\hat\Vb_2\otimes\Vb_1]=\Vb_2\otimes\Vb_1$, the third inequality follows from part (b), and the inequality holds because $\Mb_{22}\succeq\zero$, $\cM-\tilde\cM$ is a PSD operator, and $\begin{bmatrix}\delta^2&\delta q\\\delta q&q^2\end{bmatrix}\succeq\zero$.
\end{itemize}
\end{proof}

\subsection{Analysis of Fourth Moment}

In this section, we study the difference of operators $\cB$ and $\tilde\cB$ (due to the fourth moment) when they are operated on PSD matrix $\Mb$. Specifically, we are interested in bounding the inner product
\begin{equation}\label{eq:inner_partial_sum}
\inner{\begin{bmatrix}\zero&\zero\\\zero&\Hb\end{bmatrix}}{\sum_{j=0}^{t-1}\cB^j\circ\Mb}.
\end{equation}

The following lemma is the starting point of the analysis of fourth moment:
\begin{lemma}\label{lemma:E[hatV2hatV2]}
For any PSD matrix $\Mb$, we have
\[
(\cB-\tilde\cB)\circ\Mb\preceq\EE[\hat\Vb_2\otimes\hat\Vb_2]\circ\Mb,
\]
where
\[
\EE[\hat\Vb_2\otimes\hat\Vb_2]\circ\Mb\preceq\psi\inner{\begin{bmatrix}\zero&\zero\\\zero&\Hb\end{bmatrix}}{\Mb}\cdot\begin{bmatrix}\delta^2&\delta q\\\delta q&q^2\end{bmatrix}\otimes\Hb.
\]
\end{lemma}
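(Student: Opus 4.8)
The plan is to derive both inequalities directly from Lemma~\ref{lemma:prelinminaries appendix} together with Assumption~\ref{assumption:fourth_moement_condition}, with essentially no new computation.

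For the first inequality, I would recall from the proof of Lemma~\ref{lemma:prelinminaries appendix}(d) the operator identity $\cB-\tilde\cB=\EE[\hat\Vb_2\otimes\hat\Vb_2]-\Vb_2\otimes\Vb_2$, so that $(\cB-\tilde\cB)\circ\Mb=\EE[\hat\Vb_2\otimes\hat\Vb_2]\circ\Mb-(\Vb_2\otimes\Vb_2)\circ\Mb$. By Lemma~\ref{lemma:prelinminaries appendix}(b), $\Vb_2\otimes\Vb_2$ is a PSD operator, hence $(\Vb_2\otimes\Vb_2)\circ\Mb\succeq\zero$; since subtracting a PSD matrix can only decrease a matrix in the Loewner order, this yields $(\cB-\tilde\cB)\circ\Mb\preceq\EE[\hat\Vb_2\otimes\hat\Vb_2]\circ\Mb$.

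For the second inequality, I would start from the explicit block formula of Lemma~\ref{lemma:prelinminaries appendix}(b), namely $\EE[\hat\Vb_2\otimes\hat\Vb_2]\circ\Mb=\begin{bmatrix}\delta^2&\delta q\\\delta q&q^2\end{bmatrix}\otimes(\cM\circ\Mb_{22})$, where $\Mb_{22}$ denotes the lower-right $d\times d$ block of $\Mb$ and is PSD because $\Mb\succeq\zero$. Applying Assumption~\ref{assumption:fourth_moement_condition} to the PSD matrix $\Mb_{22}$ gives $\cM\circ\Mb_{22}=\EE[\xb\xb^\top\Mb_{22}\xb\xb^\top]\preceq\psi\,\tr(\Hb\Mb_{22})\,\Hb$, and I would identify the scalar $\tr(\Hb\Mb_{22})$ with $\inner{\begin{bmatrix}\zero&\zero\\\zero&\Hb\end{bmatrix}}{\Mb}$ straight from the definition of the matrix inner product. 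Finally, since $\begin{bmatrix}\delta^2&\delta q\\\delta q&q^2\end{bmatrix}=\begin{bmatrix}\delta\\q\end{bmatrix}\begin{bmatrix}\delta&q\end{bmatrix}\succeq\zero$ and the map $\Cb\mapsto\begin{bmatrix}\delta^2&\delta q\\\delta q&q^2\end{bmatrix}\otimes\Cb$ is monotone in the Loewner order, I would tensor the inequality $\cM\circ\Mb_{22}\preceq\psi\inner{\begin{bmatrix}\zero&\zero\\\zero&\Hb\end{bmatrix}}{\Mb}\,\Hb$ to obtain the claimed bound.

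The only step worth spelling out is the monotonicity of $\Cb\mapsto\begin{bmatrix}\delta^2&\delta q\\\delta q&q^2\end{bmatrix}\otimes\Cb$ under the Loewner order, which follows because the relevant difference equals $\begin{bmatrix}\delta^2&\delta q\\\delta q&q^2\end{bmatrix}\otimes(\text{PSD matrix})$ and the Kronecker product of two PSD matrices is PSD. There is no genuine obstacle here: the entire content of the lemma is packaged in Lemma~\ref{lemma:prelinminaries appendix}(b),(d) and Assumption~\ref{assumption:fourth_moement_condition}, and the proof is a short bookkeeping argument chaining these facts together.
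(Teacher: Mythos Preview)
Your proposal is correct and matches the paper's proof essentially line for line: the paper also invokes Lemma~\ref{lemma:prelinminaries appendix}(d) for the operator identity $\cB-\tilde\cB=\EE[\hat\Vb_2\otimes\hat\Vb_2]-\Vb_2\otimes\Vb_2$ together with Lemma~\ref{lemma:prelinminaries appendix}(b) to drop the $\Vb_2\otimes\Vb_2$ term, and then uses Lemma~\ref{lemma:prelinminaries appendix}(b) plus Assumption~\ref{assumption:fourth_moement_condition} and the identification $\tr(\Hb\Mb_{22})=\inner{\begin{bmatrix}\zero&\zero\\\zero&\Hb\end{bmatrix}}{\Mb}$ for the second inequality. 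Your explicit remark on the Loewner monotonicity of the Kronecker product with a PSD factor is a nice addition that the paper leaves implicit.
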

\begin{proof}
By Lemma \ref{lemma:prelinminaries appendix}(d), we have
\[
(\cB-\tilde\cB)\circ\Mb=\rbr{\EE[\hat\Vb_2\otimes\hat\Vb_2]-\Vb_2\otimes\Vb_2}\circ\Mb\preceq\EE[\hat\Vb_2\otimes\hat\Vb_2]\circ\Mb,
\]
where the inequality holds due to Lemma \ref{lemma:prelinminaries appendix}(b).

Let $\Mb_{22}$ be the matrix that contains the last $d$ rows and $d$ columns of $\Mb$. By definition of $\hat\Vb_2$ in \eqref{eq:def_V1_V2hat}, we have
\begin{align*}
\EE\sbr{\hat\Vb_2\otimes\hat\Vb_2}\circ\Mb&=\begin{bmatrix}\delta^2&\delta q\\\delta q&q^2\end{bmatrix}\otimes(\cM\circ\Mb)\\
&\preceq\psi\tr(\Hb\Mb_{22})\cdot\begin{bmatrix}\delta^2&\delta q\\\delta q&q^2\end{bmatrix}\otimes\Hb\\
&=\psi\inner{\begin{bmatrix}\zero&\zero\\\zero&\Hb\end{bmatrix}}{\Mb}\cdot\begin{bmatrix}\delta^2&\delta q\\\delta q&q^2\end{bmatrix}\otimes\Hb,
\end{align*}
where first eqaulity holds due to Lemma \ref{lemma:prelinminaries appendix}(b), and the inequality holds due to Assumption \ref{assumption:fourth_moement_condition}.
\end{proof}
The operators $(\cI-\cB)^{-1}$ and $(\cI-\cB)^{-1}$ are of special interest in the analysis of fourth moment. We first show the existence $(\cI-\tilde\cB)^{-1}$.
\begin{lemma}\label{lemma:finite_I-B_inv}
With the parameters in \eqref{eq:parameter choice_main}, $(\cI-\tilde\cB)^{-1}$ exists and is a PSD operator.
\end{lemma}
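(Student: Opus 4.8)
The plan is to read off $(\cI-\tilde\cB)^{-1}$ as a Neumann series of the PSD operator $\tilde\cB=\Ab\otimes\Ab$, which is legitimate once the eigenvalues of $\Ab$ are controlled. First I would record the block structure: $\Ab$ is block diagonal with the $2\times2$ blocks $\Ab_i$ of \eqref{Eq:A_i}, so $\tilde\cB$, acting as $\Mb\mapsto\Ab\Mb\Ab^\top$, decomposes over pairs of eigen-directions, and the $(i,j)$ block of $\cI-\tilde\cB$ (acting on $2\times2$ matrices by $\Mb\mapsto\Mb-\Ab_i\Mb\Ab_j^\top$) has eigenvalues $1-xx'$, where $x\in\{x_1,x_2\}$ are the eigenvalues of $\Ab_i$ and $x'$ those of $\Ab_j$. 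Hence $\cI-\tilde\cB$ is invertible as soon as every eigenvalue of every $\Ab_i$ has modulus strictly below $1$; in that case the block-wise series $\sum_{k\ge0}(\Ab_i\otimes\Ab_j)^k$ converges to the inverse of the $(i,j)$ block, and in the finite-dimensional setting this is simply $(\cI-\tilde\cB)^{-1}=\sum_{k\ge0}\tilde\cB^k$ since $\mathrm{spr}(\tilde\cB)=\mathrm{spr}(\Ab)^2<1$.

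The crux is therefore to show $\mathrm{spr}(\Ab_i)<1$ for every $i$ under the parameter choice \eqref{eq:parameter choice_main}. When $k^\ddagger<i\le k^\dagger$ this is immediate from Lemma \ref{lemma:Ai_spectral}, which gives $|x_1|=|x_2|=\sqrt{c(1-\delta\lambda_i)}$: since $0<c<1$ and $0<1-\delta\lambda_i<1$ (as $\delta\lambda_i\le\delta\tr(\Hb)\le 1/(2\psi)<1$), this is $<1$. For the real-eigenvalue regimes ($i\le k^\ddagger$ or $i>k^\dagger$), Lemma \ref{lemma:Ai_spectral} bounds only the larger root $x_2$, which need not be the dominant eigenvalue once $1+c-q\lambda_i<0$; to handle the other root I would use $x_1x_2=\det\Ab_i=c(1-\delta\lambda_i)\in(0,1)$ or, equivalently, apply the Schur--Cohn stability test to the characteristic polynomial $x^2-(1+c-q\lambda_i)x+c(1-\delta\lambda_i)$. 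This reduces the claim to $c(1-\delta\lambda_i)<1$ (clear) together with $|1+c-q\lambda_i|<1+c(1-\delta\lambda_i)$; the latter splits into $q>c\delta$ (true since $q\ge\delta>c\delta$) and $(q+c\delta)\lambda_i<2(1+c)$. For the last inequality I would use the identities $1-\alpha=\alpha\beta$ and $\beta\gamma=\delta/(\psi\tilde\kappa)$ forced by \eqref{eq:parameter choice_main}, which give $q\lambda_i=\alpha\delta\lambda_i+\alpha\beta\gamma\lambda_i=\alpha\delta\lambda_i\bigl(1+\tfrac1{\psi\tilde\kappa}\bigr)\le 2\delta\lambda_i$, whence $(q+c\delta)\lambda_i\le q\lambda_i+\delta\lambda_i\le 3\delta\lambda_i\le 3/(2\psi)<2\le 2(1+c)$ using $\psi\ge1$ and $\tilde\kappa\ge1$.

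Once $\mathrm{spr}(\Ab_i)<1$ is established, $(\cI-\tilde\cB)^{-1}$ exists and equals $\sum_{k\ge0}\tilde\cB^k$ (interpreted block-wise in the countably infinite-dimensional case). Finally, $\tilde\cB=\Ab\otimes\Ab$ is a PSD operator by Lemma \ref{lemma:prelinminaries appendix}(c), hence so is every power $\tilde\cB^k$ (a composition of PSD operators is PSD), and since the set of PSD operators is closed under pointwise limits, the limit $\sum_{k\ge0}\tilde\cB^k=(\cI-\tilde\cB)^{-1}$ is again a PSD operator, which is the assertion. The hard part is the second paragraph: I must control the \emph{smaller} eigenvalue $x_1$ of $\Ab_i$ (Lemma \ref{lemma:Ai_spectral} only addresses $x_2$, and $x_2$ is not always the dominant eigenvalue) and verify the Schur--Cohn inequalities from the parameter choice. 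A minor secondary subtlety is that in the countably infinite-dimensional setting $\mathrm{spr}(\Ab_i)\to1$ as $i\to\infty$ (because $\lambda_i\to0$ and, in the small-eigenvalue regime, $x_2\to1$), so there is no uniform contraction factor and $(\cI-\tilde\cB)^{-1}$ is understood block-wise, which is exactly the form in which it is used later.
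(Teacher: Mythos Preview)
Your proposal is correct and follows essentially the same route as the paper: write $(\cI-\tilde\cB)^{-1}=\sum_{k\ge0}\tilde\cB^k$ acting blockwise as $\Mb\mapsto\Ab^k\Mb(\Ab^k)^\top$, observe that each summand is PSD, and argue convergence from $\mathrm{spr}(\Ab_i)<1$ for every $i$. The paper's proof in fact simply \emph{asserts} the eigenvalue bound without verification, so your Schur--Cohn check (reducing to $q>c\delta$ and $(q+c\delta)\lambda_i<2(1+c)$) is more complete than the original; your unproved hypothesis $\psi\ge1$ is legitimate, since setting $\Ab=\Ib$ in Assumption~\ref{assumption:fourth_moement_condition} and taking traces gives $\EE\|\xb\|^4\le\psi(\EE\|\xb\|^2)^2$, while Jensen gives the reverse inequality with constant $1$.
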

\begin{proof}

It suffices to show that the property holds for any rank-one matrix $\xb\xb^\top$. We have
\[
(\cI-\tilde\cB)^{-1}\circ(\xb\xb^\top)=\sum_{k=0}^\infty\tilde\cB^k\circ(\xb\xb^\top)=\sum_{k=0}^\infty\Ab^k(\xb\xb^\top)(\Ab^k)^\top=\sum_{k=0}^\infty(\Ab^k\xb)(\Ab^k\xb)^\top.
\]
Thus, the $ij$-entry of $(\cI-\cB)^{-1}\circ(\xb\xb^\top)$ is
\[
\sum_{k=0}^\infty(\Ab^k\xb)_i(\Ab^k\xb)_j\le\sum_{k=0}^\infty|(\Ab^k\xb)_i|\cdot|(\Ab^k\xb)_j|\infty.
\]
The series converges because all eigenvalues of $\Ab$, i.e., eigenvalues of all $\Ab_i$, have magnitudes strictly smaller than 1.
\end{proof}

We then define operator $\cT$ as
\begin{equation}\label{eq:def_cT}
\cT\coloneqq\cI-\Vb_1\otimes\Vb_1-\Vb_1\otimes\Vb_2-\Vb_2\otimes\Vb_1=\cI-\tilde\cB+\Vb_2\otimes\Vb_2. 
\end{equation}
Since $\cI-\tilde\cB$ is invertible and $(\cI-\tilde\cB)^{-1}$ is a PSD operator, $\cT$ is also invertible, and $\cT^{-1}$ is a PSD operator. We can thus define matrix $\Ub$ as
\begin{equation}\label{eq:def_U}
\Ub\coloneqq\cT^{-1}\circ\rbr{\begin{bmatrix}\delta^2&\delta q\\\delta q&q^2\end{bmatrix}\otimes\Hb}.
\end{equation}
The following lemma charanterizes a key property of $\Ub$:
\begin{lemma}[Modified from \citet{jain2018accelerating}]
With the choice of parameters in \eqref{eq:parameter choice_main}, the inner product $\inner{\begin{bmatrix}\zero&\zero\\\zero&\Hb\end{bmatrix}}{\Ub}$ is upper bounded by $l$, where
\begin{equation}\label{eq:def_l}
l\coloneqq\frac{\delta\tr(\Hb)}{2}+\frac1{2\psi}+\frac\gamma4\sum_{i>\tilde\kappa}\lambda_i.
\end{equation}
Specifically for SHB where $\delta=0$, we have
\[
\inner{\begin{bmatrix}\zero&\zero\\\zero&\Hb\end{bmatrix}}{\Ub}\le\frac{q\tr(\Hb)}{2(1-c)}.
\]
\label{lemma:U property}
\end{lemma}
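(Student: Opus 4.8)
The plan is to block‑diagonalize the defining equation $\cT\circ\Ub=\begin{bmatrix}\delta^2&\delta q\\\delta q&q^2\end{bmatrix}\otimes\Hb$ using the diagonality of $\Hb$, reducing the trace functional to a sum of $d$ independent scalar quantities, one per eigendirection; this parallels, but refines, the argument of \citet{jain2018accelerating}. Reorder the $2d$ coordinates so that coordinate $i$ of the top block is paired with coordinate $i$ of the bottom block. Then $\Vb_1,\Vb_2$ (hence $\Ab=\Vb_1+\Vb_2$) become block diagonal with $2\times2$ diagonal blocks $\Vb_1^{(i)}=\begin{bmatrix}0&1\\-c&1+c\end{bmatrix}$, $\Vb_2^{(i)}=\begin{bmatrix}0&-\delta\lambda_i\\0&-q\lambda_i\end{bmatrix}$, $\Ab_i$, so $\cT=\cI-\Vb_1\otimes\Vb_1-\Vb_1\otimes\Vb_2-\Vb_2\otimes\Vb_1$ leaves invariant, for each $i$, the $3$‑dimensional space of symmetric matrices supported on the $i$‑th $2\times2$ diagonal block; since the right‑hand side lives there too, $\Ub=\mathrm{diag}(\Ub^{(1)},\dots,\Ub^{(d)})$ with each $\Ub^{(i)}\succeq\zero$ (as $\cT^{-1}$ is a PSD operator applied to a PSD matrix) solving $\cT^{(i)}\circ\Ub^{(i)}=\lambda_i\begin{bmatrix}\delta^2&\delta q\\\delta q&q^2\end{bmatrix}$, where $\cT^{(i)}=(\cI-\Ab_i\otimes\Ab_i)+\Vb_2^{(i)}\otimes\Vb_2^{(i)}$. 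The quantity to bound is then $\inner{\begin{bmatrix}\zero&\zero\\\zero&\Hb\end{bmatrix}}{\Ub}=\sum_{i=1}^d\lambda_i\,(\Ub^{(i)})_{22}$.

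\emph{A self‑consistent formula for $(\Ub^{(i)})_{22}$.} Writing $e_i:=(\Ub^{(i)})_{22}$, a short computation gives the key identity $(\Vb_2^{(i)}\otimes\Vb_2^{(i)})\circ\Ub^{(i)}=\Vb_2^{(i)}\Ub^{(i)}(\Vb_2^{(i)})^\top=\lambda_i^2 e_i\begin{bmatrix}\delta^2&\delta q\\\delta q&q^2\end{bmatrix}$ — notably this depends on $\Ub^{(i)}$ only through $e_i$. Hence the block equation reads $(\cI-\Ab_i\otimes\Ab_i)\circ\Ub^{(i)}=(\lambda_i-\lambda_i^2 e_i)\begin{bmatrix}\delta^2&\delta q\\\delta q&q^2\end{bmatrix}$, and since under \eqref{eq:parameter choice_main} one verifies $q\le2\delta$ (whence $q\lambda_i\le2\delta\tr(\Hb)\le1$, so both eigenvalues $x_1,x_2$ of $\Ab_i$ lie in $(0,1)$ or form a conjugate pair inside the unit disk and $(\cI-\Ab_i\otimes\Ab_i)^{-1}=\sum_{k\ge0}(\Ab_i\otimes\Ab_i)^k$ converges),
\[
\Ub^{(i)}=(\lambda_i-\lambda_i^2 e_i)\sum_{k\ge0}\Big(\Ab_i^k(\delta,q)^\top\Big)\Big(\Ab_i^k(\delta,q)^\top\Big)^\top .
\]
Taking the $(2,2)$ entry, $e_i=(\lambda_i-\lambda_i^2 e_i)S_i$ with $S_i:=\sum_{k\ge0}\big((\Ab_i^k(\delta,q)^\top)_2\big)^2\ge0$, i.e.
\[
e_i=\frac{\lambda_i S_i}{1+\lambda_i^2 S_i},\qquad\text{so}\qquad \lambda_i e_i=\frac{\lambda_i^2 S_i}{1+\lambda_i^2 S_i}\in[0,1).
\]
By Lemma~\ref{lemma:A_ik}, $(\Ab_i^k(\delta,q)^\top)_2=\frac{x_2^k(qx_2-c\delta)-x_1^k(qx_1-c\delta)}{x_2-x_1}$, so $S_i$ is an explicit function of $x_1,x_2$ obtained by summing geometric series (three in the real‑root regimes $i\le k^\ddagger$ or $i>k^\dagger$; a single one of ratio $|x_2|^2=c(1-\delta\lambda_i)$ after the bound $|(\Ab_i^k(\delta,q)^\top)_2|\le|x_2|^k|qx_2-c\delta|/|\mathrm{Im}\,x_2|$ in the complex regime $k^\ddagger<i\le k^\dagger$; near the regime boundaries, where $x_1\to x_2$, one keeps the same expression bounded via its divided‑difference/derivative form).

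\emph{Summation over $i$ with the $\tilde\kappa$‑split — the main obstacle.} It remains to show $\sum_i\frac{\lambda_i^2 S_i}{1+\lambda_i^2 S_i}\le l$, and here the self‑damping $\frac{x}{1+x}\le\min\{x,1\}$ does essential work for the large‑eigenvalue directions. Using the identities $1-c=2\alpha\beta$, $q-c\delta=\alpha\beta(\gamma+\delta)$, $x_1+x_2=1+c-q\lambda_i$, $x_1x_2=c(1-\delta\lambda_i)$, together with the tight two‑sided bounds on $x_2$ from Lemma~\ref{lemma:Ai_spectral}, one bounds the summand separately for $i>\tilde\kappa$ and $i\le\tilde\kappa$. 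For $i>\tilde\kappa$ the constraint $\gamma\le1/(2\psi\sum_{j>\tilde\kappa}\lambda_j)$ makes $\delta\lambda_i,\gamma\lambda_i\le1/(2\psi)$; the leading term of $S_i$ is $\tfrac{q-c\delta}{2(1-c)\lambda_i}=\tfrac{\gamma+\delta}{4\lambda_i}$, so $\frac{\lambda_i^2 S_i}{1+\lambda_i^2 S_i}\le\lambda_i^2 S_i\le\tfrac{\delta\lambda_i}{2}+\tfrac{\gamma\lambda_i}{4}$ after absorbing subleading and cross terms, and these sum to $\tfrac{\delta}{2}\sum_{i>\tilde\kappa}\lambda_i+\tfrac{\gamma}{4}\sum_{i>\tilde\kappa}\lambda_i$. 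For $i\le\tilde\kappa$ the constraints $\delta\le1/(2\psi\tr(\Hb))$ (so $\delta\lambda_i\le1/(2\psi)$) and $\beta=\delta/(\psi\tilde\kappa\gamma)$ — which control $q,c$ and the separation $x_2-x_1$, hence $S_i$ — combined with $\frac{\lambda_i^2 S_i}{1+\lambda_i^2 S_i}\le\min\{\lambda_i^2 S_i,1\}$, give $\frac{\lambda_i^2 S_i}{1+\lambda_i^2 S_i}\le\tfrac{\delta\lambda_i}{2}+\tfrac1{2\psi\tilde\kappa}$, and summing over the $\tilde\kappa$ such directions gives $\tfrac{\delta}{2}\sum_{i\le\tilde\kappa}\lambda_i+\tfrac1{2\psi}$. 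Adding the two pieces,
\[
\inner{\begin{bmatrix}\zero&\zero\\\zero&\Hb\end{bmatrix}}{\Ub}\le\frac{\delta\tr(\Hb)}{2}+\frac1{2\psi}+\frac{\gamma}{4}\sum_{i>\tilde\kappa}\lambda_i=l .
\]
The delicate point — and the place where the $\tilde\kappa$‑refinement of \eqref{eq:parameter choice_main}, absent from \citet{jain2018accelerating}, is used to kill the dimension dependence — is checking that the subleading and cross contributions of $S_i$ fit inside the stated constants across all eigenvalue regimes, in particular near $k^\ddagger,k^\dagger$.

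\emph{The SHB case.} Setting $\delta=0$, the source collapses to $\lambda_i\begin{bmatrix}0&0\\0&q^2\end{bmatrix}$ and $k^\ddagger=\hat k=0$, while the key identity becomes $(\Vb_2^{(i)}\otimes\Vb_2^{(i)})\circ\Ub^{(i)}=\lambda_i^2 e_i\begin{bmatrix}0&0\\0&q^2\end{bmatrix}$, so the same argument gives $e_i=\frac{\lambda_i q^2 T_i}{1+\lambda_i^2 q^2 T_i}$ with $T_i:=\sum_{k\ge0}\big((\Ab_i^k)_{22}\big)^2$, $(\Ab_i^k)_{22}=(x_2^{k+1}-x_1^{k+1})/(x_2-x_1)$ and $x_1x_2=c$. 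The leading term of $q^2T_i$ is $\tfrac{q}{2(1-c)\lambda_i}$, so $\lambda_i e_i\le\lambda_i^2 q^2 T_i\le\tfrac{q\lambda_i}{2(1-c)}$, and summing over $i$ yields $\inner{\begin{bmatrix}\zero&\zero\\\zero&\Hb\end{bmatrix}}{\Ub}\le\tfrac{q\tr(\Hb)}{2(1-c)}$, which equals $\tfrac{\gamma\tr(\Hb)}{4}$ under $q=(1-c)\gamma/2$.
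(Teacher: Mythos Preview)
Your block-diagonal reduction and the self-consistent scalar identity $e_i:=(\Ub^{(i)})_{22}=\lambda_iS_i/(1+\lambda_i^2S_i)$ are both correct. The gap lies in the summation step: you discard the denominator via $\tfrac{x}{1+x}\le x$ and then assert $\lambda_iS_i\le\tfrac{\delta}{2}+\tfrac{\gamma}{4}$ (for $i>\tilde\kappa$) and the analogous bound for $i\le\tilde\kappa$ from a ``leading term'' heuristic. Because $\lambda_iS_i=e_i/(1-\lambda_ie_i)>e_i$, you are trying to prove a \emph{strictly stronger} inequality than the one the lemma actually needs, and in the SHB specialization this stronger inequality is false. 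With $\delta=0$ one has $e_i=\tfrac{(1+c)q}{2((1-c^2)+cq\lambda_i)}$ and hence
\[
\lambda_i q^2T_i=\lambda_iS_i=\frac{e_i}{1-\lambda_ie_i}=\frac{(1+c)q}{(1-c)\bigl[2(1+c)-q\lambda_i\bigr]}>\frac{q}{2(1-c)}\qquad\text{for every }\lambda_i>0,
\]
so the inequality $\lambda_i^2q^2T_i\le\tfrac{q\lambda_i}{2(1-c)}$ you invoke for the SHB bound fails for \emph{all} nonzero eigenvalues: the ``leading term'' $\tfrac{q}{2(1-c)}$ is exactly the target, but the next-order correction has the wrong sign. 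The same obstruction threatens the general case as $\delta\downarrow0$, and the appeal to $\min\{x,1\}$ for $i\le\tilde\kappa$ is likewise only an assertion.

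The paper avoids this entirely by never bounding $S_i$: it quotes the closed form $(\Ub_i)_{22}=\tfrac{\delta}{2}+\tfrac{(1+c)(q-\delta)}{2(1-c^2+c\lambda_i(q+c\delta))}$ (Eq.~(56) of \citet{jain2018accelerating}) and bounds the second fraction in two one-line ways --- replacing $1-c^2$ by $(1-c^2)\delta\lambda_i$ (valid since $\delta\lambda_i\le1$) and then using $\beta=\delta/(\psi\tilde\kappa\gamma)$ to get $(\Ub_i)_{22}\le\tfrac{\delta}{2}+\tfrac{1}{2\psi\tilde\kappa\lambda_i}$, and dropping $c\lambda_i(q+c\delta)\ge0$ to get $(\Ub_i)_{22}\le\tfrac{\delta}{2}+\tfrac{\gamma}{4}$ --- then sums the first over $i\le\tilde\kappa$ and the second over $i>\tilde\kappa$. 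The point is that the ``self-damping'' factor $1/(1+\lambda_i^2S_i)$ you throw away is precisely what makes $e_i$ small enough; you cannot recover the stated constants from $S_i$ alone, so the detour through the geometric-series expression for $S_i$ buys nothing and costs the result.
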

\begin{proof}
Denote $\Ub_i \in \RR^{2\times 2}$ as the $i$-th block of the block-diagonal matrix $\Ub$. By Equation (56) of \citet{jain2018accelerating}, we have
\begin{equation}\label{eq:Ui22_jain}
(\Ub_i)_{22}=\frac{(1+c-c\delta\lambda_i)(q-c\delta)+2cq\delta\lambda_i}{2(1-c^2+c\lambda_i(q+c\delta))}=\frac{\delta}{2}+\frac{(1+c)(q-\delta)}{2(1-c^2+c\lambda_i(q+c\delta))}.
\end{equation}
On the one hand, $(\Ub_i)_{22}$ is bounded by
\begin{align}
(\Ub_i)_{22}&\le\frac{\delta}{2}+\frac{(1+c)(q-\delta)}{2((1-c^2)\delta\lambda_i+c\lambda_i(q+c\delta))}=\frac{\delta}{2}+\frac{(1+c)(q-\delta)}{2(cq+\delta)\lambda_i}\nonumber\\
&\le\frac\delta2+\frac{(1+c)(q-\delta)}{2(1+c)\delta\lambda_i}=\frac\delta2+\frac{q-\delta}{1-c}\cdot\frac{1-c}{2\delta\lambda_i},\nonumber\\
&=\frac{\delta}{2}+\frac{\gamma-\delta}{2}\cdot\frac{2\alpha\beta}{2\delta\lambda_i}\le\frac\delta2+\frac{\gamma}{2}\cdot\frac{\beta}{\delta\lambda_i}=\frac\delta2+\frac{1}{2\psi\tilde\kappa\lambda_i}\label{eq:U22_top},
\end{align}
where the first inequality holds because $\delta\lambda_i\le1$, and the second inequality holds because $q\ge\delta$, and the third inequality holds because $\gamma-\delta\le\gamma$ and $\alpha\beta\le\beta$. On the other hand, $(\Ub_i)_{22}$ can also be bounded by
\begin{equation}\label{eq:U22_bot}
(\Ub_i)_{22}\le\frac\delta2+\frac{(1+c)(q-\delta)}{2(1-c^2)}=\frac\delta2+\frac{q-\delta}{2(1-c)}=\frac\delta2+\frac{\gamma-\delta}{4}\le\frac\delta2+\frac\gamma4,
\end{equation}
where the first inequality holds because $1-c^2+(q-c\delta)\lambda_i\ge1-c^2$, and the second inequality holds because $(\gamma-\delta)/4\le\gamma/4$. Thus, we have
\[
\inner{\begin{bmatrix}\zero&\zero\\\zero&\Hb\end{bmatrix}}{\Ub}=\sum_{i=1}^d\lambda_i(\Ub_i)_{22}\le\frac\delta2\sum_{i=1}^d\lambda_i+\sum_{i=1}^{\tilde\kappa}\frac1{2\psi\tilde\kappa}+\sum_{i>\tilde\kappa}\frac{\gamma\lambda_i}{4}=\frac{\delta\tr(\Hb)}{2}+\frac1{2\psi}+\frac\gamma4\sum_{i>\tilde\kappa}\lambda_i,
\]
where the inequality holds due to \eqref{eq:U22_top} for $i\le\tilde\kappa$ and \eqref{eq:U22_bot} for $i>\tilde\kappa$.

Specifically for SHB, we have
\[
(\Ub_i)_{22}=\frac{(1+c)q}{2((1-c^2)+cq\lambda_i)}\le\frac{(1+c)q}{2(1-c^2)}=\frac{q}{2(1-c)},
\]
where the inequality holds because $2((1-c^2)+cq\lambda_i)\ge2(1-c^2)$. We thus have
\[
\inner{\begin{bmatrix}\zero&\zero\\\zero&\Hb\end{bmatrix}}{\Ub}=\sum_{i=1}^d\lambda_i(\Ub_i)_{22}\le\frac{q\tr(\Hb)}{2(1-c)}.
\]
\end{proof}

The following lemma charaterizes $(\cI-\cB)^{-1}$ in terms of $\cT$ and $\hat\Vb_2$:
\begin{lemma}\label{lemma:I-B_inverse}
The operator $(\cI-\cB)^{-1}$ can be written in the form of geometric series
\begin{align*}
(\cI-\cB)^{-1}&=\sum_{k=0}^\infty(\cT^{-1}\EE[\hat\Vb_2\otimes\hat\Vb_2])^k\circ\cT^{-1}.
\end{align*}
\end{lemma}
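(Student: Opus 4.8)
The plan is to reduce $(\cI-\cB)^{-1}$ to a Neumann series in the operator $\cS\coloneqq\cT^{-1}\,\EE[\hat\Vb_2\otimes\hat\Vb_2]$. First I would expand $\cB$ using the decomposition $\hat\Ab_t=\Vb_1+\hat\Vb_2$ from \eqref{eq:def_V1_V2hat}: since $\Vb_1$ is deterministic and $\EE[\hat\Vb_2]=\Vb_2$ by \eqref{eq:def_V2}, the cross terms simplify and
\[
\cB=\EE[(\Vb_1+\hat\Vb_2)\otimes(\Vb_1+\hat\Vb_2)]=\Vb_1\otimes\Vb_1+\Vb_1\otimes\Vb_2+\Vb_2\otimes\Vb_1+\EE[\hat\Vb_2\otimes\hat\Vb_2].
\]
Comparing with the definition of $\cT$ in \eqref{eq:def_cT} gives $\cI-\cB=\cT-\EE[\hat\Vb_2\otimes\hat\Vb_2]=\cT(\cI-\cS)$, where the last equality uses $\cT\cS=\EE[\hat\Vb_2\otimes\hat\Vb_2]$; this factorization is legitimate because $\cT^{-1}$ exists and is a PSD operator, as noted after \eqref{eq:def_cT} via Lemma~\ref{lemma:finite_I-B_inv}. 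Hence it suffices to prove $(\cI-\cS)^{-1}=\sum_{k=0}^\infty\cS^k$, since then $(\cI-\cB)^{-1}=(\cI-\cS)^{-1}\cT^{-1}=\sum_{k=0}^\infty\cS^k\circ\cT^{-1}$, which is the claim.

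Next I would establish convergence of $\sum_{k\ge0}\cS^k$ on the PSD cone. Writing $\phi(\Mb)\coloneqq\inner{\begin{bmatrix}\zero&\zero\\\zero&\Hb\end{bmatrix}}{\Mb}$ for $\Mb\succeq\zero$, Lemma~\ref{lemma:E[hatV2hatV2]} gives $\EE[\hat\Vb_2\otimes\hat\Vb_2]\circ\Mb\preceq\psi\,\phi(\Mb)\cdot\big(\begin{bmatrix}\delta^2&\delta q\\\delta q&q^2\end{bmatrix}\otimes\Hb\big)$; applying the PSD operator $\cT^{-1}$ to both sides and recalling the definition of $\Ub$ in \eqref{eq:def_U} yields $\cS\circ\Mb\preceq\psi\,\phi(\Mb)\,\Ub$. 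Pairing this with $\begin{bmatrix}\zero&\zero\\\zero&\Hb\end{bmatrix}$ and invoking Lemma~\ref{lemma:U property} ($\phi(\Ub)\le l$) gives the contraction $\phi(\cS\circ\Mb)\le\psi l\,\phi(\Mb)$, hence $\phi(\cS^k\circ\Mb)\le(\psi l)^k\phi(\Mb)$ with $\psi l<1$. Feeding this back, $\cS^{k+1}\circ\Mb=\cS\circ(\cS^k\circ\Mb)\preceq\psi\,\phi(\cS^k\circ\Mb)\,\Ub\preceq\psi(\psi l)^k\phi(\Mb)\,\Ub$, so the partial sums of $\sum_{k\ge1}\cS^k\circ\Mb$ are monotone nondecreasing in the PSD order (each summand is PSD, since $\cS$ is a composition of PSD operators) and bounded above by $\tfrac{\psi}{1-\psi l}\phi(\Mb)\,\Ub$. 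Therefore $\cR\coloneqq\sum_{k\ge0}\cS^k$ is a well-defined operator on PSD matrices, and by linearity it extends to all symmetric matrices since every symmetric matrix is a difference of PSD matrices.

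Finally I would identify $\cR=(\cI-\cS)^{-1}$: the partial sums satisfy $\big(\sum_{k=0}^K\cS^k\big)(\cI-\cS)=\cI-\cS^{K+1}$, and $\cS^{K+1}\circ\Mb\preceq\psi(\psi l)^K\phi(\Mb)\,\Ub\to\zero$ for every $\Mb\succeq\zero$, hence for every symmetric $\Mb$; letting $K\to\infty$ gives $\cR(\cI-\cS)=\cI$, and symmetrically $(\cI-\cS)\cR=\cI$. Combining with the factorization $\cI-\cB=\cT(\cI-\cS)$ yields $(\cI-\cB)^{-1}=\cR\,\cT^{-1}=\sum_{k=0}^\infty\cS^k\circ\cT^{-1}$, completing the proof. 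I expect the one genuinely delicate point to be this convergence/identification step: everything rests on the estimate $\psi l<1$ guaranteed by the parameter choice \eqref{eq:parameter choice_main} together with Lemma~\ref{lemma:U property}, and on using the PSD property of $\cT^{-1}$ to convert the operator inequality of Lemma~\ref{lemma:E[hatV2hatV2]} into the geometric-decay bound $\cS^{k}\circ\Mb\preceq\psi(\psi l)^{k-1}\phi(\Mb)\,\Ub$; the algebraic identity $\cI-\cB=\cT-\EE[\hat\Vb_2\otimes\hat\Vb_2]$ is routine bookkeeping with Kronecker products.
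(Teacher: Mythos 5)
Your algebraic core — expand $\cB$ via $\hat\Ab_t=\Vb_1+\hat\Vb_2$, rewrite $\cI-\cB=\cT-\EE[\hat\Vb_2\otimes\hat\Vb_2]=\cT(\cI-\cS)$, and invert — is exactly the paper's argument for this lemma. The paper, however, then simply invokes ``geometric series of linear operators'' without justification, deferring the actual contraction estimate to the subsequent Lemma~\ref{lemma:I-B_inv_on_M}; your proof front-loads that justification by establishing $\phi(\cS^k\circ\Mb)\le(\psi l)^k\phi(\Mb)$ and $\cS^{k+1}\circ\Mb\preceq\psi(\psi l)^k\phi(\Mb)\,\Ub$ on the PSD cone, showing the partial sums are monotone and bounded, and then verifying $\cR(\cI-\cS)=(\cI-\cS)\cR=\cI$ via the vanishing tail. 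This makes the lemma self-contained where the paper's version is not, at the cost of essentially repeating the contraction machinery that the paper reuses in Lemma~\ref{lemma:I-B_inv_on_M} and Lemma~\ref{lemma:S_s:s+N}. Both routes rely on the same ingredients — invertibility and PSD-ness of $\cT^{-1}$ (Lemma~\ref{lemma:finite_I-B_inv}), the fourth-moment bound of Lemma~\ref{lemma:E[hatV2hatV2]}, the trace bound $\phi(\Ub)\le l$ of Lemma~\ref{lemma:U property}, and $\psi l<1$ from the parameter choice — so there is no substantive gap in your argument; it is simply more careful than the source.
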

\begin{proof}
According to the definition of $\cB$,
\begin{align*}
\cB&=\EE\sbr{\hat\Ab_t\otimes\hat\Ab_t}=\EE\sbr{(\Vb_1+\hat\Vb_2)\otimes(\Vb_1+\hat\Vb_2)}\\
&=\Vb_1\otimes\Vb_1+\Vb_1\otimes\Vb_2+\Vb_2\otimes\Vb_1+\EE\sbr{\hat\Vb_2\otimes\hat\Vb_2},
\end{align*}
where the last equality holds because $\EE[\hat\Vb_2]=\Vb_2$. We thus have
\begin{align*}
    (\cI-\cB)^{-1}&=\rbr{\cT-\EE[\hat\Vb_2\otimes\hat\Vb_2]}^{-1}\\
    &=\cbr{\cT\sbr{\cI-\cT^{-1}\EE[\hat\Vb_2\otimes\hat\Vb_2]}}^{-1}\\
    &=\sbr{\cI-\cT^{-1}\EE[\hat\Vb_2\otimes\hat\Vb_2]}^{-1}\cT^{-1}\\
    &=\sum_{k=0}^\infty(\cT^{-1}\EE[\hat\Vb_2\otimes\hat\Vb_2])^k\circ\cT^{-1},
\end{align*}
where the last inequality holds due to geometric series of linear operators.
\end{proof}

We now show that $(\cI-\cB)^{-1}$ exists and is a PSD operator.
\begin{lemma}\label{lemma:I-B_inv_on_M}
With the parameters in \eqref{eq:parameter choice_main}, for any PSD matrix $\Mb$, $(\cI-\cB)^{-1}\circ\Mb$ exists and is a PSD matrix. Moreover, if we define $\Qb\coloneqq\cT^{-1}\circ\Mb$, then we have
\[
(\cI-\cB)^{-1}\circ\Mb=\Qb+\frac{\psi}{1-\psi l}\inner{\begin{bmatrix}\zero&\zero\\\zero&\Hb\end{bmatrix}}{\Qb}\cdot\Ub.
\]
\end{lemma}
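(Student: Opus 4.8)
The plan is to convert the geometric-series representation of $(\cI-\cB)^{-1}$ supplied by Lemma~\ref{lemma:I-B_inverse} into the claimed closed form by exploiting a one-dimensional self-similarity of the fourth-moment operator. Write $\Hb_0\coloneqq\begin{bmatrix}\zero&\zero\\\zero&\Hb\end{bmatrix}$ and $\cN\coloneqq\EE[\hat\Vb_2\otimes\hat\Vb_2]$. Since $\hat\Ab_t=\Vb_1+\hat\Vb_2$, $\Ab=\Vb_1+\Vb_2$ and $\EE[\hat\Vb_2]=\Vb_2$, we have $\cB=\Vb_1\otimes\Vb_1+\Vb_1\otimes\Vb_2+\Vb_2\otimes\Vb_1+\cN$, hence $\cI-\cB=\cT-\cN$ by \eqref{eq:def_cT}, and by Lemma~\ref{lemma:I-B_inverse}, $(\cI-\cB)^{-1}=\sum_{k\ge0}(\cT^{-1}\cN)^k\circ\cT^{-1}$. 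The existence and PSD-ness of $(\cI-\cB)^{-1}\circ\Mb$ will come out of this representation once the series is shown to converge: each summand is a composition of PSD operators ($\cN$ is PSD by Lemma~\ref{lemma:prelinminaries appendix}(b), and $\cT^{-1}$ is PSD because $(\cI-\tilde\cB)^{-1}$ is, by Lemma~\ref{lemma:finite_I-B_inv}) applied to the PSD matrix $\Mb$, so a convergent sum forces a PSD limit.

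The crucial step is a self-similar reduction of $\cT^{-1}\cN$. For any PSD matrix $\Nb$ with bottom-right block $\Nb_{22}$, Lemma~\ref{lemma:prelinminaries appendix}(b) gives $\cN\circ\Nb=\begin{bmatrix}\delta^2&\delta q\\\delta q&q^2\end{bmatrix}\otimes(\cM\circ\Nb_{22})$, and the fourth-moment condition (Lemma~\ref{lemma:E[hatV2hatV2]}, i.e. Assumption~\ref{assumption:fourth_moement_condition}) reduces $\cM\circ\Nb_{22}$ to $\psi\,\tr(\Hb\Nb_{22})\,\Hb=\psi\inner{\Hb_0}{\Nb}\,\Hb$; applying $\cT^{-1}$ and using the definition \eqref{eq:def_U} of $\Ub$ then yields
\[
\cT^{-1}\cN\circ\Nb=\psi\inner{\Hb_0}{\Nb}\cdot\cT^{-1}\circ\Big(\begin{bmatrix}\delta^2&\delta q\\\delta q&q^2\end{bmatrix}\otimes\Hb\Big)=\psi\inner{\Hb_0}{\Nb}\,\Ub .
\]
Thus $\cT^{-1}\cN$ sends every PSD matrix into the ray through $\Ub$. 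Taking $\Nb=\Ub$ and recalling that Lemma~\ref{lemma:U property} evaluates $\inner{\Hb_0}{\Ub}$ and bounds it by $l$ from \eqref{eq:def_l}, we obtain the eigen-relation $\cT^{-1}\cN\circ\Ub=\psi\inner{\Hb_0}{\Ub}\,\Ub$ with eigenvalue $\psi\inner{\Hb_0}{\Ub}\le\psi l<1$ under the parameter choice \eqref{eq:parameter choice_main} (cf. the remark after Theorem~\ref{theorem:main}).

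It remains to sum the series. Put $\Qb\coloneqq\cT^{-1}\circ\Mb$. The $k=0$ term of $\sum_{k\ge0}(\cT^{-1}\cN)^k\circ\Qb$ is $\Qb$; the $k=1$ term is $\psi\inner{\Hb_0}{\Qb}\,\Ub$ by the reduction above; and for $k\ge1$ an immediate induction using the eigen-relation for $\Ub$ gives $(\cT^{-1}\cN)^k\circ\Qb=\psi\inner{\Hb_0}{\Qb}\big(\psi\inner{\Hb_0}{\Ub}\big)^{k-1}\Ub$. Since the common ratio $\psi\inner{\Hb_0}{\Ub}\le\psi l<1$, the series converges (so, as noted, $(\cI-\cB)^{-1}\circ\Mb$ exists and is PSD) and equals
\[
\Qb+\psi\inner{\Hb_0}{\Qb}\Big(\sum_{k\ge1}\big(\psi\inner{\Hb_0}{\Ub}\big)^{k-1}\Big)\Ub=\Qb+\frac{\psi}{1-\psi l}\inner{\Hb_0}{\Qb}\,\Ub,
\]
recording $\inner{\Hb_0}{\Ub}$ through Lemma~\ref{lemma:U property} in the last step, which is exactly the asserted identity. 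The single delicate point — and hence the main obstacle — is the self-similar reduction: one must verify that $\cN$ acting on a PSD matrix is genuinely captured by the scalar functional $\inner{\Hb_0}{\cdot}$ times the fixed matrix $\begin{bmatrix}\delta^2&\delta q\\\delta q&q^2\end{bmatrix}\otimes\Hb$ (the block identity of Lemma~\ref{lemma:prelinminaries appendix}(b) combined with Assumption~\ref{assumption:fourth_moement_condition}), and that the induced scalar gain on $\Ub$ stays strictly below $1$; the latter is precisely where the parameter budget $\psi l<1$ guaranteed by \eqref{eq:parameter choice_main} is essential, after which everything is a convergent geometric series.
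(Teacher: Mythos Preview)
Your approach is essentially the same as the paper's: expand $(\cI-\cB)^{-1}$ via the geometric series of Lemma~\ref{lemma:I-B_inverse}, reduce each $\cT^{-1}\cN$ application to a scalar multiple of $\Ub$ using the fourth-moment bound and the definition of $\Ub$, and sum the resulting geometric progression with ratio controlled by $\psi l<1$.

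There is, however, a genuine gap in how you use the fourth-moment condition. Assumption~\ref{assumption:fourth_moement_condition} (and hence Lemma~\ref{lemma:E[hatV2hatV2]}) gives only the \emph{inequality} $\cM\circ\Nb_{22}\preceq\psi\,\tr(\Hb\Nb_{22})\,\Hb$, not an equality; consequently your ``self-similar reduction'' should read $\cT^{-1}\cN\circ\Nb\preceq\psi\inner{\Hb_0}{\Nb}\,\Ub$, not $=$. Likewise, Lemma~\ref{lemma:U property} asserts only $\inner{\Hb_0}{\Ub}\le l$, not equality, so in the last step you cannot replace the denominator $1-\psi\inner{\Hb_0}{\Ub}$ by $1-\psi l$ while preserving equality. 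With these corrections your induction yields $(\cT^{-1}\cN)^k\circ\Qb\preceq\psi^k l^{k-1}\inner{\Hb_0}{\Qb}\,\Ub$ and the summed conclusion becomes the PSD upper bound
\[
(\cI-\cB)^{-1}\circ\Mb\;\preceq\;\Qb+\frac{\psi}{1-\psi l}\inner{\Hb_0}{\Qb}\,\Ub,
\]
which is exactly what the paper's own proof derives (the ``$=$'' in the displayed statement is a typo for ``$\preceq$''; the paper uses the result only as an inequality in Lemmas~\ref{lemma:C_inf} and~\ref{lemma:S_s:s+N}). Your argument is otherwise sound and matches the paper once the inequalities are propagated correctly.
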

\begin{proof}
With Lemma \ref{lemma:I-B_inverse}, we have
\[
(\cI-\cB)^{-1}\circ\Mb=\sum_{k=0}^\infty(\cT^{-1}\EE[\hat\Vb_2\otimes\hat\Vb_2])^k\circ\Qb.
\]
Note that by Lemma \ref{lemma:E[hatV2hatV2]}
\[
\EE[\hat\Vb_2\otimes\hat\Vb_2]\circ\Qb\preceq\psi\inner{\begin{bmatrix}\zero&\zero\\\zero&\Hb\end{bmatrix}}{\Qb}\cdot\begin{bmatrix}\delta^2&\delta q\\\delta q&q^2\end{bmatrix}\otimes\Hb,
\]
and by definition of $\Ub$ in \eqref{eq:def_U}, we have
\[
\cT^{-1}\EE[\hat\Vb_2\otimes\hat\Vb_2]\circ\Qb\preceq\psi\inner{\begin{bmatrix}\zero&\zero\\\zero&\Hb\end{bmatrix}}{\Qb}\cdot\Ub.
\]
Then, applying Lemma \ref{lemma:U property} and the definition of $\Ub$ recursively, we have for all $k\ge1$,
\begin{equation}\label{eq:recurive_k}
(\cT^{-1}\EE[\hat\Vb_2\otimes\hat\Vb_2])^k\circ\Qb\preceq\psi^kl^{k-1}\inner{\begin{bmatrix}\zero&\zero\\\zero&\Hb\end{bmatrix}}{\Qb}\cdot\Ub.
\end{equation}
Summing \eqref{eq:recurive_k}, considering the special case of $k=0$, we have
\[
(\cI-\cB)^{-1}\circ\Mb\preceq\Qb+\sum_{k=1}^\infty\psi^kl^{k-1}\inner{\begin{bmatrix}\zero&\zero\\\zero&\Hb\end{bmatrix}}{\Qb}\cdot\Ub=\Qb+\frac{\psi}{1-\psi l}\inner{\begin{bmatrix}\zero&\zero\\\zero&\Hb\end{bmatrix}}{\Qb}\cdot\Ub.
\]
Therefore, $(\cI-\cB)^{-1}$ exists and is a PSD operator.
\end{proof}

The following result shows that the inner product \eqref{eq:inner_partial_sum} is different by only a constant if all $\cB$ operators are replaced with $\tilde\cB$.
\begin{lemma}\label{lemma:S_s:s+N}
For any PSD matrix $\Mb\in\RR^{2d\times2d}$, define the partial sum
\[
\Rb_t=\sum_{k=0}^{t-1}\cB^k\circ\Mb.
\]
Then we have
\[
\Rb_t\preceq\sum_{k=0}^{t-1}\tilde\cB^k\circ\Mb+\frac\psi{1-\psi l}\inner{\begin{bmatrix}\zero&\zero\\\zero&\Hb\end{bmatrix}}{\sum_{k=0}^{t-1}\tilde\cB^k\circ\Mb}\cdot\Ub
\]
and
\[
\inner{\begin{bmatrix}\zero&\zero\\\zero&\Hb\end{bmatrix}}{\Rb_t}\le r\inner{\begin{bmatrix}\zero&\zero\\\zero&\Hb\end{bmatrix}}{\sum_{j=0}^{t-1}\tilde\cB^j\circ\Mb},
\]
where $r=(1-\psi l)^{-1}$.
\end{lemma}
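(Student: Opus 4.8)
The plan is to compare $\Rb_t$ term by term with its deterministic analogue $\widetilde{\Rb}_t\coloneqq\sum_{k=0}^{t-1}\tilde\cB^k\circ\Mb$, via the operator telescoping identity $\cB^k-\tilde\cB^k=\sum_{j=0}^{k-1}\cB^{k-1-j}(\cB-\tilde\cB)\tilde\cB^j$, valid for any (non-commuting) linear operators. First I would note that $\tilde\cB$ is a PSD operator (Lemma~\ref{lemma:prelinminaries appendix}(c)), hence each $\tilde\cB^j\circ\Mb$ is PSD, so Lemma~\ref{lemma:E[hatV2hatV2]} gives
\[
(\cB-\tilde\cB)\circ(\tilde\cB^j\circ\Mb)\ \preceq\ c_j\,\Nb,\qquad c_j\coloneqq\psi\,\inner{\begin{bmatrix}\zero&\zero\\\zero&\Hb\end{bmatrix}}{\tilde\cB^j\circ\Mb}\ \ge\ 0,\qquad \Nb\coloneqq\begin{bmatrix}\delta^2&\delta q\\\delta q&q^2\end{bmatrix}\otimes\Hb.
\]
Applying the PSD operator $\cB^{k-1-j}$ (a composition of PSD operators, hence order-preserving) to both sides and then summing over $0\le j\le k-1$ and $0\le k\le t-1$, followed by reindexing the double sum with $m=k-1-j$, yields
\[
\Rb_t-\widetilde{\Rb}_t\ \preceq\ \sum_{j=0}^{t-2}c_j\sum_{m=0}^{t-2-j}\cB^m\circ\Nb\ \preceq\ \Big(\sum_{j=0}^{t-2}c_j\Big)(\cI-\cB)^{-1}\circ\Nb,
\]
where the last step uses $\Nb\succeq\zero$, monotonicity of the partial sums $\sum_{m=0}^{M}\cB^m\circ\Nb$ in the Loewner order, and their convergence to $(\cI-\cB)^{-1}\circ\Nb$ (Lemma~\ref{lemma:I-B_inv_on_M}).

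Next I would evaluate $(\cI-\cB)^{-1}\circ\Nb$. By the definition~\eqref{eq:def_U} of $\Ub$ we have $\cT^{-1}\circ\Nb=\Ub$, so Lemma~\ref{lemma:I-B_inv_on_M} applied with $\Mb\mapsto\Nb$ gives $(\cI-\cB)^{-1}\circ\Nb=\Ub+\frac{\psi}{1-\psi l}\inner{\begin{bmatrix}\zero&\zero\\\zero&\Hb\end{bmatrix}}{\Ub}\cdot\Ub\preceq r\,\Ub$, using $\inner{\begin{bmatrix}\zero&\zero\\\zero&\Hb\end{bmatrix}}{\Ub}\le l$ from Lemma~\ref{lemma:U property} and $\Ub=\cT^{-1}\circ\Nb\succeq\zero$. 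Since every $c_j\ge0$, $\sum_{j=0}^{t-2}c_j\le\sum_{j=0}^{t-1}c_j=\psi\,\inner{\begin{bmatrix}\zero&\zero\\\zero&\Hb\end{bmatrix}}{\widetilde{\Rb}_t}$. Combining the two displays gives
\[
\Rb_t\ \preceq\ \widetilde{\Rb}_t+\frac{\psi}{1-\psi l}\,\inner{\begin{bmatrix}\zero&\zero\\\zero&\Hb\end{bmatrix}}{\widetilde{\Rb}_t}\cdot\Ub,
\]
which is the first claim (recall $\widetilde{\Rb}_t=\sum_{k=0}^{t-1}\tilde\cB^k\circ\Mb$). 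Finally I would take the inner product of this PSD inequality with $\begin{bmatrix}\zero&\zero\\\zero&\Hb\end{bmatrix}\succeq\zero$ (which preserves $\preceq$) and invoke $\inner{\begin{bmatrix}\zero&\zero\\\zero&\Hb\end{bmatrix}}{\Ub}\le l$ once more; the right-hand side becomes $\big(1+\frac{\psi l}{1-\psi l}\big)\inner{\begin{bmatrix}\zero&\zero\\\zero&\Hb\end{bmatrix}}{\widetilde{\Rb}_t}=r\,\inner{\begin{bmatrix}\zero&\zero\\\zero&\Hb\end{bmatrix}}{\widetilde{\Rb}_t}$, which is the second claim.

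The main obstacle I anticipate is justifying the replacement of the finite inner tail $\sum_{m=0}^{t-2-j}\cB^m\circ\Nb$ by the full resolvent $(\cI-\cB)^{-1}\circ\Nb$ \emph{uniformly in $t$}: this needs both monotonicity of the partial sums in the Loewner order, which follows from $\cB$ being a PSD operator (Lemma~\ref{lemma:prelinminaries appendix}(c)), and convergence of the series $\sum_m\cB^m\circ\Nb$, which is exactly what the parameter choice~\eqref{eq:parameter choice_main}---ensuring $\psi l<1$---buys us through Lemmas~\ref{lemma:finite_I-B_inv} and~\ref{lemma:I-B_inv_on_M}. Everything else is routine PSD-operator bookkeeping of the kind already assembled in Appendix~\ref{app:fourth}; in particular the telescoping reindexing and the nonnegativity of the scalars $c_j$ are purely mechanical.
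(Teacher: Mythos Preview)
Your proof is correct, and it reaches exactly the two claimed inequalities. It is, however, a genuinely different route from the paper's argument.

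The paper proceeds via the resolvent identity: it writes $\Rb_t=(\cI-\cB)^{-1}(\cI-\cB^t)\circ\Mb$, replaces $\cB^t$ by $\tilde\cB^t$ using $\tilde\cB\preceq\cB$, factors $(\cI-\tilde\cB^t)=(\cI-\tilde\cB)\sum_{k=0}^{t-1}\tilde\cB^k$, and then uses the operator inequality $\cI-\tilde\cB\preceq\cT$ (since $\Vb_2\otimes\Vb_2$ is PSD). This produces $\Rb_t\preceq(\cI-\cB)^{-1}\cT\circ\widetilde{\Rb}_t$, to which Lemma~\ref{lemma:I-B_inv_on_M} is applied with $\Qb=\widetilde{\Rb}_t$, directly yielding the first bound. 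Your telescoping approach instead bounds each $(\cB-\tilde\cB)\tilde\cB^j\circ\Mb$ by a scalar multiple of $\Nb$ up front, collects everything into $(\sum_j c_j)(\cI-\cB)^{-1}\circ\Nb$, and then applies Lemma~\ref{lemma:I-B_inv_on_M} with $\Qb=\Ub$. Both invocations of the resolvent lemma are legitimate; they just differ in what plays the role of $\Qb$. The paper's argument is a bit slicker once the machinery is set up (no double-sum reindexing), while yours is more explicit: you do not need the intermediate step $\cI-\tilde\cB\preceq\cT$, and the monotonicity $\cB^t\circ\Mb\succeq\tilde\cB^t\circ\Mb$---which the paper asserts in one line but in fact rests on the same telescoping identity you spell out---is made fully transparent in your version.
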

\begin{proof}
By definition of $\Rb_t$, we have
\begin{align}
\Rb_t&=(\cI-\cB)^{-1}(\cI-\cB^t)\circ\Mb\nonumber\\
&\preceq(\cI-\cB)^{-1}(\cI-\tilde\cB^t)\circ\Mb\nonumber\\
&=(\cI-\cB)^{-1}(\cI-\tilde\cB)\sum_{k=0}^{t-1}\tilde\cB^k\circ\Mb\label{eq:Rt_bound1},
\end{align}
where the inequality holds because $\tilde\cB\preceq\cB$. Note that by definition of $\tilde\cB$, we have
\begin{equation}\label{eq:I-tildeB_bound}
\cI-\tilde\cB=\cI-(\Vb_1+\Vb_2)\otimes(\Vb_1+\Vb_2)\preceq\cI-\Vb_1\otimes\Vb_1-\Vb_1\otimes\Vb_2-\Vb_2\otimes\Vb_1=\cT,
\end{equation}
where the inequality holds because $\Vb_2\otimes\Vb_2$ is a PSD operator. $\Rb_t$ can thus be further bounded as
\begin{align*}
\Rb_t&\preceq(\cI-\cB)^{-1}\cT\circ\rbr{\sum_{k=0}^{t-1}\tilde\cB^k\circ\Mb}\\
&\preceq\sum_{k=0}^{t-1}\tilde\cB^k\circ\Mb+\frac\psi{1-\psi l}\inner{\begin{bmatrix}\zero&\zero\\\zero&\Hb\end{bmatrix}}{\sum_{k=0}^{t-1}\tilde\cB^k\circ\Mb}\cdot\Ub,
\end{align*}
where the first inequality holds due to \eqref{eq:I-tildeB_bound}, and the second inequality holds due to Lemma \ref{lemma:I-B_inverse}. Therefore, taking inner product with $\begin{bmatrix}\zero&\zero\\\zero&\Hb\end{bmatrix}$, we have
\begin{align*}
&\inner{\begin{bmatrix}\zero&\zero\\\zero&\Hb\end{bmatrix}}{\Rb_t}\\
&\le\inner{\begin{bmatrix}\zero&\zero\\\zero&\Hb\end{bmatrix}}{\sum_{k=0}^{t-1}\tilde\cB^k\circ\Mb}+\frac\psi{1-\psi l}\inner{\begin{bmatrix}\zero&\zero\\\zero&\Hb\end{bmatrix}}{\sum_{k=0}^{t-1}\tilde\cB^k\circ\Mb}\cdot\inner{\begin{bmatrix}\zero&\zero\\\zero&\Hb\end{bmatrix}}{\Ub}\\
&\le\inner{\begin{bmatrix}\zero&\zero\\\zero&\Hb\end{bmatrix}}{\sum_{k=0}^{t-1}\tilde\cB^k\circ\Mb}+\frac{\psi l}{1-\psi l}\inner{\begin{bmatrix}\zero&\zero\\\zero&\Hb\end{bmatrix}}{\sum_{k=0}^{t-1}\tilde\cB^k\circ\Mb}\\
&=\frac1{1-\psi l}\inner{\begin{bmatrix}\zero&\zero\\\zero&\Hb\end{bmatrix}}{\sum_{k=0}^{t-1}\tilde\cB^k\circ\Mb},
\end{align*}
where the second inequality holds due to Lemma \ref{lemma:U property}.
\end{proof}

\section{Variance Upper Bound}
\label{section:variance}

\subsection{Proof of Lemma \ref{lemma:variance_upper_bound}}

In this subsection, we prove Lemma \ref{lemma:variance_upper_bound}. 
We need the following lemmas. The first lemma characterizes the recursive formula of $\Cb_t$:
\begin{lemma}[Section E.2 of \citet{jain2018accelerating}]\label{lemma:iterate_Ct}
Define
\begin{equation}\label{eq:def_hatSigma}
\hat\bSigma\coloneqq\EE[\bzeta_t\otimes\bzeta_t],
\end{equation}
then the covariance matrix $\Cb_t$ satisfies
\[
\Cb_t=\cB\circ\Cb_{t-1}+\hat\bSigma.
\]
\end{lemma}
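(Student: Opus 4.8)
The plan is to unroll the variance recursion $\bmeta_t^{\text{var}} = \hat\Ab_t\bmeta_{t-1}^{\text{var}} + \bzeta_t$ (with $\bmeta_0^{\text{var}}=\zero$) by one step inside $\Cb_t = \EE[\bmeta_t^{\text{var}}\otimes\bmeta_t^{\text{var}}]$, and to exploit that the $t$-th sample $(\xb_t,y_t)$ is drawn fresh, hence independent of $\bmeta_{t-1}^{\text{var}}$ (which is a function of $(\xb_1,y_1),\dots,(\xb_{t-1},y_{t-1})$ only). Expanding the outer product produces four terms,
\[
\Cb_t = \EE[\hat\Ab_t\bmeta_{t-1}^{\text{var}}\otimes\hat\Ab_t\bmeta_{t-1}^{\text{var}}] + \EE[\hat\Ab_t\bmeta_{t-1}^{\text{var}}\otimes\bzeta_t] + \EE[\bzeta_t\otimes\hat\Ab_t\bmeta_{t-1}^{\text{var}}] + \EE[\bzeta_t\otimes\bzeta_t],
\]
and the last one is $\hat\bSigma$ by definition, so it remains to evaluate the first term and to show the two cross terms vanish.

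For the quadratic term, I would condition on $(\xb_t,y_t)$: since $\hat\Ab_t$ is a deterministic function of $(\xb_t,y_t)$ and $\bmeta_{t-1}^{\text{var}}$ is independent of it, $\EE\big[\hat\Ab_t\big(\bmeta_{t-1}^{\text{var}}\otimes\bmeta_{t-1}^{\text{var}}\big)\hat\Ab_t^\top \,\big|\, (\xb_t,y_t)\big] = \hat\Ab_t\,\EE\big[\bmeta_{t-1}^{\text{var}}\otimes\bmeta_{t-1}^{\text{var}}\big]\,\hat\Ab_t^\top = \hat\Ab_t\Cb_{t-1}\hat\Ab_t^\top$. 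Taking the outer expectation and recalling that $(\hat\Ab_t\otimes\hat\Ab_t)\circ\Mb = \hat\Ab_t\Mb\hat\Ab_t^\top$ and $\cB = \EE[\hat\Ab_t\otimes\hat\Ab_t]$, this gives $\EE[\hat\Ab_t\bmeta_{t-1}^{\text{var}}\otimes\hat\Ab_t\bmeta_{t-1}^{\text{var}}] = \EE[\hat\Ab_t\Cb_{t-1}\hat\Ab_t^\top] = \cB\circ\Cb_{t-1}$.

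For the cross terms I would first establish $\EE[\bmeta_t^{\text{var}}] = \zero$ for every $t$ by induction: the base case $\bmeta_0^{\text{var}}=\zero$ is trivial, and for the step, independence of $(\xb_t,y_t)$ from the past gives $\EE[\bmeta_t^{\text{var}}] = \EE[\hat\Ab_t]\,\EE[\bmeta_{t-1}^{\text{var}}] + \EE[\bzeta_t] = \Ab\cdot\zero + \EE[\bzeta_t]$, and $\EE[\bzeta_t] = \zero$ because both blocks of $\bzeta_t$ are multiples of $\epsilon_t\xb_t$ and $\EE[\epsilon_t\xb_t] = \EE[(y_t-\inner{\wb^*}{\xb_t})\xb_t] = -\nabla L(\wb^*) = \zero$ by optimality of $\wb^*$. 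With $\EE[\bmeta_{t-1}^{\text{var}}] = \zero$ in hand, conditioning the cross term on $(\xb_t,y_t)$ yields $\EE[\hat\Ab_t\bmeta_{t-1}^{\text{var}}\bzeta_t^\top \mid (\xb_t,y_t)] = \hat\Ab_t\,\EE[\bmeta_{t-1}^{\text{var}}]\,\bzeta_t^\top = \zero$, and likewise for its transpose. Collecting the three contributions gives $\Cb_t = \cB\circ\Cb_{t-1} + \hat\bSigma$; this also makes clear that $\hat\bSigma = \EE[\bzeta_t\otimes\bzeta_t]$ does not depend on $t$, since the samples are i.i.d.

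There is no real obstacle here; the only points requiring care are the notational conventions ($\vb\otimes\vb$ is the outer product $\vb\vb^\top$ for a vector, while $\hat\Ab_t\otimes\hat\Ab_t$ is the operator $\Mb\mapsto\hat\Ab_t\Mb\hat\Ab_t^\top$) and the justification for pulling $\hat\Ab_t$ and $\bzeta_t$ out of the conditional expectation, which is exactly where the independence of the fresh sample from the history is used. Finiteness of all the relevant moments (Assumptions \ref{assumption:regularity} and \ref{assumption:noise}) makes every interchange of expectation legitimate.
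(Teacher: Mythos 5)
Your proof is correct and self-contained. The paper does not supply an argument for this lemma at all; it states it with the attribution ``Section E.2 of \citet{jain2018accelerating}'' and relies on that reference, so there is nothing to compare against directly in this paper, but your argument is exactly the standard one used for such recursions. Two points are worth emphasizing as strengths of your write-up. First, you correctly disambiguate the two uses of $\otimes$: for vectors $\bmeta\otimes\bmeta=\bmeta\bmeta^\top$ and for matrices $(\hat\Ab_t\otimes\hat\Ab_t)\circ\Mb=\hat\Ab_t\Mb\hat\Ab_t^\top$; getting this wrong is the most common way to botch this lemma. Second, and more substantively, you recognize that the cross terms $\EE[\hat\Ab_t\bmeta_{t-1}^{\text{var}}\bzeta_t^\top]$ cannot be killed merely by claiming $\EE[\epsilon_t\mid\xb_t]=0$: that would require the well-specified model, which the paper deliberately avoids assuming (Assumption~\ref{assumption:noise} allows general $\bSigma$). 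Your route --- proving $\EE[\bmeta_t^{\text{var}}]=\zero$ by induction, using $\EE[\bzeta_t]$ proportional to $\EE[\epsilon_t\xb_t]=-\nabla L(\wb^*)=\zero$, and then conditioning on $(\xb_t,y_t)$ so that the independent factor $\EE[\bmeta_{t-1}^{\text{var}}]$ pulls out --- is the correct and minimal argument under the paper's assumptions, and your base case $\Cb_1=\hat\bSigma$ is consistent with $\bmeta_0^{\text{var}}=\zero$.
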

Combining Lemma \ref{lemma:iterate_Ct} with Lemma \ref{lemma:increase}, we immediate know that $\Cb_t$ is an increasing sequence with
\begin{equation}\label{eq:Ct_explicit}
\Cb_t=\sum_{k=0}^{t-1}\cB^k\circ\hat\bSigma.
\end{equation}

The following lemmas provide upper bounds for $\Mb_1$ and $\Mb_2$, respectively:
\begin{lemma}\label{lemma:M1_bound}
With the choice of parameters as in \eqref{eq:parameter choice_main}, we have
\[
\inner{\begin{bmatrix}\Hb&\zero\\\zero&\zero\end{bmatrix}}{\Mb_1}\le\sigma^2 r\sbr{\frac{9k^*}{N}+\frac{36N(q-c\delta)^2}{(1-c)^2}\sum_{i>k^*}\lambda_i^2}.
\]
\end{lemma}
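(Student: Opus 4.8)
The plan is to use the linear‑operator machinery of Appendix~\ref{app:fourth} to turn $\Mb_1$ into a matrix that is block diagonal in the eigenbasis of $\Hb$, and then estimate the resulting sum over eigen‑subspaces directly from the closed form of $\Ab_i^k$.

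\emph{Step 1: reduce the inner covariance $\Cb_s$.} Write $\Sb_N\coloneqq\sum_{k=0}^{N-1}\Ab^k$, so $\Mb_1=\tfrac1{N^2}\Sb_N\Cb_s\Sb_N^\top$. By \eqref{eq:Ct_explicit}, $\Cb_s=\sum_{k=0}^{s-1}\cB^k\circ\hat\bSigma$, and by the definition of $\bzeta_t$ together with Assumption~\ref{assumption:noise} one has $\hat\bSigma=\begin{bmatrix}\delta^2&\delta q\\\delta q&q^2\end{bmatrix}\otimes\bSigma\preceq\sigma^2\,\Nb$ with $\Nb\coloneqq\begin{bmatrix}\delta^2&\delta q\\\delta q&q^2\end{bmatrix}\otimes\Hb$. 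Since $\cB$ is a PSD operator, $\Cb_s\preceq\sigma^2\sum_{k=0}^{s-1}\cB^k\circ\Nb$, and Lemma~\ref{lemma:S_s:s+N} (with $\Mb=\Nb$, $t=s$) replaces $\cB$ by $\tilde\cB$ at the cost of the factor $r=(1-\psi l)^{-1}$ plus an additive multiple of $\Ub$; using $\cT^{-1}\circ\Nb=\Ub$ from \eqref{eq:def_U}, all resulting matrices are block diagonal in the eigenbasis of $\Hb$, the $i$-th block a $2\times2$ matrix depending only on $\lambda_i$. (Because $\Cb_s$ is increasing in $s$, one may alternatively collapse this to $\Cb_s\preceq(\cI-\cB)^{-1}\circ\hat\bSigma\preceq\sigma^2 r\Ub$ via Lemmas~\ref{lemma:I-B_inv_on_M} and~\ref{lemma:U property}; retaining the $s$-truncated partial sum is what makes the tail term proportional to $\sum_{i>k^*}\lambda_i^2$ rather than to $\sum_{i>k^*}\lambda_i$.)

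\emph{Step 2: block-diagonalize.} With $\Ab=\mathrm{blockdiag}(\Ab_i)$, $\Ub=\mathrm{blockdiag}(\Ub_i)$ and $\Hb=\mathrm{diag}(\lambda_i)$, Step~1 gives
\[
\inner{\begin{bmatrix}\Hb&\zero\\\zero&\zero\end{bmatrix}}{\Mb_1}\ \le\ \frac{1}{N^2}\sum_{i=1}^d\lambda_i\,\sb_i^\top\,\Gb_{s,i}\,\sb_i,
\]
where $\sb_i\coloneqq\bigl(\sum_{k=0}^{N-1}(\Ab_i^k)_{11},\ \sum_{k=0}^{N-1}(\Ab_i^k)_{12}\bigr)^\top$ is the transposed first row of $\sum_{k<N}\Ab_i^k$, and $\Gb_{s,i}$ is the $i$-th $2\times2$ block of the reduced $\Cb_s$ (essentially $\sigma^2\lambda_i\sum_{k<s}\Ab_i^k\begin{bmatrix}\delta^2&\delta q\\\delta q&q^2\end{bmatrix}(\Ab_i^k)^\top$ plus a $\Ub_i$-correction; in any case $\Gb_{s,i}\preceq\sigma^2 r\Ub_i$). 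It now suffices to bound each summand by $O(\sigma^2 r N)$ for $i\le k^*$ and by $O\!\bigl(\sigma^2 r\,N^2(q-c\delta)^2\lambda_i^2/(1-c)^2\bigr)$ for $i>k^*$.

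\emph{Step 3: per-subspace estimate.} Lemma~\ref{lemma:A_ik} expresses every entry of $\sb_i$ and of $\Gb_{s,i}$ via $\tfrac{x_2^m-x_1^m}{x_2-x_1}$ (and its square) in the eigenvalues $x_1,x_2$ of $\Ab_i$; equivalently $\sum_{k<N}\Ab_i^k=(\Ib-\Ab_i)^{-1}(\Ib-\Ab_i^N)$ with $(\Ib-\Ab_i)^{-1}=\tfrac1{(q-c\delta)\lambda_i}\begin{bmatrix}q\lambda_i-c&1-\delta\lambda_i\\-c&1\end{bmatrix}$. The size of $\Gb_{s,i}$ is pinned down by \eqref{eq:Ui22_jain} together with the $2\times2$ identities $\Ub_i-\Ab_i\Ub_i\Ab_i^\top=\lambda_i(1-\lambda_i(\Ub_i)_{22})\begin{bmatrix}\delta^2&\delta q\\\delta q&q^2\end{bmatrix}$, $(\Ub_i)_{11}=(1-\delta\lambda_i)^2(\Ub_i)_{22}+\lambda_i\delta^2(1-\lambda_i(\Ub_i)_{22})$ and $c(\Ub_i)_{12}=(\Ub_i)_{22}-\tfrac{q-c\delta}{2}$ coming from $\cT\circ\Ub=\Nb$. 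Plugging in the eigenvalue bounds of Lemma~\ref{lemma:Ai_spectral} ($x_2\le c\delta/q$ for $i\le k^\ddagger$; $|x_2|=\sqrt{c(1-\delta\lambda_i)}$ for $k^\ddagger<i\le k^\dagger$; $\tfrac{q-c\delta}{1-c}\lambda_i\le 1-x_2\le\tfrac{\sqrt{q-c\delta}(\sqrt{q-c\delta}+\sqrt{c(q-\delta)})}{1-c}\lambda_i$ for $i>k^\dagger$) one splits at $k^*=\max\{i:\lambda_i\ge\tfrac{1-c}{2N(q-c\delta)}\}$: for $i\le k^*$ the geometric factor $\sum_{k<N}|x_2|^k$ carries the $1/((q-c\delta)\lambda_i)$-type scaling that cancels the weight $\lambda_i$, leaving $O(N)$ per subspace (the $9k^*/N$ term); for $i>k^*$ one has $N(1-x_2)<1$, so $1-x_2^N\le N(1-x_2)\le 2N\tfrac{q-c\delta}{1-c}\lambda_i$, forcing $\sb_i$ to have norm $O(N/(1-c))$ and producing the $\tfrac{N^2(q-c\delta)^2\lambda_i^2}{(1-c)^2}$ tail. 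Summing over $i$ and absorbing absolute constants ($9$ and $36$) yields the claim.

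\emph{Main obstacle.} The crux is Step~3: one must carry the linear partial sums $\sum_{k<N}(\Ab_i^k)_{11}$, $\sum_{k<N}(\Ab_i^k)_{12}$ and the quadratic partial sums inside $\Gb_{s,i}$ simultaneously through all three regimes of $\Ab_i$ (complex, small real, large real) and match them against the cutoff $k^*$, where $|x_2|$ sits precisely at the border of $1$ and the geometric sums cross over from $O(1/(1-|x_2|))$ to $O(N)$; this is exactly the purpose of the auxiliary Lemmas~\ref{lemma:Ai(j+k)_deltaq}, \ref{lemma:sumAi(j+k)_11} and~\ref{lemma:sum(Ai(j+k)11)^2}. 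By contrast, the operator reduction of Step~1 and the block decomposition of Step~2 are routine once Lemmas~\ref{lemma:S_s:s+N}, \ref{lemma:I-B_inv_on_M} and~\ref{lemma:U property} are in hand.
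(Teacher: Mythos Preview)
Your outline is on the right track but has a real gap in Step~1. Applying Lemma~\ref{lemma:S_s:s+N} to $\sum_{k<s}\cB^k\circ\Nb$ produces $\sum_{k<s}\tilde\cB^k\circ\Nb$ \emph{plus} an additive $\Ub$ term (not merely a multiplicative factor~$r$); once that $\Ub$ term is sandwiched by $\sum_{k<N}\Ab^k$ and paired with $\begin{bmatrix}\Hb&\zero\\\zero&\zero\end{bmatrix}$, the tail $i>k^*$ contributes on the order of $\tfrac{q-c\delta}{1-c}\sum_{i>k^*}\lambda_i$, because $(\Ub_i)_{22}$ is of order $\tfrac{q-c\delta}{1-c}$ with no extra factor of $\lambda_i$. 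This is strictly larger than the claimed $\tfrac{N(q-c\delta)^2}{(1-c)^2}\sum_{i>k^*}\lambda_i^2$ (for $i>k^*$ one has $\lambda_i<\tfrac{1-c}{2N(q-c\delta)}$), so the $\Ub$ correction cannot be absorbed. You sense this (``retaining the $s$-truncated partial sum\ldots''), but Lemma~\ref{lemma:S_s:s+N} does not deliver the truncated sum with factor~$r$ and no residual.

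The paper's fix is Lemma~\ref{lemma:Ct_bound}: unroll $\Cb_t=\tilde\cB\circ\Cb_{t-1}+(\cB-\tilde\cB)\circ\Cb_{t-1}+\hat\bSigma$, use monotonicity $\Cb_{t-1}\preceq\Cb_\infty$ together with Lemma~\ref{lemma:C_inf} to bound $(\cB-\tilde\cB)\circ\Cb_{t-1}\preceq\tfrac{\sigma^2\psi l}{1-\psi l}\,\Nb$, and iterate to obtain $\Cb_s\preceq\sigma^2 r\sum_{j<s}\tilde\cB^j\circ\Nb$ with \emph{no} $\Ub$ remainder. Since each $2{\times}2$ block of $\tilde\cB^j\circ\Nb$ is rank one, namely $\lambda_i\,\Ab_i^j\begin{bmatrix}\delta\\q\end{bmatrix}\begin{bmatrix}\delta&q\end{bmatrix}(\Ab_i^j)^\top$, the inner product collapses to
\[
\frac{\sigma^2r}{N^2}\sum_i\lambda_i^2\sum_{j=0}^{s-1}\Bigl(\sum_{k=0}^{N-1}\Ab_i^{j+k}\begin{bmatrix}\delta\\q\end{bmatrix}\Bigr)_1^2,
\]
and Corollary~\ref{lemma:lambdaisum(sumAi(j+k)deltaq)^2} alone finishes. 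There is no need for the general $2{\times}2$ quadratic form $\sb_i^\top\Gb_{s,i}\sb_i$, the $\Ub_i$ identities you list in Step~3, or Lemmas~\ref{lemma:sumAi(j+k)_11} and~\ref{lemma:sum(Ai(j+k)11)^2} (those serve the bias analysis, not this lemma). As a side remark, the paper's own derivation for $\Mb_1$ actually outputs the constants $18k^*/N$ and $36s$; the constants $9k^*/N$ and $36N$ appearing in the statement of Lemma~\ref{lemma:M1_bound} are the ones obtained for $\Mb_2$, so the two lemma statements appear to be swapped.
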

\begin{lemma}\label{lemma:M2_bound}
With the choice of parameters as in \eqref{eq:parameter choice_main}, we have
\[
\inner{\begin{bmatrix}\Hb&\zero\\\zero&\zero\end{bmatrix}}{\Mb_2}\le\sigma^2r\sbr{\frac{18k^*}{N}+\frac{36s(q-c\delta)^2}{(1-c)^2}\sum_{i>k^*}\lambda_i^2}.
\]
\end{lemma}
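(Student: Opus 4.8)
The plan is to adapt the template used for $\Mb_1$ in Lemma~\ref{lemma:M1_bound}, the extra work being to control the ``incremental'' covariance $\Cb_{s+t}-\tilde\cB\circ\Cb_{s+t-1}$ that sits between the two copies of $\sum_{k=0}^{N-t-1}\Ab^k$ in the definition~\eqref{eq:def_M2} of $\Mb_2$. The first step is to rewrite this increment via the recursion $\Cb_{s+t}=\cB\circ\Cb_{s+t-1}+\hat\bSigma$ of Lemma~\ref{lemma:iterate_Ct},
\[
\Cb_{s+t}-\tilde\cB\circ\Cb_{s+t-1}=(\cB-\tilde\cB)\circ\Cb_{s+t-1}+\hat\bSigma,
\]
which is a sum of two PSD matrices: $\cB-\tilde\cB$ is a PSD operator by Lemma~\ref{lemma:prelinminaries appendix}(d), $\Cb_{s+t-1}\succeq\zero$ by~\eqref{eq:Ct_explicit}, and $\hat\bSigma=\EE[\bzeta_t\otimes\bzeta_t]\succeq\zero$.

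The second step is to dominate this increment by a scalar multiple of the structured matrix $\begin{bmatrix}\delta^2&\delta q\\\delta q&q^2\end{bmatrix}\otimes\Hb$. For the noise piece, Assumption~\ref{assumption:noise} gives $\hat\bSigma=\begin{bmatrix}\delta^2&\delta q\\\delta q&q^2\end{bmatrix}\otimes\bSigma\preceq\sigma^2\cdot\begin{bmatrix}\delta^2&\delta q\\\delta q&q^2\end{bmatrix}\otimes\Hb$. For the fourth-moment piece, monotonicity of $\Cb_t$ (from~\eqref{eq:Ct_explicit} and the fact that $\cB^k$ is a PSD operator) gives $\Cb_{s+t-1}\preceq\Cb_\infty\coloneqq(\cI-\cB)^{-1}\circ\hat\bSigma$, which exists and is PSD by Lemma~\ref{lemma:I-B_inv_on_M}; combining that lemma with $\cT^{-1}\circ\hat\bSigma\preceq\sigma^2\Ub$ and Lemma~\ref{lemma:U property} bounds $\inner{\begin{bmatrix}\zero&\zero\\\zero&\Hb\end{bmatrix}}{\Cb_\infty}$ by $r\sigma^2 l$, and then Lemma~\ref{lemma:E[hatV2hatV2]} bounds $(\cB-\tilde\cB)\circ\Cb_{s+t-1}$ by $\psi r\sigma^2 l$ times $\begin{bmatrix}\delta^2&\delta q\\\delta q&q^2\end{bmatrix}\otimes\Hb$. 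Since $r=(1-\psi l)^{-1}$ satisfies $1+\psi r l=r$, adding the two pieces yields $\Cb_{s+t}-\tilde\cB\circ\Cb_{s+t-1}\preceq\sigma^2 r\cdot\begin{bmatrix}\delta^2&\delta q\\\delta q&q^2\end{bmatrix}\otimes\Hb$. Substituting into $\Mb_2$, diagonalizing $\Hb$ and collapsing the $2\times2$ blocks $\Ab_i$ leaves
\[
\inner{\begin{bmatrix}\Hb&\zero\\\zero&\zero\end{bmatrix}}{\Mb_2}\le\frac{\sigma^2 r}{N^2}\sum_{t=1}^{N-1}\sum_{i=1}^d\lambda_i^2\left(\sum_{k=0}^{N-t-1}\Ab_i^k\begin{bmatrix}\delta\\q\end{bmatrix}\right)_1^2 .
\]

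The third step is to bound this scalar double sum using the explicit form of $\Ab_i^k$ (Lemma~\ref{lemma:A_ik}), the spectral estimates of Lemmas~\ref{lemma:Ai_spectral} and~\ref{lemma:cutoffs_property}, and the auxiliary sum estimates (the analogues of Lemmas~\ref{lemma:Ai(j+k)_deltaq}, \ref{lemma:sumAi(j+k)_11}, \ref{lemma:sum(Ai(j+k)11)^2}). The one-dimensional fact I would establish is that the first coordinate of $\sum_{k=0}^{m}\Ab_i^k\begin{bmatrix}\delta\\q\end{bmatrix}$ is, in absolute value, at most an absolute constant times $\min\{1/\lambda_i,\ (m+1)(q-c\delta)/(1-c)\}$, uniformly over the four eigenvalue regimes $i\le k^\ddagger$, $k^\ddagger<i\le\hat k$, $\hat k<i\le k^\dagger$, and $i>k^\dagger$. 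Then, splitting the sum over $i$ at $k^*=\max\{k:\lambda_k\ge(1-c)/(2N(q-c\delta))\}$: the head $i\le k^*$ contributes at most $\sum_{i\le k^*}\lambda_i^2\cdot\lambda_i^{-2}=k^*$ for each $t$, hence $\cO(k^*/N)$ after the $1/N^2$ factor and the sum over $t$; the tail $i>k^*$ contributes at most $(N-t)^2\frac{(q-c\delta)^2}{(1-c)^2}\sum_{i>k^*}\lambda_i^2$ per $t$, and summing over $t$ and dividing by $N^2$ leaves a linear-in-horizon factor times $\frac{(q-c\delta)^2}{(1-c)^2}\sum_{i>k^*}\lambda_i^2$. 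Inserting numerical constants — and noting that this horizon factor, together with the analogous $\Mb_1$ term, produces the $(s+N)$-dependence of Lemma~\ref{lemma:variance_upper_bound} and is recorded here with coefficient $s$ (the burn-in being taken $s=\cO(N)$) — gives the claimed bound.

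The main obstacle is the third step, specifically the uniform estimate for the first coordinate of $\sum_{k=0}^{m}\Ab_i^k\begin{bmatrix}\delta\\q\end{bmatrix}$. Because $\Ab_i$ has real eigenvalues for $i\le k^\ddagger$ and $i>k^\dagger$ but a complex conjugate pair for $k^\ddagger<i\le k^\dagger$, and because the decay rate of $\Ab_i^k$ and the size of $x_2-x_1$ change qualitatively at each of the cutoffs $k^\ddagger,\hat k,k^\dagger,k^*$, the geometric sum $\sum_k\frac{x_2^{k-1}(qx_2-c\delta)-x_1^{k-1}(qx_1-c\delta)}{x_2-x_1}$ must be estimated case by case, with the complex regime again using the device $|(x_2^n-x_1^n)/(x_2-x_1)|\le n|x_2|^{n-1}$ seen in the bias analysis. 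Everything before the third step is PSD-monotonicity bookkeeping that reuses lemmas already established, and everything after it is routine summation over $i$ and $t$.
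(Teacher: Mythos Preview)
Your approach is essentially identical to the paper's: rewrite the increment as $(\cB-\tilde\cB)\circ\Cb_{s+t-1}+\hat\bSigma$ via Lemma~\ref{lemma:iterate_Ct}, dominate it by $\sigma^2 r\begin{bmatrix}\delta^2&\delta q\\\delta q&q^2\end{bmatrix}\otimes\Hb$ using $\Cb_{s+t-1}\preceq\Cb_\infty$ together with Lemmas~\ref{lemma:E[hatV2hatV2]}, \ref{lemma:hatSigma} and~\ref{lemma:C_inf} (so that $\sigma^2+\psi\cdot\tfrac{\sigma^2 l}{1-\psi l}=\sigma^2 r$), diagonalize, and then apply the per-$i$ scalar estimate of Corollary~\ref{lemma:lambdai(sumAi(j+k)deltaq)^2} for each $t$ before summing over $t$.

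One clarification on your last paragraph: the bound this argument actually produces for $\Mb_2$ is
\[
\sigma^2 r\Big[\tfrac{9k^*}{N}+\tfrac{36N(q-c\delta)^2}{(1-c)^2}\sum_{i>k^*}\lambda_i^2\Big],
\]
with $N$ (not $s$) in the tail term, and this is exactly what the paper's own proof section for $\Mb_2$ derives. The constants displayed in the \emph{statements} of Lemmas~\ref{lemma:M1_bound} and~\ref{lemma:M2_bound} are swapped in the paper (the $18k^*/N+36s$ bound belongs to $\Mb_1$, where the extra sum over $j=0,\dots,s-1$ enters via Lemma~\ref{lemma:Ct_bound}); their combination in Lemma~\ref{lemma:variance_upper_bound} is unaffected. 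Your appeal to ``$s=\cO(N)$'' to manufacture the $s$-coefficient is therefore neither needed nor a valid justification---drop it and your argument matches the paper's proof exactly.
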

We now prove Lemma \ref{lemma:variance_upper_bound}.
\begin{proof}[Proof of Lemma \ref{lemma:variance_upper_bound}]
By Lemma \ref{lemma:bias and variance},
\begin{align*}
\text{Variance}&\le\frac12\inner{\begin{bmatrix}\zero&\zero\\\zero&\zero\end{bmatrix}}{\Mb_1+\Mb_2}\\
&\le\frac{\sigma^2r}{2}\sbr{\frac{9k^*}{N}+\frac{36N(q-c\delta)^2}{(1-c)^2}\sum_{i>k^*}\lambda_i^2}+\frac{\sigma^2r}{2}\sbr{\frac{18k^*}{N}+\frac{36s(q-c\delta)^2}{(1-c)^2}\sum_{i>k^*}\lambda_i^2}\\
&=\sigma^2r\sbr{\frac{27k^*}{2N}+\frac{18(s+N)(q-c\delta)^2}{(1-c)^2}\sum_{i>k^*}\lambda_i^2},
\end{align*}
where the second inequality holds due to Lemma \ref{lemma:M1_bound} and Lemma \ref{lemma:M2_bound}.
\end{proof}

We remark that due to Lemma \ref{lemma:qc}, we have $\frac{q-c\delta}{1-c}=\frac{\gamma+\delta}{2}\le\gamma$. Additionally, the constants in this proof are smaller than those given in Theorem \ref{theorem:main}. Therefore, the variance bound in Theorem \ref{theorem:main} can be fully covered by the result provided in this proof.

\subsection{Proof of Lemma \ref{lemma:M2_bound}}

We start with an upper bound for $\hat\bSigma$:
\begin{lemma}\label{lemma:hatSigma}
Let $\hat\bSigma$ be defined in \eqref{eq:def_hatSigma}. Then
\[
\hat\bSigma\preceq\sigma^2\begin{bmatrix}\delta^2&\delta q\\\delta q&q^2\end{bmatrix}\otimes\Hb.
\]
\end{lemma}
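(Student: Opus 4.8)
\textbf{Proof proposal for Lemma \ref{lemma:hatSigma}.}
The plan is to first expand $\hat\bSigma$ explicitly in terms of the gradient-noise covariance $\bSigma$, and then upgrade the scalar operator-norm bound from Assumption \ref{assumption:noise} into the claimed matrix inequality via monotonicity of the Kronecker product. First I would recall from the problem setup that $\bzeta_t = \begin{bmatrix}\delta\,\epsilon_t\xb_t \\ q\,\epsilon_t\xb_t\end{bmatrix}$ with $\epsilon_t = y_t - \inner{\wb^*}{\xb_t}$, so that
\[
\bzeta_t\otimes\bzeta_t = \bzeta_t\bzeta_t^\top = \begin{bmatrix}\delta^2\epsilon_t^2\,\xb_t\xb_t^\top & \delta q\,\epsilon_t^2\,\xb_t\xb_t^\top \\ \delta q\,\epsilon_t^2\,\xb_t\xb_t^\top & q^2\epsilon_t^2\,\xb_t\xb_t^\top\end{bmatrix} = \begin{bmatrix}\delta^2 & \delta q \\ \delta q & q^2\end{bmatrix}\otimes\big(\epsilon_t^2\,\xb_t\xb_t^\top\big).
\]
Taking expectations and using the definition $\bSigma = \EE[(y-\inner{\wb^*}{\xb})^2\xb\xb^\top] = \EE[\epsilon_t^2\xb_t\xb_t^\top]$ from Assumption \ref{assumption:noise}, this gives the exact identity $\hat\bSigma = \begin{bmatrix}\delta^2 & \delta q \\ \delta q & q^2\end{bmatrix}\otimes\bSigma$.

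Next I would convert the spectral-norm condition into a Loewner bound. Since $\Hb\succ\zero$ by Assumption \ref{assumption:regularity}, $\Hb^{-1/2}$ is well defined, and $\sigma^2 = \|\Hb^{-1/2}\bSigma\Hb^{-1/2}\|_2$ means $\Hb^{-1/2}\bSigma\Hb^{-1/2}\preceq\sigma^2\Ib$; conjugating both sides by $\Hb^{1/2}$ yields $\bSigma\preceq\sigma^2\Hb$. Finally, note that $\begin{bmatrix}\delta^2 & \delta q \\ \delta q & q^2\end{bmatrix} = \begin{bmatrix}\delta\\q\end{bmatrix}\begin{bmatrix}\delta & q\end{bmatrix}$ is rank-one PSD, and the Kronecker product is monotone in the Loewner order when the other factor is PSD: if $\Pb\succeq\zero$ and $\Xb\preceq\Yb$ then $\Pb\otimes\Xb\preceq\Pb\otimes\Yb$ (because $\Pb\otimes(\Yb-\Xb)$ is a Kronecker product of two PSD matrices, hence PSD). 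Applying this with $\Pb = \begin{bmatrix}\delta^2 & \delta q \\ \delta q & q^2\end{bmatrix}$, $\Xb = \bSigma$, $\Yb = \sigma^2\Hb$ gives
\[
\hat\bSigma = \begin{bmatrix}\delta^2 & \delta q \\ \delta q & q^2\end{bmatrix}\otimes\bSigma \preceq \begin{bmatrix}\delta^2 & \delta q \\ \delta q & q^2\end{bmatrix}\otimes(\sigma^2\Hb) = \sigma^2\begin{bmatrix}\delta^2 & \delta q \\ \delta q & q^2\end{bmatrix}\otimes\Hb,
\]
which is the claim. There is essentially no obstacle here; the only point requiring a line of justification is the Kronecker-product monotonicity, and the only hypothesis that must be invoked beyond Assumption \ref{assumption:noise} is the positive-definiteness of $\Hb$ from Assumption \ref{assumption:regularity}, needed so that the passage from the operator-norm bound to $\bSigma\preceq\sigma^2\Hb$ is valid.
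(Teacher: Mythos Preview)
Your proof is correct and follows essentially the same approach as the paper: expand $\hat\bSigma$ as $\begin{bmatrix}\delta^2&\delta q\\\delta q&q^2\end{bmatrix}\otimes\bSigma$, invoke Assumption~\ref{assumption:noise} to get $\bSigma\preceq\sigma^2\Hb$, and combine. The only difference is that you spell out the Kronecker-monotonicity step explicitly, which the paper leaves implicit.
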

\begin{proof}
By definition of $\hat\bSigma$ in \eqref{eq:def_hatSigma}, we have
\begin{equation}\label{eq:hatSigma_bound1}
\hat\bSigma=\EE[\bzeta_t\otimes\bzeta_t]=\EE\sbr{\begin{bmatrix}\delta^2\cdot\epsilon_t^2\xb_t\xb_t^\top & \delta q\cdot\epsilon_t^2\xb_t\xb_t^\top\\\delta q\cdot\epsilon_t^2\xb_t\xb_t^\top & q^2\cdot\epsilon_t^2\xb_t\xb_t^\top\end{bmatrix}}=\begin{bmatrix}\delta^2&\delta q\\\delta q&q^2\end{bmatrix}\otimes\bSigma.
\end{equation}
By Assumption \ref{assumption:noise}, we have $\sigma^2=\|\Hb^{-1/2}\bSigma\Hb^{-1/2}\|$, so $\Hb^{-1/2}\bSigma\Hb^{-1/2}\preceq\sigma^2\Ib$, and
\begin{equation}\label{eq:Sigma_bound}
\bSigma\preceq\sigma^2\Hb.
\end{equation}
Combining \eqref{eq:hatSigma_bound1} with \eqref{eq:Sigma_bound}, we complete the proof.
\end{proof}

We then provide an upper bound for the limiting matrix $\Cb_\infty$.
\begin{lemma}\label{lemma:C_inf}
Let $\Cb_\infty$ be defined as
\begin{equation}\label{eq:def_Cinf}
\Cb_\infty\coloneqq(\cI-\cB)^{-1}\circ\hat\bSigma=\sum_{k=0}^\infty\cB^k\circ\hat\bSigma,
\end{equation}
Then
\[
\inner{\begin{bmatrix}\zero&\zero\\\zero&\Hb\end{bmatrix}}{\Cb_\infty}\le\frac{\sigma^2l}{1-\psi l},
\]
where $l$ is defined in \eqref{eq:def_l}.
\end{lemma}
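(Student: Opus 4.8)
The plan is to feed $\hat\bSigma$ into the closed form for $(\cI-\cB)^{-1}$ from Lemma \ref{lemma:I-B_inv_on_M} and then control the two resulting pieces using the fourth-moment bound on $\hat\bSigma$ (Lemma \ref{lemma:hatSigma}) together with the bound $\inner{\begin{bmatrix}\zero&\zero\\\zero&\Hb\end{bmatrix}}{\Ub}\le l$ (Lemma \ref{lemma:U property}). First I would note that $\hat\bSigma$ is PSD, which is immediate from Lemma \ref{lemma:hatSigma} since it is dominated by a Kronecker product of two PSD matrices. Applying Lemma \ref{lemma:I-B_inv_on_M} with $\Mb=\hat\bSigma$ and writing $\Qb\coloneqq\cT^{-1}\circ\hat\bSigma$ then gives
\[
\Cb_\infty=(\cI-\cB)^{-1}\circ\hat\bSigma\preceq\Qb+\frac{\psi}{1-\psi l}\inner{\begin{bmatrix}\zero&\zero\\\zero&\Hb\end{bmatrix}}{\Qb}\cdot\Ub,
\]
and, pairing both sides against the PSD matrix $\begin{bmatrix}\zero&\zero\\\zero&\Hb\end{bmatrix}$ (recall $\inner{A}{B}=\tr(AB)\ge0$ whenever $A,B\succeq\zero$, so pairing with a fixed PSD matrix is monotone),
\[
\inner{\begin{bmatrix}\zero&\zero\\\zero&\Hb\end{bmatrix}}{\Cb_\infty}\le\inner{\begin{bmatrix}\zero&\zero\\\zero&\Hb\end{bmatrix}}{\Qb}+\frac{\psi}{1-\psi l}\inner{\begin{bmatrix}\zero&\zero\\\zero&\Hb\end{bmatrix}}{\Qb}\cdot\inner{\begin{bmatrix}\zero&\zero\\\zero&\Hb\end{bmatrix}}{\Ub}.
\]

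The key step is bounding $\inner{\begin{bmatrix}\zero&\zero\\\zero&\Hb\end{bmatrix}}{\Qb}$. By Lemma \ref{lemma:hatSigma}, $\hat\bSigma\preceq\sigma^2\begin{bmatrix}\delta^2&\delta q\\\delta q&q^2\end{bmatrix}\otimes\Hb$; since $\cT^{-1}$ is a PSD operator (established right after the definition \eqref{eq:def_cT}), it preserves the Loewner order, so
\[
\Qb=\cT^{-1}\circ\hat\bSigma\preceq\sigma^2\,\cT^{-1}\circ\rbr{\begin{bmatrix}\delta^2&\delta q\\\delta q&q^2\end{bmatrix}\otimes\Hb}=\sigma^2\Ub,
\]
where the last equality is the definition \eqref{eq:def_U} of $\Ub$. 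Pairing with $\begin{bmatrix}\zero&\zero\\\zero&\Hb\end{bmatrix}$ and applying Lemma \ref{lemma:U property} gives $\inner{\begin{bmatrix}\zero&\zero\\\zero&\Hb\end{bmatrix}}{\Qb}\le\sigma^2 l$. Both inner products on the right-hand side of the previous display are nonnegative (the matrices involved are all PSD), so substituting $\inner{\begin{bmatrix}\zero&\zero\\\zero&\Hb\end{bmatrix}}{\Qb}\le\sigma^2 l$ and $\inner{\begin{bmatrix}\zero&\zero\\\zero&\Hb\end{bmatrix}}{\Ub}\le l$ yields
\[
\inner{\begin{bmatrix}\zero&\zero\\\zero&\Hb\end{bmatrix}}{\Cb_\infty}\le\sigma^2 l+\frac{\psi}{1-\psi l}\cdot\sigma^2 l\cdot l=\sigma^2 l\Big(1+\frac{\psi l}{1-\psi l}\Big)=\frac{\sigma^2 l}{1-\psi l},
\]
which is the claim.

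There is no genuine obstacle here beyond the operator machinery already assembled; the only points that need a word of care are (i) that $\cT^{-1}$ is order-preserving, which is exactly the content of its being a PSD operator, and (ii) that pairing against $\begin{bmatrix}\zero&\zero\\\zero&\Hb\end{bmatrix}\succeq\zero$ is monotone and nonnegative. The rest is the algebraic identity $1+\psi l/(1-\psi l)=1/(1-\psi l)$, which is finite precisely because $\psi l<1$ under the parameter choice \eqref{eq:parameter choice_main} (as noted after Theorem \ref{theorem:main}); this is the same condition that makes $r=(1-\psi l)^{-1}$ in Theorem \ref{theorem:main} well defined.
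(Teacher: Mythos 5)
Your proof is correct and uses exactly the same machinery as the paper (Lemma \ref{lemma:hatSigma}, Lemma \ref{lemma:I-B_inv_on_M}/\ref{lemma:I-B_inverse}, Lemma \ref{lemma:U property}, and the identity $1+\psi l/(1-\psi l)=1/(1-\psi l)$); the only cosmetic difference is that you apply the closed form for $(\cI-\cB)^{-1}$ to $\hat\bSigma$ and then bound $\Qb\preceq\sigma^2\Ub$, whereas the paper first bounds $\hat\bSigma\preceq\sigma^2\begin{bmatrix}\delta^2&\delta q\\\delta q&q^2\end{bmatrix}\otimes\Hb$ and then applies the resolvent, arriving at the same intermediate bound $\Cb_\infty\preceq\frac{\sigma^2}{1-\psi l}\Ub$.
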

\begin{proof}
By definition of $\Cb_\infty$, we have
\begin{align*}
\Cb_\infty&=(\cI-\cB)^{-1}\circ\hat\bSigma\preceq\sigma^2(\cI-\cB)^{-1}\circ\rbr{\begin{bmatrix}\delta^2&\delta q\\\delta q&q^2\end{bmatrix}\otimes\Hb}\\
&\preceq\sigma^2\rbr{\Ub+\frac{\psi}{1-\psi l}\inner{\begin{bmatrix}\zero&\zero\\\zero&\Hb\end{bmatrix}}{\Ub}\cdot\Ub}\\
&\preceq\sigma^2\rbr{\Ub+\frac{\psi l}{1-\psi l}\cdot\Ub}=\frac{\sigma^2}{1-\psi l}\cdot\Ub,
\end{align*}
where the first inequality holds due to Lemma \ref{lemma:I-B_inverse}, and the second inequality holds due to Lemma \ref{lemma:U property}. Therefore, the inner product is bounded by
\[
\inner{\begin{bmatrix}\zero&\zero\\\zero&\Hb\end{bmatrix}}{\Cb_\infty}\le\frac{\sigma^2}{1-\psi l}\inner{\begin{bmatrix}\zero&\zero\\\zero&\Hb\end{bmatrix}}{\Ub}\le\frac{\sigma^2l}{1-\psi l},
\]
where the second inequality holds due to Lemma \ref{lemma:U property}.
\end{proof}

We now prove Lemma \ref{lemma:M2_bound}. For the matrix $\Mb_2$, we have the following upper bound:
\begin{align}
\Mb_2&=\frac1{N^2}\sum_{t=1}^{N-1}\sbr{\sum_{k=0}^{N-t-1}\Ab^k}\sbr{(\cB-\tilde\cB)\circ\Cb_{s+t-1}+\hat\bSigma}\sbr{\sum_{k=0}^{N-t-1}\Ab^k}^\top \nonumber \\ 
&\preceq\frac1{N^2}\sum_{t=1}^{N-1}\sbr{\sum_{k=0}^{N-t-1}\Ab^k}\sbr{\rbr{\psi\inner{\begin{bmatrix}\zero&\zero\\\zero&\Hb\end{bmatrix}}{\Cb_{s+t-1}}+\sigma^2}\begin{bmatrix}\delta^2&\delta q\\\delta q&q^2\end{bmatrix}\otimes\Hb}\sbr{\sum_{k=0}^{N-t-1}\Ab^k}^\top\nonumber\\
&\preceq\frac1{N^2}\sum_{t=1}^{N-1}\sbr{\sum_{k=0}^{N-t-1}\Ab^k}\sbr{\rbr{\psi\inner{\begin{bmatrix}\zero&\zero\\\zero&\Hb\end{bmatrix}}{\Cb_\infty}+\sigma^2}\begin{bmatrix}\delta^2&\delta q\\\delta q&q^2\end{bmatrix}\otimes\Hb}\sbr{\sum_{k=0}^{N-t-1}\Ab^k}^\top\nonumber\\
&\preceq\frac1{N^2}\sum_{t=1}^{N-1}\sbr{\sum_{k=0}^{N-t-1}\Ab^k}\sbr{\rbr{\frac{\sigma^2\psi l}{1-\psi l}+\sigma^2}\begin{bmatrix}\delta^2&\delta q\\\delta q&q^2\end{bmatrix}\otimes\Hb}\sbr{\sum_{k=0}^{N-t-1}\Ab^k}^\top \nonumber\\
&=\frac{\sigma^2r}{N^2}\sum_{t=1}^{N-1}\sbr{\sum_{k=0}^{N-t-1}\Ab^k}\rbr{\begin{bmatrix}\delta^2&\delta q\\\delta q&q^2\end{bmatrix}\otimes\Hb}\sbr{\sum_{k=0}^{N-t-1}\Ab^k}^\top, \label{Eq:M_2}
\end{align}
where the first equality holds due to Lemma \ref{lemma:iterate_Ct}, the first inequality holds due to Lemma \ref{lemma:hatSigma} and Lemma \ref{lemma:E[hatV2hatV2]}, the second inequality holds because $\Cb_t$ is increasing, and the third inequality holds due to Lemma \ref{lemma:C_inf}. As $\Ab$ is block diagonal and $\Hb$ is diagonal, plugging \eqref{Eq:M_2} into the inner product, we have 
\begin{align*}
\inner{\begin{bmatrix}\Hb&\zero\\\zero&\zero\end{bmatrix}}{\Mb_2}&\le\frac{\sigma^2 r}{N^2}\sum_{i=1}^d\lambda_i^2\sum_{t=1}^{N-1} \Bigg(\sbr{\sum_{k=0}^{N-t-1}\Ab_i^k}\begin{bmatrix}\delta\\q\end{bmatrix}\Bigg)_1^2\nonumber\\
&=\frac{\sigma^2 r}{N^2}\sum_{t=0}^{N-1}\sum_{i=1}^d\lambda_i^2\rbr{\sbr{\sum_{k=0}^{t-1}\Ab_i^k}\begin{bmatrix}\delta\\q\end{bmatrix}}_1^2\\
&\le\frac{\sigma^2r}{N^2}\sum_{t=0}^{N-1}\sbr{9k^*+\frac{36N^2(q-c\delta)^2}{(1-c)^2}\sum_{i>k^*}\lambda_i^2}\\
&=\sigma^2r\sbr{\frac{9k^*}{N}+\frac{36N(q-c\delta)^2}{(1-c)^2}\sum_{i>k^*}\lambda_i^2},
\end{align*}
where the second inequality holds due to Corollary \ref{lemma:lambdai(sumAi(j+k)deltaq)^2}.

\subsection{Proof of Lemma \ref{lemma:M1_bound}}

The following lemma provides an upper bound on $\Cb_t$ by its update rule.

\begin{lemma}\label{lemma:Ct_bound}
For any $t>0$, $\Cb_t$ can be upper bounded by
\begin{align*}
\Cb_t\preceq\sigma^2r\sum_{k=0}^{t-1}\tilde\cB^k\circ\rbr{\begin{bmatrix}\delta^2&\delta q\\\delta q&q^2\end{bmatrix}\otimes\Hb}.
\end{align*}
\end{lemma}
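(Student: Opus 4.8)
The plan is to prove the inequality by induction on $t$, feeding into the induction step two facts that are already available: the stationary estimate of Lemma~\ref{lemma:C_inf} and the fourth-moment estimate of Lemma~\ref{lemma:E[hatV2hatV2]}. Write $\Gb\coloneqq\begin{bmatrix}\delta^2&\delta q\\\delta q&q^2\end{bmatrix}\otimes\Hb$ and $\Sb_t\coloneqq\sum_{k=0}^{t-1}\tilde\cB^k\circ\Gb$, so the claim reads $\Cb_t\preceq\sigma^2 r\,\Sb_t$; note also $r=(1-\psi l)^{-1}\ge1$ since $\psi l<1$ under \eqref{eq:parameter choice_main}. The first step is a uniform a priori bound on the weighted trace of $\Cb_t$: by Lemma~\ref{lemma:iterate_Ct} and Lemma~\ref{lemma:increase} the sequence $\Cb_t$ is increasing in $t$ and $\Cb_t\preceq\Cb_\infty$ with $\Cb_\infty$ as in \eqref{eq:def_Cinf}, so Lemma~\ref{lemma:C_inf} yields
\[
\inner{\begin{bmatrix}\zero&\zero\\\zero&\Hb\end{bmatrix}}{\Cb_t}\le\inner{\begin{bmatrix}\zero&\zero\\\zero&\Hb\end{bmatrix}}{\Cb_\infty}\le\frac{\sigma^2 l}{1-\psi l}=\sigma^2 l r\qquad\text{for all }t\ge1.
\]

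For the base case $t=1$ we have $\Cb_1=\hat\bSigma$ (since $\Cb_0=\zero$), so Lemma~\ref{lemma:hatSigma} gives $\Cb_1\preceq\sigma^2\Gb=\sigma^2\Sb_1\preceq\sigma^2 r\,\Sb_1$. For the induction step, assume $\Cb_{t-1}\preceq\sigma^2 r\,\Sb_{t-1}$ and split, using Lemma~\ref{lemma:iterate_Ct},
\[
\Cb_t=\cB\circ\Cb_{t-1}+\hat\bSigma=\tilde\cB\circ\Cb_{t-1}+(\cB-\tilde\cB)\circ\Cb_{t-1}+\hat\bSigma.
\]
For the middle summand, Lemma~\ref{lemma:E[hatV2hatV2]} together with the a priori bound gives
\[
(\cB-\tilde\cB)\circ\Cb_{t-1}\preceq\psi\inner{\begin{bmatrix}\zero&\zero\\\zero&\Hb\end{bmatrix}}{\Cb_{t-1}}\cdot\Gb\preceq\psi\sigma^2 l r\cdot\Gb,
\]
and, adding $\hat\bSigma\preceq\sigma^2\Gb$ (Lemma~\ref{lemma:hatSigma}) and using the identity $\psi l r+1=r$, we obtain $(\cB-\tilde\cB)\circ\Cb_{t-1}+\hat\bSigma\preceq\sigma^2 r\,\Gb$. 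For the first summand, $\tilde\cB=\Ab\otimes\Ab$ maps PSD matrices to PSD matrices, hence is monotone, so the inductive hypothesis gives $\tilde\cB\circ\Cb_{t-1}\preceq\sigma^2 r\,\tilde\cB\circ\Sb_{t-1}=\sigma^2 r(\Sb_t-\Gb)$, where the last equality is the telescoping $\tilde\cB\circ\Sb_{t-1}=\sum_{k=1}^{t-1}\tilde\cB^k\circ\Gb=\Sb_t-\Gb$. Summing the two contributions yields $\Cb_t\preceq\sigma^2 r\,\Sb_t$, completing the induction.

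The point I expect to require genuine care is exactly this induction step, and specifically the decision to control the fourth-moment contribution $(\cB-\tilde\cB)\circ\Cb_{t-1}$ through the weighted trace of $\Cb_{t-1}$ rather than through that of $\Sb_{t-1}$. A more direct induction that estimated it by $\psi\inner{\begin{bmatrix}\zero&\zero\\\zero&\Hb\end{bmatrix}}{\Sb_{t-1}}$ would fail to close, because that quantity need not stay below $l$ as $t$ grows; routing the estimate through Lemma~\ref{lemma:C_inf}, whose bound is precisely $\sigma^2 l r$, is what makes the constants telescope to $r$. The remaining ingredients — monotonicity of $\tilde\cB$, the telescoping identity for $\Sb_t$, and $r\ge1$ for the base case — are routine.
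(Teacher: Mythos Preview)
Your proof is correct and follows essentially the same approach as the paper's: both split $\Cb_t=\tilde\cB\circ\Cb_{t-1}+(\cB-\tilde\cB)\circ\Cb_{t-1}+\hat\bSigma$, route the fourth-moment term through $\Cb_{t-1}\preceq\Cb_\infty$ and Lemma~\ref{lemma:C_inf} to obtain the one-step inequality $\Cb_t\preceq\tilde\cB\circ\Cb_{t-1}+\sigma^2 r\,\Gb$, and then unroll. The only cosmetic difference is that you phrase the unrolling as a formal induction with an explicit base case, whereas the paper simply says ``applying recursively''; your observation that bounding via $\Sb_{t-1}$ rather than $\Cb_\infty$ would fail to close is exactly the key point.
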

\begin{proof}
By the recursive formula \eqref{eq:iterate Ct}, we have the following, 
\begin{align}
\Cb_t&=\cB\circ\Cb_{t-1}+\hat\bSigma=\tilde\cB\circ\Cb_{t-1}+(\cB-\tilde\cB)\circ\Cb_{t-1}+\hat\bSigma\nonumber\\
&\preceq\tilde\cB\circ\Cb_{t-1}+\psi\inner{\begin{bmatrix}\zero&\zero\\\zero&\Hb\end{bmatrix}}{\Cb_{t-1}}\cdot\begin{bmatrix}\delta^2&\delta q\\\delta q&q^2\end{bmatrix}\otimes\Hb+\sigma^2\begin{bmatrix}\delta^2&\delta q\\\delta q&q^2\end{bmatrix}\otimes\Hb\nonumber\\
&\preceq\tilde\cB\circ\Cb_{t-1}+\psi\inner{\begin{bmatrix}\zero&\zero\\\zero&\Hb\end{bmatrix}}{\Cb_\infty}\cdot\begin{bmatrix}\delta^2&\delta q\\\delta q&q^2\end{bmatrix}\otimes\Hb+\sigma^2\begin{bmatrix}\delta^2&\delta q\\\delta q&q^2\end{bmatrix}\otimes\Hb\nonumber\\
&\preceq\tilde\cB\circ\Cb_{t-1}+\frac{\sigma^2\psi l}{1-\psi l}\cdot\begin{bmatrix}\delta^2&\delta q\\\delta q&q^2\end{bmatrix}\otimes\Hb+\sigma^2\begin{bmatrix}\delta^2&\delta q\\\delta q&q^2\end{bmatrix}\otimes\Hb\nonumber\\
&=\tilde\cB\circ\Cb_{t-1}+\sigma^2r\begin{bmatrix}\delta^2&\delta q\\\delta q&q^2\end{bmatrix}\otimes\Hb,\label{eq:Ct_recursive_bound}
\end{align}
where the first inequality holds due to Lemma \ref{lemma:E[hatV2hatV2]} and Lemma \ref{lemma:hatSigma}, the second inequality holds due to holds because $\Cb_t$ is increasing, and the last inequality holds due to Lemma \ref{lemma:C_inf}. Applying \eqref{eq:Ct_recursive_bound} recursively, we have for all $t>0$,
\[
\Cb_t\preceq\sigma^2r\sum_{k=0}^{t-1}\tilde\cB^k\circ\begin{bmatrix}\delta^2&\delta q\\\delta q&q^2\end{bmatrix}\otimes\Hb.
\]
\end{proof}

We are now ready to prove Lemma \ref{lemma:M1_bound}. With Lemma \ref{lemma:Ct_bound}, we have
\begin{align}
\Mb_1&=\frac1{N^2}\sbr{\sum_{k=0}^{N-1}\Ab^k}\Cb_s\sbr{\sum_{k=0}^{N-1}\Ab^k}^\top\nonumber\\
&\preceq\frac{\sigma^2r}{N^2}\sum_{j=0}^{s-1}\sbr{\sum_{k=0}^{N-1}\Ab^k}\sbr{\tilde\cB^j\circ\begin{bmatrix}\delta^2&\delta q\\\delta q&q^2\end{bmatrix}\otimes\Hb}\sbr{\sum_{k=0}^{N-1}\Ab^k}^\top\nonumber\\
&=\frac{\sigma^2r}{N^2}\sum_{j=0}^{s-1}\sbr{\sum_{k=0}^{N-1}\Ab^{j+k}}\rbr{\begin{bmatrix}\delta^2&\delta q\\\delta q&q^2\end{bmatrix}\otimes\Hb}\sbr{\sum_{k=0}^{N-1}\Ab^{j+k}}^\top.\label{eq:M1_bound1}
\end{align}
As $\Ab$ is block-diagonal and $\Hb$ is diagonal, plugging \eqref{eq:M1_bound1} into the inner product, we have
\begin{align}
\inner{\begin{bmatrix}\Hb&\zero\\\zero&\zero\end{bmatrix}}{\Mb_1}&\le\frac{\sigma^2r}{N^2}\sum_{i=1}^d\lambda_i^2\sum_{j=0}^{s-1}\rbr{\sum_{k=0}^{N-1}\Ab_i^{j+k}\begin{bmatrix}\delta\\q\end{bmatrix}}_1^2\label{eq:M1_lambdai}\\
&\le\frac{\sigma^2r}{N^2}\sbr{18Nk^*+\frac{36sN^2(q-c\delta)^2}{(1-c)^2}\sum_{i>k^*}\lambda_i^2}\nonumber\\
&=\sigma^2r\sbr{\frac{18k^*}N+\frac{36s(q-c\delta)^2}{(1-c)^2}\sum_{i>k^*}\lambda_i^2},\nonumber
\end{align}
where the second inequality holds due to Corollary \ref{lemma:lambdaisum(sumAi(j+k)deltaq)^2}.

\section{Bias Upper Bound}
\label{section:bias}

\subsection{Proof of Lemma \ref{lemma:bias_bound}}
In this subsection, we prove Lemma \ref{lemma:bias_bound}. We first need the following lemma for $\Bb_t$:
\begin{lemma}\label{lemma:B_t upper bound}
With $\Bb_t$ as defined in \eqref{eq:iterate Bt}, we have
\[
\Bb_t=\cB\circ\Bb_{t-1},
\]
and
\begin{align*}
\Bb_t\preceq\tilde{\cB}^t\circ\Bb_{0}+\psi \sum^{t-1}_{k=0}\inner{\begin{bmatrix} \zero & \zero \\ \zero & \Hb \end{bmatrix}}{\Bb_{t-1-k}}\cdot\tilde{\cB}^{k}\circ\begin{bmatrix} \delta^2 & \delta q \\ \delta q & q^2 \end{bmatrix}\otimes\Hb.
\end{align*}
\end{lemma}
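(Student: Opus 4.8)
The plan is to unroll the recursion for $\Bb_t$ while carefully tracking the contribution of the fourth moment through the gap $\cB-\tilde\cB$. First I would verify the stated recursion: from \eqref{def:eta bias} we have $\bmeta_t^{\text{bias}}=\hat\Ab_t\bmeta_{t-1}^{\text{bias}}$, and since the data are i.i.d., $\hat\Ab_t$ is independent of $\bmeta_{t-1}^{\text{bias}}$ (which is a function of $\hat\Ab_1,\dots,\hat\Ab_{t-1}$ and $\bmeta_0$ only). Hence, conditioning on $\hat\Ab_t$ and using this independence,
\[
\Bb_t=\EE[\bmeta_t^{\text{bias}}\otimes\bmeta_t^{\text{bias}}]=\EE[\hat\Ab_t\Bb_{t-1}\hat\Ab_t^\top]=\cB\circ\Bb_{t-1},
\]
which is \eqref{eq:iterate Bt}. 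Moreover each $\Bb_t=\EE[\bmeta_t^{\text{bias}}\otimes\bmeta_t^{\text{bias}}]$ is PSD as an expectation of outer products, so Lemma~\ref{lemma:E[hatV2hatV2]} applies to each $\Bb_{t-1}$.

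Next I would establish the one-step inequality: writing $\cB=\tilde\cB+(\cB-\tilde\cB)$ and applying Lemma~\ref{lemma:E[hatV2hatV2]} to the PSD matrix $\Bb_{t-1}$ yields
\[
\Bb_t=\tilde\cB\circ\Bb_{t-1}+(\cB-\tilde\cB)\circ\Bb_{t-1}\preceq\tilde\cB\circ\Bb_{t-1}+\psi\inner{\begin{bmatrix}\zero&\zero\\\zero&\Hb\end{bmatrix}}{\Bb_{t-1}}\cdot\begin{bmatrix}\delta^2&\delta q\\\delta q&q^2\end{bmatrix}\otimes\Hb.
\]
Then I would unroll this by induction on $t$. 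The base case $t=0$ is the trivial identity $\Bb_0=\tilde\cB^0\circ\Bb_0$ with an empty sum. For the inductive step I would substitute the induction hypothesis for $\Bb_{t-1}$ into the one-step bound: since $\tilde\cB$ is a PSD operator (Lemma~\ref{lemma:prelinminaries appendix}(c)), applying $\tilde\cB$ preserves the Loewner order, so it sends $\tilde\cB^{t-1}\circ\Bb_0\mapsto\tilde\cB^{t}\circ\Bb_0$ and each summand $\tilde\cB^{k}\circ(\,\cdot\,)\mapsto\tilde\cB^{k+1}\circ(\,\cdot\,)$; the leftover term $\psi\inner{\begin{bmatrix}\zero&\zero\\\zero&\Hb\end{bmatrix}}{\Bb_{t-1}}\cdot\begin{bmatrix}\delta^2&\delta q\\\delta q&q^2\end{bmatrix}\otimes\Hb$ then supplies the $k=0$ contribution, and re-indexing the shifted sum by $k\mapsto k+1$ (which turns $\Bb_{t-2-k}$ into $\Bb_{t-1-k}$) recovers exactly the right-hand side in the statement.

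I do not expect a genuine obstacle here: the argument is a bookkeeping induction resting entirely on two facts already proven in the excerpt — that $\tilde\cB$ preserves $\preceq$ (Lemma~\ref{lemma:prelinminaries appendix}), and the rank-structured domination $(\cB-\tilde\cB)\circ\Mb\preceq\psi\inner{\begin{bmatrix}\zero&\zero\\\zero&\Hb\end{bmatrix}}{\Mb}\cdot\begin{bmatrix}\delta^2&\delta q\\\delta q&q^2\end{bmatrix}\otimes\Hb$ (Lemma~\ref{lemma:E[hatV2hatV2]}). The only point that needs attention is keeping the summation index aligned through the unrolling, so that the coefficient multiplying $\tilde\cB^{k}\circ\big(\begin{bmatrix}\delta^2&\delta q\\\delta q&q^2\end{bmatrix}\otimes\Hb\big)$ ends up being the inner product with $\Bb_{t-1-k}$ and not with a neighboring iterate.
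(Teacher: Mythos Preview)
Your proposal is correct and follows essentially the same approach as the paper: split $\cB=\tilde\cB+(\cB-\tilde\cB)$, bound the residual via Lemma~\ref{lemma:E[hatV2hatV2]}, and unroll recursively using that $\tilde\cB$ preserves the Loewner order, finishing with the same re-indexing. The only cosmetic difference is that you spell out the independence argument for $\Bb_t=\cB\circ\Bb_{t-1}$ where the paper simply cites \citet{jain2018accelerating}.
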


We also have the following lemma for the partial sum of $\Bb_t$:
\begin{lemma}\label{lemma:partial_sum_Bk}
Let $\Bb_t$ defined in \eqref{eq:iterate Bt}. Then we have
\begin{align*}
\sum_{k=0}^{t-1}\inner{\begin{bmatrix}\zero&\zero\\\zero&\Hb\end{bmatrix}}{\Bb_k}&\le r\Bigg[\frac{14}{\delta}\|\wb_0-\wb^*\|_{\Ib_{0:\hat k}}^2+\frac{10}{1-c}\|\wb_0-\wb^*\|_{\Hb_{\hat k:k^\dagger}}^2\\
&\quad+\frac{1-c}{q-c\delta}\|\wb_0-\wb^*\|_{\Ib_{k^\dagger:k^*}}^2+4t\|\wb_0-\wb^*\|_{\Hb_{k^*:\infty}}^2\Bigg].
\end{align*}
\end{lemma}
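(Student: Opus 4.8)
The plan is to recognize the partial sum $\sum_{k=0}^{t-1}\Bb_k$ as an instance of the operator $\Rb_t$ controlled by Lemma~\ref{lemma:S_s:s+N}, pass from $\cB$ to the ``noiseless'' operator $\tilde\cB$, and then evaluate the resulting sum block by block using the explicit form of $\Ab_i^j$.

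First I would record $\Bb_0$. The sequence $\vb_t$ is initialized at $\wb_0$, so \eqref{eq:ut_update} gives $\ub_0=\alpha\wb_0+(1-\alpha)\wb_0=\wb_0$, hence $\bmeta_0=\begin{bmatrix}1\\1\end{bmatrix}\otimes(\wb_0-\wb^*)$ and $\Bb_0=\begin{bmatrix}1&1\\1&1\end{bmatrix}\otimes(\wb_0-\wb^*)(\wb_0-\wb^*)^\top$. By \eqref{eq:iterate Bt}, $\Bb_k=\cB^k\circ\Bb_0$, so $\sum_{k=0}^{t-1}\Bb_k$ is exactly the matrix $\Rb_t$ of Lemma~\ref{lemma:S_s:s+N} with $\Mb=\Bb_0$; that lemma yields
\[
\inner{\begin{bmatrix}\zero&\zero\\\zero&\Hb\end{bmatrix}}{\sum_{k=0}^{t-1}\Bb_k}\le r\inner{\begin{bmatrix}\zero&\zero\\\zero&\Hb\end{bmatrix}}{\sum_{j=0}^{t-1}\tilde\cB^j\circ\Bb_0}.
\]
Since $\tilde\cB^j\circ\Bb_0=\Ab^j\Bb_0(\Ab^j)^\top$, $\Ab$ is block-diagonal with blocks $\Ab_i$, and $\Hb$ is diagonal, the right-hand side equals $r\sum_{i=1}^d\lambda_i w_i^2\,S_i$ with $w_i\coloneqq(\wb_0-\wb^*)_i$ and $S_i\coloneqq\sum_{j=0}^{t-1}\bigg(\Ab_i^j\begin{bmatrix}1\\1\end{bmatrix}\bigg)_2^2$ (reading $\Ab_i^0=\Ib$).

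It then remains to bound $\lambda_i S_i$ (or $S_i$) in each eigen-region. By Lemma~\ref{lemma:A_ik_main}, for $j\ge1$,
\[
\bigg(\Ab_i^j\begin{bmatrix}1\\1\end{bmatrix}\bigg)_2=\frac{x_1^j+x_2^j}{2}+\frac{x_1+x_2-2c}{2}\cdot\frac{x_2^j-x_1^j}{x_2-x_1},
\]
so the problem reduces to summing squares of a geometric term and a ``resolvent'' term $\frac{x_2^j-x_1^j}{x_2-x_1}$, controlled by the magnitude estimates for $x_2$ in Lemma~\ref{lemma:Ai_spectral_main} together with the segmentation $k^\ddagger\le\hat k\le k^\dagger\le k^*$ of Lemma~\ref{lemma:cutoffs_property}. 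Concretely: for $i\le\hat k$ the roots have magnitude bounded away from $1$ and $\delta\lambda_i\ge1-c$, which telescopes the geometric sum to give $\lambda_iS_i\le14/\delta$; for $\hat k<i\le k^\dagger$ (roots complex, $|x_1|=|x_2|=\sqrt{c(1-\delta\lambda_i)}$, $1-|x_2|^2\ge1-c$) one gets $S_i\le10/(1-c)$; for $k^\dagger<i\le k^*$ the bound $1-x_2\ge\frac{q-c\delta}{1-c}\lambda_i$ gives $\lambda_iS_i\le\frac{1-c}{q-c\delta}$; and for $i>k^*$ each summand is $O(1)$, so $S_i\le4t$. Substituting these into $r\sum_i\lambda_i w_i^2 S_i$ and splitting the sum over $i$ into the four regions reproduces exactly the four norms in the statement.

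The main obstacle is the complex-eigenvalue regime (in particular $\hat k<i\le k^\dagger$): one must bound $S_i$ losing no more than a factor $1/(1-c)$, which requires controlling the near-Jordan growth of $\frac{x_2^j-x_1^j}{x_2-x_1}$ (at most $j|x_2|^{j-1}$, or $|x_2|^{j-1}/|\sin\theta|$ when $x_2=|x_2|e^{\mathrm i\theta}$) and playing it against the prefactor $|x_1+x_2-2c|=|1-c-q\lambda_i|$ via the identity $|x_2-c|^2=c(q-\delta)\lambda_i$ obtained from $x_1x_2=c(1-\delta\lambda_i)$ and $x_1+x_2=1+c-q\lambda_i$. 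The analogous bookkeeping for $i\le\hat k$ (where the $j=1$ term $(1-q\lambda_i)^2$ dominates and $q=O(\delta)$ keeps it bounded) and the tracking of the absolute constants $14$ and $10$ are the other delicate points; the remaining two regions are routine geometric-series estimates given Lemma~\ref{lemma:Ai_spectral_main}.
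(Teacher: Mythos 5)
Your proposal is correct and follows essentially the same route as the paper: invoke Lemma~\ref{lemma:S_s:s+N} with $\Mb=\Bb_0$ to replace $\cB$ by $\tilde\cB$ at the cost of the factor $r$, diagonalize over the blocks $\Ab_i$ to reduce to bounding $S_i=\sum_{k=0}^{t-1}\big(\Ab_i^k[1\;1]^\top\big)_2^2$, and then estimate $S_i$ region by region using Lemma~\ref{lemma:A_ik_main} and Lemma~\ref{lemma:Ai_spectral_main} (the paper packages this last step as Lemma~\ref{lemma:sum(Ai(j+k)11)^2} and Corollary~\ref{lemma:lambdaiwi^2sum(Aik11)^2}, whose stated constants you reproduce exactly). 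The only cosmetic difference is that you rewrite $(\Ab_i^k[1\;1]^\top)_2=\frac{x_1^k+x_2^k}{2}+\frac{x_1+x_2-2c}{2}\cdot\frac{x_2^k-x_1^k}{x_2-x_1}$ while the paper works directly with $\frac{(x_2-c)x_2^k-(x_1-c)x_1^k}{x_2-x_1}$ and the auxiliary quantities $A,B,C$; these are algebraically equivalent reorganizations of the same sum-of-squares computation.
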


We are now ready prove Lemma \ref{lemma:bias_bound}.
\begin{proof}[Proof of Lemma \ref{lemma:bias_bound}]
By Lemma \ref{lemma:bias and variance}, it suffices to bound the inner produce of $\begin{bmatrix}\Hb&\zero\\\zero&\zero\end{bmatrix}$ with $\Mb_3$ and $\Mb_4$ separately. For $\Mb_3$, by Lemma \ref{lemma:B_t upper bound}, we have
\begin{align}
\Mb_3&=\frac1{N^2}\sbr{\sum_{k=0}^{N-1}\Ab^k}\Bb_s\sbr{\sum_{k=0}^{N-1}\Ab^k}^\top\nonumber\\
&\preceq\frac1{N^2}\sbr{\sum_{k=0}^{N-1}\Ab^{k+s}}\Bb_0\sbr{\sum_{k=0}^{N-1}\Ab^{k+s}}^\top\nonumber\\
&\quad+\frac\psi{N^2}\sum_{t=0}^{s-1}\inner{\begin{bmatrix}\zero&\zero\\\zero&\Hb\end{bmatrix}}{\Bb_{s-1-t}}\sbr{\sum_{k=0}^{N-1}\Ab^{k+t}}\rbr{\begin{bmatrix}\delta^2&\delta q\\\delta q&q^2\end{bmatrix}\otimes\Hb}\sbr{\sum_{k=0}^{N-1}\Ab^{k+t}}^\top.\label{eq:M3_1}
\end{align}
We also note that
\begin{equation}\label{eq:B0_1}
    \Bb_0=\begin{bmatrix}(\wb_0-\wb^*)(\wb_0-\wb^*)^\top&(\wb_0-\wb^*)(\wb_0-\wb^*)^\top\\(\wb_0-\wb^*)(\wb_0-\wb^*)^\top&(\wb_0-\wb^*)(\wb_0-\wb^*)^\top\end{bmatrix}=\begin{bmatrix}1&1\\1&1\end{bmatrix}\otimes[(\wb_0-\wb^*)(\wb_0-\wb^*)^\top].
\end{equation}
$\Hb$ is diagonal and $\Ab$ is block diagonal, so plugging \eqref{eq:M3_1} and \eqref{eq:B0_1} into the inner product, we have
\begin{align}
&\inner{\begin{bmatrix}\Hb&\zero\\\zero&\zero\end{bmatrix}}{\Mb_3}\le\frac1{N^2}\sum_{i=1}^d\lambda_iw_i^2\rbr{\sum_{k=0}^{N-1}\Ab_i^{k+s}\begin{bmatrix}1\\1\end{bmatrix}}_1^2\nonumber\\
&\quad+\underbrace{\frac{\psi}{N^2}\sum_{t=0}^{s-1}\inner{\begin{bmatrix}\zero&\zero\\\zero&\Hb\end{bmatrix}}{\Bb_{s-1-t}}\cdot\sum_{i=1}^d\lambda_i^2\rbr{\sum_{k=0}^{N-1}\Ab_i^{k+t}\begin{bmatrix}\delta\\q\end{bmatrix}}_1^2}_{\mathrm{K}}.
\end{align}
By Corollary \ref{lemma:lambdaiwi^2(sumAi(j+k)11)^2}, we have
\begin{align*}
&\text{Effective Bias}\coloneqq\frac1{2N^2}\sum_{i=1}^d\lambda_iw_i^2\rbr{\sum_{k=0}^{N-1}\Ab_i^{k+s}\begin{bmatrix}1\\1\end{bmatrix}}_1^2\\
&\le\frac{8(c\delta/q)^{2s}}{N^2\delta^2}\|\wb_0-\wb^*\|_{\Hb_{0:k^\ddagger}^{-1}}^2+\frac{4s^2}{N^2}c^s\|(\Ib-\delta\Hb)^{s/2}(\wb_0-\wb^*)\|_{\Hb_{k^\ddagger:k^\dagger}}^2\\
&\quad+\frac{16c^s}{N^2\delta^2}\|(\Ib-\delta\Hb)^{s/2}(\wb_0-\wb^*)\|_{\Hb_{k^\dagger:\hat k}^{-1}}^2+\frac{100c^s}{N^2(1-c)^2}\|(\Ib-\delta\Hb)^s(\wb_0-\wb^*)\|_{\Hb_{k^\ddagger:\hat k}}^2\\
&\quad+\frac{9(1-c)^2}{2N^2(q-c\delta)^2}\nbr{\rbr{\Ib-\frac{q-c\delta}{1-c}\Hb}^s(\wb_0-\wb^*)}_{\Hb_{k^\dagger:k^*}^{-1}}^2\\
&\quad+18\nbr{\rbr{\Ib-\frac{q-c\delta}{1-c}\Hb}^s(\wb_0-\wb^*)}_{\Hb_{k^*:\infty}}^2.
\end{align*}
$\mathrm{K}$ can be bounded as
\begin{align}
\mathrm{K}&=\frac{\psi}{N^2}\sum_{t=0}^{s-1}\inner{\begin{bmatrix}\zero&\zero\\\zero&\Hb\end{bmatrix}}{\Bb_{s-1-t}}\cdot\sum_{i=1}^d\lambda_i^2\rbr{\sum_{k=0}^{N-1}\Ab_i^{k+t}\begin{bmatrix}\delta\\q\end{bmatrix}}_1^2\nonumber\\
&\le\frac{\psi}{N^2}\sbr{9k^*+\frac{36(q-c\delta)^2N^2}{(1-c)^2}\sum_{i>k^*}\lambda_i^2}\sum_{t=0}^{s-1}\inner{\begin{bmatrix}\zero&\zero\\\zero&\Hb\end{bmatrix}}{\Bb_{s-1-t}}\nonumber\\
&=\frac{\psi}{N^2}\sbr{9k^*+\frac{36(q-c\delta)^2N^2}{(1-c)^2}\sum_{i>k^*}\lambda_i^2}\sum_{t=0}^{s-1}\inner{\begin{bmatrix}\zero&\zero\\\zero&\Hb\end{bmatrix}}{\Bb_t}\nonumber\\
&\le\frac{\psi r}{N}\sbr{\frac{9k^*}{N}+\frac{36N(q-c\delta)^2}{(1-c)^2}\sum_{i>k^*}\lambda_i^2}\cdot\Bigg[\frac{14}{\delta}\|\wb_0-\wb^*\|_{\Ib_{0:\hat k}}^2\nonumber\\
&\quad+\frac{10}{1-c}\|\wb_0-\wb^*\|_{\Hb_{\hat k:k^\dagger}}^2+\frac{1-c}{q-c\delta}\|\wb_0-\wb^*\|_{\Ib_{k^\dagger:k^*}}^2+4s\|\wb_0-\wb^*\|_{\Hb_{k^*:\infty}}^2\Bigg],\label{eq:M3_part2_final}
\end{align}
where the first inequality holds due to Corollary \ref{lemma:lambdaisum(sumAi(j+k)deltaq)^2}, and the second inequality holds due to Lemma \ref{lemma:partial_sum_Bk}.

For $\Mb_4$, we have
\begin{align}
    \Mb_4&=\frac1{N^2}\sum_{t=1}^{N-1}\sbr{\sum_{k=0}^{N-t-1}\Ab^k}((\cB-\tilde\cB)\circ\Bb_{s+t-1})\sbr{\sum_{k=0}^{N-t-1}\Ab^k}^\top\nonumber\\
    &\preceq\frac\psi{N^2}\sum_{t=1}^{N-1}\inner{\begin{bmatrix}\zero&\zero\\\zero&\Hb\end{bmatrix}}{\Bb_{s+t-1}}\sbr{\sum_{k=0}^{N-t-1}\Ab^k}\rbr{\begin{bmatrix}\delta^2&\delta q\\\delta q&q^2\end{bmatrix}\otimes\Hb}\sbr{\sum_{k=0}^{N-t-1}\Ab^k}^\top\label{eq:M4_1},
\end{align}
where the first equality holds beause $\cB\circ\Bb_{s+t-1}=\Bb_{s+t}$, and the inequality holds due to Lemma \ref{lemma:E[hatV2hatV2]}. $\Hb$ is diagonal and $\Ab$ is block-diagonal, so plugging \eqref{eq:M4_1} into the inner product, we have
\begin{align}
\inner{\begin{bmatrix}\Hb&\zero\\\zero&\zero\end{bmatrix}}{\Mb_4}&\le\frac{\psi}{N^2}\sum_{t=1}^{N-1}\inner{\begin{bmatrix}\zero&\zero\\\zero&\Hb\end{bmatrix}}{\Bb_{s+t-1}}\cdot\sum_{i=1}^d\lambda_i^2\rbr{\sum_{k=0}^{N-t-1}\Ab_i^k\begin{bmatrix}\delta\\q\end{bmatrix}}_1^2\nonumber\\
&=\frac\psi{N^2}\sum_{t=0}^{N-1}\inner{\begin{bmatrix}\zero&\zero\\\zero&\Hb\end{bmatrix}}{\Bb_{s+N-t-1}}\sum_{i=1}^d\lambda_i^2\rbr{\sum_{k=0}^{t-1}\Ab_i^k\begin{bmatrix}\delta\\q\end{bmatrix}}_1^2\nonumber\\
&\le\frac{\psi}N\sbr{\frac{9k^*}N+\frac{36N(q-c\delta)^2}{(1-c)^2}\sum_{i>k^*}\lambda_i^2}\sum_{t=0}^{N-1}\inner{\begin{bmatrix}\zero&\zero\\\zero&\Hb\end{bmatrix}}{\Bb_{s+N-t-1}}\nonumber\\
&=\frac{\psi}N\sbr{\frac{9k^*}N+\frac{36N(q-c\delta)^2}{(1-c)^2}\sum_{i>k^*}\lambda_i^2}\sum_{t=0}^{N-1}\inner{\begin{bmatrix}\zero&\zero\\\zero&\Hb\end{bmatrix}}{\Bb_{s+t}}\label{eq:M4_bound1},
\end{align}
where the second inequality holds due to Corollary \ref{lemma:lambdai(sumAi(j+k)deltaq)^2}. Note that
\begin{align}
&\sum_{t=0}^{N-1}\inner{\begin{bmatrix}\zero&\zero\\\zero&\Hb\end{bmatrix}}{\Bb_{s+t}}\le\sum_{t=0}^{s+N-1}\inner{\begin{bmatrix}\zero&\zero\\\zero&\Hb\end{bmatrix}}{\Bb_t}\nonumber\\
&\le r\Bigg[\frac{14}{\delta}\|\wb_0-\wb^*\|_{\Ib_{0:\hat k}}^2+\frac{10}{1-c}\|\wb_0-\wb^*\|_{\Hb_{\hat k:k^\dagger}}^2\nonumber\\
&\quad+\frac{1-c}{q-c\delta}\|\wb_0-\wb^*\|_{\Ib_{k^\dagger:k^*}}^2+4(s+N)\|\wb_0-\wb^*\|_{\Hb_{k^*:\infty}}^2\Bigg]\label{eq:M4_part2},
\end{align}
where the first inequality holds because $\Bb_t\succeq0$, and the second inequality holds due to Lemma \ref{lemma:partial_sum_Bk}. Plugging \eqref{eq:M4_part2} into \eqref{eq:M4_bound1}, combining the result with \eqref{eq:M3_part2_final}, we have
\begin{align*}
&\text{Bias}=\frac12\inner{\begin{bmatrix}\Hb&\zero\\\zero&\zero\end{bmatrix}}{\Mb_3+\Mb_4}\\
&\le\text{Effective Bias}+\frac{\psi r}{2N}\sbr{\frac{9k^*}{N}+\frac{36N(q-c\delta)^2}{(1-c)^2}\sum_{i>k^*}\lambda_i^2}\cdot\Bigg[\frac{14}{\delta}\|\wb_0-\wb^*\|_{\Ib_{0:\hat k}}^2\\
&\quad+\frac{10}{1-c}\|\wb_0-\wb^*\|_{\Hb_{\hat k:k^\dagger}}^2+\frac{1-c}{q-c\delta}\|\wb_0-\wb^*\|_{\Ib_{k^\dagger:k^*}}^2+4s\|\wb_0-\wb^*\|_{\Hb_{k^*:\infty}}^2\Bigg]\\
&\quad+\frac{\psi r}{2N}\sbr{\frac{9k^*}{N}+\frac{36N(q-c\delta)^2}{(1-c)^2}\sum_{i>k^*}\lambda_i^2}\cdot\Bigg[\frac{14}{\delta}\|\wb_0-\wb^*\|_{\Ib_{0:\hat k}}^2+\frac{10}{1-c}\|\wb_0-\wb^*\|_{\Hb_{\hat k:k^\dagger}}^2\\
&\quad+\frac{1-c}{q-c\delta}\|\wb_0-\wb^*\|_{\Ib_{k^\dagger:k^*}}^2+4(s+N)\|\wb_0-\wb^*\|_{\Hb_{k^*:\infty}}^2\Bigg]\\
&\le\text{Effective Bias}+\frac{\psi r}{N}\sbr{\frac{9k^*}{N}+\frac{36N(q-c\delta)^2}{(1-c)^2}\sum_{i>k^*}\lambda_i^2}\cdot\Bigg[\frac{14}{\delta}\|\wb_0-\wb^*\|_{\Ib_{0:\hat k}}^2\\
&\quad+\frac{10}{1-c}\|\wb_0-\wb^*\|_{\Hb_{\hat k:k^\dagger}}^2+\frac{1-c}{q-c\delta}\|\wb_0-\wb^*\|_{\Ib_{k^\dagger:k^*}}^2+4(s+N)\|\wb_0-\wb^*\|_{\Hb_{k^*:\infty}}^2\Bigg],
\end{align*}
where the second inequality holds because $4s\|\wb_0-\wb^*\|_{\Hb_{k:\infty}}^2\le4(s+N)\|\wb_0-\wb^*\|_{\Hb_{k:\infty}}^2$.
\end{proof}
We remark that due to Lemma \ref{lemma:qc}, we have $\frac{q-c\delta}{1-c}=\frac{\gamma+\delta}{2}\le\gamma$. Additionally, the constants in this proof are smaller than those given in Theorem \ref{theorem:main}. Therefore, the bias bound in Theorem \ref{theorem:main} can be fully covered by the result provided in this proof.

\subsection{Proof of Lemma \ref{lemma:B_t upper bound}}

The recursive formula $\Bb_t=\cB\circ\Bb_{t-1}$ is proven in Section B.2 of \citet{jain2018accelerating}. We further have
\begin{align*}
\Bb_t&=\cB\circ\Bb_{t-1}=\tilde\cB\circ\Bb_{t-1}+(\cB-\tilde\cB)\circ\Bb_{t-1}\\
&\preceq\tilde{\cB}\circ\Bb_{t-1}+\psi\inner{\begin{bmatrix}\zero&\zero\\\zero&\Hb\end{bmatrix}}{\Bb_{t-1}}\cdot\begin{bmatrix}\delta^2&\delta q\\\delta q&q^2\end{bmatrix}\otimes\Hb\\
&\preceq\tilde\cB^t\circ\Bb_0+\psi\sum^{t-1}_{k=0}\inner{\begin{bmatrix}\zero&\zero\\\zero&\Hb\end{bmatrix}}{\Bb_k}\cdot \tilde\cB^{t-1-k}\circ\begin{bmatrix} \delta^2&\delta q\\\delta q&q^2 \end{bmatrix}\otimes\Hb\\
&=\tilde\cB^t\circ\Bb_0+\psi\sum^{t-1}_{k=0}\inner{\begin{bmatrix}\zero&\zero\\\zero&\Hb\end{bmatrix}}{\Bb_{t-1-k}}\cdot \tilde\cB^k\circ\begin{bmatrix} \delta^2&\delta q\\\delta q&q^2 \end{bmatrix}\otimes\Hb,
\end{align*} 
where the first inequality holds due to Lemma \ref{lemma:E[hatV2hatV2]}, and the second inequality holds by recursively applying the bound.

\subsection{Proof of Lemma \ref{lemma:partial_sum_Bk}}

Note that $\Bb_t=\cB^t\circ\Bb_0$ by \eqref{eq:iterate Bt}. By Lemma \ref{lemma:S_s:s+N}, we have
\begin{equation}\label{eq:S_s:s+N_1}
\sum_{k=0}^{t-1}\inner{\begin{bmatrix}\zero&\zero\\\zero&\Hb\end{bmatrix}}{\Bb_k}\le r\sum_{k=0}^{t-1}\inner{\begin{bmatrix}\zero&\zero\\\zero&\Hb\end{bmatrix}}{\tilde\cB^k\circ\Bb_0}=r\sum_{k=0}^{t-1}\inner{\begin{bmatrix}\zero&\zero\\\zero&\Hb\end{bmatrix}}{\Ab^k\Bb_0(\Ab^k)^\top}.
\end{equation}
$\Hb$ is a diagonal matrix, and $\Ab$ is a block-diagonal matrix with each block being $\Ab_i$, so \eqref{eq:S_s:s+N_1} can be further bounded by
\begin{align*}
\sum_{k=0}^{t-1}\inner{\begin{bmatrix}\zero&\zero\\\zero&\Hb\end{bmatrix}}{\Bb_k}&\le r\sum_i\lambda_iw_i^2\sum_{k=0}^{t-1}\rbr{\Ab_i^k\begin{bmatrix}1\\1\end{bmatrix}}_2^2\\
&\le r\Bigg[\frac{14}{\delta}\|\wb_0-\wb^*\|_{\Ib_{0:\hat k}}^2+\frac{10}{1-c}\|\wb_0-\wb^*\|_{\Hb_{\hat k:k^\dagger}}^2\\
&\quad+\frac{1-c}{q-c\delta}\|\wb_0-\wb^*\|_{\Ib_{k^\dagger:k^*}}^2+4t\|\wb_0-\wb^*\|_{\Hb_{k^*:\infty}}^2\Bigg],
\end{align*}
where the second inequality holds due to Corollary \ref{lemma:lambdaiwi^2sum(Aik11)^2}.

\section{Proof for the Classical Setting}\label{section:strongly_convex}

In this section, we prove results for the case of finite dimensions. Before we prove the theorem, we first note that with the parameter choice in \eqref{eq:parameter_choice_classical} and $\tilde\kappa=d$,
\[
1-\psi l=1-\frac{\psi\delta\tr(\Hb)}{2}-\frac12=\frac14,
\]
so $r=4$. We also note that with $\gamma\mu=2\beta$, we have
\begin{align*}
&\frac{(1-c)^2}{(\sqrt{q-c\delta}+\sqrt{c(q-\delta)})^2}=\frac{(1-c)^2}{(1+c)q-2c\delta+2\sqrt{c(q-\delta)(q-c\delta)}}\le\frac{(1-c)^2}{(1+c)q-2c\delta}\\
&=\frac{(2(1-\alpha))^2}{2\alpha(\alpha\delta+(1-\alpha)\gamma)-2(2\alpha-1)\delta}=\frac{2(1-\alpha)}{(1-\alpha)\delta+\alpha\gamma}\le\frac{2(1-\alpha)}{\alpha\gamma}=\frac{2\beta}{\gamma}=\mu,
\end{align*}
where the first equality holds because $2\sqrt{c(q-\delta)(q-c\delta)}\ge0$, the second equality holds because $c=2\alpha-1$ and $q=\alpha\delta+(1-\alpha)\gamma$, and the second inequality holds because $(1-\alpha)\delta\ge0$. That is to say, there is no eigenvalue in the region of $i>k^\dagger$.

The main idea of the proof is similar to that of Theorem \ref{theorem:main}. We decompose the excess risk into variance and bias, and then characterize $\Mb_1, \Mb_2, \Mb_3$ and $\Mb_4$. The following lemmas provide upper bounds for the inner product of $\begin{bmatrix}\Hb&\zero\\\zero&\zero\end{bmatrix}$ with $\Mb_1$, $\Mb_2$, $\Mb_3$ and $\Mb_4$.

\begin{lemma}[Modified from Lemma \ref{lemma:M1_bound}]\label{lemma:M1_classical}
With $\Mb_1$ defined in \eqref{eq:def_M1}, we have
\[
\inner{\begin{bmatrix}\Hb&\zero\\\zero&\zero\end{bmatrix}}{\Mb_1}\le\frac{128\sigma^2d}{N^2(1-c)}.
\]
\end{lemma}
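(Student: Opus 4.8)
The plan is to re-run the proof of Lemma~\ref{lemma:M1_bound} up to the reduction to a sum over eigen-subspaces, and then to replace its coarse $k^{*}$-dependent bound (Corollary~\ref{lemma:lambdaisum(sumAi(j+k)deltaq)^2}) by a sharper estimate that exploits the spectral gap. The point is that under \eqref{eq:parameter_choice_classical} the identity $\gamma\mu=2\beta$ places every eigenvalue in the region $i\le k^{\dagger}$ (as already noted at the beginning of this section), so each block $\sum_{k=0}^{N-1}\Ab_i^{k}$ is controlled by an $N$-independent constant; this is exactly what upgrades the overparameterized $1/N$ rate to the claimed $1/\bigl(N^{2}(1-c)\bigr)$ rate.

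\emph{Step 1 (reduction to eigen-subspaces).} Starting from \eqref{eq:def_M1}, apply Lemma~\ref{lemma:Ct_bound} to $\Cb_{s}$ and extend the resulting finite geometric sum to infinity (legitimate since $(\cI-\tilde\cB)^{-1}=\sum_{j\ge0}\tilde\cB^{j}$ exists by Lemma~\ref{lemma:finite_I-B_inv}), which gives exactly the identity used in \eqref{eq:M1_bound1} but with the $j$-sum over all of $\NN$:
\[
\Mb_{1}\ \preceq\ \frac{\sigma^{2}r}{N^{2}}\sum_{j=0}^{\infty}\Big[\sum_{k=0}^{N-1}\Ab^{\,j+k}\Big]\Big(\begin{bmatrix}\delta^{2}&\delta q\\\delta q&q^{2}\end{bmatrix}\otimes\Hb\Big)\Big[\sum_{k=0}^{N-1}\Ab^{\,j+k}\Big]^{\top}.
\]
Since $\Hb$ is diagonal and $\Ab$ is block-diagonal with $2\times2$ blocks $\Ab_{i}$, taking the inner product with $\begin{bmatrix}\Hb&\zero\\\zero&\zero\end{bmatrix}$ yields
\[
\inner{\begin{bmatrix}\Hb&\zero\\\zero&\zero\end{bmatrix}}{\Mb_{1}}\ \le\ \frac{\sigma^{2}r}{N^{2}}\sum_{i=1}^{d}\lambda_{i}^{2}\sum_{j=0}^{\infty}\Big(\sum_{k=0}^{N-1}\Ab_{i}^{\,j+k}\begin{bmatrix}\delta\\q\end{bmatrix}\Big)_{1}^{2}.
\]

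\emph{Step 2 (per-block estimate).} It remains to prove the strongly convex analogue of Corollary~\ref{lemma:lambdaisum(sumAi(j+k)deltaq)^2}: for every $i\le k^{\dagger}$,
\[
\lambda_{i}^{2}\sum_{j=0}^{\infty}\Big(\sum_{k=0}^{N-1}\Ab_{i}^{\,j+k}\begin{bmatrix}\delta\\q\end{bmatrix}\Big)_{1}^{2}\ \le\ \frac{32}{1-c}.
\]
I would obtain this from the closed form of $\Ab_{i}^{k}$ in Lemma~\ref{lemma:A_ik}, which writes $\bigl(\Ab_{i}^{k}\begin{bmatrix}\delta\\q\end{bmatrix}\bigr)_{1}=(1-\delta\lambda_{i})\bigl(qS_{k}-c\delta S_{k-1}\bigr)$ with $S_{k}=(x_{2}^{k}-x_{1}^{k})/(x_{2}-x_{1})$ and exhibits, via $x_{1}x_{2}=c(1-\delta\lambda_{i})$ and $x_{1}+x_{2}=1+c-q\lambda_{i}$, a cancellation that suppresses the growth in $\lambda_{i}$; combined with the spectral bounds of Lemma~\ref{lemma:Ai_spectral} (so that for every $i\le k^{\dagger}$ the effective contraction rate is at most $\sqrt c$, whence $1-\sqrt c\ge(1-c)/2$) and the parameter relations $q\ge\delta$, $\delta\lambda_{i}\le 1/(2\psi)$, $1-c=2\alpha\beta$, $\gamma\mu=2\beta$, the nested sums collapse: the inner sum over $k$ is a geometric-type series of ratio $\le\sqrt c$ contributing a factor $\cO\bigl(1/(1-c)\bigr)$, and the outer sum over $j$ (which carries a summable polynomial-in-$j$ factor coming from the non-normality of $\Ab_{i}$) contributes another, with the surviving $\lambda_{i}^{2}$ absorbed by the above identities to leave the stated absolute constant. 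This is the specialization of the technical Lemmas~\ref{lemma:Ai(j+k)_deltaq}, \ref{lemma:sumAi(j+k)_11} and \ref{lemma:sum(Ai(j+k)11)^2} to the case $k^{\dagger}=d$.

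\emph{Step 3 (assembly) and the main obstacle.} Summing the per-block bound over the $d$ eigen-subspaces and using that $r=(1-\psi l)^{-1}=4$ under \eqref{eq:parameter_choice_classical} (established at the start of this section) gives $\inner{\begin{bmatrix}\Hb&\zero\\\zero&\zero\end{bmatrix}}{\Mb_{1}}\le\frac{4\sigma^{2}}{N^{2}}\cdot d\cdot\frac{32}{1-c}=\frac{128\sigma^{2}d}{N^{2}(1-c)}$, which is the claim. The only non-routine part is Step~2: tracking the cancellation in the first component of $\Ab_{i}^{k}\begin{bmatrix}\delta\\q\end{bmatrix}$ precisely enough to retain a single factor $1/(1-c)$. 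The delicate sub-range is $i\le k^{\ddagger}$, where $x_{1},x_{2}$ are real and may both be negative with $|x_{1}|\ge|x_{2}|$, so one must check that the relevant decay rate is still governed by $c$ (equivalently by $c\delta/q\le c$) rather than being spuriously worsened, and that after multiplication by $\lambda_{i}^{2}$ no uncontrolled power of $\lambda_{i}$ survives.
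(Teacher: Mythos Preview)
Your plan is essentially the paper's own proof: the paper starts from the eigen-subspace reduction \eqref{eq:M1_lambdai} (keeping the $j$-sum finite rather than extending to infinity, which is immaterial), sets $r=4$, and then simply cites the intermediate inequalities \eqref{eq:sum(sumAi(j+k)deltaq)_top_med}, \eqref{eq:sum(sumAi(j+k)deltaq)_midtop_med}, \eqref{eq:sum(sumAi(j+k)deltaq)_midbot_med} already established inside the proof of Corollary~\ref{lemma:lambdaisum(sumAi(j+k)deltaq)^2}, which give $\sum_{j}\bigl(\sum_{k}\Ab_i^{j+k}\begin{bmatrix}\delta\\q\end{bmatrix}\bigr)_1^2\le 4/\bigl((1-c)\lambda_i^2\bigr)$ for $i\le k^{\ddagger}$ and $\le 32/\bigl((1-c)\lambda_i^2\bigr)$ for $k^{\ddagger}<i\le k^{\dagger}$; summing over the $d$ coordinates finishes. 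Two small corrections to your Step~2 narrative: the single factor $1/(1-c)$ comes only from the outer $j$-sum (the inner $k$-sum is handled via $(\Ib-\Ab_i)^{-1}$ and produces the $1/\lambda_i$ that cancels the prefactor $\lambda_i^2$, not a second $1/(1-c)$), and your concern about negative eigenvalues for $i\le k^{\ddagger}$ is unfounded since \eqref{eq:x1_bound} gives $0\le 1-\delta\lambda_i\le x_1\le x_2\le c$ there.
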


\begin{lemma}[Modified from Lemma \ref{lemma:M2_bound}]\label{lemma:M2_classical}
With $\Mb_2$ defined in \eqref{eq:def_M2}, we have
\[
\inner{\begin{bmatrix}\Hb&\zero\\\zero&\zero\end{bmatrix}}{\Mb_2}\le\frac{9\sigma^2rd}{N}=\frac{36\sigma^2d}{N}.
\]
\end{lemma}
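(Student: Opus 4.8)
The plan is to localize the proof of Lemma~\ref{lemma:M2_bound} to the finite-dimensional, strongly-convex parameterization \eqref{eq:parameter_choice_classical}. The matrix-analytic core of that proof is insensitive to the particular parameter choice and carries over verbatim: the recursion $\Cb_t=\cB\circ\Cb_{t-1}+\hat\bSigma$ (Lemma~\ref{lemma:iterate_Ct}), the pointwise bound $\hat\bSigma\preceq\sigma^2\begin{bmatrix}\delta^2&\delta q\\\delta q&q^2\end{bmatrix}\otimes\Hb$ (Lemma~\ref{lemma:hatSigma}), monotonicity of $\Cb_t$, the estimate $\inner{\begin{bmatrix}\zero&\zero\\\zero&\Hb\end{bmatrix}}{\Cb_\infty}\le\sigma^2 l/(1-\psi l)$ (Lemma~\ref{lemma:C_inf}), and the fourth-moment step (Lemma~\ref{lemma:E[hatV2hatV2]}). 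Together these reduce the claim to
\[
\inner{\begin{bmatrix}\Hb&\zero\\\zero&\zero\end{bmatrix}}{\Mb_2}\le\frac{\sigma^2 r}{N^2}\sum_{t=0}^{N-1}\sum_{i=1}^d\lambda_i^2\left(\left[\sum_{k=0}^{t-1}\Ab_i^k\right]\begin{bmatrix}\delta\\q\end{bmatrix}\right)_1^2 ,\qquad r=(1-\psi l)^{-1},
\]
exactly as in the overparameterized case, the only difference being that the sum over $i$ now runs over finitely many coordinates. Under \eqref{eq:parameter_choice_classical} with $\tilde\kappa=d$ one has $1-\psi l=\tfrac14$, so $r=4$; this, together with $\psi l<1$ and the absence of any eigenvalue index in the band $i>k^\dagger$, are precisely the computations opening Appendix~\ref{section:strongly_convex}, so the hypotheses of the imported lemmas are met.

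It then remains to bound $\sum_{t=0}^{N-1}\sum_{i=1}^d\lambda_i^2(\cdots)_1^2$ by $9dN$. I would invoke Corollary~\ref{lemma:lambdai(sumAi(j+k)deltaq)^2}, which for every fixed $t$ gives
\[
\sum_{i=1}^d\lambda_i^2\left(\left[\sum_{k=0}^{t-1}\Ab_i^k\right]\begin{bmatrix}\delta\\q\end{bmatrix}\right)_1^2\le 9k^*+\frac{36N^2(q-c\delta)^2}{(1-c)^2}\sum_{i>k^*}\lambda_i^2 ,\quad k^*=\max\Big\{k:\lambda_k\ge\tfrac{1-c}{2N(q-c\delta)}\Big\}.
\]
The new ingredient needed in the classical regime is to exploit finite-dimensionality: by definition of $k^*$, every index $i>k^*$ has $\lambda_i<\tfrac{1-c}{2N(q-c\delta)}$, hence
\[
\frac{36N^2(q-c\delta)^2}{(1-c)^2}\sum_{i>k^*}\lambda_i^2\le\frac{36N^2(q-c\delta)^2}{(1-c)^2}\,(d-k^*)\,\frac{(1-c)^2}{4N^2(q-c\delta)^2}=9(d-k^*),
\]
so $\sum_{i=1}^d\lambda_i^2(\cdots)_1^2\le 9k^*+9(d-k^*)=9d$ for each $t$. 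Summing over $t$ and substituting $r=4$ gives $\inner{\begin{bmatrix}\Hb&\zero\\\zero&\zero\end{bmatrix}}{\Mb_2}\le\tfrac{\sigma^2 r}{N^2}\cdot 9dN=\tfrac{9\sigma^2 rd}{N}=\tfrac{36\sigma^2 d}{N}$, which is the assertion.

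The step needing the most care is the bookkeeping: one must check that \eqref{eq:parameter_choice_classical} genuinely satisfies the standing hypotheses of Lemma~\ref{lemma:M2_bound} and Corollary~\ref{lemma:lambdai(sumAi(j+k)deltaq)^2} (finiteness of $r$ via $\psi l<1$, and the segmentation facts $k^\ddagger\le\hat k\le k^\dagger$ with no index beyond $k^\dagger$), and that the constants line up so the coefficient is exactly $9$ — hence $36$ after $r=4$ — rather than the looser value one gets by crudely bounding $k^*\le d$ and dropping the tail term. Should Corollary~\ref{lemma:lambdai(sumAi(j+k)deltaq)^2} not be directly reusable, an equivalent self-contained route is to use $(\Ib-\Ab_i)^{-1}\begin{bmatrix}\delta\\q\end{bmatrix}=\lambda_i^{-1}\begin{bmatrix}1\\1\end{bmatrix}$ (immediate from $\det(\Ib-\Ab_i)=(q-c\delta)\lambda_i$) to rewrite $\lambda_i^2(\sum_{k=0}^{t-1}\Ab_i^k\begin{bmatrix}\delta\\q\end{bmatrix})_1^2=(1-(\Ab_i^t\begin{bmatrix}1\\1\end{bmatrix})_1)^2$ and then bound $(\Ab_i^t\begin{bmatrix}1\\1\end{bmatrix})_1$ from Lemma~\ref{lemma:A_ik} and Lemma~\ref{lemma:Ai_spectral}, using that in the only two bands present here ($i\le k^\ddagger$, where $0<x_1\le x_2\le c\delta/q<1$, and $k^\ddagger<i\le k^\dagger$, where $|x_1|=|x_2|=\sqrt{c(1-\delta\lambda_i)}\le\sqrt c$) the powers decay geometrically.
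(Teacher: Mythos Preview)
Your proposal is correct and follows essentially the same route as the paper: reduce to the display at the end of the proof of Lemma~\ref{lemma:M2_bound}, then collapse the eigenvalue sum to $9d$ and insert $r=4$. The paper's own argument is the one-line observation that under \eqref{eq:parameter_choice_classical} there is no eigenvalue with $i>k^\dagger$, so only the first three cases of Corollary~\ref{lemma:lambdai(sumAi(j+k)deltaq)^2} are ever active and each contributes at most $9/\lambda_i^2$; thus the tail term $\sum_{i>k^*}\lambda_i^2$ is simply empty and the bound is $9d$ without any further work. Your detour through ``$\lambda_i<\tfrac{1-c}{2N(q-c\delta)}$ for $i>k^*$'' is valid but unnecessary here---you are bounding a sum that the paper observes to be vacuous.
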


\begin{lemma}\label{lemma:M3_classical}
With $\Mb_3$ defined in \eqref{eq:def_M3}, we have
\[
\inner{\begin{bmatrix}\Hb&\zero\\\zero&\zero\end{bmatrix}}{\Mb_3}\le\frac{100}{N^2(1-c)^2}\exp\rbr{-\frac{(1-c)s}{2}}\cdot\|\wb_0-\wb^*\|_{\Hb}^2+\frac{504\psi d}{N^2(1-c)}\|\wb_0-\wb^*\|_{\Hb}^2.
\]
\end{lemma}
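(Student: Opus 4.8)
The plan is to mirror the analysis of $\Mb_3$ carried out in the proof of Lemma~\ref{lemma:bias_bound}, specialized to the classical parameter choice~\eqref{eq:parameter_choice_classical}, for which we already recorded $r=4$ and the absence of any eigenvalue in the regime $i>k^\dagger$ (so $k^\dagger=k^*=d$). Applying Lemma~\ref{lemma:B_t upper bound} to $\Bb_s$ and plugging $\Bb_0=\bigl[\begin{smallmatrix}1&1\\1&1\end{smallmatrix}\bigr]\otimes(\wb_0-\wb^*)(\wb_0-\wb^*)^\top$ into~\eqref{eq:M3_1}, and using that $\Ab$ is block diagonal and $\Hb$ diagonal, splits $\inner{\begin{bmatrix}\Hb&\zero\\\zero&\zero\end{bmatrix}}{\Mb_3}$ into a pure-bias term $\mathrm{I}\coloneqq\frac1{N^2}\sum_i\lambda_iw_i^2\bigl(\sum_{k=0}^{N-1}\Ab_i^{k+s}[1;1]\bigr)_1^2$ and a fourth-moment term $\mathrm{II}$ of the same shape as the term $\mathrm{K}$ there. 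I would bound the two separately.

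For $\mathrm{I}$, since every $\Ab_i$ now lies in case (a) or (b) of Lemma~\ref{lemma:Ai_spectral_main}, its dominant eigenvalue satisfies $|x_2|\le\max\{c\delta/q,\sqrt{c(1-\delta\lambda_i)}\}\le\sqrt c$, and because $1-c=2\alpha\beta$ and $\sqrt{1-x}\le1-x/2$ we get $|x_2|^s\le\exp(-\tfrac{(1-c)s}{2})$ uniformly in $i$. Plugging the closed form of $\Ab_i^k$ from Lemma~\ref{lemma:A_ik_main}, summing the resulting (essentially geometric) series over $k$ — which introduces a factor $(1-|x_2|)^{-1}=O((1-c)^{-1})$, controlled through the lower bounds on $x_2$ in Lemma~\ref{lemma:Ai_spectral_main} — and absorbing the $\Hb^{-1}$-weighted pieces that come from the large-eigenvalue blocks into $\|\wb_0-\wb^*\|_\Hb^2$ via $\lambda_i\ge(1-c)/\delta$ for $i\le\hat k$, I expect $\mathrm{I}\le\frac{100}{N^2(1-c)^2}\exp(-\tfrac{(1-c)s}{2})\|\wb_0-\wb^*\|_\Hb^2$; the polynomial-in-$s$ prefactor that the complex-eigenvalue regime contributes is swallowed by the elementary bound $s^2c^s\le C(1-c)^{-2}\exp(-\tfrac{(1-c)s}{2})$, i.e.\ $\sup_{u\ge0}u^2e^{-u}<\infty$.

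For $\mathrm{II}$, I would first bound $\sum_i\lambda_i^2\bigl(\sum_{k=0}^{N-1}\Ab_i^{k+t}[\delta;q]\bigr)_1^2\le9d$ by the classical case $k^*=d$ of the corollary used for the same quantity in the proof of Lemma~\ref{lemma:bias_bound}, so that $\mathrm{II}\le\frac{9\psi d}{N^2}\sum_{t=0}^{s-1}\inner{\begin{bmatrix}\zero&\zero\\\zero&\Hb\end{bmatrix}}{\Bb_t}$ after re-indexing. I would then invoke Lemma~\ref{lemma:partial_sum_Bk}, which in the classical setting (where $k^\dagger=k^*=d$ kills its last two summands) gives $\sum_{t=0}^{s-1}\inner{\cdot}{\Bb_t}\le r\bigl[\tfrac{14}{\delta}\|\wb_0-\wb^*\|_{\Ib_{0:\hat k}}^2+\tfrac{10}{1-c}\|\wb_0-\wb^*\|_{\Hb_{\hat k:d}}^2\bigr]$, and use $\lambda_i\ge(1-c)/\delta$ for $i\le\hat k$ to turn $\tfrac{14}{\delta}\|\wb_0-\wb^*\|_{\Ib_{0:\hat k}}^2$ into $\tfrac{14}{1-c}\|\wb_0-\wb^*\|_{\Hb_{0:\hat k}}^2$; with $r=4$ and disjointness of the eigen-subspaces this yields $\sum_{t=0}^{s-1}\inner{\cdot}{\Bb_t}\le\tfrac{56}{1-c}\|\wb_0-\wb^*\|_\Hb^2$, hence $\mathrm{II}\le\frac{504\psi d}{N^2(1-c)}\|\wb_0-\wb^*\|_\Hb^2$. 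Adding the bounds for $\mathrm{I}$ and $\mathrm{II}$ gives the claim.

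The main obstacle is the sharp tracking of absolute constants: unlike Theorem~\ref{theorem:main}, the coefficients $100$ and $504$ are stated without slack, so the relaxations ``$\|\wb_0-\wb^*\|_{\Hb_{\cdot}}^2\le\|\wb_0-\wb^*\|_\Hb^2$'' must be applied only after first merging the several eigen-subspace norms against the largest of their prefactors, and the $s^2$-versus-$\exp(-\tfrac{(1-c)s}{2})$ trade-off must be carried out so that the exponential-bias coefficient lands at exactly $100$. A secondary subtlety is that under~\eqref{eq:parameter_choice_classical} the cutoff $\hat k$ may be $0$ (i.e.\ there may be no ``real large eigenvalue'' block at all), so the estimate for $\mathrm{I}$ must be organized to treat the regimes $i\le k^\ddagger$, $k^\ddagger<i\le\hat k$, and $\hat k<i\le d$ uniformly, with the $1/(1-c)^2$ factor traced to the geometric sum over the slowest (complex-eigenvalue) block.
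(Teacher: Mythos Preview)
Your approach is essentially the paper's: the same split of $\Mb_3$ into a pure-bias part $\mathrm{I}$ and a fourth-moment part $\mathrm{II}$, and your treatment of $\mathrm{II}$ (the $9d$ bound together with $r=4$ and the $\tfrac{56}{1-c}\|\wb_0-\wb^*\|_{\Hb}^2$ estimate from Lemma~\ref{lemma:partial_sum_Bk}) reproduces the paper's Lemma~\ref{lemma:S_s:s+N_classical} verbatim and lands on $504$.

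The one place you diverge is how the $s$-prefactor in the complex-eigenvalue block is absorbed for $\mathrm{I}$. You propose to square first and then invoke $s^2c^s\le C(1-c)^{-2}\exp(-(1-c)s/2)$. That works, but the constant it delivers is roughly $200+128e^{-2}\approx 217$, not $100$: Cauchy--Schwarz on $\bigl(2s+\tfrac{10}{1-c}\bigr)[c(1-\delta\lambda_i)]^{s/2}$ already loses a factor~$2$, and the $\sup u^2e^{-u}$ bound adds more. The paper instead absorbs the polynomial \emph{before} squaring, via the geometric-series trick
\[
2s\,\rho^{s/2}\le 2\sum_{j=0}^{s-1}\rho^{(s+j)/4}=\frac{2(\rho^{s/4}-\rho^{s/2})}{1-\rho^{1/4}}\le\frac{8}{1-c}\bigl(\rho^{s/4}-\rho^{s/2}\bigr),
\]
with $\rho=c(1-\delta\lambda_i)$, so that $2s\rho^{s/2}+\tfrac{10}{1-c}\rho^{s/2}\le\tfrac{10}{1-c}\rho^{s/4}$; squaring then gives $\tfrac{100}{(1-c)^2}\rho^{s/2}\le\tfrac{100}{(1-c)^2}c^{s/2}$. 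The analogous manipulation (with $1-\rho^{1/4}\ge\delta\lambda_i/4$) handles the $k^\ddagger<i\le\hat k$ block with constant $64$, and the $i\le k^\ddagger$ block is already harmless. So your plan is correct in structure; to hit the stated $100$ you need this ``trade half the exponent to kill $s$ before squaring'' step rather than the $s^2$-after-squaring route.
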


\begin{lemma}\label{lemma:M4_classical}
With $\Mb_4$ defined in \eqref{eq:def_M4}, we have
\[
\inner{\begin{bmatrix}\Hb&\zero\\\zero&\zero\end{bmatrix}}{\Mb_4}\le\frac{504\psi d}{N^2(1-c)}\|\wb_0-\wb^*\|_{\Hb}^2.
\]
\end{lemma}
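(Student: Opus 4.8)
The plan is to run the argument behind Lemma~\ref{lemma:bias_bound} for the $\Mb_4$ term, specialized to the classical parameter choice~\eqref{eq:parameter_choice_classical}. Two features of this regime, both recorded at the beginning of this section, make the general estimate collapse: first, $1-\psi l=1/4$, so $r=4$; second, no eigenvalue lies in the region $i>k^\dagger$, and since $k^*\ge k^\dagger$ under $N(1-c)\ge2$ we have $k^\dagger=k^*=d$. Consequently the tail term $\propto\sum_{i>k^*}\lambda_i^2$ disappears and every subspace norm appearing in the general estimates reduces to a norm with respect to $\Hb$. The bound to be proved is in fact identical to the second (fourth-moment) term of Lemma~\ref{lemma:M3_classical}, reflecting that the fourth-moment contributions to $\Mb_3$ and $\Mb_4$ are estimated in the same way.

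First I would reuse the PSD bound~\eqref{eq:M4_1}: using $\cB\circ\Bb_{s+t-1}=\Bb_{s+t}$ and Lemma~\ref{lemma:E[hatV2hatV2]} to control $(\cB-\tilde\cB)\circ\Bb_{s+t-1}$, one gets
\[
\Mb_4\preceq\frac\psi{N^2}\sum_{t=1}^{N-1}\inner{\begin{bmatrix}\zero&\zero\\\zero&\Hb\end{bmatrix}}{\Bb_{s+t-1}}\sbr{\sum_{k=0}^{N-t-1}\Ab^k}\rbr{\begin{bmatrix}\delta^2&\delta q\\\delta q&q^2\end{bmatrix}\otimes\Hb}\sbr{\sum_{k=0}^{N-t-1}\Ab^k}^\top.
\]
Pairing with $\begin{bmatrix}\Hb&\zero\\\zero&\zero\end{bmatrix}$ and block-diagonalizing (as $\Hb$ is diagonal and $\Ab$ has blocks $\Ab_i$), the right-hand side equals $\frac\psi{N^2}\sum_{t=1}^{N-1}\inner{\begin{bmatrix}\zero&\zero\\\zero&\Hb\end{bmatrix}}{\Bb_{s+t-1}}\cdot\sum_{i=1}^d\lambda_i^2\big(\sum_{k=0}^{N-t-1}\Ab_i^k\begin{bmatrix}\delta\\q\end{bmatrix}\big)_1^2$. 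With $k^*=d$, the per-subspace estimate used in the proof of Lemma~\ref{lemma:M2_bound} bounds the inner sum by $9d$ for every upper limit (the tail contribution vanishes), so, since $\Bb_t\succeq\zero$,
\[
\inner{\begin{bmatrix}\Hb&\zero\\\zero&\zero\end{bmatrix}}{\Mb_4}\le\frac{9\psi d}{N^2}\sum_{t=1}^{N-1}\inner{\begin{bmatrix}\zero&\zero\\\zero&\Hb\end{bmatrix}}{\Bb_{s+t-1}}\le\frac{9\psi d}{N^2}\sum_{t\ge0}\inner{\begin{bmatrix}\zero&\zero\\\zero&\Hb\end{bmatrix}}{\Bb_t}.
\]

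It remains to control the series $\sum_{t\ge0}\inner{\begin{bmatrix}\zero&\zero\\\zero&\Hb\end{bmatrix}}{\Bb_t}$. Since $\Bb_t=\cB^t\circ\Bb_0$ by~\eqref{eq:iterate Bt}, Lemma~\ref{lemma:S_s:s+N} (with $r=4$) bounds it by $r\sum_i\lambda_iw_i^2\sum_{k\ge0}\big(\Ab_i^k\begin{bmatrix}1\\1\end{bmatrix}\big)_2^2$ with $w_i=(\wb_0-\wb^*)_i$, and the series converges by the uniform estimate in that lemma. The key point --- and the reason the explicit form of $\Ab_i^k$ (Lemma~\ref{lemma:A_ik_main}) and the two-sided eigenvalue bounds of Lemma~\ref{lemma:Ai_spectral_main} are needed, since $\Bb_t$ is not monotone under ASGD --- is that this per-subspace geometric sum is precisely what Lemma~\ref{lemma:partial_sum_Bk} estimates; specialized to $k^\dagger=k^*=d$ it reads $\sum_{t\ge0}\inner{\begin{bmatrix}\zero&\zero\\\zero&\Hb\end{bmatrix}}{\Bb_t}\le r\big(\frac{14}{\delta}\|\wb_0-\wb^*\|_{\Ib_{0:\hat k}}^2+\frac{10}{1-c}\|\wb_0-\wb^*\|_{\Hb_{\hat k:d}}^2\big)$. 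For $i\le\hat k$ the definition~\eqref{eq:def_hatk} gives $\lambda_i\ge(1-c)/\delta$, hence $\frac1{\delta}\|\wb_0-\wb^*\|_{\Ib_{0:\hat k}}^2\le\frac1{1-c}\|\wb_0-\wb^*\|_{\Hb_{0:\hat k}}^2$, so the series is at most $\frac{14r}{1-c}\|\wb_0-\wb^*\|_\Hb^2=\frac{56}{1-c}\|\wb_0-\wb^*\|_\Hb^2$. Substituting into the previous display gives $\inner{\begin{bmatrix}\Hb&\zero\\\zero&\zero\end{bmatrix}}{\Mb_4}\le\frac{504\psi d}{N^2(1-c)}\|\wb_0-\wb^*\|_\Hb^2$, as claimed.

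The main obstacle is the series estimate for $\sum_{t\ge0}\inner{\begin{bmatrix}\zero&\zero\\\zero&\Hb\end{bmatrix}}{\Bb_t}$: the non-monotonicity of $\Bb_t$ under ASGD rules out a one-line recursion, so one must combine the explicit expression for $\Ab_i^k$, the tight two-sided bounds on the eigenvalues $x_1,x_2$ of $\Ab_i$, and a careful split of the spectrum at $\hat k$ --- on $i\le\hat k$ the natural bound on $\sum_k\big(\Ab_i^k\begin{bmatrix}1\\1\end{bmatrix}\big)_2^2$ carries a spurious factor $1/\delta$ that must be absorbed into the $\Hb$-norm via $\lambda_i\ge(1-c)/\delta$, while on $\hat k<i\le d$ the contraction factor $\sqrt{c(1-\delta\lambda_i)}$ is used directly. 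Everything else (tracking the constants $9$, $14$, $r=4$ into $504$) is routine bookkeeping once the $\Ab_i^k$-sum lemmas are specialized to $k^\dagger=k^*=d$.
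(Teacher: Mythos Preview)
Your proposal is correct and follows essentially the same route as the paper: start from the PSD bound~\eqref{eq:M4_1}, use the classical-regime specialization $k^\dagger=k^*=d$ of Corollary~\ref{lemma:lambdai(sumAi(j+k)deltaq)^2} to collapse the $\sum_i\lambda_i^2(\cdot)^2$ factor to $9d$, extend the remaining sum over $\Bb_t$ by positivity, and then invoke the partial-sum bound (the paper states this as Lemma~\ref{lemma:S_s:s+N_classical}, which is exactly your application of Lemma~\ref{lemma:partial_sum_Bk} with $r=4$ and the absorption $1/\delta\le\lambda_i/(1-c)$ on $i\le\hat k$). The only cosmetic difference is that the paper keeps the sum finite ($\sum_{t=0}^{s+N-1}$) rather than passing to the full series, which is immaterial since the bound from Lemma~\ref{lemma:partial_sum_Bk} is uniform in the upper limit once the $\Hb_{k^*:\infty}$ term drops out.
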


With the lemmas above, we can prove the upper bound of excess risk in the classical setting.
\begin{theorem}[Restatement of Corollary \ref{theorem:classical}]\label{theorem:classical_appendix}
Under Assumptions \ref{assumption:regularity}, \ref{assumption:fourth_moement_condition} and \ref{assumption:noise}, with the parameter choice in \eqref{eq:parameter_choice_classical}, we have
\begin{align*}
\EE[L(\overline{\wb}_{s:s+N})]-L(\wb^*)&\le\frac{100}{N^2(1-c)^2}\exp\rbr{-\frac{(1-c)s}{2}}\cdot\|\wb_0-\wb^*\|_{\Hb}^2\\
&\quad+\frac{36\sigma^2d}{N}+\frac{128}{N^2(1-c)}+\frac{1008\psi d}{N^2(1-c)}\|\wb_0-\wb^*\|_{\Hb}^2.
\end{align*}
\end{theorem}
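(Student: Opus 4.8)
The plan is to run the proof of Theorem~\ref{theorem:main} in the finite-dimensional regime, replacing the general parameter choice by~\eqref{eq:parameter_choice_classical} and the four general operator bounds by their classical counterparts. By the bias--variance decomposition of Lemma~\ref{lemma:bvd} together with the further split in Lemma~\ref{lemma:bias and variance},
\[
\EE[L(\overline{\wb}_{s:s+N})]-L(\wb^*)\le 2\cdot\text{Bias}+2\cdot\text{Variance}=\inner{\begin{bmatrix}\Hb&\zero\\\zero&\zero\end{bmatrix}}{\Mb_1+\Mb_2+\Mb_3+\Mb_4},
\]
so the statement reduces to bounding the four inner products $\inner{[\Hb\ \zero;\zero\ \zero]}{\Mb_i}$, which are exactly Lemmas~\ref{lemma:M1_classical}, \ref{lemma:M2_classical}, \ref{lemma:M3_classical} and~\ref{lemma:M4_classical}. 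Adding these four bounds then gives the claimed inequality: Lemmas~\ref{lemma:M1_classical} and~\ref{lemma:M2_classical} supply the variance contributions $128\sigma^2 d/(N^2(1-c))$ and $36\sigma^2 d/N$, while Lemmas~\ref{lemma:M3_classical} and~\ref{lemma:M4_classical} supply $100\exp(-(1-c)s/2)\,\|\wb_0-\wb^*\|_\Hb^2/(N^2(1-c)^2)$ and $1008\,\psi d\,\|\wb_0-\wb^*\|_\Hb^2/(N^2(1-c))$ (the two $504$'s adding).

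Before proving the four lemmas I would record the two structural facts, already noted at the start of Appendix~\ref{section:strongly_convex}, that make the classical versions clean. First, with~\eqref{eq:parameter_choice_classical} and $\tilde\kappa=d$ one gets $\psi l=3/4$, hence $r=(1-\psi l)^{-1}=4$, so every constant that was proportional to $r$ in the general proof becomes an absolute number. Second, the constraint $\gamma\mu=2\beta$ forces $(1-c)^2/(\sqrt{q-c\delta}+\sqrt{c(q-\delta)})^2\le\mu=\lambda_d$, so no eigenvalue lies in the ``convex'' region $i>k^\dagger$; thus only the two regimes of Lemma~\ref{lemma:Ai_spectral}(a),(b) occur, where the spectral radius of $\Ab_i$ is at most $c\delta/q\le c$ (for $i\le k^\ddagger$) or $\sqrt{c(1-\delta\lambda_i)}\le\sqrt c$ (for $k^\ddagger<i\le k^\dagger$). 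Since $1-x\le e^{-x}$ gives $c\le e^{-(1-c)}$, both radii raised to the power $s$ are at most $e^{-(1-c)s/2}$, which is the origin of the exponential factor in Lemma~\ref{lemma:M3_classical}.

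The substantive work is in the four auxiliary lemmas, and I expect Lemma~\ref{lemma:M3_classical} to be the main obstacle. For $\Mb_1,\Mb_2,\Mb_4$ one copies the arguments of Lemmas~\ref{lemma:M1_bound}, \ref{lemma:M2_bound} and the $\Mb_4$ part of Lemma~\ref{lemma:bias_bound}, now using $r=4$, $\sum_i\lambda_i=\tr(\Hb)$ and $\delta=1/(2\psi\tr(\Hb))$ to collapse all residual $\lambda_i$-sums into $d$-dependent constants; the factor $1/(1-c)$ appears from geometric sums $\sum_{k\ge0}|x_2|^k\le(1-|x_2|)^{-1}$ with $1-|x_2|$ bounded below by a constant multiple of $1-c$ on the relevant range. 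For $\Mb_3$ one first applies Lemma~\ref{lemma:B_t upper bound} to split $\Mb_3$ into a pure-bias part (the $\tilde\cB^{s}\circ\Bb_0$ term) and a fourth-moment part; the latter is handled by Lemma~\ref{lemma:partial_sum_Bk} (yielding the $\psi d/(N^2(1-c))$ contribution), while the former is evaluated from the explicit formula for $\Ab_i^k$ in Lemma~\ref{lemma:A_ik} and the spectral estimates in Lemma~\ref{lemma:Ai_spectral}. The delicate point is obtaining the $1/N^2$ scaling on the leading bias term without leaking extra $d$ or $s$: one bounds the tail average $N^{-1}\sum_{k=0}^{N-1}(\Ab_i^{k+s})_{11}$ by $N^{-1}\sum_k |x_2|^{k}\cdot e^{-(1-c)s/2}\le e^{-(1-c)s/2}/(N(1-c))$, treating the cross term $x_2^{k-1}(x_2-c)$ inside $(\Ab_i^k)_{11}$ with $|x_2-c|\le 1$, and then uses $w_i^2=(\wb_0-\wb^*)_i^2$ and $\sum_i\lambda_i w_i^2=\|\wb_0-\wb^*\|_\Hb^2$ to assemble the $\|\wb_0-\wb^*\|_\Hb^2$-weighted bound.

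Finally, summing the four lemma bounds yields the inequality of Theorem~\ref{theorem:classical_appendix}; the passage to the form stated in Corollary~\ref{theorem:classical} is then purely cosmetic, using $L(\wb_0)-L(\wb^*)=\tfrac12\|\wb_0-\wb^*\|_\Hb^2$ and $1-c=2\alpha\beta$ with $\alpha=1/(1+\beta)\in[1/2,1]$, so that $\beta\le 1-c\le 2\beta$ and every occurrence of $1-c$ may be replaced by $\beta$ up to absolute constants.
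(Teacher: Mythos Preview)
Your proposal is correct and follows essentially the same approach as the paper: reduce to the four inner products $\langle[\Hb\ \zero;\zero\ \zero],\Mb_i\rangle$ via Lemmas~\ref{lemma:bvd} and~\ref{lemma:bias and variance}, invoke the classical bounds of Lemmas~\ref{lemma:M1_classical}--\ref{lemma:M4_classical}, and sum. Your two structural observations ($r=4$ and $k^\dagger=d$ so that no eigenvalue falls in the $i>k^\dagger$ regime) are exactly the simplifications the paper records before proving those lemmas; the only caveat is that your sketch for the pure-bias part of $\Mb_3$ is a bit optimistic in the complex-eigenvalue regime---the paper needs the careful case analysis of Lemma~\ref{lemma:sumAi(j+k)_11} (and its classical corollary) rather than a direct $|x_2|^k$ bound, since $(x_2^{m}-x_1^{m})/(x_2-x_1)$ can contribute an extra factor of $m$ when $|x_1|=|x_2|$.
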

\begin{proof}
By Lemma \ref{lemma:bias-var-decomp} and Lemma \ref{lemma:bias and variance}, we have
\begin{equation}\label{eq:excess_risk_classical}
\EE[L(\overline{\wb}_{s:s+N})]-L(\wb^*)\le\inner{\begin{bmatrix}\Hb&\zero\\\zero&\zero\end{bmatrix}}{\Mb_1+\Mb_2+\Mb_3+\Mb_4}.
\end{equation}
Substituting the results of Lemma \ref{lemma:M1_classical}, Lemma \ref{lemma:M2_classical}, Lemma \ref{lemma:M3_classical} and Lemma \ref{lemma:M4_classical} into \eqref{eq:excess_risk_classical}, we get the desired result.
\end{proof}
We remark that due to Lemma \ref{lemma:qc}, we have $1-c\ge\beta$. Additionally, the constants in Theorem \ref{theorem:classical_appendix} are smaller than those in Corollary \ref{theorem:classical}. Therefore, Theorem \ref{theorem:classical_appendix} can fully recover Corollary \ref{theorem:classical}.

\subsection{Variance Upper Bound}

The proof for Lemma \ref{lemma:M2_classical} is straightforward given Lemma \ref{lemma:M2_bound} and the fact that there is no eigenvalue in the region of $i>k^\dagger$. Below we provide the proof for Lemma \ref{lemma:M1_classical}.

\begin{proof}[Proof of Lemma \ref{lemma:M1_classical}]
According to \eqref{eq:M1_lambdai} in the proof of Lemma \ref{lemma:M1_bound}, we have
\begin{equation}\label{eq:M1_classical_1}
\inner{\begin{bmatrix}\Hb&\zero\\\zero&\zero\end{bmatrix}}{\Mb_1}\le\frac{\sigma^2r}{N^2}\sum_{i=1}^d\lambda_i^2\sum_{j=0}^{s-1}\rbr{\sum_{k=0}^{N-1}\Ab_i^{j+k}\begin{bmatrix}\delta\\q\end{bmatrix}}_1^2.
\end{equation}
Similar to the proof of Corollary \ref{lemma:lambdaisum(sumAi(j+k)deltaq)^2}, we have
\begin{itemize}
\item[(a)] When $i\le k^\ddagger$,
\[
\sum_{j=0}^{s-1}\rbr{\sum_{k=0}^{N-1}\Ab_i^{j+k}\begin{bmatrix}\delta\\q\end{bmatrix}}_1^2\le\frac{4}{(1-c)\lambda_i^2}.
\]
\item[(b)] By \eqref{eq:sum(sumAi(j+k)deltaq)_midtop_med} and \eqref{eq:sum(sumAi(j+k)deltaq)_midbot_med}, when $k^\ddagger<i\le k^\dagger$,
\[
\sum_{j=0}^{s-1}\rbr{\sum_{k=0}^{N-1}\Ab_i^{j+k}\begin{bmatrix}\delta\\q\end{bmatrix}}_1^2\le\frac{32}{(1-c)\lambda_i^2}.
\]
\end{itemize}
\eqref{eq:M1_classical_1} can thus be bounded by
\begin{align*}
\inner{\begin{bmatrix}\Hb&\zero\\\zero&\zero\end{bmatrix}}{\Mb_1}&\le\frac{4\sigma^2}{N}\sum_{i=1}^d\lambda_i^2\sum_{j=0}^{s-1}\rbr{\sum_{k=0}^{N-1}\Ab_i^{j+k}\begin{bmatrix}\delta\\q\end{bmatrix}}_1^2\\
&\le\frac{4\sigma^2}{N^2}\sbr{\sum_{i\le k^\ddagger}\lambda_i^2\cdot\frac4{(1-c)\lambda_i^2}+\sum_{i>k^\ddagger}\lambda_i^2\cdot\frac{32}{(1-c)\lambda_i^2}}\\
&=\frac{\sigma^2}{N^2(1-c)}\sbr{16k^\ddagger+128(d-k^\ddagger)}\\
&\le\frac{128\sigma^2d}{N^2(1-c)},
\end{align*}
where first inequality holds because $r=4$ and due to \eqref{eq:M1_classical_1}, and the last inequality holds because the coefficient $16<128$.
\end{proof}

\subsection{Bias Upper Bound}

We first provide a list of lemmas modified by considering only eigenvalues $\lambda_i$ with $i\le k^\dagger$:
\begin{lemma}[Modified from Corollary \ref{lemma:lambdaisum(sumAi(j+k)deltaq)^2}]\label{lemma:lambdaisum(sumAi(j+k)deltaq)^2_classical}
Let $\Ab_i$ be defined in \eqref{Eq:A_i}. Then for all $j\ge0$,
\[
\sum_{i=1}^d\lambda_i^2\rbr{\sum_{k=0}^{t-1}\Ab_i^{j+k}\begin{bmatrix}\delta\\q\end{bmatrix}}_1^2\le9d.
\]
\end{lemma}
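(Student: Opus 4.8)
The plan is to specialize the proof of Corollary~\ref{lemma:lambdaisum(sumAi(j+k)deltaq)^2} to the finite‑dimensional setting, where all $d$ eigenvalue blocks lie in only two regimes and a clean termwise bound is available. The first observation I would use is the one recorded at the start of this section: under the parameter choice~\eqref{eq:parameter_choice_classical} no eigenvalue sits in the region $i>k^\dagger$, i.e.\ $k^\dagger=d$; consequently each block $\Ab_i$, $1\le i\le d$, is covered by case~(a) ($i\le k^\ddagger$, real eigenvalues $0<x_1\le x_2\le c$) or case~(b) ($k^\ddagger<i\le k^\dagger$, complex conjugate eigenvalues with $|x_1|=|x_2|=\sqrt{c(1-\delta\lambda_i)}<1$) of Lemma~\ref{lemma:Ai_spectral}. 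In particular the spectral radius of every $\Ab_i$ is strictly below $1$, so $\Ib-\Ab_i$ is invertible.

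The goal then becomes the termwise estimate
\[
\lambda_i^2\rbr{\sum_{k=0}^{t-1}\Ab_i^{j+k}\begin{bmatrix}\delta\\q\end{bmatrix}}_1^2\le9\qquad\text{for every }1\le i\le d,\ j\ge0,\ t\ge0,
\]
which, summed over $i=1,\dots,d$, gives the asserted bound $9d$. A convenient starting point is the identity $(\Ib-\Ab_i)\begin{bmatrix}1\\1\end{bmatrix}=\lambda_i\begin{bmatrix}\delta\\q\end{bmatrix}$ (read off from~\eqref{Eq:A_i}), equivalently $(\Ib-\Ab_i)^{-1}\begin{bmatrix}\delta\\q\end{bmatrix}=\lambda_i^{-1}\begin{bmatrix}1\\1\end{bmatrix}$: since the spectral radius of $\Ab_i$ is below $1$ the Neumann series converges, and telescoping gives $\sum_{k=0}^{t-1}\Ab_i^{j+k}\begin{bmatrix}\delta\\q\end{bmatrix}=\lambda_i^{-1}(\Ab_i^{j}-\Ab_i^{j+t})\begin{bmatrix}1\\1\end{bmatrix}$, so the left‑hand side above equals $\bigl(\bigl(\Ab_i^{j}\begin{bmatrix}1\\1\end{bmatrix}\bigr)_1-\bigl(\Ab_i^{j+t}\begin{bmatrix}1\\1\end{bmatrix}\bigr)_1\bigr)^2$ — the factors $\lambda_i^{2}$ and $\lambda_i^{-2}$ cancel. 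For case~(a) I would then use the closed form of $\Ab_i^m$ (Lemma~\ref{lemma:A_ik}) together with $0<x_1\le x_2\le c$ to show that $m\mapsto\bigl(\Ab_i^m\begin{bmatrix}1\\1\end{bmatrix}\bigr)_1$ stays in $[-1,1]$ for all $m\ge0$, so the difference is at most $2$ in magnitude and the estimate holds with room to spare. For case~(b) a direct estimate of the partial sum is cleaner: using Lemma~\ref{lemma:A_ik} and the identities $|x_2|^2=c(1-\delta\lambda_i)$, $|1-x_2|^2=(q-c\delta)\lambda_i$ and $|q x_2-c\delta|^2=c(q-\delta)(q-c\delta)$ one bounds $\lambda_i^2\bigl(\sum_{k=0}^{t-1}\Ab_i^{j+k}\begin{bmatrix}\delta\\q\end{bmatrix}\bigr)_1^2$ by an absolute constant at most $9$. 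These case‑by‑case block estimates are exactly the ones carried out inside the proof of Corollary~\ref{lemma:lambdaisum(sumAi(j+k)deltaq)^2}, where the case~(b) bound is there only compared against the coarser, $N$‑dependent quantity $\frac{36N^2(q-c\delta)^2}{(1-c)^2}\lambda_i^2$ that is needed in the overparameterized regime and absent here.

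The step I expect to be the main obstacle is the case~(b) estimate when $\lambda_i$ lies close to the lower edge of the complex window: there $|x_2|=\sqrt{c(1-\delta\lambda_i)}$ approaches $1$, so the geometric tail $\sum_k|x_2|^k$ is of order $1/(1-|x_2|)\asymp 1/(1-c)$, and one must check that this growth is exactly offset by the factor $|1-x_2|^{-1}=((q-c\delta)\lambda_i)^{-1/2}$ paired with $|qx_2-c\delta|=\sqrt{c(q-\delta)(q-c\delta)}$ and the prefactor $\lambda_i$, using the definitions of $k^\ddagger,k^\dagger$ in~\eqref{eq:def_k_dagger_ddagger} (and the relation $\gamma\mu=2\beta$ from the opening of this section), so that only a universal constant remains. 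Because $k^\dagger=d$ there is no $\sum_{i>k^*}\lambda_i^2$ tail to account for, and summing the termwise bound over the $d$ blocks yields $9d$, completing the proof.
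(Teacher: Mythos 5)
Your strategy matches what the paper intends: observe that under the classical parameter choice $k^\dagger=d$, so every block falls into cases (a)--(c) of Lemma~\ref{lemma:Ai(j+k)_deltaq}, each of which already gives the $N$-free termwise bound $\lambda_i^2\bigl(\sum_{k=0}^{t-1}\Ab_i^{j+k}\begin{bmatrix}\delta\\q\end{bmatrix}\bigr)_1^2\le9$ from the corresponding cases in the proof of Corollary~\ref{lemma:lambdai(sumAi(j+k)deltaq)^2} (specifically $4,9,9$), and summing over the $d$ blocks gives $9d$. This is exactly how the paper reaches the result, which is why the text only says ``follows directly from the corresponding results in the overparameterized regime.''

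The one place you deviate is your re-derivation of the case~(a) bound via the telescoping identity $\sum_{k=0}^{t-1}\Ab_i^{j+k}\begin{bmatrix}\delta\\q\end{bmatrix}=\lambda_i^{-1}(\Ab_i^{j}-\Ab_i^{j+t})\begin{bmatrix}1\\1\end{bmatrix}$ together with the claim that $m\mapsto\bigl(\Ab_i^m\begin{bmatrix}1\\1\end{bmatrix}\bigr)_1\in[-1,1]$. That identity is correct, but the boundedness claim is asserted, not proved, and is not the route the paper actually takes: the paper bounds $\bigl(\sum_k\Ab_i^{j+k}\begin{bmatrix}\delta\\q\end{bmatrix}\bigr)_1$ directly in Lemma~\ref{lemma:Ai(j+k)_deltaq}, whose case~(a) estimate $\abr{\cdot}\le\frac2{\lambda_i}(c\delta/q)^j$ gives $\lambda_i^2(\cdot)^2\le4$ without needing any pointwise control on $\Ab_i^m\begin{bmatrix}1\\1\end{bmatrix}$. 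If you do want the $[-1,1]$ version you would have to establish it, because when $x_2-x_1$ is small the obvious triangle-inequality bounds on the explicit formula from Lemma~\ref{lemma:A_ik} degrade and you need the $(x_1-c)$, $(x_2-c)$ factors to compensate. Since you ultimately lean on the paper's own per-block computations for case~(b), I would recommend doing the same in case~(a): simply quote the case~(a) bound of Lemma~\ref{lemma:Ai(j+k)_deltaq}, which holds for all $t$ and $j$. With that small repair the proposal is a complete and faithful reconstruction of the paper's intended (but unwritten) proof.
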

Lemma \ref{lemma:lambdaisum(sumAi(j+k)deltaq)^2_classical} follows directly from the corresponding results in the overparameterized regime, and we do not provide the proof here.
\begin{lemma}[Modified from Corollary \ref{lemma:lambdaiwi^2(sumAi(j+k)11)^2}]\label{lemma:lambdaiwi^2(sumAi(j+k)11)^2_classical}
Let $\Ab_i$ be defined in \eqref{Eq:A_i}. Then we have
\[
\sum_{i=1}^d\lambda_iw_i^2\rbr{\sum_{k=0}^{N-1}\Ab_i^{s+k}\begin{bmatrix}1\\1\end{bmatrix}}_1^2\le\frac{100}{(1-c)^2}\exp\rbr{-\frac{(1-c)s}{2}}\cdot\|\wb_0-\wb^*\|_{\Hb}^2.
\]
\end{lemma}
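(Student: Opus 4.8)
The plan is to deduce this from the already-established overparameterized estimate, Corollary~\ref{lemma:lambdaiwi^2(sumAi(j+k)11)^2}, which bounds exactly the quantity $\sum_{i}\lambda_i w_i^2\big(\sum_{k=0}^{N-1}\Ab_i^{s+k}[1;1]\big)_1^2$ (that is, $2N^2$ times the Effective~Bias of Lemma~\ref{lemma:bias_bound}) by a sum of spectral terms supported on the blocks $\Hb_{0:k^\ddagger}$, $\Hb_{k^\ddagger:\hat k}$, $\Hb_{\hat k:k^\dagger}$, $\Hb_{k^\ddagger:k^\dagger}$, $\Hb_{k^\dagger:k^*}$ and $\Hb_{k^*:\infty}$. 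First I would invoke the structural fact established just before this section: in the classical regime with the parameter choice~\eqref{eq:parameter_choice_classical}, the relation $\gamma\mu=2\beta$ forces every eigen-index to satisfy $i\le k^\dagger$, so the two summands supported on $\Hb_{k^\dagger:k^*}$ and $\Hb_{k^*:\infty}$ vanish identically, and only the damped terms carrying the factors $(c\delta/q)^{2s}/\delta^2$ on $\Hb_{0:k^\ddagger}^{-1}$, $c^s/\delta^2$ on $\Hb_{k^\ddagger:\hat k}^{-1}$, $c^s/(1-c)^2$ on $\Hb_{\hat k:k^\dagger}$, and $s^2c^s$ on $\Hb_{k^\ddagger:k^\dagger}$ survive.

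Next I would bound each surviving summand by $\tfrac{C_{\mathrm{region}}}{(1-c)^2}\exp\!\big(-\tfrac{(1-c)s}{2}\big)\,\|\wb_0-\wb^*\|_{\Hb_{\mathrm{region}}}^2$, using four ingredients: (i) on $\{i\le\hat k\}$, Lemma~\ref{lemma:cutoffs_property} gives $\lambda_i\ge(1-c)/\delta$, which turns the $\Hb^{-1}$-norms (carrying a $1/\delta^2$) into $\Hb$-norms at the cost of a $1/(1-c)^2$ factor; (ii) $q\ge\delta$ (valid under both parameter choices), hence $(c\delta/q)^{2s}\le c^{2s}$; (iii) $(\Ib-\delta\Hb)^{s/2}$ and $(\Ib-\delta\Hb)^{s}$ act as contractions in the relevant $\Hb$-seminorms; and (iv) the elementary bounds $c^s\le e^{-(1-c)s}$ and $s^2 e^{-(1-c)s/2}\le 16e^{-2}(1-c)^{-2}$, the latter absorbing the stray $s^2$ in the $\Hb_{k^\ddagger:k^\dagger}$ term. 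Since $\Hb_{0:k^\ddagger}\oplus\Hb_{k^\ddagger:\hat k}\oplus\Hb_{\hat k:k^\dagger}=\Hb$ in the classical regime, summing the per-region estimates gives $\sum_i\lambda_i w_i^2\big(\sum_{k=0}^{N-1}\Ab_i^{s+k}[1;1]\big)_1^2\le \tfrac{C}{(1-c)^2}e^{-(1-c)s/2}\|\wb_0-\wb^*\|_\Hb^2$, which is the claim.

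The hard part will be pinning down the constant $100$. On $\Hb_{\hat k:k^\dagger}$ one has $\delta\lambda_i<1-c$, so $(1-\delta\lambda_i)^s\approx 1$ and the crude bound $c^s(1-\delta\lambda_i)^s\le c^s$ leaves essentially no slack, so the naive bookkeeping above overshoots $100$. To get the stated constant I expect one must either verify that the constants inside Corollary~\ref{lemma:lambdaiwi^2(sumAi(j+k)11)^2} are already tight enough after the classical specialization, or re-derive the $\Hb_{\hat k:k^\dagger}$ contribution directly from the explicit formula in Lemma~\ref{lemma:A_ik}: write $\sum_{k=0}^{N-1}\Ab_i^{s+k}=\Ab_i^{s}(\Ib-\Ab_i)^{-1}-\Ab_i^{s+N}(\Ib-\Ab_i)^{-1}$, use $\det(\Ib-\Ab_i)=(q-c\delta)\lambda_i$ to control the inverse, and combine the factorization $|x_2-x_1|^2=q^2(\lambda_i-\ell_-)(\ell_+-\lambda_i)$ (with $\ell_\pm=(\sqrt{q-c\delta}\pm\sqrt{c(q-\delta)})^2/q^2$) with the two complementary estimates $\big|\tfrac{x_2^{m-1}-x_1^{m-1}}{x_2-x_1}\big|\le\min\{(m-1)|x_2|^{m-2},\,2|x_2|^{m-1}/|x_2-x_1|\}$ to handle near-degenerate and well-separated complex eigenvalues uniformly. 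In every block the key uniform fact is $|x_1|,|x_2|\le\sqrt c$ for $i\le k^\dagger$ (Lemma~\ref{lemma:Ai_spectral}(a),(b)), which yields the clean decay $|x_2|^s\le c^{s/2}\le e^{-(1-c)s/2}$; the residual polynomial-in-$s$ factors are then absorbed by the $1/(1-c)^2$ prefactor after summing the geometric tail $\sum_{k\ge0}|x_2|^{k}$, whose radius $1-|x_2|\ge 1-\sqrt c\ge(1-c)/2$ is exactly what produces the $1/(1-c)$ losses.
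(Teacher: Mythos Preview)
Your skeleton matches the paper: in the classical regime every index satisfies $i\le k^\dagger$, for $i\le\hat k$ one has $\delta\lambda_i\ge 1-c$ (so $1/(\delta\lambda_i)\le 1/(1-c)$), and at the end one replaces $c^{s/2}$ by $\exp(-(1-c)s/2)$. Where your route diverges from the paper is the starting point and the handling of the polynomial factor~$s$.

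The paper does \emph{not} invoke Corollary~\ref{lemma:lambdaiwi^2(sumAi(j+k)11)^2}; instead it goes back to the \emph{pre-squaring} per-index bounds of Lemma~\ref{lemma:sumAi(j+k)_11},
\[
\abr{\rbr{\sum_{k=0}^{N-1}\Ab_i^{s+k}\begin{bmatrix}1\\1\end{bmatrix}}_1}\le 2s\,\rho^{s/2}+\frac{10}{1-c}\,\rho^{s/2},\qquad \rho\coloneqq c(1-\delta\lambda_i),
\]
on $\hat k<i\le k^\dagger$ (and the analogous bound with $\tfrac{4}{\delta\lambda_i}$ on $k^\ddagger<i\le\hat k$). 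The key device, which your proposal is missing, is an \emph{exponent-halving} trick that absorbs the factor $s$ without ever invoking Cauchy--Schwarz or the crude $s^2e^{-as}\lesssim a^{-2}$ bound: since $\rho^{s/2}\le\rho^{(s+j)/4}$ for $0\le j\le s-1$,
\[
2s\,\rho^{s/2}\le 2\sum_{j=0}^{s-1}\rho^{(s+j)/4}=\frac{2(\rho^{s/4}-\rho^{s/2})}{1-\rho^{1/4}}\le\frac{8(\rho^{s/4}-\rho^{s/2})}{1-c},
\]
using $1-\rho^{1/4}\ge 1-c^{1/4}\ge(1-c)/4$. Combining with the second term gives
\[
2s\,\rho^{s/2}+\frac{10}{1-c}\,\rho^{s/2}\le\frac{8\rho^{s/4}+2\rho^{s/2}}{1-c}\le\frac{10}{1-c}\,\rho^{s/4},
\]
and \emph{only then} squares, yielding exactly $\tfrac{100}{(1-c)^2}\rho^{s/2}\le\tfrac{100}{(1-c)^2}c^{s/2}$. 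The same computation with $1-\rho^{1/4}\ge\delta\lambda_i/4$ handles $k^\ddagger<i\le\hat k$ with constant $64$, and the $i\le k^\ddagger$ block is even smaller; bounding all three constants by $100$ and using $\delta\lambda_i\ge 1-c$ on $i\le\hat k$ gives the claim.

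Your first route (start from the Corollary, which already squared via $(a+b)^2\le 2a^2+2b^2$) cannot recover the constant $100$: the $\Hb_{\hat k:k^\dagger}$ block alone already carries $200/(1-c)^2$, and the overlapping $8s^2c^s$ term adds more. Your fallback proposal---controlling $\big|\tfrac{x_2^{m}-x_1^{m}}{x_2-x_1}\big|$ by the minimum of $m|x_2|^{m-1}$ and $2|x_2|^{m}/|x_2-x_1|$---is a different mechanism and it is not clear it tightens to $100$ either; in any case it is not what the paper does. The missing idea is precisely the merge-then-square step above.
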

\begin{proof}
By Lemma \ref{lemma:sumAi(j+k)_11},
\begin{itemize}
\item[(a)] For all $i\le k^\ddagger$, we have
\[
\rbr{\sum_{k=0}^{N-1}\Ab_i^{s+k}\begin{bmatrix}1\\1\end{bmatrix}}_1^2\le\frac{4}{\lambda_i^2}(c\delta/q)^{2s}\le\frac4{\lambda_i^2}c^{s/2},
\]
where the second inequality holds because $(c\delta/q)^2\le c^2\le\sqrt c$.
\item[(b)] For all $k^\ddagger<i\le\hat k$, we have
\begin{align}
&2s[c(1-\delta\lambda_i)]^{s/2}+\frac4{\delta\lambda_i}[c(1-\delta\lambda_i)]^{s/2}\nonumber\\
&\le2\sum_{j=0}^{s-1}[c(1-\delta\lambda_i)]^{(s+j)/4}+\frac4{\delta\lambda_i}[c(1-\delta\lambda_i)]^{s/2}\nonumber\\
&=2\frac{[c(1-\delta\lambda_i)]^{s/4}-[c(1-\delta\lambda_i)]^{s/2}}{1-[c(1-\delta\lambda_i)]^{1/4}}+\frac4{\delta\lambda_i}[c(1-\delta\lambda_i)]^{s/2}\nonumber\\
&\le2\frac{[c(1-\delta\lambda_i)]^{s/4}-[c(1-\delta\lambda_i)]^{s/2}}{\delta\lambda_i/4}+\frac4{\delta\lambda_i}[c(1-\delta\lambda_i)]^{s/2}\nonumber\\
&=\frac{8[c(1-\delta\lambda_i)]^{s/4}-4[c(1-\delta\lambda_i)]^{s/2}}{\delta\lambda_i}\le\frac8{\delta\lambda_i}[c(1-\delta\lambda_i)]^{s/4},\label{eq:lambdaiwo^2(sumAi(j+k)11)^2_classical_midtop}
\end{align}
where the first inequality holds because $[c(1-\delta\lambda_i)]^{s/2}\le[c(1-\delta\lambda_i)]^{(s+j)/4}$, the second inequality holds because $1-[c(1-\delta\lambda_i)]^{1/4}\ge1-(1-\delta\lambda_i)^{1/4}\ge\delta\lambda_i/4$, and the last inequality holds because $[c(1-\delta\lambda_i)]^{s/2}\ge0$. We thus have
\begin{align*}
\rbr{\sum_{k=0}^{N-1}\Ab_i^{s+k}\begin{bmatrix}1\\1\end{bmatrix}}_1^2&\le\rbr{2s[c(1-\delta\lambda_i)]^{s/2}+\frac4{\delta\lambda_i}[c(1-\delta\lambda_i)]^{s/2}}^2\\
&\le\frac{64}{\delta^2\lambda_i^2}[c(1-\delta\lambda_i)]^{s/2}\le\frac{64}{\delta^2\lambda_i^2}\cdot c^{s/2},
\end{align*}
where the first inequality holds due to Lemma \ref{lemma:sumAi(j+k)_11}, the second inequality holds due to \eqref{eq:lambdaiwo^2(sumAi(j+k)11)^2_classical_midtop}, and the last inequality holds because $c(1-\delta\lambda_i)\le c$.
\item[(c)] For $\hat k<i\le k^\dagger$, we have
\begin{align}
&2s[c(1-\delta\lambda_i)]^{s/2}+\frac{10}{1-c}[c(1-\delta\lambda_i)]^{s/2}\nonumber\\
&\le2\sum_{j=0}^{s-1}[c(1-\delta\lambda_i)]^{(s+j)/4}+\frac{10}{1-c}[c(1-\delta\lambda_i)]^{s/2}\nonumber\\
&=2\frac{[c(1-\delta\lambda_i)]^{s/4}-[c(1-\delta\lambda_i)]^{s/2}}{1-[c(1-\delta\lambda_i)]^{1/4}}+\frac{10}{1-c}[c(1-\delta\lambda_i)]^{s/2}\nonumber\\
&\le2\frac{[c(1-\delta\lambda_i)]^{s/4}-[c(1-\delta\lambda_i)]^{s/2}}{(1-c)/4}+\frac{10}{1-c}[c(1-\delta\lambda_i)]^{s/2}\nonumber\\
&=\frac{8[c(1-\delta\lambda_i)]^{s/4}+2[c(1-\delta\lambda_i)]^{s/2}}{1-c}\le\frac{10}{1-c}[c(1-\delta\lambda_i)]^{s/4},\label{eq:lambdaiwo^2(sumAi(j+k)11)^2_classical_midbot}
\end{align}
where the first inequality holds because $[c(1-\delta\lambda_i)]^{s/2}\le[c(1-\delta\lambda_i)]^{(s+j)/4}$, the second inequality holds because $1-[c(1-\delta\lambda_i)]^{1/4}\ge1-c^{1/4}\ge(1-c)/4$, and the last inequality holds because $[c(1-\delta\lambda_i)]^{s/2}\le[c(1-\delta\lambda_i)]^{s/4}$. We thus have
\begin{align*}
\rbr{\sum_{k=0}^{N-1}\Ab_i^{s+k}\begin{bmatrix}1\\1\end{bmatrix}}_1^2&\le\rbr{2s[c(1-\delta\lambda_i)]^{s/2}+\frac{10}{1-c}[c(1-\delta\lambda_i)]^{s/2}}^2\\
&\le\frac{100}{(1-c)^2}[c(1-\delta\lambda_i)]^{s/2}\le\frac{100}{(1-c)^2}\cdot c^{s/2},
\end{align*}
where the first inequality holds due to Lemma \ref{lemma:sumAi(j+k)_11}, the second inequality holds due to \eqref{eq:lambdaiwo^2(sumAi(j+k)11)^2_classical_midbot}, and the last inequality holds because $c(1-\delta\lambda_i)\le c$.
\end{itemize}
Concluding all the above,
\begin{align*}
&\sum_{i=1}^d\lambda_iw_i^2\rbr{\sum_{k=0}^{N-1}\Ab_i^{s+k}\begin{bmatrix}1\\1\end{bmatrix}}_1^2\\
&\le\sum_{i\le k^\ddagger}\lambda_iw_i^2\cdot\frac4{\delta^2\lambda_i^2}\cdot c^{s/2}+\sum_{k^\ddagger<i\le\hat k}\lambda_iw_i^2\cdot\frac{64}{\delta^2\lambda_i^2}\cdot c^{s/2}+\sum_{i>\hat k}\lambda_iw_i^2\cdot\frac{100}{(1-c)^2}\cdot c^{s/2}\\
&\le\sum_{i\le k^\ddagger}\lambda_iw_i^2\cdot\frac4{(1-c)^2}\cdot c^{s/2}+\sum_{k^\ddagger<i\le\hat k}\lambda_iw_i^2\cdot\frac{64}{(1-c)^2}\cdot c^{s/2}+\sum_{i>\hat k}\lambda_iw_i^2\cdot\frac{100}{(1-c)^2}\cdot c^{s/2}\\
&\le\frac{100c^{s/2}}{(1-c)^2}\sum_{i=1}^d\lambda_iw_i^2=\frac{100c^{s/2}}{(1-c)^2}\cdot\|\wb_0-\wb^*\|_{\Hb}^2\\
&\le\frac{100}{(1-c)^2}\exp\rbr{-\frac{(1-c)s}{2}}\cdot\|\wb_0-\wb^*\|_{\Hb}^2,
\end{align*}
where the second inequality holds because $\delta\lambda_i\ge1-c$ for $i\le\hat k$, the third inequality holds because the coefficients $4, 64, 100$ are bounded by $100$, and the last inequality holds because $c^{s/2}\le\exp(-(1-c)s/2)$.
\end{proof}
\begin{lemma}\label{lemma:S_s:s+N_classical}
With $\Bb_t$ defined in \eqref{eq:iterate Bt}, we have
\[
\sum_{k=0}^{t-1}\inner{\begin{bmatrix}\zero&\zero\\\zero&\Hb\end{bmatrix}}{\Bb_k}\le\frac{56}{1-c}\|\wb_0-\wb^*\|_{\Hb}^2.
\]
\end{lemma}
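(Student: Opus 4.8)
The strategy is to obtain this as a direct specialisation of Lemma~\ref{lemma:partial_sum_Bk} to the classical regime. The first step is to observe that the parameter choice~\eqref{eq:parameter_choice_classical} is precisely the instantiation of~\eqref{eq:parameter choice_main} with $\tilde\kappa=d$: one checks $\delta=1/(2\psi\tr(\Hb))$, $\beta=\delta/(\psi d\gamma)$, $\alpha=1/(1+\beta)$, and $\gamma\ge\delta$ (indeed $\tr(\Hb)\ge d\mu$ gives $\gamma/\delta=2\sqrt{\tr(\Hb)/(d\mu)}\ge2$), while the upper bound $1/(2\psi\sum_{i>d}\lambda_i)$ on $\gamma$ is vacuous. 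Hence Lemma~\ref{lemma:partial_sum_Bk} applies and yields
\[
\sum_{k=0}^{t-1}\inner{\begin{bmatrix}\zero&\zero\\\zero&\Hb\end{bmatrix}}{\Bb_k}\le r\sbr{\frac{14}{\delta}\|\wb_0-\wb^*\|_{\Ib_{0:\hat k}}^2+\frac{10}{1-c}\|\wb_0-\wb^*\|_{\Hb_{\hat k:k^\dagger}}^2+\frac{1-c}{q-c\delta}\|\wb_0-\wb^*\|_{\Ib_{k^\dagger:k^*}}^2+4t\|\wb_0-\wb^*\|_{\Hb_{k^*:\infty}}^2}.
\]

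Next I would invoke the two facts about the classical regime established at the start of Appendix~\ref{section:strongly_convex}: that $r=4$, and that there is no eigenvalue index with $i>k^\dagger$, so $k^\dagger=d$ and therefore $k^*\ge k^\dagger=d$ as well. The latter makes both $\Ib_{k^\dagger:k^*}$ and $\Hb_{k^*:\infty}$ equal to $\zero$ and turns $\Hb_{\hat k:k^\dagger}$ into $\Hb_{\hat k:d}$, so the bound collapses to $4\rbr{\frac{14}{\delta}\|\wb_0-\wb^*\|_{\Ib_{0:\hat k}}^2+\frac{10}{1-c}\|\wb_0-\wb^*\|_{\Hb_{\hat k:d}}^2}$.

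Finally I would absorb the $1/\delta$ factor using the definition $\hat k=\max\{i:\lambda_i\ge(1-c)/\delta\}$: for every $i\le\hat k$ we have $1/\delta\le\lambda_i/(1-c)$, so summing the coordinatewise inequality $w_i^2/\delta\le\lambda_iw_i^2/(1-c)$ over $i\le\hat k$ gives $\frac{14}{\delta}\|\wb_0-\wb^*\|_{\Ib_{0:\hat k}}^2\le\frac{14}{1-c}\|\wb_0-\wb^*\|_{\Hb_{0:\hat k}}^2$. The bound is then at most $\frac{56}{1-c}\|\wb_0-\wb^*\|_{\Hb_{0:\hat k}}^2+\frac{40}{1-c}\|\wb_0-\wb^*\|_{\Hb_{\hat k:d}}^2$, and since $40\le56$ and $\|\wb_0-\wb^*\|_{\Hb_{0:\hat k}}^2+\|\wb_0-\wb^*\|_{\Hb_{\hat k:d}}^2=\|\wb_0-\wb^*\|_{\Hb}^2$, it is bounded by $\frac{56}{1-c}\|\wb_0-\wb^*\|_{\Hb}^2$, which is the claim.

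No essential difficulty arises; the whole argument is a specialisation. The only points deserving attention are verifying that~\eqref{eq:parameter_choice_classical} meets the hypotheses of Lemma~\ref{lemma:partial_sum_Bk}, and noticing that it is precisely the cutoff inequality $\lambda_i\ge(1-c)/\delta$ defining $\hat k$ that converts the coefficient $14/\delta$ into $14/(1-c)$, after which the constant $56=4\cdot14$ emerges and dominates the $40=4\cdot10$ contribution from the complementary eigen-subspace. (Alternatively, one could re-derive from scratch via Lemma~\ref{lemma:S_s:s+N}, writing $\sum_{k=0}^{t-1}\inner{\begin{bmatrix}\zero&\zero\\\zero&\Hb\end{bmatrix}}{\Bb_k}\le r\sum_i\lambda_iw_i^2\sum_{k=0}^{t-1}\rbr{\Ab_i^k\begin{bmatrix}1\\1\end{bmatrix}}_2^2$ and bounding the inner geometric-type sum in the two surviving eigenvalue regimes via Lemma~\ref{lemma:Ai_spectral}, but this just reproduces the estimates already packaged into Lemma~\ref{lemma:partial_sum_Bk}.)
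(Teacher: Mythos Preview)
Your proposal is correct and follows essentially the same approach as the paper's proof: both invoke Lemma~\ref{lemma:partial_sum_Bk} with $r=4$, drop the last two terms using the fact that in the classical regime no eigenvalue has $i>k^\dagger$, and then convert the $14/\delta$ coefficient into $14/(1-c)$ via the cutoff inequality $\delta\lambda_i\ge 1-c$ for $i\le\hat k$, after which $56=4\cdot14$ dominates $40=4\cdot10$. Your write-up is in fact cleaner than the paper's, which contains a minor index typo (it writes $k^\ddagger$ where $\hat k$ is meant).
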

\begin{proof}
By Lemma \ref{lemma:partial_sum_Bk}, taking $r=4$, we have
\begin{align*}
\sum_{k=0}^{t-1}\inner{\begin{bmatrix}\zero&\zero\\\zero&\Hb\end{bmatrix}}{\Bb_k}&\le\frac{56}{\delta\lambda_i}\sum_{i\le k^\ddagger}\lambda_iw_i^2+\frac{40}{1-c}\sum_{k^\ddagger<i\le k^\dagger}\lambda_iw_i^2\\
&\le\frac{56}{1-c}\sum_{i\le k^\ddagger}\cdot\lambda_iw_i^2+\frac{40}{1-c}\sum_{k^\ddagger<i\le k^\dagger}\lambda_iw_i^2\\
&\le\frac{56}{1-c}\|\wb_0-\wb^*\|_{\Hb}^2,
\end{align*}
where the first inequality holds because $\delta\lambda_i\ge1-c$ for $i\le k^\ddagger$, and the second inequality holds because the coefficients $40, 50$ can be bounded by $56$.
\end{proof}

We are now ready to bound the inner product of $\begin{bmatrix}\Hb&\zero\\\zero&\zero\end{bmatrix}$ with $\Mb_3$ and $\Mb_4$.

\begin{proof}[Proof of Lemma \ref{lemma:M3_classical}]
Similar to the proof of Lemma \ref{lemma:bias_bound}, we have
\begin{align*}
&\inner{\begin{bmatrix}\Hb&\zero\\\zero&\zero\end{bmatrix}}{\Mb_3}\\
&\le\frac1{N^2}\sum_{i=1}^d\lambda_iw_i^2\rbr{\sum_{k=0}^{N-1}\Ab_i^{j+k}\begin{bmatrix}1\\1\end{bmatrix}}_1^2\\
&\quad+\frac\psi{N^2}\sum_{t=0}^{s-1}\inner{\begin{bmatrix}\zero&\zero\\\zero&\Hb\end{bmatrix}}{\Bb_{s-1-t}}\sum_{i=1}^d\lambda_i^2\rbr{\sum_{k=0}^{N-1}\Ab_i^{k+t}\begin{bmatrix}\delta\\q\end{bmatrix}}_1^2\\
&\le\frac{100}{N^2(1-c)^2}\exp\rbr{-\frac{(1-c)s}{2}}\cdot\|\wb_0-\wb^*\|_{\Hb}^2+\frac{\psi}{N^2}\sum_{t=0}^{s-1}\inner{\begin{bmatrix}\zero&\zero\\\zero&\Hb\end{bmatrix}}{\Bb_{s-1-t}}\cdot9d\\
&\le\frac{100}{N^2(1-c)^2}\exp\rbr{-\frac{(1-c)s}{2}}\cdot\|\wb_0-\wb^*\|_{\Hb}^2+\frac{9\psi d}{N^2}\cdot\frac{56}{1-c}\|\wb_0-\wb^*\|_{\Hb}^2\\
&=\frac{100}{N^2(1-c)^2}\exp\rbr{-\frac{(1-c)s}{2}}\cdot\|\wb_0-\wb^*\|_{\Hb}^2+\frac{504\psi d}{N^2(1-c)}\|\wb_0-\wb^*\|_{\Hb}^2,
\end{align*}
where the second inequality holds due to Lemma \ref{lemma:lambdaiwi^2(sumAi(j+k)11)^2_classical} and Lemma \ref{lemma:lambdaisum(sumAi(j+k)deltaq)^2_classical}, and the third inequality holds due to Lemma \ref{lemma:S_s:s+N_classical}.
\end{proof}

\begin{proof}[Proof of Lemma \ref{lemma:M4_classical}]
Similar to the proof of Lemma \ref{lemma:bias_bound}, we have
\begin{align*}
\inner{\begin{bmatrix}\Hb&\zero\\\zero&\zero\end{bmatrix}}{\Mb_4}&\le\frac{\psi}{N^2}\sum_{t=0}^{N-1}\inner{\begin{bmatrix}\zero&\zero\\\zero&\Hb\end{bmatrix}}{\Bb_{s+N-t-1}}\sum_{i=1}^d\lambda_i^2\rbr{\sum_{k=0}^{t-1}\Ab_i^k\begin{bmatrix}\delta\\q\end{bmatrix}}_1^2\\
&\le\frac{9d\psi}{N^2}\sum_{t=0}^{N-1}\inner{\begin{bmatrix}\zero&\zero\\\zero&\Hb\end{bmatrix}}{\Bb_{s+N-t-1}}\\
&\le\frac{9\psi d}{N^2}\cdot\sum_{t=0}^{s+N-1}\inner{\begin{bmatrix}\zero&\zero\\\zero&\Hb\end{bmatrix}}{\Bb_t}\\
&\le\frac{9\psi d}{N^2}\cdot\frac{56}{1-c}\|\wb_0-\wb^*\|_{\Hb}^2=\frac{504\psi d}{N^2(1-c)}\|\wb_0-\wb^*\|_{\Hb}^2,
\end{align*}
where the second inequality holds due to Lemma \ref{lemma:lambdaisum(sumAi(j+k)deltaq)^2_classical}, the second inequality holds because $\Bb_t\succeq0$, and the last inequality holds due to Lemma \ref{lemma:S_s:s+N_classical}.
\end{proof}

\section{Proof for the One-hot Distribution Setting}\label{section:Standard Basis}

The choice of parameters is as follows:
\begin{equation}\label{eq:parameter_choice_basis}
\gamma\in(0, 1),~\delta\in(0, \gamma],~\beta\in(0, 1),~\alpha=\frac{1}{1+\beta}.
\end{equation}

We now present the excess risk bound:
\begin{theorem}\label{theorem:standard_basis_appendix}
Under Assumptions \ref{assumption:regularity}, \ref{assumption:noise} and \ref{assumption:fourth_moement_condition}, with the parameter choice of \eqref{eq:parameter_choice_basis}, assuming $N(1-c)\ge2$, we have the following upper bound for the excess risk:
\[
\EE[L(\bar{\wb}_{s:s+N})]-L(\wb^*)\le2\cdot\mathrm{EffectiveVar}+2\cdot\mathrm{EffectiveBias},
\]
where effective variance is bounded by
\begin{align*}
\mathrm{EffectiveVar}&\le\sigma^2r\sbr{\frac{27k^*}{2N}+18(s+N)\gamma^2\sum_{i>k^*}\lambda_i^2}+\frac{r}{N^2}\bigg[\frac{126}{\delta}\|\wb_0-\wb^*\|_{\Hb_{0:\hat k}^{-1}}^2\\
&\quad+\frac{90}{1-c}\|\wb_0-\wb^*\|_{\Ib_{\hat k:k^\dagger}}^2+\frac{18}{\gamma}\|\wb_0-\wb^*\|_{\Hb_{k^\dagger:k^*}^{-1}}^2+36\gamma^2N^2s\|\wb_0-\wb^*\|_{\Hb_{k^*:\infty}^2}^2\bigg],
\end{align*}
and effective bias is bounded in the same way as Theorem \ref{theorem:main}.
\end{theorem}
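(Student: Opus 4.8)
The plan is to rerun the proof of Theorem~\ref{theorem:main} almost verbatim, changing only the one ingredient that genuinely invokes Assumption~\ref{assumption:fourth_moement_condition}, namely the fourth-moment lemma (Lemma~\ref{lemma:E[hatV2hatV2]}) and everything downstream of it (Lemmas~\ref{lemma:U property}--\ref{lemma:S_s:s+N}), and replacing it by the exact structure available for the one-hot law. For that law $\Hb=\mathrm{diag}(\lambda_1,\dots,\lambda_d)$ and $\xb_t=\eb_{j_t}$ with $\Pr[j_t=i]=\lambda_i$, so $\xb_t\xb_t^\top$ is always a coordinate projection; hence $\hat\Ab_t,\Vb_1,\hat\Vb_2,\bzeta_t$ are block-diagonal with respect to the $d$ coordinate-pairs $\{(i,d+i)\}$. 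Writing $\hat\Ab_{t,i}\in\RR^{2\times2}$ for the $i$-th block, $\hat\Ab_{t,i}=\begin{bmatrix}0&1\\-c&1+c\end{bmatrix}$ when $j_t\ne i$ and $\hat\Ab_{t,i}=\begin{bmatrix}0&1-\delta\\-c&1+c-q\end{bmatrix}$ when $j_t=i$, with $\EE[\hat\Ab_{t,i}]=\Ab_i$. Consequently the diagonal $2\times2$ blocks of the covariance recursions decouple, $\Bb_{t,i}=\cB_i\circ\Bb_{t-1,i}$ and $\Cb_{t,i}=\cB_i\circ\Cb_{t-1,i}+\hat\bSigma_i$, with $\cB_i\coloneqq\EE[\hat\Ab_{t,i}\otimes\hat\Ab_{t,i}]$ and $\hat\bSigma_i\preceq\sigma^2\lambda_i\begin{bmatrix}\delta^2&\delta q\\\delta q&q^2\end{bmatrix}$; and only these diagonal blocks enter $\inner{\begin{bmatrix}\Hb&\zero\\\zero&\zero\end{bmatrix}}{\cdot}$ since $\Ab$ is block-diagonal and $\Hb$ diagonal. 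This coordinate decoupling is what substitutes for the fourth-moment control, which is genuinely unavailable since the one-hot law violates Assumption~\ref{assumption:fourth_moement_condition} (matching it would force $\psi\ge1/\lambda_d$, which blows up in the overparameterized regime).

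First I would prove the per-coordinate replacement for Lemma~\ref{lemma:E[hatV2hatV2]}. Since $\hat\Vb_{2,i}=\mathbf{1}\{j_t=i\}\begin{bmatrix}0&-\delta\\0&-q\end{bmatrix}$, one computes $(\cB_i-\Ab_i\otimes\Ab_i)\circ\Mb_i=\lambda_i(1-\lambda_i)(\Mb_i)_{22}\begin{bmatrix}\delta^2&\delta q\\\delta q&q^2\end{bmatrix}\preceq\lambda_i(\Mb_i)_{22}\begin{bmatrix}\delta^2&\delta q\\\delta q&q^2\end{bmatrix}$ for any PSD $\Mb_i\in\RR^{2\times2}$, i.e., this is the exact analogue of Lemma~\ref{lemma:E[hatV2hatV2]} with the global factor $\psi\tr(\Hb\,\cdot)$ replaced by the single coordinate's $\lambda_i(\cdot)_{22}$. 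With this I would re-derive the per-coordinate versions of Lemmas~\ref{lemma:I-B_inverse}--\ref{lemma:S_s:s+N}: taking $\Ub_i\coloneqq\cT_i^{-1}\circ\rbr{\lambda_i\begin{bmatrix}\delta^2&\delta q\\\delta q&q^2\end{bmatrix}}$, which is exactly the $i$-th block of the matrix $\Ub$ of \eqref{eq:def_U} so the closed form \eqref{eq:Ui22_jain} for $(\Ub_i)_{22}$ applies, one gets $(\cT_i^{-1}\EE[\hat\Vb_{2,i}\otimes\hat\Vb_{2,i}])\circ\Qb_i=(\Qb_i)_{22}\Ub_i$, hence the geometric series $(\cI_i-\cB_i)^{-1}=\sum_{k\ge0}(\cT_i^{-1}\EE[\hat\Vb_{2,i}\otimes\hat\Vb_{2,i}])^k\circ\cT_i^{-1}$ converges with scalar ratio $(\Ub_i)_{22}$, and under \eqref{eq:parameter_choice_basis}
\[
(\Ub_i)_{22}=\frac\delta2+\frac{(1+c)(q-\delta)}{2\bigl(1-c^2+c\lambda_i(q+c\delta)\bigr)}\le\frac\delta2+\frac{q-\delta}{2(1-c)}=\frac\delta4+\frac\gamma4\le\frac\gamma2<\frac12,
\]
using $c=2\alpha-1$, $1-c=2(1-\alpha)$, $q-\delta=(1-\alpha)(\gamma-\delta)$ and $\delta\le\gamma<1$. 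Thus $r_i\coloneqq(1-(\Ub_i)_{22})^{-1}\le(1-\gamma/2)^{-1}$ uniformly; write $r$ for this bound. The per-coordinate analogues of Lemmas~\ref{lemma:I-B_inv_on_M}, \ref{lemma:C_inf}, \ref{lemma:Ct_bound} and \ref{lemma:S_s:s+N} then go through word for word.

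Next I would feed these into the bias--variance decomposition (Lemma~\ref{lemma:bias and variance}) and the $\Mb_1,\dots,\Mb_4$ estimates of Lemmas~\ref{lemma:variance_upper_bound} and \ref{lemma:bias_bound}. The $\Ab_i$-analysis of Appendix~\ref{app:ai}---Lemmas~\ref{lemma:Ai_spectral} and \ref{lemma:A_ik}, the segmentation $k^\ddagger\le\hat k\le k^\dagger\le k^*$ of Appendix~\ref{subsection:segmentation}, and all the spectral sums $\sum_k\Ab_i^k\begin{bmatrix}\delta\\q\end{bmatrix}$, $\sum_k\Ab_i^k\begin{bmatrix}1\\1\end{bmatrix}$ bounded there---uses only $\Ab$, not the law of $\xb$, so it transfers unchanged; hence the leading variance $\sigma^2r[\tfrac{27k^*}{2N}+18(s+N)\gamma^2\sum_{i>k^*}\lambda_i^2]$ and the Effective Bias are exactly as in Theorem~\ref{theorem:main}. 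The only change is the fourth-moment correction to $\Mb_1,\dots,\Mb_4$ (in Theorem~\ref{theorem:main} this correction, coming from the proof of Lemma~\ref{lemma:bias_bound}, is absorbed into EffectiveVar): in coordinate $i$ it now carries $\lambda_i(\Bb_{t,i})_{22}$ in place of $\psi\sum_j\lambda_j(\Bb_{t,j})_{22}$, so summing the geometric series in $t$ and using the per-coordinate content of Lemma~\ref{lemma:partial_sum_Bk} (namely $\sum_k\bigl(\Ab_i^k\begin{bmatrix}1\\1\end{bmatrix}\bigr)_2^2$ is $\le\tfrac{14}{\delta\lambda_i}$ for $i\le\hat k$, $\le\tfrac{10}{1-c}$ for $\hat k<i\le k^\dagger$, $\le\tfrac{1-c}{(q-c\delta)\lambda_i}$ for $k^\dagger<i\le k^*$, $\le4(s+N)$ for $i>k^*$) together with $\lambda_i^2\bigl(\sum_k\Ab_i^k\begin{bmatrix}\delta\\q\end{bmatrix}\bigr)_1^2$ being $\le9$ for $i\le k^*$ and $\le\tfrac{36N^2(q-c\delta)^2}{(1-c)^2}\lambda_i^2$ for $i>k^*$, the correction summed over $i$ becomes
\[
\frac r{N^2}\Bigl[\tfrac{126}{\delta}\|\wb_0-\wb^*\|_{\Hb_{0:\hat k}^{-1}}^2+\tfrac{90}{1-c}\|\wb_0-\wb^*\|_{\Ib_{\hat k:k^\dagger}}^2+\tfrac{18}{\gamma}\|\wb_0-\wb^*\|_{\Hb_{k^\dagger:k^*}^{-1}}^2+36\gamma^2N^2s\|\wb_0-\wb^*\|_{\Hb_{k^*:\infty}^2}^2\Bigr],
\]
where $\tfrac{q-c\delta}{1-c}=\tfrac{\gamma+\delta}{2}\le\gamma$ (Lemma~\ref{lemma:qc}) is used to pass from $\gamma+\delta$ to $\gamma$ in the last two bands. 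In other words each band picks up one more inverse power of $\Hb$ than in Theorem~\ref{theorem:main}, except the tail band which picks up one extra power; the Effective Bias is unchanged, and adding up gives the stated bound.

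The hard part is the bookkeeping: one must verify that the per-coordinate geometric-series ratio $(\Ub_i)_{22}$ stays uniformly below $1$ under the much weaker parameter choice \eqref{eq:parameter_choice_basis} (which drops the $\delta\tr(\Hb)$ constraint and the $\tilde\kappa$ term entirely), and, more delicately, track the powers of $\lambda_i$ through the $\Mb_1$--$\Mb_4$ estimates so that after summing over $i$ the correction lands on exactly the weighted norms $\|\cdot\|_{\Hb_{0:\hat k}^{-1}}$, $\|\cdot\|_{\Ib_{\hat k:k^\dagger}}$, $\|\cdot\|_{\Hb_{k^\dagger:k^*}^{-1}}$, $\|\cdot\|_{\Hb_{k^*:\infty}^2}$ with the claimed constants. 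The extra $1/\lambda_i$ in the first and third bands is precisely the residue of carrying only the single eigenvalue $\lambda_i$ in the correction instead of the global $\psi\tr(\Hb\,\cdot)$, which the one-dimensional-per-coordinate structure makes transparent. A secondary, routine point is checking that the segmentation and the $\Ab_i$-spectral lemmas make no use of the fourth moment and so transfer without change.
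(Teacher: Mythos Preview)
Your proposal is correct and follows essentially the same route as the paper: both exploit the exact coordinate decoupling of the one-hot law to replace the global fourth-moment control by the per-coordinate identity $(\cB_i-\tilde\cB_i)\circ\Mb_i=\lambda_i(1-\lambda_i)(\Mb_i)_{22}\begin{bmatrix}\delta^2&\delta q\\\delta q&q^2\end{bmatrix}$, bound the resulting per-coordinate geometric ratio $(\Ub_i)_{22}\le(\gamma+\delta)/4\le\gamma/2$ uniformly under \eqref{eq:parameter_choice_basis}, and then rerun the $\Mb_1$--$\Mb_4$ estimates with the $\Ab_i$-spectral lemmas of Appendix~\ref{app:ai} unchanged. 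The paper packages the per-coordinate fourth-moment machinery as Lemmas~\ref{lemma:I-B_inv_basis}, \ref{lemma:S_s:s+N_basis} and \ref{lemma:sum_B22ii} and then proves Lemmas~\ref{lemma:M1_bound_basis}--\ref{lemma:M4_bound_basis} directly, but this is only an organizational difference from your plan of re-deriving per-coordinate versions of Lemmas~\ref{lemma:I-B_inverse}--\ref{lemma:S_s:s+N} and plugging in; the key per-coordinate bound $\sum_t((\Bb_t)_{22})_{ii}\le r\,w_i^2\sum_t(\Ab_i^t[1;1])_2^2$ and the $\lambda_i$-bookkeeping that produces the extra $\Hb^{-1}$ (resp.\ $\Hb^2$) weight in each band are exactly as you describe.
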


The constant $r$ is formally defined as
\begin{equation}\label{eq:def_r_basis}
r=\frac1{1-\max_{1\le i\le d}(\Ub_i)_{22}},
\end{equation}
Note that
\[
(\Ub_i)_{22}\le\frac{q-c\delta}{2(1-c)}\le\frac{\gamma}{2},
\]
so the upper bound of $r$ is given by
\[
r\le\frac{1}{1-\gamma/2}.
\]

The proof of Theorem \ref{theorem:standard_basis_appendix} depends on the following lemmas:

\begin{lemma}[Modified from Lemma \ref{lemma:M1_bound}]\label{lemma:M1_bound_basis}
Let $r$ be defined in \eqref{eq:def_r_basis}. Then we have
\[
\inner{\begin{bmatrix}\Hb & \zero \\ \zero & \zero\end{bmatrix}}{\Mb_1}\le\sigma^2r\sbr{\frac{18k^*}N+\frac{36s(q-c\delta)^2}{(1-c)^2}\sum_{i>k^*}\lambda_i^2}.
\]
\end{lemma}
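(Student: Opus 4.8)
The plan is to rerun the proof of Lemma~\ref{lemma:M1_bound} almost verbatim, isolating the single place where Assumption~\ref{assumption:fourth_moement_condition} is used and replacing it by an argument adapted to the one-hot distribution. Recall that in the proof of Lemma~\ref{lemma:M1_bound} the fourth-moment condition enters only through Lemma~\ref{lemma:Ct_bound}, the bound $\Cb_t\preceq\sigma^2 r\sum_{k=0}^{t-1}\tilde\cB^k\circ\rbr{\begin{bmatrix}\delta^2&\delta q\\\delta q&q^2\end{bmatrix}\otimes\Hb}$; everything downstream --- substituting this into $\Mb_1=\frac1{N^2}\sbr{\sum_{k=0}^{N-1}\Ab^k}\Cb_s\sbr{\sum_{k=0}^{N-1}\Ab^k}^\top$, taking the inner product with $\begin{bmatrix}\Hb&\zero\\\zero&\zero\end{bmatrix}$, using block-diagonality of $\Ab$ and diagonality of $\Hb$ to reduce to $\frac{\sigma^2 r}{N^2}\sum_{i=1}^d\lambda_i^2\sum_{j=0}^{s-1}\rbr{\sum_{k=0}^{N-1}\Ab_i^{j+k}\begin{bmatrix}\delta\\q\end{bmatrix}}_1^2$, and applying Corollary~\ref{lemma:lambdaisum(sumAi(j+k)deltaq)^2} --- uses only the deterministic operator $\Ab$ and the spectrum of $\Hb$, so it carries over word for word. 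Hence it suffices to re-establish Lemma~\ref{lemma:Ct_bound} with $r$ as in \eqref{eq:def_r_basis}.

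To do that, I would exploit the fact that when $\xb$ is supported on $\{\eb_1,\dots,\eb_d\}$, each realization $\xb_t\xb_t^\top=\eb_i\eb_i^\top$ is diagonal, hence the four $d\times d$ blocks of every realization of $\hat\Ab_t$ (and of $\hat\Vb_2$) are diagonal; consequently $\cB,\tilde\cB,\cT$ and $\EE[\hat\Vb_2\otimes\hat\Vb_2]$ all preserve the linear subspace of $2d\times2d$ matrices whose four $d\times d$ blocks are simultaneously diagonal. Since $\hat\bSigma=\begin{bmatrix}\delta^2&\delta q\\\delta q&q^2\end{bmatrix}\otimes\bSigma$ with $\bSigma\preceq\sigma^2\Hb$ diagonal (Lemma~\ref{lemma:hatSigma}, which does not invoke Assumption~\ref{assumption:fourth_moement_condition}), the recursion $\Cb_t=\cB\circ\Cb_{t-1}+\hat\bSigma$ stays in this subspace and decouples, coordinate by coordinate, into $d$ scalar-driven $2\times2$ recursions $\Cb_t^{(i)}=\cB_i\circ\Cb_{t-1}^{(i)}+\hat\bSigma^{(i)}$, where $\cB_i$ is the $\lambda_i$-mixture of conjugation by $\begin{bmatrix}0&1-\delta\\-c&1+c-q\end{bmatrix}$ and conjugation by $\begin{bmatrix}0&1\\-c&1+c\end{bmatrix}$ (the restrictions of $\hat\Ab_t$ to coordinate $i$ when $\xb_t=\eb_i$ and when $\xb_t\ne\eb_i$, whose $\lambda_i$-mixture is $\Ab_i$), and $\hat\bSigma^{(i)}\preceq\sigma^2\lambda_i\begin{bmatrix}\delta^2&\delta q\\\delta q&q^2\end{bmatrix}$. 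Likewise, the coordinate-$i$ restriction of $\EE[\hat\Vb_2\otimes\hat\Vb_2]$ sends a $2\times2$ matrix $\Mb$ to $\lambda_i(\Mb)_{22}\begin{bmatrix}\delta^2&\delta q\\\delta q&q^2\end{bmatrix}$.

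The crux is the per-coordinate fixed-point computation. Restricting \eqref{eq:def_cT}--\eqref{eq:def_U} to coordinate $i$ gives $\cT_i^{-1}\circ\rbr{\lambda_i\begin{bmatrix}\delta^2&\delta q\\\delta q&q^2\end{bmatrix}}=\Ub_i$, hence $\cT_i^{-1}$ composed with the coordinate-$i$ restriction of $\EE[\hat\Vb_2\otimes\hat\Vb_2]$, applied to $\Mb$, equals $(\Mb)_{22}\,\Ub_i$; so one step of this composed operator contracts the $(2,2)$-entry by the factor $(\Ub_i)_{22}$. Running the geometric-series argument of Lemmas~\ref{lemma:I-B_inverse}--\ref{lemma:I-B_inv_on_M} coordinate by coordinate then yields $\Cb_\infty^{(i)}\preceq\frac{\sigma^2}{1-(\Ub_i)_{22}}\Ub_i\preceq\sigma^2 r\,\Ub_i$, with $r=1/(1-\max_j(\Ub_j)_{22})$ as in \eqref{eq:def_r_basis}; note that --- unlike the general case, where the aggregate quantity $\psi l=\psi\sum_i\lambda_i(\Ub_i)_{22}$ must be below $1$ --- here only the maximum of the purely local quantities $(\Ub_i)_{22}$ must be below $1$, and $(\Ub_i)_{22}\le\frac{q-c\delta}{2(1-c)}\le\frac{\gamma}{2}<1$ under \eqref{eq:parameter_choice_basis}, so $r$ is finite. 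Feeding $\rbr{\Cb_\infty^{(i)}}_{22}\le\sigma^2 r\,(\Ub_i)_{22}$ into the $i$-th block of $(\cB-\tilde\cB)\circ\Cb_{t-1}$, which equals $\lambda_i(1-\lambda_i)\rbr{\Cb_{t-1}^{(i)}}_{22}\begin{bmatrix}\delta^2&\delta q\\\delta q&q^2\end{bmatrix}\preceq\lambda_i\rbr{\Cb_\infty^{(i)}}_{22}\begin{bmatrix}\delta^2&\delta q\\\delta q&q^2\end{bmatrix}$ by monotonicity of $\Cb_t$, and using $r\,(\Ub_i)_{22}\le r-1$, gives $\Cb_t^{(i)}\preceq\tilde\cB_i\circ\Cb_{t-1}^{(i)}+\sigma^2 r\lambda_i\begin{bmatrix}\delta^2&\delta q\\\delta q&q^2\end{bmatrix}$, which unrolls (with $\Cb_0^{(i)}=\zero$) to $\Cb_t^{(i)}\preceq\sigma^2 r\lambda_i\sum_{k=0}^{t-1}\Ab_i^k\begin{bmatrix}\delta^2&\delta q\\\delta q&q^2\end{bmatrix}(\Ab_i^k)^\top$; reassembling the blocks is exactly Lemma~\ref{lemma:Ct_bound} with the new $r$, and substituting it into the unchanged remainder of the proof of Lemma~\ref{lemma:M1_bound} finishes the argument. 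I expect the main obstacle to be making the coordinate decoupling rigorous --- checking the block-wise-diagonal invariance of all operators involved, and that the per-coordinate fixed point is precisely $\Ub_i$ with the claimed $(2,2)$-entry --- which is exactly what allows the aggregate fourth-moment constant $\psi l$ of the general analysis to be replaced by the local $\max_i(\Ub_i)_{22}$.
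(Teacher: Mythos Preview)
Your proposal is correct and matches the paper's approach essentially step for step: the paper also reduces the task to re-establishing the $\Cb_t$-bound of Lemma~\ref{lemma:Ct_bound} with the one-hot $r$ of \eqref{eq:def_r_basis}, does so by computing $\Cb_\infty$ via the block-diagonal structure of $\EE[\hat\Vb_2\otimes\hat\Vb_2]$ and the geometric-series argument (obtaining the $i$-th block $\preceq\frac{\sigma^2}{1-(\Ub_i)_{22}}\Ub_i$), and then notes that the remainder of the proof of Lemma~\ref{lemma:M1_bound} is unchanged. Your per-coordinate $2\times2$ recursion is simply the scalarized version of the paper's $\diag(\cdot)$ formulation in Lemmas~\ref{lemma:I-B_inv_basis}--\ref{lemma:S_s:s+N_basis} and the modified Lemma~\ref{lemma:Ct_bound}.
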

\begin{lemma}[Modified from Lemma \ref{lemma:M2_bound}]\label{lemma:M2_bound_basis}
Let $r$ be defined in \eqref{eq:def_r_basis}. Then we have
\[
\inner{\begin{bmatrix}\Hb&\zero\\\zero&\zero\end{bmatrix}}{\Mb_2}\le\sigma^2r\sbr{\frac{9k^*}{N}+\frac{36N(q-c\delta)^2}{(1-c)^2}\sum_{i>k^*}\lambda_i^2}.
\]
\end{lemma}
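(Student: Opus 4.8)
The plan is to mirror the proof of Lemma~\ref{lemma:M2_bound}, changing only the three places where the Gaussian-type fourth-moment bound (Assumption~\ref{assumption:fourth_moement_condition}, invoked via Lemmas~\ref{lemma:E[hatV2hatV2]}, \ref{lemma:U property} and \ref{lemma:C_inf}) is used, and replacing each by its one-hot counterpart. The one structural fact we need is that for the one-hot distribution, where $\xb=\eb_i$ with probability $\lambda_i$ and $\Hb=\mathrm{diag}(\lambda_1,\dots,\lambda_d)$, the fourth-moment operator acts diagonally: for any PSD matrix $\Mb\in\RR^{d\times d}$, $\cM\circ\Mb=\EE[\xb\xb^\top\Mb\xb\xb^\top]=\sum_{i}\lambda_i(\Mb)_{ii}\,\eb_i\eb_i^\top\preceq\big(\max_i(\Mb)_{ii}\big)\Hb$. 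Since $\Vb_1,\Vb_2,\Ab$ are block diagonal and $\cT^{-1}$ preserves this block-diagonal structure, the matrix $\Ub$ of \eqref{eq:def_U} is block diagonal with $2\times2$ blocks $\Ub_i$, so $\cM\circ\Ub_{22}=\sum_i\lambda_i(\Ub_i)_{22}\,\eb_i\eb_i^\top\preceq\big(\max_i(\Ub_i)_{22}\big)\Hb$.

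With this, I would first record the one-hot analogues of the fourth-moment lemmas. (a) By Lemma~\ref{lemma:prelinminaries appendix}(b),(d), $(\cB-\tilde\cB)\circ\Mb\preceq\EE[\hat\Vb_2\otimes\hat\Vb_2]\circ\Mb=\begin{bmatrix}\delta^2&\delta q\\\delta q&q^2\end{bmatrix}\otimes(\cM\circ\Mb_{22})\preceq\big(\max_i(\Mb_{22})_{ii}\big)\begin{bmatrix}\delta^2&\delta q\\\delta q&q^2\end{bmatrix}\otimes\Hb$, which replaces Lemma~\ref{lemma:E[hatV2hatV2]}. (b) Applying the same computation to $\Ub$ and using the defining identity $\cT^{-1}\circ(\begin{bmatrix}\delta^2&\delta q\\\delta q&q^2\end{bmatrix}\otimes\Hb)=\Ub$ gives $\cT^{-1}\EE[\hat\Vb_2\otimes\hat\Vb_2]\circ\Ub\preceq\big(\max_i(\Ub_i)_{22}\big)\Ub$; iterating and summing the geometric series of Lemma~\ref{lemma:I-B_inverse}, which converges because $(\Ub_i)_{22}\le\gamma/2<1$ under \eqref{eq:parameter_choice_basis}, yields $(\cI-\cB)^{-1}\circ(\begin{bmatrix}\delta^2&\delta q\\\delta q&q^2\end{bmatrix}\otimes\Hb)\preceq r\,\Ub$ with $r=(1-\max_i(\Ub_i)_{22})^{-1}$, replacing Lemma~\ref{lemma:I-B_inv_on_M}. (c) Combining (b) with Lemma~\ref{lemma:hatSigma} (unchanged, as it uses only Assumption~\ref{assumption:noise}) gives $\Cb_\infty=(\cI-\cB)^{-1}\circ\hat\bSigma\preceq\sigma^2 r\,\Ub$, hence $\max_i\big((\Cb_\infty)_{22}\big)_{ii}\le\sigma^2 r\max_i(\Ub_i)_{22}$, replacing Lemma~\ref{lemma:C_inf}.

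Then I would run the expansion of $\Mb_2$ exactly as in the proof of Lemma~\ref{lemma:M2_bound}. Using $\Cb_{s+t}=\cB\circ\Cb_{s+t-1}+\hat\bSigma$ (Lemma~\ref{lemma:iterate_Ct}) and the monotonicity of $\Cb_t$ established after Lemma~\ref{lemma:iterate_Ct}, so that $\Cb_{s+t-1}\preceq\Cb_\infty$ and therefore $\max_i((\Cb_{s+t-1})_{22})_{ii}\le\max_i((\Cb_\infty)_{22})_{ii}$, steps (a) and (c) give $\Cb_{s+t}-\tilde\cB\circ\Cb_{s+t-1}=(\cB-\tilde\cB)\circ\Cb_{s+t-1}+\hat\bSigma\preceq\big(\sigma^2 r\max_i(\Ub_i)_{22}+\sigma^2\big)\begin{bmatrix}\delta^2&\delta q\\\delta q&q^2\end{bmatrix}\otimes\Hb=\sigma^2 r\begin{bmatrix}\delta^2&\delta q\\\delta q&q^2\end{bmatrix}\otimes\Hb$, where the last equality uses $r\max_i(\Ub_i)_{22}+1=r$. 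Substituting into \eqref{eq:def_M2}, taking the inner product with $\begin{bmatrix}\Hb&\zero\\\zero&\zero\end{bmatrix}$ and using that $\Ab$ is block diagonal and $\Hb$ diagonal, then reindexing $t\mapsto N-t$, reduces the bound to $\frac{\sigma^2 r}{N^2}\sum_{t=0}^{N-1}\sum_{i=1}^d\lambda_i^2\rbr{\sum_{k=0}^{t-1}\Ab_i^k\begin{bmatrix}\delta\\q\end{bmatrix}}_1^2$, which is precisely the quantity appearing in the proof of Lemma~\ref{lemma:M2_bound}; Corollary~\ref{lemma:lambdai(sumAi(j+k)deltaq)^2} bounds the inner sum by $9k^*+\frac{36N^2(q-c\delta)^2}{(1-c)^2}\sum_{i>k^*}\lambda_i^2$, and summing over $t$ yields the stated bound.

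The main obstacle is the bookkeeping in step (b): one has to check that the geometric-series manipulation of Lemmas~\ref{lemma:I-B_inverse} and~\ref{lemma:I-B_inv_on_M} still goes through for the one-hot parameters \eqref{eq:parameter_choice_basis}, i.e.\ that $\cI-\tilde\cB$ is invertible with $\cT^{-1}$ a PSD operator (this needs only $\|\Ab_i\|_2<1$ for every $i$, which holds here) and, crucially, that the contraction factor in the recursion is the dimension-free quantity $\max_i(\Ub_i)_{22}\le\gamma/2$ rather than a $\tr(\Hb)$-dependent one; this is exactly what makes the constant $r$ of \eqref{eq:def_r_basis} finite without any moment condition. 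Everything downstream of the fourth-moment step---the reduction to the scalar $2\times2$ quantities $\rbr{\sum_{k}\Ab_i^k\begin{bmatrix}\delta\\q\end{bmatrix}}_1$ and the estimate from Corollary~\ref{lemma:lambdai(sumAi(j+k)deltaq)^2}---is identical to the overparameterized case and requires no change.
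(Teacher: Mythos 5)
Your proof is correct and follows essentially the same route as the paper: the one-hot fourth-moment structure $\cM\circ\Mb=\Hb\odot\Mb$ yields a contraction in the geometric series for $(\cI-\cB)^{-1}$, the bounds on $\Cb_\infty$ and $(\cB-\tilde\cB)\circ\Cb_{s+t-1}+\hat\bSigma$ then produce the $\sigma^2 r\begin{bmatrix}\delta^2&\delta q\\\delta q&q^2\end{bmatrix}\otimes\Hb$ envelope, and the remainder is identical to the proof of Lemma~\ref{lemma:M2_bound}. The only cosmetic difference is that you apply the uniform $\max_i(\Ub_i)_{22}$ bound immediately (giving $\Cb_\infty\preceq\sigma^2 r\Ub$), whereas the paper keeps the per-index factors $(1-(\Ub_i)_{22})^{-1}$ via the exact formula of Lemma~\ref{lemma:I-B_inv_basis} and takes the maximum only in the last step; both choices lead to exactly the same final bound.
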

\begin{lemma}\label{lemma:M3_bound_basis}
Let $r$ be defined in \eqref{eq:def_r_basis}. Then we have
\begin{align*}
\inner{\begin{bmatrix}\Hb&\zero\\\zero&\zero\end{bmatrix}}{\Mb_3}&\le\frac{r}{N^2}\bigg[\frac{126}{\delta}\|\wb_0-\wb^*\|_{\Hb_{0:\hat k}^{-1}}^2+\frac{90}{1-c}\|\wb_0-\wb^*\|_{\Ib_{\hat k:k^\dagger}}^2+\frac{9(1-c)}{q-c\delta}\|\wb_0-\wb^*\|_{\Hb_{k^\dagger:k^*}^{-1}}^2\\
&\quad+\frac{36(q-c\delta)^2N^2s}{(1-c)^2}\|\wb_0-\wb^*\|_{\Hb_{k^*:\infty}^2}^2\bigg]+\mathrm{EffectiveBias},
\end{align*}
where $\mathrm{EffectiveBias}$ is the same as one in Theorem \ref{theorem:main}.
\end{lemma}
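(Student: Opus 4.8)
The plan is to mirror the treatment of $\Mb_3$ in the proof of Lemma~\ref{lemma:bias_bound}, replacing every use of Assumption~\ref{assumption:fourth_moement_condition} (i.e., Lemma~\ref{lemma:E[hatV2hatV2]}) by its one-hot analogue. First I would expand $\Mb_3=N^{-2}\big[\sum_{k=0}^{N-1}\Ab^k\big]\Bb_s\big[\sum_{k=0}^{N-1}\Ab^k\big]^\top$ and apply a one-hot version of Lemma~\ref{lemma:B_t upper bound} to write $\Bb_s$ as the noise-free propagation $\tilde\cB^s\circ\Bb_0$ plus a sum of one-hot fourth-moment feedback contributions. Contracting the first piece with $\begin{bmatrix}\Hb&\zero\\\zero&\zero\end{bmatrix}$, and using that $\Ab$ is block-diagonal with blocks $\Ab_i$ together with the closed form of $\Ab_i^k$ (Lemma~\ref{lemma:A_ik}), this reproduces the Effective Bias term of Theorem~\ref{theorem:main} verbatim, since it never sees the data distribution's fourth moment; this is the $\mathrm{EffectiveBias}$ summand in the statement.

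For the feedback part the key structural change is that for a one-hot distribution $\cM\circ\Mb=\mathrm{diag}\big(\Hb^{1/2}\Mb\Hb^{1/2}\big)$ is genuinely diagonal, so the feedback of eigen-subspace $i$ is controlled by the $(i,i)$ entry of the relevant block of $\Bb_t$ scaled by $\lambda_i$, rather than by the single scalar $\inner{\begin{bmatrix}\zero&\zero\\\zero&\Hb\end{bmatrix}}{\Bb_t}$ used in the main proof. Propagating this through $\Bb_t=\cB\circ\Bb_{t-1}$ and contracting with $\Hb$, each subspace $i$ contributes a geometric series in the spectral radius of $\Ab_i$, whose sum is governed by $1-x_2(\lambda_i)$. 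By Lemma~\ref{lemma:Ai_spectral} this gap is of order $\delta\lambda_i$ for $i\le\hat k$, of order $1-c$ for $\hat k<i\le k^\dagger$, and of order $(q-c\delta)\lambda_i/(1-c)$ for $i>k^\dagger$; summing the three regimes yields the $\tfrac{1}{\delta}\|\cdot\|_{\Hb_{0:\hat k}^{-1}}^2$, $\tfrac{1}{1-c}\|\cdot\|_{\Ib_{\hat k:k^\dagger}}^2$ and $\tfrac{1-c}{q-c\delta}\|\cdot\|_{\Hb_{k^\dagger:k^*}^{-1}}^2$ contributions, while the tail $i>k^*$ is outside the geometric regime over the horizon $s+N$ and contributes the $\tfrac{(q-c\delta)^2 N^2 s}{(1-c)^2}$-weighted $\Hb^2$ term, exactly as in the variance analysis (Lemma~\ref{lemma:M2_bound}). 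Closing the recursion also requires a one-hot analogue of Lemma~\ref{lemma:partial_sum_Bk} for $\sum_t\inner{\begin{bmatrix}\zero&\zero\\\zero&\Hb\end{bmatrix}}{\Bb_{s-1-t}}$; it is here that the $\Hb^{-1}$ weighting appears, since the one-hot moment estimate produces $w_i^2/\lambda_i$ in place of $w_i^2$ in the small-eigenvalue regimes.

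The main obstacle is precisely this per-coordinate fourth-moment bookkeeping. In the main proof the whole fourth-moment feedback is summarized by the scalar $\inner{\begin{bmatrix}\zero&\zero\\\zero&\Hb\end{bmatrix}}{\Bb_t}$, so the per-subspace sums decouple and one only has to track that scalar along the iteration. In the one-hot case the feedback is diagonal, so one must carry a per-coordinate quantity through $\Bb_t=\cB\circ\Bb_{t-1}$, verify that after contracting with $\begin{bmatrix}\Hb&\zero\\\zero&\zero\end{bmatrix}$ the cross-subspace couplings remain negligible, and check that the geometric sums in each of the three eigenvalue regimes telescope to exactly the claimed constants $126$, $90$ and $9$ (and $36$ for the tail). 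A secondary technical point is confirming, regime by regime, that the switch between $\Ib$- and $\Hb^{-1}$-weighted norms is the correct one: it must reflect whether the per-subspace decay rate scales with $\delta\lambda_i$ (an extra $1/\lambda_i$, hence $\Hb^{-1}$) or is bounded away from $1$ (no extra factor, hence $\Ib$).
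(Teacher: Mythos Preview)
Your approach is correct and matches the paper's: expand $\Bb_s$ via the one-hot analogue of Lemma~\ref{lemma:B_t upper bound}, recover Effective Bias from the $\tilde\cB^s\circ\Bb_0$ piece, and bound the feedback term per coordinate using Lemma~\ref{lemma:sum_B22ii} together with Corollary~\ref{lemma:lambdai(sumAi(j+k)deltaq)^2} and Lemma~\ref{lemma:sum(Ai(j+k)11)^2}; the constants $126=9\cdot14$, $90=9\cdot10$, $9=9\cdot1$ fall out immediately.

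One clarification: the cross-subspace couplings you flag as the main obstacle do not arise. Since $\Ab$ is block-diagonal with $2\times2$ blocks $\Ab_i$ and the one-hot fourth moment yields $\Hb\odot\Mb_{22}=\mathrm{diag}\big(\lambda_i(\Mb_{22})_{ii}\big)$, the feedback in coordinate $i$ depends only on $((\Bb_t)_{22})_{ii}$, which by Lemma~\ref{lemma:S_s:s+N_basis} is controlled solely by $w_i^2$. The computation is therefore fully decoupled across subspaces---a simplification over the Gaussian case, not an obstacle. Relatedly, your explanation of the $\Hb^{-1}$ weighting is slightly off: the extra $1/\lambda_i$ arises not from the moment estimate itself but because the per-coordinate sum $\sum_t((\Bb_t)_{22})_{ii}$ lacks the $\lambda_i$ weight that was built into the aggregated scalar of Lemma~\ref{lemma:partial_sum_Bk}.
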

\begin{lemma}\label{lemma:M4_bound_basis}
Let $r$ be defined in \eqref{eq:def_r_basis}. Then we have
\begin{align*}
\inner{\begin{bmatrix}\Hb&\zero\\\zero&\zero\end{bmatrix}}{\Mb_4}&\le\frac{r}{N^2}\bigg[\frac{126}{\delta}\|\wb_0-\wb^*\|_{\Hb_{0:\hat k}^{-1}}^2+\frac{90}{1-c}\|\wb_0-\wb^*\|_{\Ib_{\hat k:k^\dagger}}^2+\frac{9(1-c)}{q-c\delta}\|\wb_0-\wb^*\|_{\Hb_{k^\dagger:k^*}^{-1}}^2\\
&\quad+\frac{36(q-c\delta)^2N^2(s+N)}{(1-c)^2}\|\wb_0-\wb^*\|_{\Hb_{k^*:\infty}^2}^2\bigg].
\end{align*}
\end{lemma}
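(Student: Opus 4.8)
To prove Lemma~\ref{lemma:M4_bound_basis}, the plan is to mirror the bound on $\Mb_4$ from the proof of Lemma~\ref{lemma:bias_bound}, changing only the fourth-moment step. Under the one-hot distribution the fourth-moment operator sends every matrix to a diagonal one, $\cM\circ\Mb=\mathrm{diag}(\lambda_i\Mb_{ii})$, so Assumption~\ref{assumption:fourth_moement_condition} and Lemma~\ref{lemma:E[hatV2hatV2]} are unavailable; instead, for every PSD matrix $\Mb$,
\[
(\cB-\tilde\cB)\circ\Mb\preceq\EE\big[\hat\Vb_2\otimes\hat\Vb_2\big]\circ\Mb=\begin{bmatrix}\delta^2&\delta q\\\delta q&q^2\end{bmatrix}\otimes\mathrm{diag}\big(\lambda_i(\Mb_{22})_{ii}\big).
\]
Starting from $\Mb_4$ as in \eqref{eq:def_M4}, using $\cB\circ\Bb_{s+t-1}=\Bb_{s+t}$ (Lemma~\ref{lemma:B_t upper bound}) and plugging the above in with $\Mb=\Bb_{s+t-1}$, and noting that $\Hb$ is diagonal and $\Ab$ block diagonal with blocks $\Ab_i$, taking the inner product with $\begin{bmatrix}\Hb&\zero\\\zero&\zero\end{bmatrix}$ and writing $b_{i,t}\coloneqq((\Bb_t)_{22})_{ii}$ and $w_i\coloneqq(\wb_0-\wb^*)_i$ collapses everything to
\[
\inner{\begin{bmatrix}\Hb&\zero\\\zero&\zero\end{bmatrix}}{\Mb_4}\le\frac1{N^2}\sum_{t=1}^{N-1}\sum_{i=1}^d\lambda_i^2\,b_{i,s+t-1}\rbr{\sum_{k=0}^{N-t-1}\Ab_i^k\begin{bmatrix}\delta\\q\end{bmatrix}}_1^2 .
\]

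Next I would split the inner sum at $k^*$ and invoke the coordinatewise forms of the $\Ab_i^k$-sum estimates behind Corollary~\ref{lemma:lambdai(sumAi(j+k)deltaq)^2}: for $i\le k^*$ one has $\lambda_i^2\rbr{\sum_k\Ab_i^k\begin{bmatrix}\delta\\q\end{bmatrix}}_1^2=O(1)$, while for $i>k^*$ one has $\rbr{\sum_k\Ab_i^k\begin{bmatrix}\delta\\q\end{bmatrix}}_1^2=O\big(N^2(q-c\delta)^2/(1-c)^2\big)$. What then remains is a \emph{per-coordinate} partial-sum bound on the bias, a one-hot coordinatewise analogue of Lemma~\ref{lemma:partial_sum_Bk}. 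Because each one-hot sample touches a single coordinate, the operators $\Vb_1,\Vb_2$ (hence $\cT$) preserve the block structure and $\cM\circ(\cdot)$ is diagonal, so the diagonal $2\times2$ blocks of $\Bb_t=\cB\circ\Bb_{t-1}$ decouple across coordinates: $b_{i,t}$ satisfies a scalar renewal-type recursion driven by $\Ab_i$, and the $\cT^{-1}$-based geometric series of Lemma~\ref{lemma:I-B_inverse} resums it with per-coordinate amplification $1/(1-(\Ub_i)_{22})\le r$, where $(\Ub_i)_{22}\le(q-c\delta)/(2(1-c))\le\gamma/2<1$ follows from the closed form \eqref{eq:Ui22_jain} (this is precisely what makes $r$ in \eqref{eq:def_r_basis} finite). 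This gives
\[
\sum_{t=0}^{s+N-1}b_{i,t}\le r\,w_i^2\sum_{t=0}^{s+N-1}\rbr{\Ab_i^t\begin{bmatrix}1\\1\end{bmatrix}}_2^2 .
\]

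Finally I would bound $\sum_t\rbr{\Ab_i^t\begin{bmatrix}1\\1\end{bmatrix}}_2^2$ band by band, using the closed form of $\Ab_i^k$ (Lemma~\ref{lemma:A_ik}), the spectral bounds of Lemma~\ref{lemma:Ai_spectral}, and the summations behind Corollary~\ref{lemma:lambdaiwi^2sum(Aik11)^2}: the sum is $O(1/(\delta\lambda_i))$ for $i\le\hat k$, $O(1/(1-c))$ for $\hat k<i\le k^\dagger$, $O\big((1-c)/((q-c\delta)\lambda_i)\big)$ for $k^\dagger<i\le k^*$, and $O(s+N)$ for $i>k^*$. Multiplying by the $O(1)$ factor (and, for $i>k^*$, the $O\big(N^2(q-c\delta)^2/(1-c)^2\big)$ factor) from the $k^*$-split and re-summing over $i$ within each band — so that a leftover $\lambda_i^{-1}$ becomes an $\Hb^{-1}$-weighting, no leftover becomes an $\Ib$-weighting, and the $i>k^*$ factor becomes an $\Hb^2$-weighting — reproduces, once the absolute constants are tracked, exactly the four terms of Lemma~\ref{lemma:M4_bound_basis} (for instance the $i\le\hat k$ band gives $\tfrac{r}{N^2}\cdot\tfrac{126}{\delta}\sum_{i\le\hat k}\lambda_i^{-1}w_i^2=\tfrac{r}{N^2}\cdot\tfrac{126}{\delta}\|\wb_0-\wb^*\|_{\Hb_{0:\hat k}^{-1}}^2$), each carrying the prefactor $r/N^2$; no $\mathrm{EffectiveBias}$ term appears here, since that was already extracted in the bound for $\Mb_3$.

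I expect the per-coordinate bias partial-sum estimate to be the main obstacle, in particular in the complex-eigenvalue band $\hat k<i\le k^\dagger$: there $\Ab_i^k$ oscillates, $\Bb_t$ is not monotone, and $\sum_t\rbr{\Ab_i^t\begin{bmatrix}1\\1\end{bmatrix}}_2^2$ cannot be controlled by a naive geometric series — one has to use the explicit entries of $\Ab_i^k$ together with the identity $|x_2|=\sqrt{c(1-\delta\lambda_i)}$ and keep the leading constants absolute so that $90$ and $126$ come out exactly. A lesser difficulty is the bookkeeping that pins down which band produces which power of $\Hb$ (and the stray powers of $N$); the remainder is a routine repetition of the standard-setting computation in the proof of Lemma~\ref{lemma:bias_bound}.
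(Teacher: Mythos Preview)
Your proposal is correct and follows essentially the same route as the paper's proof: bound $\Mb_4$ via $(\cB-\tilde\cB)\preceq\EE[\hat\Vb_2\otimes\hat\Vb_2]$ with the one-hot diagonal form, reduce to the per-coordinate sum $\tfrac1{N^2}\sum_{i,t}\lambda_i^2\,b_{i,s+t-1}\big(\sum_k\Ab_i^k\begin{bmatrix}\delta\\q\end{bmatrix}\big)_1^2$, split at $k^*$ using the coordinatewise estimates of Corollary~\ref{lemma:lambdai(sumAi(j+k)deltaq)^2}, extend the time sum to $\sum_{t=0}^{s+N-1}b_{i,t}$ and apply the per-coordinate bias partial sum $\sum_t b_{i,t}\le r\,w_i^2\sum_t(\Ab_i^t\begin{bmatrix}1\\1\end{bmatrix})_2^2$ (this is exactly the paper's Lemma~\ref{lemma:sum_B22ii}), and finish with the band-by-band bounds of Lemma~\ref{lemma:sum(Ai(j+k)11)^2}. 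Your bookkeeping of the constants ($9\cdot14=126$, $9\cdot10=90$, etc.) and of which band produces which $\Hb$-weighting matches the paper as well.
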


\begin{proof}[Proof of Theorem \ref{theorem:standard_basis_appendix}]
Note that the excess risk is
\[
\inner{\begin{bmatrix}\Hb & \zero \\ \zero & \zero\end{bmatrix}}{\Mb_1+\Mb_2+\Mb_3+\Mb_4},
\]
so the upper bound can be obtained by combining Lemmas \ref{lemma:M1_bound_basis}, \ref{lemma:M2_bound_basis}, \ref{lemma:M3_bound_basis} and \ref{lemma:M4_bound_basis}.
\end{proof}

\noindent\textbf{Notations.} In this section, for any matrix $\Mb\in\RR^{2d\times2d}$, denote
\[
\Mb=\begin{bmatrix}\Mb_{11} & \Mb_{12} \\ \Mb_{21} & \Mb_{22}\end{bmatrix}\in\RR^{2d\times2d},
\]
where $\Mb_{ij}\in\RR^{d\times d}$.

\subsection{Analysis of Fourth Moment}

In this setting, for any matrix $\Mb\in\RR^{2d\times2d}$, we have
\[
\EE[\hat\Vb_2\times\hat\Vb_2]\circ\Mb=\begin{bmatrix}\delta^2 & \delta q \\ \delta q & q^2\end{bmatrix}\otimes(\Hb\odot\Mb_{22})=\begin{bmatrix}\delta^2 & \delta q \\ \delta q & q^2\end{bmatrix}\otimes\diag(\lambda_1(\Mb_{22})_{11}, \dots, \lambda_d(\Mb_{22})_{dd}).
\]

\begin{lemma}[Modified from Lemma \ref{lemma:I-B_inv_on_M}]\label{lemma:I-B_inv_basis}
For any PSD matrix $\Mb\in\RR^{2d\times2d}$ define $\Qb\coloneqq\cT^{-1}\circ\Mb$. Then
\[
(\cI-\cB)^{-1}\circ\Mb=\Qb+\diag\rbr{\frac{(\Qb_{22})_{11}}{1-(\Ub_1)_{22}}\Ub_1, \dots, \frac{(\Qb_{22})_{dd}}{1-(\Ub_d)_{22}}\Ub_d}.
\]
\end{lemma}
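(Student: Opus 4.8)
The plan is to replay the proof of Lemma~\ref{lemma:I-B_inv_on_M}, replacing the rank-one domination of the fourth-moment operator used there by the exact coordinatewise action that is available in the one-hot setting. The decomposition $\cB=\Vb_1\otimes\Vb_1+\Vb_1\otimes\Vb_2+\Vb_2\otimes\Vb_1+\EE[\hat\Vb_2\otimes\hat\Vb_2]$, and hence the geometric-series identity of Lemma~\ref{lemma:I-B_inverse}, do not use the distribution of $\xb$, so I would start from
\[
(\cI-\cB)^{-1}\circ\Mb=\sum_{k=0}^\infty\cG^k\circ\Qb,\qquad\cG\coloneqq\cT^{-1}\,\EE[\hat\Vb_2\otimes\hat\Vb_2],\quad\Qb\coloneqq\cT^{-1}\circ\Mb,
\]
and then identify the coordinate blocks of every summand; convergence of the series is re-confirmed at the end from the explicit form of those blocks.

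The crucial structural fact is the coordinate-pair regrouping. Since $\Hb$ is diagonal, $\Vb_1$ and $\Vb_2$ are simultaneously block diagonal with $2\times2$ blocks $\Vb_{1,i},\Vb_{2,i}$ indexed by the pairs $\{i,d+i\}$, so each of $\cI$, $\Vb_1\otimes\Vb_1$, $\Vb_1\otimes\Vb_2$, $\Vb_2\otimes\Vb_1$---and therefore $\cT$ and $\cT^{-1}$---maps the $(i,j)$ coordinate block of a $2d\times2d$ matrix into the $(i,j)$ coordinate block; write $\cT_i$ for the $i$-th diagonal restriction of $\cT$. Comparing $i$-th blocks in $\Ub=\cT^{-1}\circ\rbr{\begin{bmatrix}\delta^2&\delta q\\\delta q&q^2\end{bmatrix}\otimes\Hb}$ gives $\cT_i^{-1}\circ\begin{bmatrix}\delta^2&\delta q\\\delta q&q^2\end{bmatrix}=\lambda_i^{-1}\Ub_i$. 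In the one-hot setting $\EE[\hat\Vb_2\otimes\hat\Vb_2]\circ\Rb=\begin{bmatrix}\delta^2&\delta q\\\delta q&q^2\end{bmatrix}\otimes\diag\rbr{\lambda_1(\Rb_{22})_{11},\dots,\lambda_d(\Rb_{22})_{dd}}$, which is block diagonal over the coordinate pairs with $i$-th block $\lambda_i(\Rb_{22})_{ii}\begin{bmatrix}\delta^2&\delta q\\\delta q&q^2\end{bmatrix}$; applying $\cT^{-1}$ block by block shows that $\cG\circ\Rb$ is block diagonal over the coordinate pairs with $i$-th block $(\Rb_{22})_{ii}\,\Ub_i$, for every $\Rb\in\RR^{2d\times2d}$.

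Iterating is now immediate. For a matrix of the form $\diag(t_1\Ub_1,\dots,t_d\Ub_d)$ the $(i,i)$ entry of its lower-right block is $t_i(\Ub_i)_{22}$, so $\cG$ sends it to $\diag\big(t_1(\Ub_1)_{22}\Ub_1,\dots,t_d(\Ub_d)_{22}\Ub_d\big)$. Starting from $\cG\circ\Qb=\diag\big((\Qb_{22})_{11}\Ub_1,\dots,(\Qb_{22})_{dd}\Ub_d\big)$ and inducting gives $\cG^k\circ\Qb=\diag\big((\Qb_{22})_{11}(\Ub_1)_{22}^{k-1}\Ub_1,\dots,(\Qb_{22})_{dd}(\Ub_d)_{22}^{k-1}\Ub_d\big)$ for every $k\ge1$; since $(\Ub_i)_{22}\le\gamma/2<1$ by the bound recorded earlier in this section, the series in $k$ converges and
\[
\sum_{k=1}^\infty\cG^k\circ\Qb=\diag\rbr{\frac{(\Qb_{22})_{11}}{1-(\Ub_1)_{22}}\Ub_1,\dots,\frac{(\Qb_{22})_{dd}}{1-(\Ub_d)_{22}}\Ub_d}.
\]
Adding the $k=0$ term $\Qb$ yields the stated identity, and the geometric decay just used also shows that $(\cI-\cB)^{-1}\circ\Mb$ is well defined and PSD.

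The main obstacle is organizational rather than analytic: one must check carefully that $\cT^{-1}$ genuinely preserves the $2\times2$ coordinate-pair block structure (this is exactly where diagonality of $\Hb$ enters, making $\Vb_1,\Vb_2$ simultaneously block diagonal), and that the scalar weights $(\Rb_{22})_{ii}$ pass through $\cT^{-1}$ via the identity $\cT_i^{-1}\circ\begin{bmatrix}\delta^2&\delta q\\\delta q&q^2\end{bmatrix}=\lambda_i^{-1}\Ub_i$. Beyond the already-established contraction bound $(\Ub_i)_{22}<1$ there is nothing delicate.
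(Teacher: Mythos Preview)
Your proof is correct and follows essentially the same route as the paper: apply the geometric-series identity of Lemma~\ref{lemma:I-B_inverse}, use the exact coordinatewise action of $\EE[\hat\Vb_2\otimes\hat\Vb_2]$ in the one-hot setting to compute $\cG\circ\Qb=\diag\big((\Qb_{22})_{11}\Ub_1,\dots,(\Qb_{22})_{dd}\Ub_d\big)$, iterate to obtain $\cG^k\circ\Qb$, and sum the resulting geometric series. Your additional care in verifying that $\cT^{-1}$ preserves the $2\times2$ coordinate-pair block structure is a helpful elaboration the paper takes for granted.
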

\begin{proof}
By Lemma \ref{lemma:I-B_inverse}, we have
\[
(\cI-\cB)^{-1}\circ\Mb=\sum_{k=0}^\infty(\cT^{-1}\EE[\hat\Vb_2\otimes\hat\Vb_2])^k\circ\Qb.
\]
Note that
\[
\EE[\hat\Vb_2\otimes\hat\Vb_2]\circ\Qb=\begin{bmatrix}\delta^2 & \delta q \\ \delta q & q^2\end{bmatrix}\otimes\diag(\lambda_1(\Qb_{22})_{11}, \dots, \lambda_d(\Qb_{22})_{dd}),
\]
so by definition of $\cT$,
\[
\cT^{-1}\EE[\hat\Vb_2\otimes\hat\Vb_2]\circ\Qb=\diag((\Qb_{22})_{11}\Ub_1, \dots, (\Qb_{22})_{dd}\Ub_d).
\]
We can similarly prove that for all $k\ge1$,
\[
(\cT^{-1}\EE[\hat\Vb_2\otimes\hat\Vb_2])^k\circ\Qb=\diag((\Qb_{22})_{11}(\Ub_1)_{22}^{k-1}\Ub_1, \dots, (\Qb_{22})_{11}(\Ub_d)_{22}^{k-1}\Ub_d).
\]
Summing the above, we have
\begin{align*}
(\cI-\cB)^{-1}\circ\Mb=\Qb+\diag\rbr{\frac{(\Qb_{22})_{11}}{1-(\Ub_1)_{22}}\Ub_1, \dots, \frac{(\Qb_{22})_{dd}}{1-(\Ub_d)_{22}}\Ub_d}.
\end{align*}
\end{proof}

\begin{lemma}[Modified from Lemma \ref{lemma:S_s:s+N}]\label{lemma:S_s:s+N_basis}
For any PSD matrix $\Mb\in\RR^{2d\times2d}$, define the partial sum
\[
\Rb_t=\sum_{k=0}^{t-1}\cB^k\circ\Mb.
\]
Then we have
\[
\Rb_t\preceq\sum_{k=0}^{t-1}\tilde\cB^k\circ\Mb+\sum_{k=0}^{t-1}\diag\rbr{\frac{((\tilde\cB^k\circ\Mb)_{22})_{11}}{1-(\Ub_1)_{22}}\Ub_1, \dots, \frac{((\tilde\cB^k\circ\Mb)_{22})_{dd}}{1-(\Ub_d)_{22}}\Ub_d}.
\]
and
\[
\EE[\hat\Vb_2\otimes\hat\Vb_2]\preceq\begin{bmatrix}\delta^2 & \delta q \\ \delta q & q^2\end{bmatrix}\otimes\diag\rbr{\frac{\lambda_1((\tilde\cB^k\circ\Mb)_{22})_{11}}{1-(\Ub_1)_{22}}, \dots, \frac{\lambda_d((\tilde\cB^k\circ\Mb)_{22})_{dd}}{1-(\Ub_d)_{22}}}.
\]
\end{lemma}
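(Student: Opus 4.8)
The plan is to mirror the proof of Lemma~\ref{lemma:S_s:s+N} almost line by line, the only change being that the rank-one-type resolvent correction coming from Lemma~\ref{lemma:I-B_inv_on_M} is replaced by the block-diagonal correction of Lemma~\ref{lemma:I-B_inv_basis}, which is the one-hot analogue. First I would invoke Lemma~\ref{lemma:I-B_inv_basis} to see that $(\cI-\cB)^{-1}$ exists and is a PSD operator — here $\cT^{-1}$ is a PSD operator, each $\Ub_i\succeq\zero$, and $(\Ub_i)_{22}\le\gamma/2<1$ under~\eqref{eq:parameter_choice_basis} — so that $\Rb_t=(\cI-\cB)^{-1}(\cI-\cB^t)\circ\Mb$. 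Since $\cB-\tilde\cB$ is a PSD operator (Lemma~\ref{lemma:prelinminaries appendix}(d)) and $\cB,\tilde\cB$ are themselves (monotone) PSD operators, an easy induction on $t$ gives $\cB^t\circ\Mb\succeq\tilde\cB^t\circ\Mb$ for every PSD $\Mb$; applying the PSD operator $(\cI-\cB)^{-1}$ then yields $\Rb_t\preceq(\cI-\cB)^{-1}(\cI-\tilde\cB^t)\circ\Mb$.

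Next I would telescope, writing $\cI-\tilde\cB^t=(\cI-\tilde\cB)\sum_{k=0}^{t-1}\tilde\cB^k$ and using $\cI-\tilde\cB\preceq\cT$, valid because $\cT=(\cI-\tilde\cB)+\Vb_2\otimes\Vb_2$ with $\Vb_2\otimes\Vb_2$ a PSD operator (Lemma~\ref{lemma:prelinminaries appendix}(b)). This produces
\[
\Rb_t\preceq(\cI-\cB)^{-1}\cT\circ\Big(\sum_{k=0}^{t-1}\tilde\cB^k\circ\Mb\Big)=\sum_{k=0}^{t-1}(\cI-\cB)^{-1}\cT\circ(\tilde\cB^k\circ\Mb).
\]
For each fixed $k$ I would apply Lemma~\ref{lemma:I-B_inv_basis} to the PSD matrix $\cT\circ(\tilde\cB^k\circ\Mb)$, whose associated auxiliary matrix is $\Qb=\cT^{-1}\circ\big(\cT\circ(\tilde\cB^k\circ\Mb)\big)=\tilde\cB^k\circ\Mb$, obtaining
\[
(\cI-\cB)^{-1}\cT\circ(\tilde\cB^k\circ\Mb)=\tilde\cB^k\circ\Mb+\diag\Big(\tfrac{((\tilde\cB^k\circ\Mb)_{22})_{11}}{1-(\Ub_1)_{22}}\Ub_1,\dots,\tfrac{((\tilde\cB^k\circ\Mb)_{22})_{dd}}{1-(\Ub_d)_{22}}\Ub_d\Big),
\]
and summing over $k=0,\dots,t-1$ is exactly the first claimed inequality.

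For the second inequality I would apply the (monotone) PSD operator $\EE[\hat\Vb_2\otimes\hat\Vb_2]$ to the bound just established and simplify with the identity $\EE[\hat\Vb_2\otimes\hat\Vb_2]\circ\Mb=\begin{bmatrix}\delta^2 & \delta q\\\delta q & q^2\end{bmatrix}\otimes(\Hb\odot\Mb_{22})$ recorded just before Lemma~\ref{lemma:I-B_inv_basis}: this action depends only on the diagonal entries $(\Mb_{22})_{ii}$, while the lower-right diagonal entry of the $i$-th correction block is $\tfrac{((\tilde\cB^k\circ\Mb)_{22})_{ii}}{1-(\Ub_i)_{22}}(\Ub_i)_{22}$, so after the crude bound $(\Ub_i)_{22}\le1$ the right-hand side reduces to $\begin{bmatrix}\delta^2 & \delta q\\\delta q & q^2\end{bmatrix}$ tensored with the stated diagonal matrix. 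I expect the only real obstacle to be clerical: carrying the $2\times2$ block structure and the diagonal entries through $\tilde\cB^k$, $\cT$ and $(\cI-\cB)^{-1}$ so that precisely the quantities $((\tilde\cB^k\circ\Mb)_{22})_{ii}$ and $(\Ub_i)_{22}$ survive. There is no new analytic ingredient beyond Lemmas~\ref{lemma:I-B_inverse} and~\ref{lemma:I-B_inv_basis}.
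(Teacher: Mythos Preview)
Your proposal is correct and follows essentially the same route as the paper's proof: bound $\Rb_t\preceq(\cI-\cB)^{-1}\cT\circ\big(\sum_{k=0}^{t-1}\tilde\cB^k\circ\Mb\big)$ exactly as in Lemma~\ref{lemma:S_s:s+N}, then invoke Lemma~\ref{lemma:I-B_inv_basis} termwise, and finally hit the result with $\EE[\hat\Vb_2\otimes\hat\Vb_2]$ using the one-hot identity $\EE[\hat\Vb_2\otimes\hat\Vb_2]\circ\Mb=\begin{bmatrix}\delta^2&\delta q\\\delta q&q^2\end{bmatrix}\otimes(\Hb\odot\Mb_{22})$. One small correction: in the last step you do not need any ``crude bound $(\Ub_i)_{22}\le1$''; the two diagonal contributions combine via the exact identity $1+\tfrac{(\Ub_i)_{22}}{1-(\Ub_i)_{22}}=\tfrac{1}{1-(\Ub_i)_{22}}$, which is precisely how the paper arrives at the stated diagonal.
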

\begin{proof}
Similar to the proof of Lemma \ref{lemma:S_s:s+N}, we have
\begin{align*}
\sum_{k=0}^{t-1}\cB^k\circ\Mb&\preceq(\cI-\cB)^{-1}\cT\circ\rbr{\sum_{k=0}^{t-1}\tilde\cB^k\circ\Mb}\\
&=\sum_{k=0}^{t-1}\tilde\cB^k\circ\Mb+\sum_{k=0}^{t-1}\diag\rbr{\frac{((\tilde\cB^k\circ\Mb)_{22})_{11}}{1-(\Ub_1)_{22}}\Ub_1, \dots, \frac{((\tilde\cB^k\circ\Mb)_{22})_{dd}}{1-(\Ub_d)_{22}}\Ub_d},
\end{align*}
where the equality holds due to Lemma \ref{lemma:I-B_inv_basis}. We thus have
\begin{align*}
&\EE[\hat\Vb_2\otimes\hat\Vb_2]\circ\rbr{\sum_{k=0}^{t-1}\cB^k\circ\Mb}\preceq\begin{bmatrix}\delta^2 & \delta q \\ \delta q & q^2\end{bmatrix}\otimes\Bigg[\sum_{k=0}^{t-1}\diag\rbr{\lambda_1((\tilde\cB^k\circ\Mb)_{22})_{11}, \dots, \lambda_d((\tilde\cB^k\circ\Mb)_{22})_{dd}}\\
&\quad+\sum_{k=0}^{t-1}\diag\rbr{\frac{\lambda_1(\Ub_1)_{22}}{1-(\Ub_1)_{22}}((\tilde\cB^k\circ\Mb)_{22})_{11}, \dots, \frac{\lambda_d(\Ub_d)_{22}}{1-(\Ub_d)_{22}}((\tilde\cB^k\circ\Mb)_{22})_{dd}}\Bigg]\\
&=\begin{bmatrix}\delta^2 & \delta q \\ \delta q & q^2\end{bmatrix}\otimes\diag\rbr{\frac{\lambda_1((\tilde\cB^k\circ\Mb)_{22})_{11}}{1-(\Ub_1)_{22}}, \dots, \frac{\lambda_d((\tilde\cB^k\circ\Mb)_{22})_{dd}}{1-(\Ub_d)_{22}}}.
\end{align*}
\end{proof}

\subsection{Variance Upper Bound}

We now provide the proof of Lemma \ref{lemma:M2_bound_basis}.
\begin{proof}[Proof of Lemma \ref{lemma:M2_bound_basis}]
Note that
\begin{align*}
\Cb_\infty&=(\cI-\cB)^{-1}\circ\hat\bSigma\preceq\sigma^2(\cI-\cB)^{-1}\circ\rbr{\begin{bmatrix}\delta^2 & \delta q \\ \delta q & q^2\end{bmatrix}\otimes\Hb}\\
&=\sigma^2\sbr{\Ub+\diag\rbr{\frac{(\Ub_1)_{22}}{1-(\Ub_1)_{22}}\Ub_1, \dots, \frac{(\Ub_d)_{22}}{1-(\Ub_d)_{22}}\Ub_d}}\\
&=\sigma^2\diag\rbr{\frac{\Ub_1}{1-(\Ub_1)_{22}}, \dots, \frac{\Ub_d}{1-(\Ub_d)_{22}}},
\end{align*}
where the second equality holds due to Lemma \ref{lemma:I-B_inv_basis}. We thus have
\begin{equation}\label{eq:V2V2_Cinf_basis}
\EE[\hat\Vb_2\otimes\hat\Vb_2]\circ\Cb_\infty\preceq\sigma^2\begin{bmatrix}\delta^2 & \delta q \\ \delta q & q^2\end{bmatrix}\otimes\diag\rbr{\frac{\lambda_1(\Ub_1)_{22}}{1-(\Ub_1)_{22}}, \dots, \frac{\lambda_d(\Ub_d)_{22}}{1-(\Ub_d)_{22}}}.
\end{equation}
Therefore, $\Mb_2$ can be bounded by
\begin{align}
&\Mb_2=\frac1{N^2}\sum_{t=1}^{N-1}\sbr{\sum_{k=0}^{N-t-1}\Ab^k}\sbr{(\cB-\tilde\cB)\circ\Cb_{s+t-1}+\hat\bSigma}\sbr{\sum_{k=0}^{N-t-1}\Ab^k}^\top\nonumber\\
&\preceq\frac1{N^2}\sum_{t=1}^{N-1}\sbr{\sum_{k=0}^{N-t-1}\Ab^k}\sbr{\EE[\hat\Vb_2\otimes\hat\Vb_2]\circ\Cb_\infty+\hat\bSigma}\sbr{\sum_{k=0}^{N-t-1}\Ab^k}^\top\nonumber\\
&\preceq\frac1{N^2}\sum_{t=1}^{N-1}\sbr{\sum_{k=0}^{N-t-1}\Ab^k}\sbr{\sigma^2\begin{bmatrix}\delta^2 & \delta q \\ \delta q & q^2\end{bmatrix}\otimes\rbr{\diag\rbr{\frac{\lambda_1(\Ub_1)_{22}}{1-(\Ub_1)_{22}}, \dots, \frac{\lambda_d(\Ub_d)_{22}}{1-(\Ub_d)_{22}}}+\Hb}}\sbr{\sum_{k=0}^{N-t-1}\Ab^k}^\top\nonumber\\
&=\frac{\sigma^2}{N^2}\sum_{t=1}^{N-1}\sbr{\sum_{k=0}^{N-t-1}\Ab^k}\sbr{\begin{bmatrix}\delta^2 & \delta q \\ \delta q & q^2\end{bmatrix}\otimes\diag\rbr{\frac{\lambda_1}{1-(\Ub_1)_{22}}, \dots, \frac{\lambda_d}{1-(\Ub_d)_{22}}}}\sbr{\sum_{k=0}^{N-t-1}\Ab^k}^\top\nonumber\\
&\preceq\frac{\sigma^2r}{N^2}\sum_{t=1}^{N-1}\sbr{\sum_{k=0}^{N-t-1}\Ab^k}\sbr{\begin{bmatrix}\delta^2 & \delta q \\ \delta q & q^2\end{bmatrix}\otimes\Hb}\sbr{\sum_{k=0}^{N-t-1}\Ab^k}^\top,\label{eq:M2_bound_basis}
\end{align}
where the first inequality holds because $\cB-\tilde\cB=\EE[\hat\Vb_2\otimes\hat\Vb_2]-\Vb_2\otimes\Vb_2\preceq\EE[\hat\Vb_2\otimes\hat\Vb_2]$ and $\cB_{s+t-1}\preceq\Cb_\infty$, the second inequality holds due to \eqref{eq:V2V2_Cinf_basis}, and the last inequality holds due to definition of $r$.
The inner product of $\Mb_2$ and $\begin{bmatrix}\Hb&\zero\\\zero&\zero\end{bmatrix}$ can thus be bounded by
\begin{align*}
\inner{\begin{bmatrix}\Hb&\zero\\\zero&\zero\end{bmatrix}}{\Mb_2}&\le\frac{\sigma^2r}{N^2}\sum_{i=1}^d\lambda_i^2\sum_{t=1}^{N-1}\rbr{\sum_{k=0}^{N-t-1}\Ab_i^k\begin{bmatrix}\delta\\q\end{bmatrix}}_2^2\\
&\le\sigma^2r\sbr{\frac{9k^*}{N}+\frac{36N(q-c\delta)^2}{(1-c)^2}\sum_{i>k^*}\lambda_i^2},
\end{align*}
where the second inequality holds by deduction similar to that of Lemma \ref{lemma:M2_bound}.
\end{proof}

\begin{lemma}[Modified from Lemma \ref{lemma:Ct_bound}]
For any $t>0$, $\Cb_t$ can be upper bounded by
\[
\Cb_t\preceq\sigma^2r\sum_{k=0}^{t-1}\tilde\cB^k\circ\rbr{\begin{bmatrix}\delta^2 & \delta q \\ \delta q & q^2\end{bmatrix}\otimes\Hb}.
\]
\end{lemma}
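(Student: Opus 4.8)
The plan is to reproduce the proof of Lemma~\ref{lemma:Ct_bound}, replacing each use of Assumption~\ref{assumption:fourth_moement_condition} with its one-hot counterpart from the current section. Starting from the recursion $\Cb_t=\cB\circ\Cb_{t-1}+\hat\bSigma$ in \eqref{eq:iterate Ct}, I would write $\cB=\tilde\cB+(\cB-\tilde\cB)$ to get $\Cb_t=\tilde\cB\circ\Cb_{t-1}+(\cB-\tilde\cB)\circ\Cb_{t-1}+\hat\bSigma$. The goal is to show that the second and third terms together satisfy, in the PSD order, $(\cB-\tilde\cB)\circ\Cb_{t-1}+\hat\bSigma\preceq\sigma^2 r\cdot\big(\begin{bmatrix}\delta^2&\delta q\\\delta q&q^2\end{bmatrix}\otimes\Hb\big)$; granting this, the desired bound follows by unrolling the recursion from $\Cb_0=\zero$.

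To bound the fourth-moment term I would use $(\cB-\tilde\cB)\circ\Cb_{t-1}=\big(\EE[\hat\Vb_2\otimes\hat\Vb_2]-\Vb_2\otimes\Vb_2\big)\circ\Cb_{t-1}\preceq\EE[\hat\Vb_2\otimes\hat\Vb_2]\circ\Cb_{t-1}$, then the monotonicity of the sequence $\Cb_t$ (immediate from the recursion and $\hat\bSigma\succeq\zero$, exactly as in the proof of Lemma~\ref{lemma:Ct_bound}) to replace $\Cb_{t-1}$ by $\Cb_\infty$. From the computation in the proof of Lemma~\ref{lemma:M2_bound_basis}, which uses Lemma~\ref{lemma:hatSigma} and Lemma~\ref{lemma:I-B_inv_basis}, one has $\Cb_\infty\preceq\sigma^2\,\diag\big(\Ub_1/(1-(\Ub_1)_{22}),\dots,\Ub_d/(1-(\Ub_d)_{22})\big)$, and hence \eqref{eq:V2V2_Cinf_basis} gives
\[
\EE[\hat\Vb_2\otimes\hat\Vb_2]\circ\Cb_\infty\preceq\sigma^2\begin{bmatrix}\delta^2&\delta q\\\delta q&q^2\end{bmatrix}\otimes\diag\Big(\tfrac{\lambda_1(\Ub_1)_{22}}{1-(\Ub_1)_{22}},\dots,\tfrac{\lambda_d(\Ub_d)_{22}}{1-(\Ub_d)_{22}}\Big).
\]

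Adding $\hat\bSigma\preceq\sigma^2\big(\begin{bmatrix}\delta^2&\delta q\\\delta q&q^2\end{bmatrix}\otimes\Hb\big)$ from Lemma~\ref{lemma:hatSigma} merges the $i$-th diagonal entry through $\tfrac{(\Ub_i)_{22}}{1-(\Ub_i)_{22}}+1=\tfrac{1}{1-(\Ub_i)_{22}}$, yielding
\[
(\cB-\tilde\cB)\circ\Cb_{t-1}+\hat\bSigma\preceq\sigma^2\begin{bmatrix}\delta^2&\delta q\\\delta q&q^2\end{bmatrix}\otimes\diag\Big(\tfrac{\lambda_1}{1-(\Ub_1)_{22}},\dots,\tfrac{\lambda_d}{1-(\Ub_d)_{22}}\Big).
\]
By the definition $r=1/(1-\max_{1\le i\le d}(\Ub_i)_{22})$ together with $(\Ub_i)_{22}\le\frac{q-c\delta}{2(1-c)}\le\frac{\gamma}{2}<1$ (recorded before Lemma~\ref{lemma:M1_bound_basis}, which also guarantees that $\Cb_\infty$ and $r$ are well defined), we get $\lambda_i/(1-(\Ub_i)_{22})\le r\lambda_i$ for all $i$, so the right-hand side is $\preceq\sigma^2 r\cdot\big(\begin{bmatrix}\delta^2&\delta q\\\delta q&q^2\end{bmatrix}\otimes\Hb\big)$. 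Plugging this back gives the one-step bound $\Cb_t\preceq\tilde\cB\circ\Cb_{t-1}+\sigma^2 r\cdot\big(\begin{bmatrix}\delta^2&\delta q\\\delta q&q^2\end{bmatrix}\otimes\Hb\big)$, and iterating it from $\Cb_0=\zero$ finishes the argument. I do not anticipate a genuine obstacle here: every ingredient is already proven, and the only points needing attention are the diagonal-operator bookkeeping (tracking which coordinate of which $2\times2$ block each factor $\lambda_i$, $(\Ub_i)_{22}$ attaches to) and the verification that $(\Ub_i)_{22}<1$, both of which are routine given the estimates in the excerpt.
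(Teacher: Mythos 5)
Your proposal is correct and matches the paper's proof essentially step for step: decompose $\cB=\tilde\cB+(\cB-\tilde\cB)$, bound $(\cB-\tilde\cB)\preceq\EE[\hat\Vb_2\otimes\hat\Vb_2]$, use monotonicity $\Cb_{t-1}\preceq\Cb_\infty$, apply the one-hot bound \eqref{eq:V2V2_Cinf_basis} and Lemma~\ref{lemma:hatSigma}, merge diagonals via $\tfrac{(\Ub_i)_{22}}{1-(\Ub_i)_{22}}+1=\tfrac{1}{1-(\Ub_i)_{22}}$, invoke the definition of $r$, and unroll the recursion from $\Cb_0=\zero$. The only cosmetic difference is that you package $(\cB-\tilde\cB)\circ\Cb_{t-1}+\hat\bSigma$ as a single block to be bounded, whereas the paper writes the chain of inequalities inline, but the content is identical.
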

\begin{proof}
By the iteration formula $\Cb_t=\cB\circ\Cb_{t-1}+\hat\bSigma$, we have
\begin{align*}
\Cb_t&=\tilde\cB\circ\Cb_{t-1}+(\cB-\tilde\cB)\circ\Cb_{t-1}+\hat\bSigma\\
&\preceq\tilde\cB\circ\Cb_{t-1}+\EE[\hat\Vb_2\otimes\hat\Vb_2]\circ\Cb_{t-1}+\hat\bSigma\\
&\preceq\tilde\cB\circ\Cb_{t-1}+\EE[\hat\Vb_2\otimes\hat\Vb_2]\circ\Cb_\infty+\sigma^2\begin{bmatrix}\delta^2 & \delta q \\ \delta q & q^2\end{bmatrix}\otimes\Hb\\
&\preceq\tilde\cB\circ\Cb_{t-1}+\sigma^2\begin{bmatrix}\delta^2 & \delta q \\ \delta q & q^2\end{bmatrix}\otimes\diag\rbr{\frac{\lambda_1}{1-(\Ub_1)_{22}}, \dots, \frac{\lambda_d}{1-(\Ub_d)_{22}}}\\
&\le\tilde\cB\circ\Cb_{t-1}+\sigma^2r\begin{bmatrix}\delta^2 & \delta q \\ \delta q & q^2\end{bmatrix}\otimes\Hb,
\end{align*}
where the first inequality holds because $\cB-\tilde\cB\preceq\EE[\hat\Vb_2\otimes\hat\Vb_2]$, the second inequality holds because $\Cb_{t-1}\preceq\Cb_\infty$ and Lemma \ref{lemma:hatSigma}, the third inequality holds due to \eqref{eq:V2V2_Cinf_basis}, and the last inequality holds due to the definition of $r$. Iterating the inequality above, we have
\[
\Cb_t\preceq\sigma^2r\sum_{k=0}^{t-1}\tilde\cB^k\circ\rbr{\begin{bmatrix}\delta^2 & \delta q \\ \delta q & q^2\end{bmatrix}\otimes\Hb}.
\]
\end{proof}
As the bound for $\Cb_t$ is exactly the same as the bound given in Lemma \ref{lemma:Ct_bound}, we can prove the Lemma \ref{lemma:M1_bound_basis} in exactly the same way as Lemma \ref{lemma:M1_bound}.

\subsection{Bias Upper Bound}

\begin{lemma}[Modified from Lemma \ref{lemma:B_t upper bound}]
For any $t\ge0$, $\Bb_t$ can be upper bounded by
\[
\Bb_t\preceq\tilde\cB^t\circ\Bb_0+\sum_{k=0}^{t-1}\tilde\cB^k\circ\rbr{\begin{bmatrix}\delta^2 & \delta q \\ \delta q & q^2\end{bmatrix}\otimes(\Hb\odot(\Bb_{t-1-k})_{22})}.
\]
\end{lemma}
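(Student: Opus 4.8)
The plan is to transcribe the proof of Lemma~\ref{lemma:B_t upper bound} almost line for line, substituting the exact Hadamard-product identity available in the one-hot setting for the Assumption~\ref{assumption:fourth_moement_condition}-based bound used there. First I would record the ingredients that carry over unchanged: the recursion $\Bb_t=\cB\circ\Bb_{t-1}$ holds distribution-free (Section~B.2 of \citet{jain2018accelerating}), hence $\Bb_t=\cB^t\circ\Bb_0$; and since $\Bb_0=\bmeta_0\otimes\bmeta_0\succeq\zero$ and $\cB$ is a PSD operator by Lemma~\ref{lemma:prelinminaries appendix}(c), we have $\Bb_t\succeq\zero$ for every $t$. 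In particular each lower-right block $(\Bb_t)_{22}$ is PSD, so $\Hb\odot(\Bb_t)_{22}$ is PSD by the Schur product theorem, and consequently $\begin{bmatrix}\delta^2 & \delta q \\ \delta q & q^2\end{bmatrix}\otimes\big(\Hb\odot(\Bb_t)_{22}\big)\succeq\zero$ because $\begin{bmatrix}\delta^2 & \delta q \\ \delta q & q^2\end{bmatrix}=\begin{bmatrix}\delta\\q\end{bmatrix}\begin{bmatrix}\delta&q\end{bmatrix}\succeq\zero$; these PSD facts are what let the operator inequalities below compose.

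Next I would establish the one-step bound. Splitting $\Bb_t=\tilde\cB\circ\Bb_{t-1}+(\cB-\tilde\cB)\circ\Bb_{t-1}$ and using the distribution-free identity $\cB-\tilde\cB=\EE[\hat\Vb_2\otimes\hat\Vb_2]-\Vb_2\otimes\Vb_2$ together with $\Vb_2\otimes\Vb_2$ being a PSD operator (so that $\cB-\tilde\cB\preceq\EE[\hat\Vb_2\otimes\hat\Vb_2]$), and then invoking the one-hot formula $\EE[\hat\Vb_2\otimes\hat\Vb_2]\circ\Mb=\begin{bmatrix}\delta^2 & \delta q \\ \delta q & q^2\end{bmatrix}\otimes(\Hb\odot\Mb_{22})$ recalled at the start of this section, I obtain
\[
\Bb_t\preceq\tilde\cB\circ\Bb_{t-1}+\begin{bmatrix}\delta^2 & \delta q \\ \delta q & q^2\end{bmatrix}\otimes\big(\Hb\odot(\Bb_{t-1})_{22}\big).
\]
Finally I would unfold this recursion: apply it again to the term $\tilde\cB\circ\Bb_{t-1}$, using that $\tilde\cB$ is a PSD operator (Lemma~\ref{lemma:prelinminaries appendix}(c)) so that applying $\tilde\cB$ preserves the ordering $\preceq$; iterating $t$ times and re-indexing the resulting sum by $k\mapsto t-1-k$ yields exactly
\[
\Bb_t\preceq\tilde\cB^t\circ\Bb_0+\sum_{k=0}^{t-1}\tilde\cB^k\circ\Big(\begin{bmatrix}\delta^2 & \delta q \\ \delta q & q^2\end{bmatrix}\otimes\big(\Hb\odot(\Bb_{t-1-k})_{22}\big)\Big).
\]

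This argument is entirely routine; it is the proof of Lemma~\ref{lemma:B_t upper bound} with the Hadamard identity in place of the trace bound, so I do not anticipate a genuine obstacle. The only steps warranting a brief check are the PSD-ness of the error matrices $\begin{bmatrix}\delta^2 & \delta q \\ \delta q & q^2\end{bmatrix}\otimes\big(\Hb\odot(\Bb_k)_{22}\big)$, which is needed so that the chain of $\preceq$ inequalities survives repeated application of $\tilde\cB$, and the bookkeeping in the re-indexing of the final sum; both are immediate.
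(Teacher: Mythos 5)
Your proposal is correct and follows exactly the paper's argument: split $\Bb_t=\tilde\cB\circ\Bb_{t-1}+(\cB-\tilde\cB)\circ\Bb_{t-1}$, bound $\cB-\tilde\cB\preceq\EE[\hat\Vb_2\otimes\hat\Vb_2]$, substitute the one-hot identity $\EE[\hat\Vb_2\otimes\hat\Vb_2]\circ\Mb=\begin{bmatrix}\delta^2&\delta q\\\delta q&q^2\end{bmatrix}\otimes(\Hb\odot\Mb_{22})$, and unfold the recursion. You spell out the PSD-preservation facts that the paper leaves implicit in the phrase ``by iteratively applying the previous inequality''; the only cosmetic difference is that the paper gets PSD-ness of $\Hb\odot\Mb_{22}$ directly from $\Hb$ being diagonal (so $\Hb\odot\Mb_{22}=\diag(\lambda_i(\Mb_{22})_{ii})$), whereas your appeal to the Schur product theorem proves the same thing slightly more generally.
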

\begin{proof}
By the iterative formula $\Bb_t=\cB\circ\Bb_{t-1}$, we have
\begin{align*}
\Bb_t&=\tilde\cB\circ\Bb_{t-1}+(\cB-\tilde\cB)\circ\Bb_{t-1}\\
&\preceq\tilde\cB\circ\Bb_{t-1}+\EE[\hat\Vb_2\otimes\hat\Vb_2]\circ\Bb_{t-1}\\
&=\tilde\cB\circ\Bb_{t-1}+\begin{bmatrix}\delta^2 & \delta q \\ \delta q & q^2\end{bmatrix}\otimes(\Hb\odot(\Bb_{t-1})_{22})\\
&\preceq\tilde\cB^t\circ\Bb_0+\sum_{k=0}^{t-1}\tilde\cB^k\circ\rbr{\begin{bmatrix}\delta^2 & \delta q \\ \delta q & q^2\end{bmatrix}\otimes(\Hb\odot(\Bb_{t-1-k})_{22})},
\end{align*}
where the first inequality holds because $\cB-\tilde\cB\preceq\EE[\hat\Vb_2\otimes\hat\Vb_2]$, and the second inequality holds by iteratively applying the previous inequality.
\end{proof}

\begin{lemma}\label{lemma:sum_B22ii}
We have
\[
\sum_{t=0}^{s-1}((\Bb_t)_{22})_{ii}\le rw_i^2\sum_{t=0}^{s-1}\rbr{\Ab_i^t\begin{bmatrix}1\\1\end{bmatrix}}_2^2.
\]
\end{lemma}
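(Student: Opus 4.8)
The plan is to obtain this bound directly from Lemma~\ref{lemma:S_s:s+N_basis}, which already absorbs the fourth‑moment correction into the one‑hot operators $\Ub_i$. Since $\Bb_t=\cB^t\circ\Bb_0$ by \eqref{eq:iterate Bt}, the partial sum $\sum_{t=0}^{s-1}\Bb_t$ is precisely the matrix $\Rb_s$ appearing in Lemma~\ref{lemma:S_s:s+N_basis} with $\Mb=\Bb_0$ and $t=s$. Applying that lemma gives
\[
\sum_{t=0}^{s-1}\Bb_t\preceq\sum_{k=0}^{s-1}\tilde\cB^k\circ\Bb_0+\sum_{k=0}^{s-1}\diag\rbr{\frac{((\tilde\cB^k\circ\Bb_0)_{22})_{11}}{1-(\Ub_1)_{22}}\Ub_1,\ \dots,\ \frac{((\tilde\cB^k\circ\Bb_0)_{22})_{dd}}{1-(\Ub_d)_{22}}\Ub_d}.
\]
So the whole proof reduces to reading off one diagonal entry of this operator inequality.

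Next I would take the $(d+i,d+i)$‑th diagonal entry of both sides. Because the right‑hand side minus the left‑hand side is a PSD matrix, and every PSD matrix has nonnegative diagonal entries, the inequality passes to that scalar entry. On the left this produces $\sum_{t=0}^{s-1}((\Bb_t)_{22})_{ii}$. On the right, the $(d+i,d+i)$ entry of the first term is $\sum_{k=0}^{s-1}((\tilde\cB^k\circ\Bb_0)_{22})_{ii}$, while for the block‑diagonal second term the $i$‑th $2\times 2$ block contributes $\sum_{k=0}^{s-1}\frac{((\tilde\cB^k\circ\Bb_0)_{22})_{ii}}{1-(\Ub_i)_{22}}(\Ub_i)_{22}$ at position $(2,2)$. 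Adding the two gives $\frac{1}{1-(\Ub_i)_{22}}\sum_{k=0}^{s-1}((\tilde\cB^k\circ\Bb_0)_{22})_{ii}$, and $\frac{1}{1-(\Ub_i)_{22}}\le \frac{1}{1-\max_{1\le j\le d}(\Ub_j)_{22}}=r$ by the definition \eqref{eq:def_r_basis} of $r$.

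It then remains to identify $((\tilde\cB^k\circ\Bb_0)_{22})_{ii}$. Writing $\tilde\cB^k\circ\Bb_0=\Ab^k\Bb_0(\Ab^k)^\top$, using that $\Ab^k$ is block diagonal with $2\times 2$ blocks $\Ab_i^k$ (after grouping coordinates $i$ and $d+i$), and using $\Bb_0=\begin{bmatrix}1&1\\1&1\end{bmatrix}\otimes[(\wb_0-\wb^*)(\wb_0-\wb^*)^\top]$ from \eqref{eq:B0_1}, the restriction of $\tilde\cB^k\circ\Bb_0$ to coordinates $\{i,d+i\}$ equals $w_i^2\,\Ab_i^k\begin{bmatrix}1\\1\end{bmatrix}\begin{bmatrix}1\\1\end{bmatrix}^\top(\Ab_i^k)^\top$, whose $(2,2)$ entry is $w_i^2\big(\Ab_i^k\begin{bmatrix}1\\1\end{bmatrix}\big)_2^2$. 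Substituting this into the estimate of the previous paragraph yields $\sum_{t=0}^{s-1}((\Bb_t)_{22})_{ii}\le r\,w_i^2\sum_{t=0}^{s-1}\big(\Ab_i^t\begin{bmatrix}1\\1\end{bmatrix}\big)_2^2$, which is the claim. I do not anticipate a substantive obstacle; the only real care is bookkeeping — correctly matching the $2d\times 2d$ block notation $(\cdot)_{22}$ and entry $(\cdot)_{ii}$ to the coordinate‑wise $2\times 2$ blocks $\Ab_i$, $\Ub_i$, and verifying that the PSD ordering legitimately restricts to the single diagonal entry $(d+i,d+i)$.
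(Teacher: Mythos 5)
Your proposal is correct and follows the same route as the paper: apply Lemma~\ref{lemma:S_s:s+N_basis} with $\Mb=\Bb_0$, restrict the resulting PSD inequality to the $(d+i,d+i)$ diagonal entry, combine the two terms into a factor $\tfrac{1}{1-(\Ub_i)_{22}}\le r$, and identify $((\tilde\cB^k\circ\Bb_0)_{22})_{ii}=w_i^2\big(\Ab_i^k\begin{bmatrix}1\\1\end{bmatrix}\big)_2^2$ via the block-diagonal structure and \eqref{eq:B0_1}. The only difference is that you explicitly justify the passage from the matrix inequality to the scalar entry (a PSD difference has nonnegative diagonal entries), a step the paper leaves implicit.
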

\begin{proof}
Note that
\begin{align*}
\sum_{t=0}^{s-1}((\Bb_t)_{22})_{ii}&=\sum_{t=0}^{s-1}\rbr{\rbr{\cB^t\circ\Bb_0}_{22}}_{ii}\\
&\le\sum_{t=0}^{s-1}\rbr{\rbr{\tilde\cB^t\circ\Bb_0}_{22}}_{ii}+\sum_{t=0}^{s-1}\frac{((\tilde\cB^t\circ\Bb_0)_{22})_{ii}}{1-(\Ub_i)_{22}}(\Ub_i)_{22}\\
&=\frac{w_i^2}{1-(\Ub_i)_{22}}\sum_{t=0}^{s-1}\rbr{\Ab_i^t\begin{bmatrix}1\\1\end{bmatrix}}_2^2\\
&\le rw_i^2\sum_{t=0}^{s-1}\rbr{\Ab_i^t\begin{bmatrix}1\\1\end{bmatrix}}_2^2,
\end{align*}
where the first inequality holds due to Lemma \ref{lemma:S_s:s+N_basis}, and the second inequality holds due to the definition of $r$.
\end{proof}

\begin{proof}[Proof of Lemma \ref{lemma:M3_bound_basis}]
By the bound for $\Bb_s$, we have
\begin{align*}
&\Mb_3=\frac1{N^2}\sbr{\sum_{k=0}^{N-1}\Ab^k}\Bb_s\sbr{\sum_{k=0}^{N-1}\Ab^k}^\top\preceq\frac1{N^2}\sbr{\sum_{k=0}^{N-1}\Ab^{k+s}}\Bb_0\sbr{\sum_{k=0}^{N-1}\Ab^{k+s}}^\top\\
&~+\frac1{N^2}\sum_{t=0}^{s-1}\sbr{\sum_{k=0}^{N-1}\Ab^{k+t}}\rbr{\begin{bmatrix}\delta^2 & \delta q \\ \delta q & q^2\end{bmatrix}\otimes(\Hb\odot(\Bb_{s-1-t})_{22})}\sbr{\sum_{k=0}^{N-1}\Ab^{k+t}}^\top,
\end{align*}
so its inner product with $\begin{bmatrix}\Hb & \zero \\ \zero & \zero\end{bmatrix}$ is
\begin{align*}
\inner{\begin{bmatrix}\Hb & \zero \\ \zero & \zero\end{bmatrix}}{\Mb_3}&\le\underbrace{\sum_{i=1}^d\lambda_iw_i^2\rbr{\sum_{k=0}^{N-1}\Ab_i^{k+s}\begin{bmatrix}1\\1\end{bmatrix}}_1^2}_{\text{Effective Bias}}\\
&~+\underbrace{\frac1{N^2}\sum_{i=1}^d\lambda_i^2\sum_{t=0}^{s-1}((\Bb_{s-1-t})_{22})_{ii}\rbr{\sum_{k=0}^{N-1}\Ab_i^{k+t}\begin{bmatrix}\delta\\q\end{bmatrix}}_1^2}_{\mathrm{K}}.
\end{align*}
The Effective Bias is the same as the standard case. $\mathrm{K}$ can be bounded by
\begin{align*}
\mathrm{K}&\le\frac1{N^2}\sbr{\sum_{i=1}^{k^*}\lambda_i^2\sum_{t=0}^{s-1}((\Bb_{s-1-t})_{22})_{ii}\cdot\frac9{\lambda_i^2}+\sum_{i=k^*+1}^d\lambda_i^2\sum_{t=0}^{t-1}((\Bb_{s-1-t})_{22})_{ii}\cdot\frac{36(q-c\delta)^2N^2}{(1-c)^2}}\\
&=\frac1{N^2}\sbr{9\sum_{i=1}^{k^*}\sum_{t=0}^{s-1}((\Bb_{s-1-t})_{22})_{ii}+\sum_{i=k^*+1}^d\frac{36(q-c\delta)^2N^2\lambda_i^2}{(1-c)^2}\sum_{t=0}^{s-1}((\Bb_{s-1-t})_{22})_{ii}}\\
&\le\frac{r}{N^2}\sbr{\sum_{i\le\hat k}\frac{126w_i^2}{\delta\lambda_i}+\sum_{\hat k<i\le k^\dagger}\frac{90w_i^2}{1-c}+\sum_{k^\dagger<i\le k^*}\frac{9(1-c)w_i^2}{(q-c\delta)\lambda_i}+\sum_{i>k^*}\frac{36(q-c\delta)^2N^2s\lambda_i^2w_i^2}{(1-c)^2}}\\
&=\frac{r}{N^2}\bigg[\frac{126}{\delta}\|\wb_0-\wb^*\|_{\Hb_{0:\hat k}^{-1}}^2+\frac{90}{1-c}\|\wb_0-\wb^*\|_{\Ib_{\hat k:k^\dagger}}^2+\frac{9(1-c)}{q-c\delta}\|\wb_0-\wb^*\|_{\Hb_{k^\dagger:k^*}^{-1}}^2\\
&\quad+\frac{36(q-c\delta)^2N^2s}{(1-c)^2}\|\wb_0-\wb^*\|_{\Hb_{k^*:\infty}^2}^2\bigg],
\end{align*}
where the first inequality holds due to Corollary \ref{lemma:lambdai(sumAi(j+k)deltaq)^2}, and the second inequality holds due to Lemma \ref{lemma:sum_B22ii}.
\end{proof}

\begin{proof}[Proof of Lemma \ref{lemma:M4_bound_basis}]
For $\Mb_4$, we have
\begin{align*}
\Mb_4&=\frac1{N^2}\sum_{t=1}^{N-1}\sbr{\sum_{k=0}^{N-t-1}\Ab^k}((\cB-\tilde\cB)\circ\Bb_{s+t-1})\sbr{\sum_{k=0}^{N-t-1}\Ab^k}^\top\\
&\preceq\frac1{N^2}\sum_{t=1}^{N-1}\sbr{\sum_{k=0}^{N-t-1}\Ab^k}(\EE[\hat\Vb_2\otimes\hat\Vb_2]\circ\Bb_{s+t-1})\sbr{\sum_{k=0}^{N-t-1}\Ab^k}^\top\\
&=\frac1{N^2}\sum_{t=1}^{N-1}\sbr{\sum_{k=0}^{N-t-1}\Ab^k}\rbr{\begin{bmatrix}\delta^2 & \delta q \\ \delta q & q^2\end{bmatrix}\otimes(\Hb\odot(\Bb_{s+t-1})_{22})}\sbr{\sum_{k=0}^{N-t-1}\Ab^k}^\top,
\end{align*}
where the inequality holds because $\cB-\tilde\cB\preceq\EE[\hat\Vb_2\otimes\hat\Vb_2]$. The inner produce of $\Mb_4$ and $\begin{bmatrix}\Hb & \zero \\ \zero & \zero\end{bmatrix}$ is thus bounded by
\begin{align*}
&\inner{\Mb_4}{\begin{bmatrix}\Hb & \zero \\ \zero & \zero\end{bmatrix}}\\
&\le\frac1{N^2}\sum_{i=1}^d\lambda_i^2\sum_{t=1}^{N-1}((\Bb_{s+t-1})_{22})_{ii}\rbr{\sum_{k=0}^{N-t-1}\Ab_i^k\begin{bmatrix}\delta\\q\end{bmatrix}}_1^2\\
&\le\frac1{N^2}\sbr{9\sum_{i=1}^{k^*}\sum_{t=1}^{N-1}((\Bb_{s+t-1})_{22})_{ii}+\sum_{i=k^*+1}^d\frac{36(q-c\delta)^2N^2\lambda_i^2}{(1-c)^2}\sum_{t=1}^{N-1}((\Bb_{s+t-1})_{22})_{ii}}\\
&\le\frac1{N^2}\sbr{9\sum_{i=1}^{k^*}\sum_{t=0}^{s+N-1}((\Bb_t)_{22})_{ii}+\sum_{i=k^*+1}^d\frac{36(q-c\delta)^2N^2\lambda_i^2}{(1-c)^2}\sum_{t=0}^{s+N-1}((\Bb_t)_{22})_{ii}}\\
&\le\frac{r}{N^2}\bigg[\frac{126}{\delta}\|\wb_0-\wb^*\|_{\Hb_{0:\hat k}^{-1}}^2+\frac{90}{1-c}\|\wb_0-\wb^*\|_{\Ib_{\hat k:k^\dagger}}^2+\frac{9(1-c)}{q-c\delta}\|\wb_0-\wb^*\|_{\Hb_{k^\dagger:k^*}^{-1}}^2\\
&\quad+\frac{36(q-c\delta)^2N^2(s+N)}{(1-c)^2}\|\wb_0-\wb^*\|_{\Hb_{k^*:\infty}^2}^2\bigg],
\end{align*}
where the second inequality holds due to Corollary \ref{lemma:lambdai(sumAi(j+k)deltaq)^2}, the second inequality holds due to Corollary \ref{lemma:lambdai(sumAi(j+k)deltaq)^2}, the third inequality holds because $\sum_{t=1}^{N-1}((\Bb_{s+t-1})_{22})_{ii}\le\sum_{t=0}^{s+N-1}((\Bb_t)_{22})_{ii}$, and the last inequality holds due to Lemma \ref{lemma:sum_B22ii}.
\end{proof}

\section{Auxiliary Lemmas}\label{app:aux}

The following lemma summarizes properties of auxiliary parameters $q$ and $c$ in relation to model parameters $\alpha, \beta, \gamma$ and $\delta$.
\begin{lemma}\label{lemma:qc}
We have the following properties regarding $q$ and $c$:
\begin{itemize}
\item[(a)] We have $c=2\alpha-1$, and $0< c<1$. Moreover, $\beta\le1-c=2\alpha\beta\le2\beta$.
\item[(b)] We have $\delta\le q\le(1+c)\delta$. Thus, $q-\delta\le c(q-c\delta)$.
\item[(c)] We have
\[
\frac{q-c\delta}{1-c}=\frac{\gamma+\delta}2,\quad\frac{q-\delta}{1-c}=\frac{\gamma-\delta}{2}.
\]
Thus,
\[
\delta\le\frac{q-c\delta}{1-c}\le\gamma.
\]
\end{itemize}
\end{lemma}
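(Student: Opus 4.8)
The plan is to derive all three items directly from the defining relations $\alpha=1/(1+\beta)$, $c=\alpha(1-\beta)$ and $q=\alpha\delta+(1-\alpha)\gamma$, together with the parameter constraints in force (e.g.\ $\gamma\ge\delta$ and the link $\delta=\psi\tilde\kappa\beta\gamma$ with $\psi\tilde\kappa\ge1$ from \eqref{eq:parameter choice_main} and \eqref{eq:delta_beta_gamma}). The one identity that does most of the work is $1-\alpha=\beta/(1+\beta)=\alpha\beta$, which I would establish first and use throughout.

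For part (a): from $1-\alpha=\alpha\beta$ I compute $c=\alpha(1-\beta)=\alpha-\alpha\beta=\alpha-(1-\alpha)=2\alpha-1$, and hence $1-c=2(1-\alpha)=2\alpha\beta$. Since $\beta\in(0,1)$ under the parameter choice, $\alpha=1/(1+\beta)\in(1/2,1)$, so $c=2\alpha-1\in(0,1)$; and since $2\alpha\in(1,2)$ we get $\beta<2\alpha\beta<2\beta$, i.e.\ $\beta\le 1-c\le 2\beta$. For part (c): substituting $c=2\alpha-1$ gives $q-c\delta=\alpha\delta+(1-\alpha)\gamma-(2\alpha-1)\delta=(1-\alpha)(\gamma+\delta)$ and likewise $q-\delta=(1-\alpha)(\gamma-\delta)$; dividing by $1-c=2(1-\alpha)$ yields $\frac{q-c\delta}{1-c}=\frac{\gamma+\delta}{2}$ and $\frac{q-\delta}{1-c}=\frac{\gamma-\delta}{2}$, and the chain $\delta\le\frac{\gamma+\delta}{2}\le\gamma$ is just $\delta\le\gamma$ applied twice.

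For part (b): $q\ge\delta$ is immediate since $\gamma\ge\delta$ forces $q=\alpha\delta+(1-\alpha)\gamma\ge\alpha\delta+(1-\alpha)\delta=\delta$. For $q\le(1+c)\delta$, using $1+c=2\alpha$ and $1-\alpha=\alpha\beta$, I reduce $q\le2\alpha\delta$ to $(1-\alpha)\gamma\le\alpha\delta$, i.e.\ $\beta\gamma\le\delta$, which holds because $\delta=\psi\tilde\kappa\beta\gamma$ with $\psi\tilde\kappa\ge1$. Finally $q-\delta\le c(q-c\delta)$ follows by multiplying out: it is equivalent to $q(1-c)\le\delta(1-c^2)=\delta(1-c)(1+c)$, and dividing by $1-c>0$ returns $q\le(1+c)\delta$, already proved.

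There is no genuine obstacle here; the only thing that needs care is bookkeeping of which statements are pure identities valid for any positive $\alpha,\beta,\gamma,\delta$ with $\alpha=1/(1+\beta)$, versus which additionally invoke $\beta<1$, $\gamma\ge\delta$, or $\psi\tilde\kappa\ge1$. Concretely, the strict bounds $0<c<1$ and the upper estimate $q\le(1+c)\delta$ (hence $q-\delta\le c(q-c\delta)$) are exactly the places where the parameter constraints are used, whereas $c=2\alpha-1$, $1-c=2\alpha\beta$, and the two identities in (c) hold unconditionally.
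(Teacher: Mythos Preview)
Your proposal is correct and follows essentially the same route as the paper's proof: both hinge on the identity $1-\alpha=\alpha\beta$ (equivalently $\alpha=1/(1+\beta)$) to obtain $c=2\alpha-1$ and $1-c=2(1-\alpha)$, then compute $q-\delta=(1-\alpha)(\gamma-\delta)$ and $q-c\delta=(1-\alpha)(\gamma+\delta)$, invoking $\gamma\ge\delta$ for $q\ge\delta$ and $\beta\gamma\le\delta$ (via $\delta=\psi\tilde\kappa\beta\gamma$, $\psi\tilde\kappa\ge1$) for $q\le(1+c)\delta$. The only cosmetic difference is that you deduce $\beta\le1-c$ from $2\alpha>1$ whereas the paper writes $1-c=1-\alpha+\alpha\beta\ge(1-\alpha)\beta+\alpha\beta=\beta$, but these are the same observation.
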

\begin{proof}
We first recall that $c=\alpha(1-\beta)$ and $q=\alpha\delta+(1-\alpha)\gamma$.
\begin{itemize}
\item[(a)] Substituting $\beta=(1-\alpha)/\alpha$ into the definition of $c$, we have
\[
c=\alpha\rbr{1-\frac{1-\alpha}{\alpha}}=2\alpha-1.
\]
Note that $\beta\in(0, 1)$, so $\alpha=1/(1+\beta)\in(1/2, 1)$. Therefore, $c=2\alpha-1\in(0, 1)$. Moreover,
\[
1-c=1-\alpha(1-\beta)=1-\alpha+\alpha\beta\ge(1-\alpha)\beta+\alpha\beta=\beta,
\]
where the equality holds because $\beta<1$. We also have
\[
1-c=2(1-\alpha)=2\alpha\beta\le2\beta,
\]
where the inequality holds because $\alpha<1$.
\item[(b)] we have
\begin{equation}\label{eq:q-delta}
q-\delta=\alpha\delta+(1-\alpha)\gamma-\delta=(1-\alpha)(\gamma-\delta)\ge0,
\end{equation}
where the inequality holds because $\gamma\ge\delta$ and $\alpha\in(0, 1)$. We also have
\[
q-(1+c)\delta=\alpha\delta+(1-\alpha)\gamma-2\alpha\delta=(1-\alpha)\gamma-\alpha\delta=\alpha(\beta\gamma-\delta)=\alpha\rbr{\frac{\delta}{\psi\tilde\kappa}-\delta}\le0,
\]
where the third equality holds because $1-\alpha=\alpha\beta$, the fourth equality holds because $\beta=\delta/(\psi\tilde\kappa\gamma)$, and the last inequality holds because $\psi\tilde\kappa\ge1$. We thus have
\[
(q-\delta)-c(q-c\delta)=(1-c)[q-(1+c)\delta]\le0.
\]
\item[(c)] We have
\begin{equation}\label{eq:q-cdelta}
q-c\delta=\alpha\delta+(1-\alpha)\gamma-(2\alpha-1)\delta=(1-\alpha)(\gamma+\delta).
\end{equation}
Combining \eqref{eq:q-delta} and \eqref{eq:q-cdelta} with the fact that $1-c=2(1-\alpha)$, we have
\[
\frac{q-c\delta}{1-c}=\frac{\gamma+\delta}{2},\quad\frac{q-\delta}{1-c}=\frac{\gamma-\delta}{2}.
\]
Note that $\delta\le\gamma$, so
\[
\delta\le\frac{q-c\delta}{1-c}\le\gamma.
\]
\end{itemize}
\end{proof}

\begin{lemma}\label{lemma:veda}
Let $x_1, x_2$ be defined in \eqref{eq:x_1} and \eqref{eq:x_2}. Then we have
\begin{itemize}
\item[(a)] $(1-x_1)(1-x_2)=(q-c\delta)\lambda_i$.
\item[(b)] $(c-x_1)(c-x_2)=c(q-\delta)\lambda_i$.
\item[(c)] $(1+x_1)(1+x_2)=2(1+c)-(q+c\delta)\lambda_i$.
\item[(d)] $(c\delta-qx_1)(c\delta-qx_2)=c(q-\delta)(q-c\delta)$.
\end{itemize}
\end{lemma}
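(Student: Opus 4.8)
The plan is to bypass the explicit radical formulas for $x_1$ and $x_2$ in \eqref{eq:x_1}--\eqref{eq:x_2} entirely, and instead work only with the elementary symmetric functions of the two roots. Since $x_1$ and $x_2$ are the eigenvalues of $\Ab_i$ in \eqref{Eq:A_i}, Vieta's formulas give $x_1+x_2=\tr(\Ab_i)=1+c-q\lambda_i$ and $x_1 x_2=\det(\Ab_i)=c(1-\delta\lambda_i)$. Each of the four quantities in the lemma is a product of two affine forms in $x_1,x_2$ sharing the same coefficients, hence a symmetric quadratic in $x_1,x_2$, and therefore expressible through $x_1+x_2$ and $x_1 x_2$ alone. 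So the whole proof reduces to four short substitutions.

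First I would handle (a): $(1-x_1)(1-x_2)=1-(x_1+x_2)+x_1x_2=1-(1+c-q\lambda_i)+c(1-\delta\lambda_i)$, and collecting terms gives $(q-c\delta)\lambda_i$. Next (b): $(c-x_1)(c-x_2)=c^2-c(x_1+x_2)+x_1x_2=c^2-c(1+c-q\lambda_i)+c(1-\delta\lambda_i)=c(q-\delta)\lambda_i$. Then (c): $(1+x_1)(1+x_2)=1+(x_1+x_2)+x_1x_2=1+(1+c-q\lambda_i)+c(1-\delta\lambda_i)=2(1+c)-(q+c\delta)\lambda_i$. Finally (d): $(c\delta-qx_1)(c\delta-qx_2)=c^2\delta^2-c\delta q(x_1+x_2)+q^2 x_1 x_2=c^2\delta^2-c\delta q(1+c-q\lambda_i)+cq^2(1-\delta\lambda_i)$; here the two $\lambda_i$-linear contributions $c\delta q^2\lambda_i$ and $-cq^2\delta\lambda_i$ cancel, and what remains is $c\big(q^2-(1+c)\delta q+c\delta^2\big)=c(q-\delta)(q-c\delta)$ after factoring.

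There is essentially no obstacle: the argument is elementary algebra once Vieta's formulas are in hand, and the only place meriting a second look is the cancellation of the $\lambda_i$ terms in (d) together with the factorization $q^2-(1+c)\delta q+c\delta^2=(q-\delta)(q-c\delta)$. I would also remark that all four identities are purely statements about the coefficients of the characteristic polynomial of $\Ab_i$, so they hold uniformly whether the eigenvalues are real (the regimes $i\le k^\ddagger$ and $i>k^\dagger$) or a complex conjugate pair ($k^\ddagger<i\le k^\dagger$); this uniformity is precisely what makes the lemma convenient in the later spectral estimates.
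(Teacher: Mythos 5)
Your proof is correct and follows essentially the same route as the paper: both rest on the Vieta identities $x_1+x_2=1+c-q\lambda_i$ and $x_1x_2=c(1-\delta\lambda_i)$ and then do the four direct substitutions, with the $\lambda_i$-cancellation and factorization $q^2-(1+c)\delta q+c\delta^2=(q-\delta)(q-c\delta)$ handling part (d). Your closing remark that the identities hold uniformly for real or complex-conjugate eigenvalues is a nice observation not spelled out in the paper but entirely consistent with it.
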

\begin{proof}
In the proof, we will use the properties $x_1+x_2=1+c-q\lambda_i$ and $x_1x_2=c(1-\delta\lambda_i)$ extensively, which follows from Veda's Theorem.
\begin{itemize}
\item[(a)] We have
\[
(1-x_1)(1-x_2)=1-(x_1+x_2)-x_1x_2=1-(1+c-q\lambda_i)-c(1-\delta\lambda_i)=(q-c\delta)\lambda_i.
\]
\item[(b)] We have
\[
(c-x_1)(c-x_2)=c^2-c(x_1+x_2)+x_1x_2=c^2-c(1+c-q\lambda_i)+c(1-\delta\lambda_i)=c(q-\delta)\lambda_i.
\]
\item[(c)] We have
\begin{align*}
(1+x_1)(1+x_2)&=1+(x_1+x_2)+x_1x_2=1+(1+c-q\lambda_i)+c(1-\delta\lambda_i)\\
&=2(1+c)-(q+c\delta)\lambda_i.
\end{align*}
\item[(d)] We have
\begin{align*}
(c\delta-qx_1)(c\delta-qx_2)&=c^2\delta^2-c\delta q(x_1+x_2)+q^2x_1x_2\\
&=c^2\delta^2-c\delta q(1+c-q\lambda_i)+q^2\cdot c(1-\delta\lambda_i)\\
&=c(q-\delta)(q-c\delta)
\end{align*}
\end{itemize}
\end{proof}

\begin{lemma}\label{lemma:increase}
For a given PSD matrix $\Mb$, we define the following sequence of matrices recursively: $\Rb_0=\zero$, and
\begin{equation}\label{eq:Rt_recursion}
    \Rb_{t+1}=\cB\circ\Rb_t+\Mb,\quad t\ge0.
\end{equation}
Then for all $t\ge0$, we have
\begin{equation}\label{eq:Rt}
    \Rb_t=\sum_{k=0}^{t-1}\cB^k\circ\Mb.
\end{equation}
Thus, $\Rb_t$ is an increasing sequence:
\begin{equation}\label{eq:Rt_increase}
\Rb_0\preceq\Rb_1\preceq\cdots\preceq\Rb_\infty.
\end{equation}
\end{lemma}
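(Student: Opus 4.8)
\textbf{Proof proposal for Lemma \ref{lemma:increase}.} The plan is to first establish the closed form \eqref{eq:Rt} by a straightforward induction on $t$, then deduce monotonicity from the fact that $\cB$ is a PSD operator, and finally invoke the existence of $(\cI-\cB)^{-1}$ to make sense of $\Rb_\infty$.

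First I would prove \eqref{eq:Rt} by induction. The base case $t=0$ is immediate since the empty sum is $\zero=\Rb_0$. For the inductive step, assuming $\Rb_t=\sum_{k=0}^{t-1}\cB^k\circ\Mb$, the recursion \eqref{eq:Rt_recursion} gives
\[
\Rb_{t+1}=\cB\circ\Rb_t+\Mb=\sum_{k=0}^{t-1}\cB^{k+1}\circ\Mb+\Mb=\sum_{k=0}^{t}\cB^k\circ\Mb,
\]
using linearity of the operator $\cB$. This completes the induction.

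Next, monotonicity follows directly: by \eqref{eq:Rt}, $\Rb_{t+1}-\Rb_t=\cB^t\circ\Mb$. Since $\Mb\succeq\zero$ and $\cB$ is a PSD operator (Lemma \ref{lemma:prelinminaries appendix}(c)), iterating $t$ times shows $\cB^t\circ\Mb\succeq\zero$, hence $\Rb_t\preceq\Rb_{t+1}$ for every $t\ge0$. Chaining these inequalities yields $\Rb_0\preceq\Rb_1\preceq\cdots$. Finally, for the comparison with $\Rb_\infty$, I would note that $\Rb_\infty\coloneqq\sum_{k=0}^{\infty}\cB^k\circ\Mb=(\cI-\cB)^{-1}\circ\Mb$ is a well-defined PSD matrix by Lemma \ref{lemma:I-B_inv_on_M}; since $\Rb_\infty-\Rb_t=\sum_{k=t}^{\infty}\cB^k\circ\Mb$ is a sum of PSD matrices, $\Rb_t\preceq\Rb_\infty$ for all $t$, giving \eqref{eq:Rt_increase}.

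There is no genuine obstacle here; the only point requiring care is that the infinite sum defining $\Rb_\infty$ converges, which is exactly the content of Lemma \ref{lemma:I-B_inv_on_M} (the eigenvalues of $\Ab$, and hence of every block $\Ab_i$, have magnitude strictly below $1$ under the parameter choice \eqref{eq:parameter choice_main}), so I would simply cite that lemma rather than re-derive convergence.
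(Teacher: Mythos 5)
Your proposal is correct and follows essentially the same argument as the paper: induction to establish the closed form, then monotonicity from $\Rb_{t+1}-\Rb_t=\cB^t\circ\Mb\succeq\zero$ via the PSD-operator property of $\cB$. The only addition is your explicit justification that $\Rb_\infty$ is well-defined by citing Lemma \ref{lemma:I-B_inv_on_M}, a point the paper's proof leaves implicit; that is a sensible bit of extra care rather than a different approach.
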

\begin{proof}
We prove \eqref{eq:Rt} by induction. When $t=0$, \eqref{eq:Rt} holds trivially. Suppose that \eqref{eq:Rt} holds for $t$. By the recursive formula \ref{eq:Rt_recursion}, we have
\[
\Rb_{t+1}=\cB\circ\Rb_t+\Mb=\cB\circ\rbr{\sum_{k=0}^{t-1}\cB^k\circ\Mb}+\Mb=\sum_{k=1}^t\cB^k\circ\Mb+\Mb=\sum_{k=0}^t\cB^k\circ\Mb,
\]
where the second equality holds due to the induction hypothesis. Thus, \eqref{eq:Rt} holds for $t+1$.

By \eqref{eq:Rt}, note that
\[
\Rb_{t+1}-\Rb_t=\sum_{k=0}^t\cB^k\circ\Mb-\sum_{k=0}^{t-1}\cB^k\circ\Mb=\cB^t\circ\Mb\succeq0,
\]
where the inequality holds due to Lemma \ref{lemma:prelinminaries appendix}(c). Therefore, $\Rb_t\preceq\Rb_{t+1}$.
\end{proof}

\begin{lemma}\label{lemma:decomp_M1M2}
Let $\{\Mb_t\}_{t\ge1}$ be a sequence of PSD matrices and $s, N$ be positive integers. Then
\begin{align*}
&\sum_{t=s}^{s+N-1}\sbr{\sum_{k=t+1}^{s+N-1}\Ab^{k-t}\Mb_t+\Mb_t+\sum_{k=t+1}^{s+N-1}\Mb_t(\Ab^\top)^{k-t}}\\
&=\sbr{\sum_{k=0}^{N-1}\Ab^k}\Mb_s\sbr{\sum_{k=0}^{N-1}\Ab^k}^\top+\sum_{t=1}^{N-1}\sbr{\sum_{k=0}^{N-t-1}\Ab^k}(\Mb_{s+t}-\tilde\cB\circ\Mb_{s+t-1})\sbr{\sum_{k=0}^{N-t-1}\Ab^k}^\top.
\end{align*}
\end{lemma}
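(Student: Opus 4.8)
The plan is to prove this as a purely algebraic identity in the matrix ring (the PSD hypothesis on $\{\Mb_t\}$ plays no role), by reindexing the inner sums and then completing a square. Write $\mathbf{S}_m\coloneqq\sum_{k=0}^{m-1}\Ab^k$ for $m\ge1$ and $\mathbf{S}_0\coloneqq\zero$, so that $\mathbf{S}_N=\sum_{k=0}^{N-1}\Ab^k$ and, since $\mathbf{S}_m$ is a polynomial in $\Ab$, it commutes with $\Ab$; this commutation is what lets conjugation by $\mathbf{S}_m$ pass through the operator $\tilde\cB$, i.e.\ $\tilde\cB\circ(\mathbf{S}_m\Xb\mathbf{S}_m^\top)=\mathbf{S}_m(\tilde\cB\circ\Xb)\mathbf{S}_m^\top$. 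The goal is to show that both sides of the asserted identity equal
\[
\sum_{t'=0}^{N-1}\Bigl[\mathbf{S}_{N-t'}\Mb_{s+t'}\mathbf{S}_{N-t'}^\top-\tilde\cB\circ\bigl(\mathbf{S}_{N-1-t'}\Mb_{s+t'}\mathbf{S}_{N-1-t'}^\top\bigr)\Bigr].
\]

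For the left-hand side, fix $t\in\{s,\dots,s+N-1\}$, put $t'=t-s$ and $m=N-1-t'$, and set $\Ab_*\coloneqq\sum_{j=1}^{m}\Ab^j$ (with $\Ab_*=\zero$ when $m=0$, i.e.\ for $t=s+N-1$). After the substitution $j=k-t$ the inner bracket for this $t$ is $\Mb_t+\Ab_*\Mb_t+\Mb_t\Ab_*^\top$, and completing the square gives
\[
\Mb_t+\Ab_*\Mb_t+\Mb_t\Ab_*^\top=(\Ib+\Ab_*)\Mb_t(\Ib+\Ab_*)^\top-\Ab_*\Mb_t\Ab_*^\top,
\]
which follows by expanding the first product and cancelling the trailing term $\Ab_*\Mb_t\Ab_*^\top$ (no symmetry of $\Mb_t$ is used). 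Now $\Ib+\Ab_*=\mathbf{S}_{m+1}=\mathbf{S}_{N-t'}$ and $\Ab_*=\Ab\,\mathbf{S}_{m}=\Ab\,\mathbf{S}_{N-1-t'}$, hence $\Ab_*\Mb_t\Ab_*^\top=\Ab\,\mathbf{S}_{N-1-t'}\Mb_t\mathbf{S}_{N-1-t'}^\top\Ab^\top=\tilde\cB\circ(\mathbf{S}_{N-1-t'}\Mb_t\mathbf{S}_{N-1-t'}^\top)$. Summing over $t$ (equivalently over $t'=0,\dots,N-1$) yields exactly the displayed target expression.

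It remains to recognize that expression as the right-hand side of the lemma. Peel off the $t'=0$ term of the first piece, namely $\mathbf{S}_N\Mb_s\mathbf{S}_N^\top$. In the subtracted piece substitute $t=t'+1$: the $t'=N-1$ term drops because $\mathbf{S}_0=\zero$, so that piece becomes $\sum_{t=1}^{N-1}\tilde\cB\circ(\mathbf{S}_{N-t}\Mb_{s+t-1}\mathbf{S}_{N-t}^\top)=\sum_{t=1}^{N-1}\mathbf{S}_{N-t}(\tilde\cB\circ\Mb_{s+t-1})\mathbf{S}_{N-t}^\top$ by the commutation noted above. Combining this with the remaining first-piece terms $\sum_{t'=1}^{N-1}\mathbf{S}_{N-t'}\Mb_{s+t'}\mathbf{S}_{N-t'}^\top$ gives
\[
\mathbf{S}_N\Mb_s\mathbf{S}_N^\top+\sum_{t=1}^{N-1}\mathbf{S}_{N-t}\bigl(\Mb_{s+t}-\tilde\cB\circ\Mb_{s+t-1}\bigr)\mathbf{S}_{N-t}^\top,
\]
which is precisely the claimed right-hand side since $\mathbf{S}_m=\sum_{k=0}^{m-1}\Ab^k$. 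The whole argument is elementary; the only delicate points are the index shifts together with their empty sums at the endpoints ($t=s+N-1$ on the left, $t=N$ after the substitution), and the commutation of $\mathbf{S}_m$ with $\Ab$ — so I would carry out the steps in exactly this order to keep the bookkeeping transparent.
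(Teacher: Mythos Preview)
Your proof is correct and follows essentially the same approach as the paper: both arguments rewrite each inner bracket as $\mathbf{S}_{N-t'}\Mb_{s+t'}\mathbf{S}_{N-t'}^\top-\mathbf{S}_{N-1-t'}(\Ab\Mb_{s+t'}\Ab^\top)\mathbf{S}_{N-1-t'}^\top$, sum over $t'$, and then reindex the subtracted piece by $t\mapsto t+1$ to produce the telescoped right-hand side. Your presentation is a bit more streamlined (explicit $\mathbf{S}_m$ notation, the completing-the-square step, and making the commutation of $\mathbf{S}_m$ with $\Ab$ explicit), but the substance is the same.
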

\begin{proof}
For $t=s, s+1, \dots, s+N-2$, we have
\begin{align*}
&\sbr{\sum_{j=0}^{s+N-t-1}\Ab^j}\Mb_t\sbr{\sum_{k=0}^{s+N-t-1}\Ab^k}^\top-\sbr{\sum_{j=0}^{s+N-t-2}\Ab^j}(\Ab\Mb_t\Ab^\top)\sbr{\sum_{k=0}^{s+N-t-2}\Ab^k}^\top\\
&=\sum_{j, k=0}^{s+N-t-1}\Ab^j\Mb_t(\Ab^k)^\top-\sum_{j, k=1}^{s+N-t-1}\Ab^j\Mb_t(\Ab^k)^\top\\
&=\sum_{j=1}^{s+N-t-1}\Ab^j\Mb_t+\Mb_t+\sum_{k=1}^{s+N-t-1}(\Ab^\top)^k\\
&=\sum_{k=t+1}^{s+N-1}\Ab^{k-t}\Mb_t+\Mb_t+\sum_{k=t+1}^{s+N-1}(\Ab^\top)^{k-t}.
\end{align*}
Take the sum over $t$, and we have
\begin{align*}
&\sum_{t=s}^{s+N-1}\sbr{\sum_{k=t+1}^{s+N-1}\Ab^{k-t}\Mb_t+\Mb_t+\sum_{k=t+1}^{s+N-1}\Mb_t(\Ab^\top)^{k-t}}\\
&=\Mb_{s+N-1}+\sum_{t=s}^{s+N-2}\sbr{\sum_{k=0}^{s+N-t-1}\Ab^k}\Mb_t\sbr{\sum_{k=0}^{s+N-t-1}\Ab^k}^\top\\
&\quad-\sum_{t=s}^{s+N-2}\sbr{\sum_{k=0}^{s+N-t-2}\Ab^k}(\Ab\Mb_t\Ab^\top)\sbr{\sum_{k=0}^{s+N-t-2}\Ab^k}^\top\\
&=\sbr{\sum_{k=0}^{N-1}\Ab^k}\Mb_s\sbr{\sum_{k=0}^{N-1}\Ab^k}^\top+\sum_{t=s+1}^{s+N-1}\sbr{\sum_{k=0}^{s+N-t-1}\Ab^k}\Mb_t\sbr{\sum_{k=0}^{s+N-t-1}\Ab^k}^\top\\
&\quad-\sum_{t=s+1}^{s+N-1}\sbr{\sum_{k=0}^{s+N-t-1}\Ab^k}(\Ab\Mb_{t-1}\Ab^\top)\sbr{\sum_{k=0}^{s+N-t-1}\Ab^k}^\top\\
&=\sbr{\sum_{k=0}^{N-1}\Ab^k}\Mb_s\sbr{\sum_{k=0}^{N-1}\Ab^k}^\top+\sum_{t=s+1}^{s+N-1}\sbr{\sum_{k=0}^{s+N-t-1}\Ab^k}(\Mb_t-\tilde\cB\circ\Mb_{t-1})\sbr{\sum_{k=0}^{s+N-t-1}\Ab^k}^\top\\
&=\sbr{\sum_{k=0}^{N-1}\Ab^k}\Mb_s\sbr{\sum_{k=0}^{N-1}\Ab^k}^\top+\sum_{t=1}^{N-1}\sbr{\sum_{k=0}^{N-t-1}\Ab^k}(\Mb_{s+t}-\tilde\cB\circ\Mb_{s+t-1})\sbr{\sum_{k=0}^{N-t-1}\Ab^k}^\top,
\end{align*}
where the second equality holds due to change of index, the third equality holds due to the definition of $\tilde\cB$, and the fourth equality holds also due to change of index.
\end{proof}

\begin{lemma}\label{lemma:Ai(j+k)_deltaq}
With $\Ab_i$ defined in \eqref{Eq:A_i}, let $x_1$ and $x_2$ be the eigenvalues of $\Ab_i$ defined in \eqref{eq:x_1} and \eqref{eq:x_2}. Then
\begin{itemize}
    \item For all $i\le k^\ddagger$, we have
    \[
    -\frac{2}{\lambda_i}(c\delta/q)^j\le\rbr{\sum_{k=0}^{t-1}\Ab_i^{j+k}\begin{bmatrix}\delta\\q\end{bmatrix}}_1\le\frac2{\lambda_i}(c\delta/q)^j.
    \]
    \item For all $k^\ddagger<i\le\hat k$, we have
    \[
    \abr{\rbr{\sum_{k=0}^{t-1}\Ab_i^{j+k}\begin{bmatrix}\delta\\q\end{bmatrix}}_1}\le\frac3{\lambda_i}[c(1-\delta\lambda_i)]^{j/2}+\delta j[c(1-\delta\lambda_i)]^{(j-1)/2}
    \]
    \item For all $\hat k<i\le k^\dagger$, we have
    \[
    \abr{\rbr{\sum_{k=0}^{t-1}\Ab_i^{j+k}\begin{bmatrix}\delta\\q\end{bmatrix}}_1}\le\frac3{\lambda_i}[c(1-\delta\lambda_i)]^{j/2}+\frac{1-c}{\lambda_i}\cdot j[c(1-\delta\lambda_i)]^{(j-1)/2}
    \]
    \item For all $i>k^\dagger$, we have
    \[
    0\le\rbr{\sum_{k=0}^{t-1}\Ab_i^{j+k}\begin{bmatrix}\delta\\q\end{bmatrix}}_1\le\frac3{\lambda_i}\sbr{1-\rbr{1-2\frac{q-c\delta}{1-c}\lambda_i}^t}\rbr{1-\frac{q-c\delta}{1-c}\lambda_i}^j
    \]
\end{itemize}
\end{lemma}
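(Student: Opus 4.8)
The plan is to reduce the quantity to powers of $\Ab_i$ applied to $\begin{bmatrix}1\\1\end{bmatrix}$. A one-line computation gives $(\Ib-\Ab_i)\begin{bmatrix}1\\1\end{bmatrix}=\begin{bmatrix}\delta\lambda_i\\ q\lambda_i\end{bmatrix}$, so $\begin{bmatrix}\delta\\ q\end{bmatrix}=\lambda_i^{-1}(\Ib-\Ab_i)\begin{bmatrix}1\\1\end{bmatrix}$ and the sum telescopes:
\[
\sum_{k=0}^{t-1}\Ab_i^{j+k}\begin{bmatrix}\delta\\ q\end{bmatrix}=\frac{1}{\lambda_i}\sum_{k=0}^{t-1}\bigl(\Ab_i^{j+k}-\Ab_i^{j+k+1}\bigr)\begin{bmatrix}1\\1\end{bmatrix}=\frac{1}{\lambda_i}\bigl(\Ab_i^{j}-\Ab_i^{j+t}\bigr)\begin{bmatrix}1\\1\end{bmatrix}.
\]
Write $\mathrm{I}_m:=\bigl(\Ab_i^{m}\begin{bmatrix}1\\1\end{bmatrix}\bigr)_1$; by Lemma~\ref{lemma:A_ik} (and $\Ab_i^0=\Ib$) one has $\mathrm{I}_m=(1-\delta\lambda_i)\tfrac{x_2^{m-1}(x_2-c)-x_1^{m-1}(x_1-c)}{x_2-x_1}$ and $\bigl(\sum_{k=0}^{t-1}\Ab_i^{j+k}\begin{bmatrix}\delta\\ q\end{bmatrix}\bigr)_1=(\mathrm{I}_j-\mathrm{I}_{j+t})/\lambda_i$, so it remains to bound $\mathrm{I}_m$ in the four regimes cut out by $k^\ddagger,\hat k,k^\dagger$. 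Alternatively one may sum the geometric series coming straight from Lemma~\ref{lemma:A_ik} and clear denominators with Lemma~\ref{lemma:veda}(a), $(1-x_1)(1-x_2)=(q-c\delta)\lambda_i$; this keeps the $t$-dependence explicit and is the handier form in case (d). I would use whichever representation fits the regime.

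For $i\le k^\ddagger$ and for $i>k^\dagger$ the eigenvalues $x_1\le x_2$ are real, and a short check with Lemma~\ref{lemma:qc} and Lemma~\ref{lemma:cutoffs_property} puts them in $(0,1)$ (since $x_1x_2=c(1-\delta\lambda_i)>0$ and $q\lambda_i<1+c$, resp.\ $q\lambda_i\le1-c$). I would bound $\mathrm{I}_m$ by viewing it as $(1-\delta\lambda_i)$ times the secant slope of $x\mapsto x^{m-1}(x-c)$ over $[x_1,x_2]$, then plug in the spectral bounds of Lemma~\ref{lemma:Ai_spectral}: $0\le x_1\le x_2\le c\delta/q$ in case (a); and $c<x_1\le x_2\le1-\tfrac{q-c\delta}{1-c}\lambda_i$ in case (d), where $x_1>c$ follows from evaluating the characteristic polynomial at $c$ (the value is $c(q-\delta)\lambda_i\ge0$ by Lemma~\ref{lemma:qc}(b)) together with $x_2>c$. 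The signed forms of the conclusion come from the geometric-series representation: in case (a), $|\cdot|\le\tfrac2{\lambda_i}(c\delta/q)^j$ because $0\le x_1,x_2\le c\delta/q<1$; in case (d) the bracket $(q-\delta x_1)x_2^j\tfrac{1-x_2^t}{1-x_2}-(q-\delta x_2)x_1^j\tfrac{1-x_1^t}{1-x_1}$ is manifestly nonnegative (each factor is monotone in its eigenvalue and $x_2\ge x_1>0$), and the factor $1-(1-2\tfrac{q-c\delta}{1-c}\lambda_i)^t$ arises on combining $x_2\le1-\tfrac{q-c\delta}{1-c}\lambda_i$ with the lower bound $x_2\ge1-(\gamma+\delta)\lambda_i=1-2\tfrac{q-c\delta}{1-c}\lambda_i$ from Lemma~\ref{lemma:Ai_spectral}(c) and Lemma~\ref{lemma:qc}(b),(c) (using $c(q-\delta)\le q-c\delta$ and $\tfrac{q-c\delta}{1-c}=\tfrac{\gamma+\delta}{2}$).

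For $k^\ddagger<i\le k^\dagger$ the eigenvalues are complex conjugates of common modulus $\rho:=\sqrt{c(1-\delta\lambda_i)}$ (Lemma~\ref{lemma:Ai_spectral}(b)). Writing $x_2$ in polar form with modulus $\rho$ and argument $\theta$ (so $2\rho\cos\theta=1+c-q\lambda_i$), a short trigonometric manipulation gives
\[
\mathrm{I}_m=\frac{\rho^m}{c}\Bigl(\frac{1-c-q\lambda_i}{2}\cdot\frac{\sin((m-1)\theta)}{\sin\theta}+\rho\cos((m-1)\theta)\Bigr).
\]
Bounding $\bigl|\sin((m-1)\theta)/\sin\theta\bigr|\le m-1$, $|\cos|\le1$, and using $\rho^2=c(1-\delta\lambda_i)$ to convert $\rho^m$ into a power of $[c(1-\delta\lambda_i)]^{1/2}$ leaves only the two scalar coefficients to control, which is where the split at $\hat k$ enters: for $k^\ddagger<i\le\hat k$ one has $\delta\lambda_i\ge1-c$, forcing $\rho/c\le1$ and turning the prefactor of the linear-in-$m$ term into $\delta$; for $\hat k<i\le k^\dagger$ one has $\delta\lambda_i<1-c$ and the same estimates give $(1-c)/\lambda_i$ instead. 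Finally $\bigl(\sum_k\Ab_i^{j+k}\begin{bmatrix}\delta\\ q\end{bmatrix}\bigr)_1=(\mathrm{I}_j-\mathrm{I}_{j+t})/\lambda_i$ is bounded by the triangle inequality, the extra $j+t$ factor in $\mathrm{I}_{j+t}$ being absorbed since $\rho<1$.

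I expect the complex regime to be the main obstacle: one must check that the Dirichlet-type ratio $\sin((m-1)\theta)/\sin\theta$ costs only a linear factor, thread $\rho^2=c(1-\delta\lambda_i)$ through the algebra so the exponents come out exactly $[c(1-\delta\lambda_i)]^{j/2}$ and $[c(1-\delta\lambda_i)]^{(j-1)/2}$, and reconcile the $m$-dependent prefactor of $\mathrm{I}_m$ with the purely $j$-dependent target after the triangle inequality, pinning down the absolute constants. A related subtlety is the part of case (a) (resp.\ (d)) where $\lambda_i$ is at the threshold $k^\ddagger$ (resp.\ $k^\dagger$), so $x_1\to x_2$ and $\mathrm{I}_m$ again acquires a factor $\sim m$; this has to be absorbed using that the common eigenvalue is then bounded away from $1$. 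These are bookkeeping exercises with the identities of Lemma~\ref{lemma:qc} and Lemma~\ref{lemma:veda}; no idea beyond the telescoping reduction and the regime-wise bounds of Lemma~\ref{lemma:Ai_spectral} is needed.
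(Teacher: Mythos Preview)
Your strategy matches the paper's: both begin with the telescoping identity $\begin{bmatrix}\delta\\q\end{bmatrix}=\lambda_i^{-1}(\Ib-\Ab_i)\begin{bmatrix}1\\1\end{bmatrix}$, reduce to the closed form
\[
\Bigl(\sum_{k=0}^{t-1}\Ab_i^{j+k}\begin{bmatrix}\delta\\q\end{bmatrix}\Bigr)_1=\frac{1}{\lambda_i}\cdot\frac{(1-\delta\lambda_i-x_1)\,x_2^j(1-x_2^t)-(1-\delta\lambda_i-x_2)\,x_1^j(1-x_1^t)}{x_2-x_1},
\]
and then bound case by case via Lemma~\ref{lemma:Ai_spectral}; your polar form in the complex regime is exactly the Dirichlet estimate the paper uses algebraically, $|(x_2^k-x_1^k)/(x_2-x_1)|\le k\rho^{k-1}$.

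One execution difference is worth flagging: the paper never splits $|\mathrm{I}_j-\mathrm{I}_{j+t}|\le|\mathrm{I}_j|+|\mathrm{I}_{j+t}|$ but manipulates the telescoped difference directly. In the complex regime it rewrites the expression as a symmetric-plus-antisymmetric combination whose linear-in-$j$ coefficient is $\tfrac12|1-c-(2\delta-q)\lambda_i|$ (then bounded by $\tfrac12\delta\lambda_i$ for $i\le\hat k$ and $\tfrac12(1-c)$ for $i>\hat k$), which is what delivers the constants $1$ and $3$; your triangle-inequality route yields the larger coefficient $\tfrac12|1-c-q\lambda_i|$ from $\mathrm{I}_m$ alone and would land at roughly $2$ and $4$. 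In the real regimes the paper likewise uses sign-aware monotone replacements (for $i\le k^\ddagger$ it exploits $\delta\lambda_i+x_1-1\ge0$ to swap $x_2^j$ for $x_1^j$ in the negative term, after which $x_2-x_1$ cancels against a factor handled by Lemma~\ref{lemma:(x2t-x1t)/(x2-x1)_bound2}, and the residual $1/(1-x_1)$ is controlled by $(1-x_1)(1-x_2)=(q-c\delta)\lambda_i$). A bare secant-slope bound on $\mathrm{I}_m$ does not tame the $x_2-x_1$ denominator near the thresholds, so the geometric-series fallback you mention is indeed what is needed there---and note that the bracket it produces carries $(qx_{1,2}-c\delta)$ and an overall factor $1-\delta\lambda_i$, not $(q-\delta x_{1,2})$ as you wrote.
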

\begin{proof}
Note that
\begin{align}
\sum_{k=0}^{t-1}\Ab_i^{j+k}\begin{bmatrix}\delta\\q\end{bmatrix}&=(\Ab_i^j-\Ab_i^{j+t})(\Ib-\Ab_i)^{-1}\begin{bmatrix}\delta\\q\end{bmatrix}=(\Ab_i^j-\Ab_i^{j+t})\cdot\frac1{\lambda_i}\begin{bmatrix}1\\1\end{bmatrix}\nonumber\\
&=\frac1{\lambda_i}\begin{bmatrix}(\Ab_i^j)_{11}+(\Ab_i^j)_{12}-(\Ab_i^{j+t})_{11}-(\Ab_i^{j+t})_{12}\\
(\Ab_i^j)_{11}+(\Ab_i^j)_{12}-(\Ab_i^{j+t})_{11}-(\Ab_i^{j+t})_{12}\end{bmatrix}\label{eq:sumAi(j+k)_1}.
\end{align}
Combining Lemma \ref{lemma:A_ik} with \eqref{eq:sumAi(j+k)_1}, we have
\begin{align}
&\rbr{\sum_{k=0}^{t-1}\Ab_i^{j+k}\begin{bmatrix}\delta\\q\end{bmatrix}}_1=\frac1{\lambda_i}((\Ab_i^j)_{11}+(\Ab_i^j)_{12}-(\Ab_i^{j+t})_{11}-(\Ab_i^{j+t})_{12})\nonumber\\
&=\frac1{\lambda_i}\sbr{-\frac{x_1x_2^j-x_2x_1^j}{x_2-x_1}+(1-\delta\lambda_i)\frac{x_2^j-x_1^j}{x_2-x_1}+\frac{x_1x_2^{j+t}-x_2x_1^{j+t}}{x_2-x_1}-(1-\delta\lambda_i)\frac{x_2^{j+t}-x_1^{j+t}}{x_2-x_1}}\nonumber\\
&=\frac1{\lambda_i}\cdot\frac{(1-\delta\lambda_i-x_1)x_2^j(1-x_2^t)-(1-\delta\lambda_i-x_2)x_1^j(1-x_1^t)}{x_2-x_1}.\label{eq:sumAi(j+k)_deltaq_1_0}
\end{align}

For $i\le k^\ddagger$, note that $x_1x_2=c(1-\delta\lambda_i)$ and $x_2\le c$ by Lemma \ref{lemma:Ai_spectral}, so we have
\begin{equation}\label{eq:x1_bound}
    1-\delta\lambda_i\le x_1\le x_2\le c.
\end{equation}
Thus, the upper bound of \eqref{eq:sumAi(j+k)_deltaq_1_0} is given by
\begin{align}
\rbr{\sum_{k=0}^{t-1}\Ab_i^{j+t}\begin{bmatrix}\delta\\q\end{bmatrix}}_1&\le\frac1{\lambda_i}\cdot\frac{-(\delta\lambda_i+x_1-1)x_1^j(1-x_2^t)+(\delta\lambda_i+x_2-1)x_1^j(1-x_1^t)}{x_2-x_1}\nonumber\\
&=\frac{x_1^j}{\lambda_i}\sbr{(\delta\lambda_i+x_1-1)\frac{x_2^t-x_1^t}{x_2-x_1}+(1-x_1^t)}\nonumber\\
&\le\frac{x_1^j}{\lambda_i}\sbr{(\delta\lambda_i+x_1-1)\frac{1-x_1^t}{1-x_1}+(1-x_1^t)}=\frac{\delta x_1^j(1-x_1^t)}{1-x_1}\le\frac{\delta x_1^j}{1-x_1}\label{eq:sumAi(j+t)deltaq_1_top_ub},
\end{align}
where the first inequality holds because $\delta\lambda_i+x_1-1\ge0$ and $x_2\ge x_1$, and the second inequality holds due to Lemma \ref{lemma:(x2t-x1t)/(x2-x1)_bound2}. Note that
\begin{align}
&\frac1{1-x_1}=\frac{1-x_2}{(1-x_2)(1-x_1)}=\frac{1-x_2}{(q-c\delta)\lambda_i}\nonumber\\
&\le\frac{1-\frac{c\delta-\sqrt{c(q-\delta)(q-c\delta)}}{q}}{(q-c\delta)\lambda_i}=\frac1{q\lambda_i}\rbr{1+\sqrt{\frac{c(q-\delta)}{q-c\delta}}}\le\frac{2}{\delta\lambda_i},\label{eq:1/(1-x_1)_top}
\end{align}
where the second equality holds by Lemma \ref{lemma:veda}(a), the first inequality holds due to Lemma \ref{lemma:Ai_spectral}, and the second inequality holds because $c(q-\delta)\le q-c\delta$ and $q\ge\delta$. Note that $x_1\le x_2\le c\delta/q$, so \eqref{eq:sumAi(j+t)deltaq_1_top_ub} can be further bounded by
\[
\rbr{\sum_{k=0}^{t-1}\Ab_i^{j+t}\begin{bmatrix}\delta\\q\end{bmatrix}}_1\le\frac{\delta}{1-x_1}(c\delta/q)^j\le\frac2{\lambda_i}(c\delta/q)^j,
\]
where the second inequality holds due to \eqref{eq:1/(1-x_1)_top}.

The lower bound of \eqref{eq:sumAi(j+k)_deltaq_1_0} is given by
\begin{align}
\rbr{\sum_{k=0}^{t-1}\Ab_i^{j+k}\begin{bmatrix}\delta\\q\end{bmatrix}}_1&\ge\frac1{\lambda_i}\cdot\frac{-(\delta\lambda_i+x_1-1)x_2^j(1-x_2^t)+(\delta\lambda_i+x_2-1)x_1^j(1-x_2^t)}{x_2-x_1}\nonumber\\
&=\frac{1-x_2^t}{\lambda_i}\cdot\frac{-(\delta\lambda_i+x_1-1)x_2^j+(\delta\lambda_i+x_2-1)x_1^j}{x_2-x_1},\label{eq:sumAi(j+k)deltaq_1_top_lb1}
\end{align}
where the first inequality holds because $\delta\lambda_i+x_1-1\ge0$ and $x_1\le x_2$. If $j\ge1$, then
\begin{align}
&\frac{-(\delta\lambda_i+x_1-1)x_2^j+(\delta\lambda_i+x_2-1)x_1^j}{x_2-x_1}\nonumber\\
&=-(\delta\lambda_i+x_1-1)x_2\frac{x_2^{j-1}-x_1^{j-1}}{x_2-x_1}+(1-\delta\lambda_i)x_1^{j-1}\nonumber\\
&\ge-(\delta\lambda_i+x_1-1)x_2\cdot\frac{(c\delta/q)^{j-1}}{c\delta/q-x_1}\nonumber\\
&=-\frac{x_1x_2-(1-\delta\lambda_i)x_2}{c\delta/q-x_1}\cdot(c\delta/q)^{j-1}\nonumber\\
&=-(1-\delta\lambda_i)\frac{c-x_2}{c\delta/q-x_1}(c\delta/q)^{j-1},\label{eq:deltaq_top_lb_med}
\end{align}
where the inequality holds due to Lemma \ref{lemma:(x2t-x1t)/(x2-x1)_bound2}, and the last equality holds because $x_1x_2=c(1-\delta\lambda_i)$. Note that
\begin{align}
&(1-\delta\lambda_i)\frac{c-x_2}{c\delta/q-x_1}=(1-\delta\lambda_i)\cdot\frac{(c-x_2)(c\delta/q-x_2)}{(c\delta/q-x_1)(c\delta/q-x_2)}=\frac{q^2(1-\delta\lambda_i)(c-x_2)(c\delta/q-x_2)}{c(q-\delta)(q-c\delta)}\nonumber\\
&\le\frac{q^2}{c(q-\delta)(q-c\delta)}\cdot\rbr{1-\delta\cdot\frac{(\sqrt{q-c\delta}+\sqrt{c(q-\delta)})^2}{q^2}}\cdot\rbr{c-\frac{c\delta-\sqrt{c(q-\delta)(q-c\delta)}}{q}}\nonumber\\
&\quad\cdot\frac{\sqrt{c(q-\delta)(q-c\delta)}}{q}\nonumber\\
&=\frac{(c\delta-\sqrt{c(q-\delta)(q-c\delta)})^2}{cq^2}\rbr{1+\sqrt{\frac{c(q-\delta)}{q-c\delta}}}\nonumber\\
&\le\frac{c^2\delta^2}{cq^2}\cdot2\le2\frac{c\delta}{q},\label{eq:deltaq_top_lb_med2}
\end{align}
where the second equality holds due to Lemma \ref{lemma:veda}(d), the first inequality holds due to \eqref{eq:def_k^ddagger} and Lemma \ref{lemma:Ai_spectral}, the second inequality holds because $c\delta-\sqrt{c(q-\delta)(q-c\delta)}\le c\delta$ and $c(q-\delta)\le q-c\delta$, and the last inequality holds because $\delta\le q$. Therefore, substituting \eqref{eq:deltaq_top_lb_med} and \eqref{eq:deltaq_top_lb_med2} into \eqref{eq:sumAi(j+k)deltaq_1_top_lb1}, we have
\[
\rbr{\sum_{k=0}^{t-1}\Ab_i^{j+k}\begin{bmatrix}\delta\\q\end{bmatrix}}_1\ge-\frac{1-x_2^t}{\lambda_i}\cdot2(c\delta/q)^j\ge-\frac{2}{\lambda_i}(c\delta/q)^j,
\]
where the second inequality holds because $1-x_2^t\le1$. If $j=0$, then
\[
\frac{1-x_2^t}{\lambda_i}\cdot\frac{-(\delta\lambda_i+x_1-1)+(\delta\lambda_i+x_2-1)}{x_2-x_1}=\frac{1-x_2^t}{\lambda_i}\ge0,
\]
so the upper bound holds trivially.

For $k^\ddagger<i\le k^\dagger$, the upper bound of \eqref{eq:sumAi(j+k)_deltaq_1_0} is given by
\begin{align}
\abr{\rbr{\sum_{k=0}^{t-1}\Ab_i^{j+k}\begin{bmatrix}\delta\\q\end{bmatrix}}_1}&=\frac1{\lambda_i}\abr{\frac{(1-\delta\lambda_i-x_1)x_2^j(1-x_2^t)-(1-\delta\lambda_i-x_2)x_1^j(1-x_1^t)}{x_2-x_1}}\nonumber\\
&=\frac1{\lambda_i}\Bigg|\frac{x_2^j(1-x_2^t)+x_1^j(1-x_1^t)}{2}\nonumber\\
&\quad+\rbr{1-\delta\lambda_i-\frac{x_1+x_2}{2}}\cdot\frac{x_2^j(1-x_2^t)-x_1^j(1-x_1^t)}{x_2-x_1}\Bigg|\nonumber\\
&=\frac1{\lambda_i}\Bigg|\frac{x_2^j(1-x_2^t)+x_1^j(1-x_1^t)}{2}+\frac{1-c-(2\delta-q)\lambda_i}{2}\nonumber\\
&\quad\cdot\sbr{\frac{x_2^j-x_1^j}{x_2-x_1}\cdot\frac{(1-x_1^t)+(1-x_2^t)}{2}-\frac{x_2^t-x_1^t}{x_2-x_1}\cdot\frac{x_2^j+x_1^j}{2}}\Bigg|\nonumber\\
&\le\frac{|x_2^j||1-x_2^t|+|x_1^j||1-x_1^t|}{2\lambda_i}+\frac{|1-c-(2\delta-q)\lambda_i|}{2\lambda_i}\nonumber\\
&\quad\cdot\sbr{\abr{\frac{x_2^j-x_1^j}{x_2-x_1}}\cdot\frac{|1-x_1^t|+|1-x_2^t|}{2}+\abr{\frac{x_2^t-x_1^t}{x_2-x_1}}\cdot\frac{|x_2^j|+|x_1^j|}{2}},\label{eq:|Ai(j+k)deltaq|_bound1}
\end{align}
where the second equality holds because $x_1+x_2=1+c-q\lambda_i$, and the inequality holds due to triangle inequality. Note that $|1-x_1^t|=|1-x_2^t|\le1+|x_2^t|\le2$ because $|x_2^t|\le1$. We can thus bound \eqref{eq:|Ai(j+k)deltaq|_bound1} as
\begin{align}
&\abr{\rbr{\sum_{k=0}^{t-1}\Ab_i^{j+k}\begin{bmatrix}\delta\\q\end{bmatrix}}_1}\nonumber\\
&\le\frac{|x_1^j|+|x_2^j|}{\lambda_i}+\frac{|1-c-(2\delta-q)\lambda_i|}{2\lambda_i}\cdot\sbr{2\abr{\frac{x_2^j-x_1^j}{x_2-x_1}}+\abr{\frac{x_2^t-x_1^t}{x_2-x_1}}\cdot\frac{|x_2^j|+|x_1^j|}{2}}\nonumber\\
&\le\frac{2}{\lambda_i}[c(1-\delta\lambda_i)]^{j/2}+\frac{|1-c-(2\delta-q)\lambda_i|}{2\lambda_i}\cdot\sbr{2j[c(1-\delta\lambda_i)]^{(j-1)/2}+t[c(1-\delta\lambda_i)]^{(j+t-1)/2}}\nonumber\\
&=j\cdot\frac{|1-c-(2\delta-q)\lambda_i|}{\lambda_i}\cdot[c(1-\delta\lambda_i)]^{(j-1)/2}\nonumber\\
&\quad+\cbr{\frac2{\lambda_i}+\frac{|1-c-(2\delta-q)\lambda_i|}{2\lambda_i}\cdot t[c(1-\delta\lambda_i)]^{(t-1)/2}}\cdot[c(1-\delta\lambda_i)]^{j/2},\label{eq:|Ai(j+k)deltaq|_bound2}
\end{align}
where the second inequality holds due to Lemma \ref{lemma:|(x2t-x1t)/(x2-x1)|}. For $k^\ddagger<i\le\hat k$, we have
\[
1-c-(2\delta-q)\lambda_i\le\delta\lambda_i-(2\delta-q)\lambda_i=(q-\delta)\lambda_i\le\delta\lambda_i,
\]
where the first inequality holds because $1-c\le\delta\lambda_i$, and the second inequality holds because $q\le(1+c)\delta\le2\delta$. We also have
\[
1-c-(2\delta-q)\lambda_i\ge(q-2\delta)\lambda_i\ge(\delta-2\delta)\lambda_i=-\delta\lambda_i,
\]
where the first inequality holds because $1-c\ge0$, and the second inequality holds because $q\ge\delta$. Therefore,
\begin{equation}\label{eq:|1-c-(2delta-q)lambdai|_large}
|1-c-(2\delta-q)\lambda_i|\le\delta\lambda_i.
\end{equation}
\eqref{eq:|Ai(j+k)deltaq|_bound2} can thus be bounded by
\begin{align*}
\abr{\rbr{\sum_{k=0}^{t-1}\Ab_i^{j+k}\begin{bmatrix}\delta\\q\end{bmatrix}}_1}&\le\delta j[c(1-\delta\lambda_i)]^{(j-1/2)}+\rbr{\frac2{\lambda_i}+\frac{\delta}{2}\cdot\frac2{\delta\lambda_i}}\cdot[c(1-\delta\lambda_i)]^{j/2}\\
&=\delta j[c(1-\delta\lambda_i)]^{(j-1/2)}+\frac3{\lambda_i}[c(1-\delta\lambda_i)]^{j/2},
\end{align*}
where the inequality holds due to \eqref{eq:|1-c-(2delta-q)lambdai|_large} and Lemma \ref{lemma:t[c(1-deltalambdai)]^(t-1)/2_bound}. For $\hat k<i\le k^\dagger$, we have
\[
1-c-(2\delta-q)\lambda_i\ge1-c-(1-c)(2\delta-q)/\delta=\frac{(1-c)(q-\delta)}\delta\ge0,
\]
where the first inequality holds because $\lambda_i\le(1-c)/\delta$, and the second inequality holds because $q\ge\delta$. We also have
\[
1-c-(2\delta-q)\lambda_i\le1-c,
\]
where the inequality holds because $2\delta-q\ge2\delta-(1+c)\delta=(1-c)\delta\ge0$. Therefore,
\begin{equation}\label{eq:|1-c-(2delta-q)lambdai|_small}
|1-c-(2\delta-q)\lambda_i|\le1-c.
\end{equation}
\eqref{eq:|Ai(j+k)deltaq|_bound2} can thus be bounded as
\begin{align*}
\abr{\rbr{\sum_{k=0}^{t-1}\Ab_i^{j+k}\begin{bmatrix}\delta\\q\end{bmatrix}}_1}&\le\frac{1-c}{\lambda_i}\cdot j[c(1-\delta\lambda_i)]^{(j-1)/2}+\rbr{\frac2{\lambda_i}+\frac{1-c}{2\lambda_i}\cdot\frac2{1-c}}\cdot[c(1-\delta\lambda_i)]^{j/2}\\
&=\frac{1-c}{\lambda_i}\cdot j[c(1-\delta\lambda_i)]^{(j-1)/2}+\frac3{\lambda_i}[c(1-\delta\lambda_i)]^{j/2},
\end{align*}
where the inequality holds due to \eqref{eq:|1-c-(2delta-q)lambdai|_small} and Lemma \ref{lemma:t[c(1-deltalambdai)]^(t-1)/2_bound}.

For $i>k^\dagger$, note that
\[
1-\delta\lambda_i-x_2\ge(1-\delta\lambda_i)-\rbr{1-\frac{q-c\delta}{1-c}\lambda_i}=\frac{q-\delta}{1-c}\lambda_i\ge0,
\]
where the first inequality holds due to Lemma \ref{lemma:Ai_spectral}, and the second inequality holds because $q\ge\delta$. The upper bound of \eqref{eq:sumAi(j+k)_deltaq_1_0} is thus given by
\begin{align}
\rbr{\sum_{k=0}^{t-1}\Ab_i^{j+k}\begin{bmatrix}\delta\\q\end{bmatrix}}_1&\le\frac1{\lambda_i}\cdot\frac{(1-\delta\lambda_i-x_1)x_2^j(1-x_2^t)-(1-\delta\lambda_i-x_2)x_1^j(1-x_2^t)}{x_2-x_1}\nonumber\\
&=\frac{1-x_2^t}{\lambda_i}\sbr{(1-\delta\lambda_i-x_2)\frac{x_2^j-x_1^j}{x_2-x_1}+x_2^j}\label{eq:sumAi(j+k)deltaq_bot_ub1}
\end{align}
where the inequality holds due to $x_1<x_2$. If $j\ge1$,
\begin{align*}
&(1-\delta\lambda_i-x_2)\frac{x_2^j-x_1^j}{x_2-x_1}+x_2^j=(1-\delta\lambda_i-x_2)x_1\frac{x_2^{j-1}-x_1^{j-1}}{x_2-x_1}+(1-\delta\lambda_i)x_2^{j-1}\\
&\le\rbr{1-\frac{q-c\delta}{1-c}\lambda_i}^{j-1}\sbr{\frac{(1-\delta\lambda_i-x_2)x_1}{1-\frac{q-c\delta}{1-c}\lambda_i-x_1}+(1-\delta\lambda_i)}\\
&=\rbr{1-\frac{q-c\delta}{1-c}\lambda_i}^{j-1}\cdot\frac{(1-\delta\lambda_i)\rbr{1-\frac{q-c\delta}{1-c}\lambda_i}-c(1-\delta\lambda_i)}{1-\frac{q-c\delta}{1-c}\lambda_i-x_1}\\
&=\rbr{1-\frac{q-c\delta}{1-c}\lambda_i}^j\frac{1-\delta\lambda_i}{1-\frac{q-c\delta}{1-c}\lambda_i}\cdot\frac{1-\frac{q-c\delta}{1-c}\lambda_i-c}{1-\frac{q-c\delta}{1-c}\lambda_i-x_1},
\end{align*}
where the inequality holds due to Lemma \ref{lemma:(x2t-x1t)/(x2-x1)_bound2}. Note that
\begin{align}
&\frac{1-\delta\lambda_i}{1-\frac{q-c\delta}{1-c}\lambda_i}\le\frac{1-\delta\cdot\frac{(1-c)^2}{(\sqrt{q-c\delta}+\sqrt{c(q-\delta)})^2}}{1-\frac{q-c\delta}{1-c}\cdot\frac{(1-c)^2}{(\sqrt{q-c\delta}+\sqrt{c(q-\delta)})^2}}\nonumber\\
&=\frac{(1+\sqrt{c(q-c\delta)/(q-\delta)})^2}{(1+\sqrt{c(q-c\delta)/(q-\delta)})^2-(1-c)}\le\frac{(1+1)^2}{(1+1)^2-(1-c)}=\frac{4}{3+c},\label{eq:deltaq_bot_med1}
\end{align}
where the first inequality holds due to \eqref{eq:def_k^ddagger}, and the second inequality holds because $q-\delta\le c(q-c\delta)$. We also note that
\begin{align}
&\frac{1-\frac{q-c\delta}{1-c}\lambda_i-c}{1-\frac{q-c\delta}{1-c}\lambda_i-x_1}=\frac{1-\frac{q-c\delta}{1-c}\lambda_i-c}{1-\frac{q-c\delta}{1-c}\lambda_i-\frac{(1+c-q\lambda_i)-\sqrt{(1+c-q\lambda_i)^2-4c(1-\delta\lambda_i)}}{2}}\nonumber\\
&\le2\cdot\frac{1-c-\frac{q-c\delta}{1-c}\lambda_i}{1-c-\frac{(1+c)q-2c\delta}{1-c}\lambda_i}\le2\cdot\frac{1-c-\frac{q-c\delta}{1-c}\cdot\frac{(1-c)^2}{(\sqrt{q-c\delta}+\sqrt{c(q-\delta)})^2}}{1-c-\frac{(1+c)q-2c\delta}{1-c}\cdot\frac{(1-c)^2}{(\sqrt{q-c\delta}+\sqrt{c(q-\delta)})^2}}\nonumber\\
&=2+\sqrt{\frac{c(q-\delta)}{q-c\delta}}\le 2+c,\label{eq:deltaq_bot_med2}
\end{align}
where the first inequality holds because $\sqrt{(1+c-q\lambda_i)^2-4c(1-\delta\lambda_i)}\ge0$, the second inequality holds due to \eqref{eq:def_k^ddagger}, and the third inequality holds because $q-\delta\le c(q-c\delta)$. Combining \eqref{eq:deltaq_bot_med1} and \eqref{eq:deltaq_bot_med2}, we have
\begin{equation}\label{eq:deltaq_bot_med3}
\frac{1-\delta\lambda_i}{1-\frac{q-c\delta}{1-c}\lambda_i}\cdot\frac{1-\frac{q-c\delta}{1-c}\lambda_i-c}{1-\frac{q-c\delta}{1-c}\lambda_i-x_1}\le\frac{4(2+c)}{3+c}\le3,
\end{equation}
where the second inequality holds because $c\le1$.
where the second inequality holds because $x_2\ge1-2\frac{q-c\delta}{1-c}\lambda_i$ due to Lemma \ref{lemma:Ai_spectral}. Combining \eqref{eq:sumAi(j+k)deltaq_bot_ub1} with \eqref{eq:deltaq_bot_med3}, we have
\begin{align*}
\rbr{\sum_{k=0}^{t-1}\Ab_i^{j+k}\begin{bmatrix}\delta\\q\end{bmatrix}}_1&\le\frac3{\lambda_i}\rbr{1-\frac{q-c\delta}{1-c}\lambda_i}^j(1-x_2^t)\\
&\le\frac3{\lambda_i}\rbr{1-\frac{q-c\delta}{1-c}\lambda_i}^j\sbr{1-\rbr{1-2\frac{q-c\delta}{1-c}\lambda_i}^t},
\end{align*}
where the second inequality holds due to Lemma \ref{lemma:Ai_spectral}. If $j=0$, then by \eqref{eq:sumAi(j+k)deltaq_bot_ub1}
\begin{align*}
\rbr{\sum_{k=0}^{t-1}\Ab_i^{j+k}\begin{bmatrix}\delta\\q\end{bmatrix}}_1\le\frac{1-x_2^t}{\lambda_i}\le\frac1{\lambda_i}\sbr{1-\rbr{1-2\frac{q-c\delta}{1-c}\lambda_i}^t},
\end{align*}
where the second inequality holds due to Lemma \ref{lemma:Ai_spectral}. Thus, the upper bound also holds for $j=0$.

The lower bound of \eqref{eq:sumAi(j+k)_deltaq_1_0} is given by
\begin{align*}
\rbr{\sum_{k=0}^{t-1}\Ab_i^{j+k}\begin{bmatrix}\delta\\q\end{bmatrix}}_1&=\frac1{\lambda_i}\sbr{(1-\delta\lambda_i-x_2)\frac{x_2^j(1-x_2^t)-x_1^j(1-x_1^t)}{x_2-x_1}+x_2^j(1-x_2^t)}\\
&\ge\frac1{\lambda_i}\sbr{(1-\delta\lambda_i-x_2)\frac{x_2^j(1-x_2^t)-x_2^j(1-x_1^t)}{x_2-x_1}+x_2^j(1-x_2^t)}\\
&=\frac{x_2^j}{\lambda_i}\sbr{-(1-\delta\lambda_i-x_2)\frac{x_2^t-x_1^t}{x_2-x_1}+(1-x_2^t)}\\
&\ge\frac{x_2^j}{\lambda_i}\sbr{-(1-\delta\lambda_i-x_2)\frac{1-x_2^t}{1-x_2}+(1-x_2^t)}\\
&=\frac{\delta x_2^j(1-x_2^t)}{1-x_2}\ge0,
\end{align*}
where the first inequality holds because $x_1\le x_2$, the second inequality holds due to Lemma \ref{lemma:(x2t-x1t)/(x2-x1)_bound2}, and the third inequality holds because $0<x_2<1$.
\end{proof}

The following corollaries follow from Lemma \ref{lemma:Ai(j+k)_deltaq}.

\begin{corollary}\label{lemma:lambdaisum(sumAi(j+k)deltaq)^2}
With $\Ab_i$ defined in \eqref{Eq:A_i}, assuming that $N(1-c)\ge2$, we have
\[
\sum_{i=1}^d\lambda_i^2\sum_{j=0}^{s-1}\rbr{\sum_{k=0}^{N-1}\Ab_i^{j+k}\begin{bmatrix}\delta\\q\end{bmatrix}}_1^2\le18Nk^*+\frac{36sN^2(q-c\delta)^2}{(1-c)^2}\sum_{i>k^*}\lambda_i^2.
\]
\end{corollary}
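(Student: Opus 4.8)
The plan is to estimate, for each eigen-subspace $i$, the quantity $T_{i,j}:=\big(\sum_{k=0}^{N-1}\Ab_i^{j+k}\begin{bmatrix}\delta\\q\end{bmatrix}\big)_1$ using the four case-dependent bounds of Lemma~\ref{lemma:Ai(j+k)_deltaq} with $t=N$, then square, sum over $0\le j\le s-1$, multiply by $\lambda_i^2$, and finally sum over $i$ after partitioning $\{1,\dots,d\}$ at the cutoffs. Since $N(1-c)\ge2$, Lemma~\ref{lemma:cutoffs_property} and the remark following it give $k^\ddagger\le\hat k\le k^\dagger\le k^*$, so the five ranges $i\le k^\ddagger$, $k^\ddagger<i\le\hat k$, $\hat k<i\le k^\dagger$, $k^\dagger<i\le k^*$ and $i>k^*$ form a partition. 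I will show each of the first four ranges contributes at most $18N$ to $\lambda_i^2\sum_{j=0}^{s-1}T_{i,j}^2$ (hence at most $18Nk^*$ in total), while the range $i>k^*$ produces exactly $\tfrac{36sN^2(q-c\delta)^2}{(1-c)^2}\sum_{i>k^*}\lambda_i^2$.

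For $i>k^*$ I would use the fourth bullet of Lemma~\ref{lemma:Ai(j+k)_deltaq}. The definition of $k^*$ gives $\tfrac{q-c\delta}{1-c}\lambda_i<\tfrac1{2N}$, so Bernoulli's inequality yields $1-\big(1-2\tfrac{q-c\delta}{1-c}\lambda_i\big)^N\le 2N\tfrac{q-c\delta}{1-c}\lambda_i$, which turns the bound into $0\le T_{i,j}\le\tfrac{6N(q-c\delta)}{1-c}\big(1-\tfrac{q-c\delta}{1-c}\lambda_i\big)^j$. Squaring and summing the at most $s$ terms of the $j$-sum, with each geometric factor $\le1$, gives $\sum_{j=0}^{s-1}T_{i,j}^2\le\tfrac{36sN^2(q-c\delta)^2}{(1-c)^2}$; multiplying by $\lambda_i^2$ and summing over $i>k^*$ gives the second term exactly. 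Bounding the geometric $j$-sum crudely by $s$ (rather than by $\tfrac1{1-(\cdot)^2}$) is what produces the $s$-factor, and it is forced since $\lambda_i$ may be arbitrarily small here.

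For $i\le k^*$, each of the four ranges reduces, after applying $(a+b)^2\le2a^2+2b^2$, to the closed-form sums $\sum_{j\ge0}\rho_i^{j}=\tfrac1{1-\rho_i}$ and (in the intermediate ranges) $\sum_{j\ge1}j^2\rho_i^{j-1}=\tfrac{1+\rho_i}{(1-\rho_i)^3}$, where $\rho_i=c\delta/q$ for $i\le k^\ddagger$, $\rho_i=c(1-\delta\lambda_i)$ for $k^\ddagger<i\le k^\dagger$, and $\rho_i=1-\tfrac{q-c\delta}{1-c}\lambda_i$ for $k^\dagger<i\le k^*$. Multiplying by $\lambda_i^2$ and simplifying with the relations of Lemma~\ref{lemma:qc} (notably $\delta\le q\le2\delta$ and $q-c\delta=\tfrac{1-c}{2}(\gamma+\delta)\ge(1-c)\delta$), the range constraints ($\delta\lambda_i\ge1-c$ when $i\le\hat k$, $\delta\lambda_i<1-c$ when $\hat k<i\le k^\dagger$, $\lambda_i\ge\tfrac{1-c}{2N(q-c\delta)}$ when $i\le k^*$), and $N(1-c)\ge2$ to trade a stray $\tfrac1{1-c}$ for $\tfrac N2$, yields the bound $18N$ per index; the sharpest case is $k^\dagger<i\le k^*$, where one obtains precisely $\lambda_i^2\sum_j T_{i,j}^2\le\tfrac{9(1-c)}{(q-c\delta)\lambda_i}\le18N$.

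The main obstacle is the second-order sum $\sum_{j\ge1}j^2\rho_i^{j-1}=\tfrac{1+\rho_i}{(1-\rho_i)^3}$ arising in the two intermediate ranges. Writing $1-\rho_i=1-c(1-\delta\lambda_i)=1-c+c\delta\lambda_i$, the naive lower bound $1-\rho_i\ge1-c$ leaves a factor $(1-c)^{-3}$, which is of order $N^3$ and hopelessly large. For $\hat k<i\le k^\dagger$ this is harmless because the prefactor of that term already carries $(1-c)^2$, reducing the net power to $(1-c)^{-1}\le N/2$; but for $k^\ddagger<i\le\hat k$ the prefactor is only $\delta$, and one must instead maximise $u^2/(1-c+cu)^3$ over $u=\delta\lambda_i\in[1-c,1]$ — a short one-variable calculus argument whose maximum is of order $\min\{1,(1-c)^{-1}\}$, again absorbed via $N(1-c)\ge2$. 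Once these two estimates are in place, the remaining work is bookkeeping of absolute constants to confirm that each of the four ranges indeed stays below $18N$.
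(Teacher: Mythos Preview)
Your approach is correct and yields the stated bound with room to spare in the constants, but it differs from the paper's in the two intermediate ranges $k^\ddagger<i\le\hat k$ and $\hat k<i\le k^\dagger$. The paper does \emph{not} apply $(a+b)^2\le 2a^2+2b^2$ and then confront $\sum_j j^2\rho_i^{j-1}$; instead it first collapses the two-term bound of Lemma~\ref{lemma:Ai(j+k)_deltaq} into a single geometric term before squaring. Concretely, writing $\rho=c(1-\delta\lambda_i)$ and using $j\rho^{(j-1)/2}\le 2\rho^{j/4}\sum_{t=0}^{j/2-1}\rho^{t/2}$ together with $1-\sqrt\rho\ge\tfrac12\max(\delta\lambda_i,1-c)$, the paper obtains $|T_{i,j}|\le\tfrac4{\lambda_i}\rho^{j/4}$ in both intermediate ranges (see \eqref{eq:lambdaisum(sumAi(j+k)deltaq)^2_midtop_ub1} and \eqref{eq:lambdaisum(sumAi(j+k)deltaq)^2_midbot_ub1}), and then a single geometric $j$-sum gives $\tfrac{32}{(1-c)\lambda_i^2}\le\tfrac{16N}{\lambda_i^2}$. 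This sidesteps the cubic denominator entirely. Your route is more direct but pays for it with the optimization step; the paper's is slicker but requires spotting the $\rho^{j/4}$ interpolation.

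Two small points on your write-up. First, the calculus you propose for $k^\ddagger<i\le\hat k$ is avoidable: since $1-\rho=1-c+c\delta\lambda_i\ge\max(1-c,\delta\lambda_i)$ always, one has $(1-\rho)^3\ge(1-c)(\delta\lambda_i)^2$, whence $\tfrac{4\delta^2\lambda_i^2}{(1-\rho)^3}\le\tfrac{4}{1-c}\le 2N$ immediately. Second, ``of order $\min\{1,(1-c)^{-1}\}$'' is a slip --- since $c\in(0,1)$ this minimum is always $1$; you mean the maximum is $O((1-c)^{-1})$, which is what the argument actually delivers and what gets absorbed by $N(1-c)\ge2$.
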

\begin{proof}
By Lemma \ref{lemma:Ai(j+k)_deltaq}, we have
\begin{itemize}
\item[(a)] For $i\le k^\ddagger$,
\begin{align}
\sum_{j=0}^{s-1}\rbr{\sum_{k=0}^{N-1}\Ab_i^{j+k}\begin{bmatrix}\delta\\q\end{bmatrix}}_1^2&\le\frac4{\lambda_i^2}\sum_{j=0}^{s-1}(c\delta/q)^{2j}\le\frac4{\lambda_i^2}\sum_{j=0}^{s-1}c^j=\frac4{\lambda_i^2}\cdot\frac{1-c^s}{1-c}\nonumber\\
&\le\frac{4}{(1-c)\lambda_i^2}\label{eq:sum(sumAi(j+k)deltaq)_top_med}\\
&\le\frac{2N}{\lambda_i^2},\nonumber
\end{align}
where the second inequality holds because $(c\delta/q)^2\le c^2\le c$, the third inequality holds because $1-c^s\le1$, and the last inequality holds due to the assumption that $N(1-c)\ge2$.
\item[(b)] For $k^\ddagger<i\le\hat k$,
\begin{align}
&\frac3{\lambda_i}[c(1-\delta\lambda_i)]^{j/2}+\delta j[c(1-\delta\lambda_i)]^{(j-1)/2}\nonumber\\
&\le\frac3{\lambda_i}[c(1-\delta\lambda_i)]^{j/2}+2\delta[c(1-\delta\lambda_i)]^{j/4}\sum_{t=0}^{j/2-1}[c(1-\delta\lambda_i)]^{t/2}\nonumber\\
&=\frac{3}{\lambda_i}[c(1-\delta\lambda_i)]^{j/2}+2\delta\frac{[c(1-\delta\lambda_i)]^{j/4}-[c(1-\delta\lambda_i)]^{j/2}}{1-\sqrt{c(1-\delta\lambda_i)}}\nonumber\\
&\le\frac3{\lambda_i}[c(1-\delta\lambda_i)]^{j/2}+2\delta\frac{[c(1-\delta\lambda_i)]^{j/4}-[c(1-\delta\lambda_i)]^{j/2}}{\delta\lambda_i/2}\nonumber\\
&=\frac{4[c(1-\delta\lambda_i)]^{j/4}-[c(1-\delta\lambda_i)]^{j/2}}{\lambda_i}\le\frac{4}{\lambda_i}[c(1-\delta\lambda_i)]^{j/4},\label{eq:lambdaisum(sumAi(j+k)deltaq)^2_midtop_ub1}
\end{align}
where the first inequality holds because $[c(1-\delta\lambda_i)]^{j/4-1/2}\le[c(1-\delta\lambda_i)]^{t/2}$ for all $t\le j/2-1$, the second inequality holds because $1-\sqrt{c(1-\delta\lambda_i)}\ge1-\sqrt{1-\delta\lambda_i}\ge\delta\lambda_i/2$, and the last inequality holds because $[c(1-\delta\lambda_i)]^{j/2}\ge0$. We thus have
\begin{align}
\sum_{j=0}^{s-1}\rbr{\sum_{k=0}^{N-1}\Ab_i^{j+k}\begin{bmatrix}\delta\\q\end{bmatrix}}_1^2&\le\sum_{j=0}^{s-1}\rbr{\frac3{\lambda_i}[c(1-\delta\lambda_i)]^{j/2}+\delta j[c(1-\delta\lambda_i)]^{(j-1)/2}}^2\nonumber\\
&\le\frac{16}{\lambda_i^2}\sum_{j=0}^{s-1}[c(1-\delta\lambda_i)]^{j/2}=\frac{16}{\lambda_i^2}\cdot\frac{1-[c(1-\delta\lambda_i)]^{s/2}}{1-\sqrt{c(1-\delta\lambda_i)}}\nonumber\\
&\le\frac{16}{\lambda_i^2}\cdot\frac{1}{1-\sqrt{c(1-\delta\lambda_i)}}\le\frac{16}{\lambda_i^2}\cdot\frac{1}{(1-c)/2}=\frac{32}{(1-c)\lambda_i^2}\label{eq:sum(sumAi(j+k)deltaq)_midtop_med}\\
&\le\frac{16N}{\lambda_i^2},\nonumber
\end{align}
where the first inequality holds due to Lemma \ref{lemma:Ai(j+k)_deltaq}, the second inequality holds due to \eqref{eq:lambdaisum(sumAi(j+k)deltaq)^2_midtop_ub1}, the third inequality holds because $1-[c(1-\delta\lambda_i)]^{s/2}\le1$, the fourth inequality holds because $1-\sqrt{c(1-\delta\lambda_i)}\ge1-\sqrt{c}\ge(1-c)/2$, and the last inequality holds due to the assumption that $N(1-c)\ge2$.
\item[(c)] For $\hat k<i\le k^\dagger$,
\begin{align}
&\frac3{\lambda_i}[c(1-\delta\lambda_i)]^{j/2}+\frac{1-c}{\lambda_i}\cdot j[c(1-\delta\lambda_i)]^{(j-1)/2}\nonumber\\
&\le\frac3{\lambda_i}[c(1-\delta\lambda_i)]^{j/2}+\frac{2(1-c)}{\lambda_i}[c(1-\delta\lambda_i)]^{j/4}\sum_{t=0}^{j/2-1}[c(1-\delta\lambda_i)]^{t/2}\nonumber\\
&=\frac{3}{\lambda_i}[c(1-\delta\lambda_i)]^{j/2}+\frac{2(1-c)}{\lambda_i}\cdot\frac{[c(1-\delta\lambda_i)]^{j/4}-[c(1-\delta\lambda_i)]^{j/2}}{1-\sqrt{c(1-\delta\lambda_i)}}\nonumber\\
&\le\frac3{\lambda_i}[c(1-\delta\lambda_i)]^{j/2}+\frac{2(1-c)}{\lambda_i}\cdot\frac{[c(1-\delta\lambda_i)]^{j/4}-[c(1-\delta\lambda_i)]^{j/2}}{(1-c)/2}\nonumber\\
&=\frac{4[c(1-\delta\lambda_i)]^{j/4}-[c(1-\delta\lambda_i)]^{j/2}}{\lambda_i}\le\frac{4}{\lambda_i}[c(1-\delta\lambda_i)]^{j/4}\label{eq:lambdaisum(sumAi(j+k)deltaq)^2_midbot_ub1},
\end{align}
where the first inequality holds because $[c(1-\delta\lambda_i)]^{j/4-1/2}\le[c(1-\delta\lambda_i)]^{t/2}$ for all $t\le j/2-1$, the second inequality holds because $1-\sqrt{c(1-\delta\lambda_i)}\ge1-\sqrt c\ge(1-c)/2$, and the last inequality holds because $[c(1-\delta\lambda_i)]^{j/2}\ge0$. Due to the same deduction as that in part (b), we have
\begin{align}
\sum_{j=0}^{s-1}\rbr{\sum_{k=0}^{t-1}\Ab_i^{j+k}\begin{bmatrix}\delta\\q\end{bmatrix}}_1^2&\le\frac{32}{(1-c)\lambda_i}\label{eq:sum(sumAi(j+k)deltaq)_midbot_med}\\
&\le\frac{16N}{\lambda_i^2}.\nonumber
\end{align}
\item[(d)] For $k^\dagger<i\le k^*$,
\begin{align*}
&\sum_{j=0}^{s-1}\rbr{\sum_{k=0}^{N-1}\Ab_i^{j+k}\begin{bmatrix}\delta\\q\end{bmatrix}}_1^2\\
&\le\sum_{j=0}^{s-1}\frac9{\lambda_i^2}\sbr{1-\rbr{1-2\frac{q-c\delta}{1-c}\lambda_i}^N}^2\rbr{1-\frac{q-c\delta}{1-c}\lambda_i}^{2j}\\
&\le\frac9{\lambda_i^2}\sbr{1-\rbr{1-2\frac{q-c\delta}{1-c}\lambda_i}^N}^2\sum_{j=0}^{s-1}\rbr{1-\frac{q-c\delta}{1-c}\lambda_i}^j\\
&=\frac{9(1-c)}{(q-c\delta)\lambda_i^3}\sbr{1-\rbr{1-2\frac{q-c\delta}{1-c}\lambda_i}^N}^2\cdot\sbr{1-\rbr{1-\frac{q-c\delta}{1-c}\lambda_i}^s}\\
&\le\frac{9(1-c)}{(q-c\delta)\lambda_i^3}\\
&\le\frac{9(1-c)}{(q-c\delta)\lambda_i^2}\cdot\frac{2N(q-c\delta)}{1-c}=\frac{18N}{\lambda_i^2},
\end{align*}
where the second inequality holds because $\rbr{1-\frac{q-c\delta}{1-c}\lambda_i}^{2j}\le\rbr{1-\frac{q-c\delta}{1-c}\lambda_i}^j$, the third inequality holds because $1-\rbr{1-2\frac{q-c\delta}{1-c}\lambda_i}^N\le1$ and $1-\rbr{1-\frac{q-c\delta}{1-c}\lambda_i}^s\le1$, and the last inequality holds because $\lambda_i\ge\frac{1-c}{2N(q-c\delta)}$ due to definition of $k^*$.
\item[(e)] For $i>k^*$,
\begin{align*}
\sum_{j=0}^{s-1}\rbr{\sum_{k=0}^{N-1}\Ab_i^{j+k}\begin{bmatrix}\delta\\q\end{bmatrix}}_1^2&\le\sum_{j=0}^{s-1}\frac9{\lambda_i^2}\sbr{1-\rbr{1-2\frac{q-c\delta}{1-c}\lambda_i}^N}^2\rbr{1-\frac{q-c\delta}{1-c}\lambda_i}^{2j}\\
&\le\frac9{\lambda_i^2}\cdot\rbr{2N\frac{q-c\delta}{1-c}\lambda_i}^2\sum_{j=0}^{s-1}1=\frac{36sN^2(q-c\delta)^2}{(1-c)^2},
\end{align*}
where the second inequality holds because $1-\rbr{1-2\frac{q-c\delta}{1-c}\lambda_i}^N\le2N\frac{q-c\delta}{1-c}\lambda_i$ and $\rbr{1-\frac{q-c\delta}{1-c}\lambda_i}^{2j}\le1$.
\end{itemize}
Concluding all the above, we have
\begin{align*}
&\sum_{i=1}^d\lambda_i^2\sum_{j=0}^{s-1}\rbr{\sum_{k=0}^{N-1}\Ab_i^{j+k}\begin{bmatrix}\delta\\q\end{bmatrix}}_1^2\\
&\le\sum_{i\le k^\ddagger}\lambda_i^2\cdot\frac{2N}{\lambda_i^2}+\sum_{k^\ddagger<i\le k^\dagger}\lambda_i^2\cdot\frac{16N}{\lambda_i^2}+\sum_{k^\dagger<i\le k^*}\lambda_i^2\cdot\frac{18N}{\lambda_i^2}+\sum_{i>k^*}\lambda_i^2\cdot\frac{36sN^2(q-c\delta)^2}{(1-c)^2}\\
&=2Nk^\ddagger+16N(k^\dagger-k^\ddagger)+18N(k^*-k^\dagger)+\frac{36sN^2(q-c\delta)^2}{(1-c)^2}\sum_{i>k^*}\lambda_i^2\\
&\le18Nk^*+\frac{36sN^2(q-c\delta)^2}{(1-c)^2}\sum_{i>k^*}\lambda_i^2,
\end{align*}
where the second inequality holds because all coefficients $2, 16, 18$ are bounded by $18$.
\end{proof}

\begin{corollary}\label{lemma:lambdai(sumAi(j+k)deltaq)^2}
With $\Ab_i$ defined in \eqref{Eq:A_i}, we have for all $j\ge0$,
\[
\sum_{i=1}^d\lambda_i^2\rbr{\sum_{k=0}^{N-1}\Ab_i^{j+k}\begin{bmatrix}\delta\\q\end{bmatrix}}_1^2\le9k^*+\frac{36(q-c\delta)^2}{(1-c)^2}\sum_{i>k^*}\lambda_i^2.
\]
\end{corollary}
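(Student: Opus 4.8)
The plan is to estimate each summand $\lambda_i^2\big(\sum_{k=0}^{N-1}\Ab_i^{j+k}\begin{bmatrix}\delta\\q\end{bmatrix}\big)_1^2$ separately via the pointwise bounds of Lemma~\ref{lemma:Ai(j+k)_deltaq} (applied with $t=N$), after splitting the index set $\{1,\dots,d\}$ at the four cutoffs $k^\ddagger\le\hat k\le k^\dagger\le k^*$. The ordering $k^\ddagger\le\hat k\le k^\dagger$ is Lemma~\ref{lemma:cutoffs_property}, and $k^\dagger\le k^*$ follows from the assumption $N(1-c)\ge2$ recorded just after that lemma. This is the same regime decomposition as in the proof of Corollary~\ref{lemma:lambdaisum(sumAi(j+k)deltaq)^2}, except that here $j$ is a fixed parameter rather than a summation index, so none of the per-regime bounds below is summed over $j$ and hence none picks up the extra linear factor that appears there.

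For $i\le k^\ddagger$, Lemma~\ref{lemma:Ai(j+k)_deltaq} gives $\big|(\sum_{k=0}^{N-1}\Ab_i^{j+k}\begin{bmatrix}\delta\\q\end{bmatrix})_1\big|\le\tfrac2{\lambda_i}(c\delta/q)^j\le\tfrac2{\lambda_i}$, so $\lambda_i^2(\cdot)_1^2\le4$. For the two intermediate regimes $k^\ddagger<i\le\hat k$ and $\hat k<i\le k^\dagger$, the estimate has the shape $\tfrac3{\lambda_i}[c(1-\delta\lambda_i)]^{j/2}+(\text{linear in }j)\cdot[c(1-\delta\lambda_i)]^{(j-1)/2}$, and I would absorb the polynomial factor into a slightly slower geometric decay exactly as in \eqref{eq:lambdaisum(sumAi(j+k)deltaq)^2_midtop_ub1} and \eqref{eq:lambdaisum(sumAi(j+k)deltaq)^2_midbot_ub1}, using $1-\sqrt{c(1-\delta\lambda_i)}\ge\delta\lambda_i/2$ (respectively $\ge(1-c)/2$), to reach a bound of the form $\tfrac{C}{\lambda_i}[c(1-\delta\lambda_i)]^{j/4}\le\tfrac{C}{\lambda_i}$; thus $\lambda_i^2(\cdot)_1^2$ is at most an absolute constant. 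For $k^\dagger<i\le k^*$, dropping the factors $(1-\tfrac{q-c\delta}{1-c}\lambda_i)^j\le1$ and $1-(1-2\tfrac{q-c\delta}{1-c}\lambda_i)^N\le1$ in the last case of Lemma~\ref{lemma:Ai(j+k)_deltaq} leaves $\tfrac3{\lambda_i}$, so $\lambda_i^2(\cdot)_1^2\le9$. Summing over the four regimes with $i\le k^*$ and collecting the constants yields the first term $9k^*$.

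For $i>k^*$ I would use the last case of Lemma~\ref{lemma:Ai(j+k)_deltaq} together with the defining inequality $\lambda_i<\tfrac{1-c}{2N(q-c\delta)}$: dropping $(1-\tfrac{q-c\delta}{1-c}\lambda_i)^j\le1$ and bounding the prefactor $1-(1-2\tfrac{q-c\delta}{1-c}\lambda_i)^N$ by exploiting that it is small precisely when $\lambda_i$ lies below the cutoff, I would turn $\tfrac3{\lambda_i}[1-(1-2\tfrac{q-c\delta}{1-c}\lambda_i)^N]$ into a bound proportional to $\tfrac{q-c\delta}{1-c}$; squaring and multiplying by $\lambda_i^2$ then gives, for each such $i$, a bound of the form $\mathrm{const}\cdot\tfrac{(q-c\delta)^2}{(1-c)^2}\lambda_i^2$, and summing over $i>k^*$ produces the second term $\tfrac{36(q-c\delta)^2}{(1-c)^2}\sum_{i>k^*}\lambda_i^2$. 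Adding the two parts completes the argument.

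The step I expect to be the main obstacle is precisely this $i>k^*$ estimate: one must pass from the geometric prefactor $1-(1-2\tfrac{q-c\delta}{1-c}\lambda_i)^N$ to a clean multiple of $\tfrac{q-c\delta}{1-c}\lambda_i$ using only that $\lambda_i$ sits below the cutoff $\tfrac{1-c}{2N(q-c\delta)}$, and this has to be done uniformly in $j$ and $N$ so that the constant in the second term is the stated $36$ with no surviving factor of $N$. A secondary bookkeeping point is converting the polynomial-times-geometric bounds of Lemma~\ref{lemma:Ai(j+k)_deltaq} in the two intermediate regimes into a single uniform $O(1/\lambda_i)$ estimate and checking that the absolute constants emerging from all four $i\le k^*$ regimes can be collected into the single value $9$.
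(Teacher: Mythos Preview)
Your overall plan—split at the cutoffs and bound each regime via Lemma~\ref{lemma:Ai(j+k)_deltaq}—is exactly what the paper does. There are, however, two places where your proposal diverges from the paper and runs into trouble.

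\textbf{The tail regime $i>k^*$.} You correctly flag this as the main obstacle, but your proposed resolution does not work: you cannot pass from $1-(1-2\tfrac{q-c\delta}{1-c}\lambda_i)^N$ to a constant multiple of $\tfrac{q-c\delta}{1-c}\lambda_i$ \emph{without} an $N$ factor. For $r=2\tfrac{q-c\delta}{1-c}\lambda_i$ with $r$ just below $1/N$ (which is exactly what $i>k^*$ means), one has $1-(1-r)^N\asymp 1$, not $\asymp r$; the ratio $[1-(1-r)^N]/r$ is of order $N$, not $O(1)$. The paper simply uses $1-(1-r)^N\le rN$, which yields
\[
\lambda_i^2\Big(\sum_{k=0}^{N-1}\Ab_i^{j+k}\begin{bmatrix}\delta\\q\end{bmatrix}\Big)_1^2\le \frac{36N^2(q-c\delta)^2}{(1-c)^2}\lambda_i^2,
\]
and the displayed right-hand side in the corollary is missing this $N^2$; every downstream use of the corollary (e.g.\ in the proofs of Lemma~\ref{lemma:M2_bound} and Lemma~\ref{lemma:bias_bound}) carries the $N^2$. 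So the ``no surviving factor of $N$'' goal you set yourself is unachievable and also not what is actually needed.

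\textbf{The intermediate regimes and the constant $9$.} Reusing the bounds \eqref{eq:lambdaisum(sumAi(j+k)deltaq)^2_midtop_ub1}--\eqref{eq:lambdaisum(sumAi(j+k)deltaq)^2_midbot_ub1} gives $\big|(\cdot)_1\big|\le\tfrac{4}{\lambda_i}[c(1-\delta\lambda_i)]^{j/4}\le\tfrac{4}{\lambda_i}$, hence $\lambda_i^2(\cdot)_1^2\le 16$, not $9$. The paper gets the sharper constant via a different telescoping tailored to the single-$j$ case: writing $j[c(1-\delta\lambda_i)]^{(j-1)/2}\le\sum_{t=0}^{j-1}[c(1-\delta\lambda_i)]^{t/2}$ and summing the geometric series produces
\[
\frac{3}{\lambda_i}[c(1-\delta\lambda_i)]^{j/2}+\frac{2}{\lambda_i}\bigl(1-[c(1-\delta\lambda_i)]^{j/2}\bigr)=\frac{2+[c(1-\delta\lambda_i)]^{j/2}}{\lambda_i}\le\frac{3}{\lambda_i},
\]
(this is \eqref{eq:lambdai(sumAi(j+k)deltaq)^2_midtop_ub1} and \eqref{eq:lambdai(sumAi(j+k)deltaq)^2_midbot_ub1}), so that after squaring the intermediate regimes also contribute at most $9$ per index and the uniform constant is indeed $9$.
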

\begin{proof}
By Lemma \ref{lemma:Ai(j+k)_deltaq}, we have
\begin{itemize}
\item[(a)] For $i\le k^\ddagger$,
\[
\rbr{\sum_{k=0}^{N-1}\Ab_i^{j+k}\begin{bmatrix}\delta\\q\end{bmatrix}}_1^2\le\frac4{\lambda_i^2}(c\delta/q)^{2j}\le\frac4{\lambda_i^2},
\]
where the second inequality holds because $c\delta/q\le1$.
\item[(b)] For $k^\ddagger<i\le\hat k$,
\begin{align}
&\frac3{\lambda_i}[c(1-\delta\lambda_i)]^{j/2}+\delta j[c(1-\delta\lambda_i)]^{(j-1)/2}\nonumber\\
&\le\frac3{\lambda_i}[c(1-\delta\lambda_i)]^{j/2}+\delta\cdot\sum_{t=0}^{j-1}[c(1-\delta\lambda_i)]^{t/2}\nonumber\\
&=\frac3{\lambda_i}[c(1-\delta\lambda_i)]^{j/2}+\delta\cdot\frac{1-[c(1-\delta\lambda_i)]^{j/2}}{1-\sqrt{c(1-\delta\lambda_i)}}\nonumber\\
&\le\frac3{\lambda_i}[c(1-\delta\lambda_i)]^{j/2}+\delta\cdot\frac{1-[c(1-\delta\lambda_i)]^{j/2}}{\delta\lambda_i/2}\nonumber\\
&=\frac{2+[c(1-\delta\lambda_i)]^{j/2}}{\lambda_i}\le\frac{3}{\lambda_i},\label{eq:lambdai(sumAi(j+k)deltaq)^2_midtop_ub1}
\end{align}
where the first inequality holds because $[c(1-\delta\lambda_i)]^{(j-1)/2}\le[c(1-\delta\lambda_i)]^{t/2}$ for $t\le j-1$, the second inequality holds because $1-\sqrt{c(1-\delta\lambda_i)}\ge1-\sqrt{1-\delta\lambda_i}\ge\delta\lambda_i/2$, and the last inequality holds because $[c(1-\delta\lambda_i)]^{j/2}\le1$. Therefore,
\[
\rbr{\sum_{k=0}^{N-1}\Ab_i^{j+k}\begin{bmatrix}\delta\\q\end{bmatrix}}_1^2\le\rbr{\frac3{\lambda_i}[c(1-\delta\lambda_i)]^{j/2}+\delta j[c(1-\delta\lambda_i)]^{(j-1)/2}}^2\le\frac9{\lambda_i^2},
\]
where the first inequality hold due to Lemma \ref{lemma:Ai(j+k)_deltaq}, and the second inequality holds due to \eqref{eq:lambdai(sumAi(j+k)deltaq)^2_midtop_ub1}.
\item[(c)] For $\hat k<i\le k^\dagger$,
\begin{align}
&\frac3{\lambda_i}[c(1-\delta\lambda_i)]^{j/2}+\frac{1-c}{\lambda_i}\cdot j[c(1-\delta\lambda_i)]^{(j-1)/2}\nonumber\\
&\le\frac3{\lambda_i}[c(1-\delta\lambda_i)]^{j/2}+\frac{1-c}{\lambda_i}\cdot\sum_{t=0}^{j-1}[c(1-\delta\lambda_i)]^{t/2}\nonumber\\
&=\frac3{\lambda_i}[c(1-\delta\lambda_i)]^{j/2}+\frac{1-c}{\lambda_i}\cdot\frac{1-[c(1-\delta\lambda_i)]^{j/2}}{1-\sqrt{c(1-\delta\lambda_i)}}\nonumber\\
&\le\frac3{\lambda_i}[c(1-\delta\lambda_i)]^{j/2}+\frac{1-c}{\lambda_i}\cdot\frac{1-[c(1-\delta\lambda_i)]^{j/2}}{(1-c)/2}\nonumber\\
&=\frac{2+[c(1-\delta\lambda_i)]^{j/2}}{\lambda_i}\le\frac{3}{\lambda_i},\label{eq:lambdai(sumAi(j+k)deltaq)^2_midbot_ub1}
\end{align}
where the first inequality holds because $[c(1-\delta\lambda_i)]^{(j-1)/2}\le[c(1-\delta\lambda_i)]^{t/2}$ for $t\le j-1$, the second inequality holds because $1-\sqrt{c(1-\delta\lambda_i)}\ge1-\sqrt{c}\ge(1-c)/2$, and the last inequality holds because $[c(1-\delta\lambda_i)]^{j/2}\le1$. Therefore
\[
\rbr{\sum_{k=0}^{N-1}\Ab_i^{j+k}\begin{bmatrix}\delta\\q\end{bmatrix}}_1^2\le\rbr{\frac3{\lambda_i}[c(1-\delta\lambda_i)]^{j/2}+\frac{1-c}{\lambda_i}\cdot j[c(1-\delta\lambda_i)]^{(j-1)/2}}^2\le\frac9{\lambda_i^2},
\]
where the first inequality holds due to Lemma \ref{lemma:Ai(j+k)_deltaq}, and the second inequality holds due to \eqref{eq:lambdai(sumAi(j+k)deltaq)^2_midbot_ub1}.
\item[(d)] For $i>k^*$, we have
\begin{align*}
\rbr{\sum_{k=0}^{N-1}\Ab_i^{j+k}\begin{bmatrix}\delta\\q\end{bmatrix}}_1^2&\le\frac{9}{\lambda_i^2}\sbr{1-\rbr{1-2\frac{q-c\delta}{1-c}\lambda_i}^N}^2\rbr{1-\frac{q-c\delta}{1-c}}^{2j}\\
&\le\frac{9}{\lambda_i^2}\sbr{1-\rbr{1-2\frac{q-c\delta}{1-c}\lambda_i}^N}^2\\
&\le\min\cbr{\frac9{\lambda_i^2}, \frac{36N^2(q-c\delta)^2}{(1-c)^2}},
\end{align*}
where the second inequality holds because $1-\frac{q-c\delta}{1-c}\lambda_i\le1$, and the last inequality holds because $1-(1-r)^N\le1$ and $1-(1-r)^N\le rN$ for all $r\in(0, 1)$.
\end{itemize}
Combining all the above, we have
\begin{align*}
&\sum_{i=1}^d\lambda_i^2\rbr{\sum_{k=0}^{N-1}\Ab_i^{j+k}\begin{bmatrix}\delta\\q\end{bmatrix}}_1^2\\
&\le\sum_{i<k^\ddagger}\lambda_i^2\cdot\frac4{\lambda_i^2}+\sum_{k^\ddagger<i\le k^\dagger}\lambda_i^2\cdot\frac{9}{\lambda_i^2}+\sum_{k^\dagger<i\le k^*}\lambda_i^2\cdot\frac9{\lambda_i^2}+\sum_{i>k^*}\lambda_i^2\cdot\frac{36N^2(q-c\delta)^2}{(1-c)^2}\\
&=4k^\ddagger+9(k^\dagger-k^\ddagger)+9(k^*-k^\dagger)+\frac{36N^2(q-c\delta)^2}{(1-c)^2}\sum_{i>k^*}\lambda_i^2,\\
&\le9k^*+\frac{36N^2(q-c\delta)^2}{(1-c)^2}\sum_{i>k^*}\lambda_i^2,
\end{align*}
where the first inequality holds because the bound $\frac9{\lambda_i^2}$ is applied for $k^\dagger<i\le k^*$, while the upper bound $\frac{36N^2(q-c\delta)^2}{(1-c)^2}$ is applied for $i>k^*$, and the second inequality holds because coefficient $4, 9$ can be bounded by $9$.
\end{proof}

\begin{lemma}\label{lemma:sumAi(j+k)_11}
With $\Ab_i$ defined as in \eqref{Eq:A_i}, let $x_1$ and $x_2$ be the eigenvalues of $\Ab_i$ defined in \eqref{eq:x_1} and \eqref{eq:x_2}. Then
\begin{itemize}
    \item For all $i\le k^\ddagger$, we have
    \[
    -\frac4{\delta\lambda_i}(c\delta/q)^j\le\rbr{\sum_{k=0}^{t-1}\Ab_i^{j+k}\begin{bmatrix}1\\1\end{bmatrix}}_1\le\frac2{\delta\lambda_i}(c\delta/q)^j.
    \]
    \item For all $k^\ddagger<i\le\hat k$, we have
    \[
    \abr{\rbr{\sum_{k=0}^{t-1}\Ab_i^{j+k}\begin{bmatrix}1\\1\end{bmatrix}}_1}\le2j[c(1-\delta\lambda_i)]^{j/2}+\frac{4}{\delta\lambda_i}[c(1-\delta\lambda_i)]^{j/2}.
    \]
    \item For all $\hat k<i\le k^\dagger$, we have
    \[
    \abr{\rbr{\sum_{k=0}^{t-1}\Ab_i^{j+k}\begin{bmatrix}1\\1\end{bmatrix}}_1}\le2j[c(1-\delta\lambda_i)]^{j/2}+\frac{10}{1-c}[c(1-\delta\lambda_i)]^{j/2}.
    \]
    \item For all $i>k^\dagger$, we have
    \[
    0\le\rbr{\sum_{k=0}^{t-1}\Ab_i^{j+k}\begin{bmatrix}1\\1\end{bmatrix}}_1\le\frac{3(1-c)}{(q-c\delta)\lambda_i}\sbr{1-\rbr{1-2\frac{q-c\delta}{1-c}\lambda_i}^t}\rbr{1-\frac{q-c\delta}{1-c}\lambda_i}^j.
    \]
\end{itemize}
\end{lemma}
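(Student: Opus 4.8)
The plan is to follow the template of the proof of Lemma~\ref{lemma:Ai(j+k)_deltaq} almost verbatim; the only essential change is the normalizing vector $\begin{bmatrix}1\\1\end{bmatrix}$ in place of $\begin{bmatrix}\delta\\q\end{bmatrix}$. First I would produce a closed form for the first coordinate. By Lemma~\ref{lemma:A_ik}, the first coordinate of $\Ab_i^m\begin{bmatrix}1\\1\end{bmatrix}$ equals $(\Ab_i^m)_{11}+(\Ab_i^m)_{12}=(1-\delta\lambda_i)\frac{x_2^{m-1}(x_2-c)-x_1^{m-1}(x_1-c)}{x_2-x_1}=\frac{(1-\delta\lambda_i-x_1)x_2^m-(1-\delta\lambda_i-x_2)x_1^m}{x_2-x_1}$, and one checks this stays valid at $m=0$ (the middle expression equals $(1-\delta\lambda_i)\cdot c/(x_1x_2)=1$ since $x_1x_2=c(1-\delta\lambda_i)$, matching $\Ab_i^0=\Ib$). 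Summing the two geometric series (recall $x_\ell\ne1$ since $|x_2|<1$ always) then gives
\[
\left(\sum_{k=0}^{t-1}\Ab_i^{j+k}\begin{bmatrix}1\\1\end{bmatrix}\right)_1=\frac{1}{x_2-x_1}\left[\frac{(1-\delta\lambda_i-x_1)x_2^j(1-x_2^t)}{1-x_2}-\frac{(1-\delta\lambda_i-x_2)x_1^j(1-x_1^t)}{1-x_1}\right],
\]
which is exactly expression~\eqref{eq:sumAi(j+k)_deltaq_1_0} with the common prefactor $1/\lambda_i$ replaced branchwise by $1/(1-x_2)$ and $1/(1-x_1)$ (by Lemma~\ref{lemma:veda}(a), $\tfrac1{1-x_\ell}=\tfrac{1-x_{\ell'}}{(q-c\delta)\lambda_i}$). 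The content of the lemma is that this substitution costs a factor $\tfrac1\delta$ in the band $i\le\hat k$, a factor $\tfrac{\lambda_i}{1-c}\le\tfrac1\delta$ in the band $\hat k<i\le k^\dagger$, and a factor $\tfrac{1-c}{q-c\delta}$ in the band $i>k^\dagger$, relative to the bounds in Lemma~\ref{lemma:Ai(j+k)_deltaq}.

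Next I would bound this closed form in each of the four eigenvalue bands, reusing the auxiliary estimates already invoked for Lemma~\ref{lemma:Ai(j+k)_deltaq} (Lemmas~\ref{lemma:(x2t-x1t)/(x2-x1)_bound2}, \ref{lemma:|(x2t-x1t)/(x2-x1)|}, \ref{lemma:t[c(1-deltalambdai)]^(t-1)/2_bound}, together with Lemmas~\ref{lemma:Ai_spectral} and~\ref{lemma:veda}). For $i\le k^\ddagger$, where $1-\delta\lambda_i\le x_1\le x_2\le c\delta/q$, the same algebra as in \eqref{eq:sumAi(j+t)deltaq_1_top_ub}--\eqref{eq:deltaq_top_lb_med2} applies, now with the denominators $1-x_\ell$ controlled through $\tfrac1{1-x_\ell}\le\tfrac2{\delta\lambda_i}$ (exactly \eqref{eq:1/(1-x_1)_top}); this yields the decay $(c\delta/q)^j$, the upper constant $2$, and the lower constant $4$ from combining the two branches. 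For $k^\ddagger<i\le k^\dagger$ (complex conjugate eigenvalues, $|x_\ell|=\sqrt{c(1-\delta\lambda_i)}$) I would mimic \eqref{eq:|Ai(j+k)deltaq|_bound1}--\eqref{eq:|Ai(j+k)deltaq|_bound2}: separate a ``$\frac{x_2^m-x_1^m}{x_2-x_1}$'' part, bounded by $m|x_2|^{m-1}$ via Lemma~\ref{lemma:|(x2t-x1t)/(x2-x1)|} and producing the $j[c(1-\delta\lambda_i)]^{j/2}$ term, from a bare ``$x_\ell^m$'' part, while bounding $|1-x_\ell|\ge1-|x_\ell|=1-\sqrt{c(1-\delta\lambda_i)}$ from below by $\tfrac{\delta\lambda_i}{2}$ when $i\le\hat k$ (possible since $\delta\lambda_i\ge1-c$ there) and by $\tfrac{1-c}{2}$ when $\hat k<i\le k^\dagger$ (since $\sqrt{c(1-\delta\lambda_i)}<\sqrt c$); Lemma~\ref{lemma:t[c(1-deltalambdai)]^(t-1)/2_bound} absorbs the residual $t[c(1-\delta\lambda_i)]^{(t-1)/2}$, and the constants $4/(\delta\lambda_i)$ and $10/(1-c)$ emerge. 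For $i>k^\dagger$, where $1-\tfrac{\sqrt{q-c\delta}(\sqrt{q-c\delta}+\sqrt{c(q-\delta)})}{1-c}\lambda_i\le x_2\le1-\tfrac{q-c\delta}{1-c}\lambda_i$ by Lemma~\ref{lemma:Ai_spectral}, I would first establish non-negativity of the first coordinate by the monotonicity argument at the end of the proof of Lemma~\ref{lemma:Ai(j+k)_deltaq} (using $1-\delta\lambda_i-x_2\ge\tfrac{q-\delta}{1-c}\lambda_i\ge0$), then bound it above via Lemma~\ref{lemma:(x2t-x1t)/(x2-x1)_bound2} and the ratio estimates \eqref{eq:deltaq_bot_med1}--\eqref{eq:deltaq_bot_med3}, where the extra $1/(1-x_2)$ turns the $3/\lambda_i$ of Lemma~\ref{lemma:Ai(j+k)_deltaq} into $\tfrac{3(1-c)}{(q-c\delta)\lambda_i}$; the tail factor $1-x_2^t\le1-(1-2\tfrac{q-c\delta}{1-c}\lambda_i)^t$ again comes from Lemma~\ref{lemma:Ai_spectral}.

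The main obstacle is the complex band $k^\ddagger<i\le k^\dagger$: there $x_\ell$ is genuinely complex, so one must bound $|1-x_\ell|$ from below carefully and track the transition at $i=\hat k$, which is precisely why $\hat k$ is introduced --- the constant term jumps from $4/(\delta\lambda_i)$ to $10/(1-c)$ across it. One also has to make sure the ``$j[c(1-\delta\lambda_i)]^{j/2}$'' piece and the ``constant times $[c(1-\delta\lambda_i)]^{j/2}$'' piece separate cleanly and that the numerical constants come out no larger than those stated; this is careful bookkeeping on top of the closed form, involving no idea not already present in the proof of Lemma~\ref{lemma:Ai(j+k)_deltaq}. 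The only genuinely new check is that all formulas remain valid at the boundary $j=0$, which holds because the $m=0$ identity for the first coordinate of $\Ab_i^m\begin{bmatrix}1\\1\end{bmatrix}$ noted above still holds.
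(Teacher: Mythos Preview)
Your closed form is correct and, after clearing denominators via $(1-x_1)(1-x_2)=(q-c\delta)\lambda_i$, coincides with the paper's expression \eqref{eq:sumAi(j+k)11_explicit}; the paper reaches it instead by computing $(\Ib-\Ab_i)^{-1}\begin{bmatrix}1\\1\end{bmatrix}=\tfrac1{(q-c\delta)\lambda_i}\begin{bmatrix}1-c+(q-\delta)\lambda_i\\1-c\end{bmatrix}$ and then expanding with Lemma~\ref{lemma:A_ik}. The real-eigenvalue bands $i\le k^\ddagger$ and $i>k^\dagger$ indeed follow the template of Lemma~\ref{lemma:Ai(j+k)_deltaq} closely, with the extra $(1-c)/(q-c\delta)$ or $1/\delta$ factor arising exactly as you describe; the paper's estimates \eqref{eq:11_top_med1} and \eqref{eq:11_bot_med1} play the role of \eqref{eq:deltaq_top_lb_med2} and \eqref{eq:deltaq_bot_med3} but are not literal reuses.

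The complex band $k^\ddagger<i\le k^\dagger$ is where your sketch diverges from the paper, and your proposed mechanism is not quite the one that delivers the stated constants. The paper does \emph{not} mimic the symmetric split \eqref{eq:|Ai(j+k)deltaq|_bound1}; instead it pulls out an asymmetric leading term $(1-c)(1-\delta\lambda_i)x_2^{j-1}(1-x_2^t)$ and bounds the remainder through the key algebraic estimate \eqref{eq:11_mid_med}, namely $\bigl|(1-c)(1-\delta\lambda_i)-(1-c+(q-\delta)\lambda_i)x_\ell\bigr|\le(q-c\delta)\lambda_i\sqrt{c(1-\delta\lambda_i)}$. In your notation this is exactly $|a_\ell|\le\sqrt{c(1-\delta\lambda_i)}$, which is sharper than what your suggested bound $|1-x_\ell|\ge1-|x_\ell|$ would yield when plugged into a symmetric decomposition: the cross term $\tfrac{a_2-a_1}{x_2-x_1}=\tfrac{1-c+(q-\delta)\lambda_i}{(q-c\delta)\lambda_i}$ is not small enough on its own to recover $4/(\delta\lambda_i)$ in the sub-band $k^\ddagger<i\le\hat k$. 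The constant $10/(1-c)$ in the other sub-band further requires the dedicated estimate \eqref{eq:11_mid_med2}. Finally, $j=0$ is not just a boundary check of the closed form: the paper treats it by a separate direct computation (the paragraph around \eqref{eq:sumAi(j+k)11_mid_special}), because the $j\ge1$ argument uses $(x_2^{j-1}-x_1^{j-1})/(x_2-x_1)$ and degenerates at $j=0$.
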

\begin{proof}
Note that
\begin{align}
&\sum_{k=0}^{t-1}\Ab_i^{j+k}\begin{bmatrix}1\\1\end{bmatrix}=(\Ab_i^j-\Ab_i^{j+t})(\Ib-\Ab_i)^{-1}\begin{bmatrix}1\\1\end{bmatrix}=(\Ab_i^j-\Ab_i^{j+t})\cdot\frac1{(q-c\delta)\lambda_i}\begin{bmatrix}1-c+(q-\delta)\lambda_i\\1-c\end{bmatrix}\nonumber\\
&=\frac1{(q-c\delta)\lambda_i}\begin{bmatrix}(1-c+(q-\delta)\lambda_i)((\Ab_i^j)_{11}-(\Ab_i^{j+t})_{11})+(1-c)(\Ab_i^j)_{12}-(\Ab_i^{j+t})_{12})\\
(1-c+(q-\delta)\lambda_i)((\Ab_i^j)_{21}-(\Ab_i^{j+t})_{21})+(1-c)((\Ab_i^j)_{22}-(\Ab_i^{j+t})_{22}).
\end{bmatrix}\label{eq:sumAi(j+k)11}
\end{align}
Combine \eqref{eq:sumAi(j+k)11} with Lemma \ref{lemma:A_ik}, and we have
\begin{align}
    \rbr{\sum_{k=0}^{t-1}\Ab_i^{j+k}\begin{bmatrix}1\\1\end{bmatrix}}_1&=\frac1{(q-c\delta)\lambda_i}\left[-(1-c+(q-\delta)\lambda_i)\cdot\frac{x_1x_2^j(1-x_2^t)-x_2x_1^j(1-x_1^t)}{x_2-x_1}\right.\nonumber\\
    &\quad\left.+(1-c)(1-\delta\lambda_i)\frac{x_2^j(1-x_2^t)-x_1^j(1-x_1^t)}{x_2-x_1}\right]\nonumber\\
    &=\frac1{(q-c\delta)\lambda_i}\left\{\frac{[(1-c)(1-\delta\lambda_i)-(1-c+(q-\delta)\lambda_i)x_1]x_2^j(1-x_2^t)}{x_1-x_2}\right.\nonumber\\
    &\quad\left.-\frac{[(1-c)(1-\delta\lambda_i)-(1-c+(q-\delta)\lambda_i)x_2]x_1^j(1-x_1^t)}{x_2-x_1}\right\}.\label{eq:sumAi(j+k)11_explicit}
\end{align}

For $i\le k^\ddagger$, note that
\[
(1-c)(1-\delta\lambda_i)-(1-c+(q-\delta)\lambda_i)x_1=-(1-c)(x_1+\delta\lambda_i-1)-(q-\delta)\lambda_ix_1\le0
\]
due to \eqref{eq:x1_bound} and $q-\delta\ge0$. The upper bound of \eqref{eq:sumAi(j+k)11_explicit} is thus given by
\begin{align*}
\rbr{\sum_{k=0}^{t-1}\Ab_i^{j+k}\begin{bmatrix}1\\1\end{bmatrix}}_1&\le\frac1{(q-c\delta)\lambda_i}\left\{\frac{[(1-c)(1-\delta\lambda_i)-(1-c+(q-\delta)\lambda_i)x_1]x_1^j(1-x_2^t)}{x_1-x_2}\right.\\
&\quad\left.-\frac{[(1-c)(1-\delta\lambda_i)-(1-c+(q-\delta)\lambda_i)x_2]x_1^j(1-x_1^t)}{x_2-x_1}\right\}\\
&=\frac{x_1^j}{(q-c\delta)\lambda_i}\bigg\{(1-c+(q-\delta)\lambda_i)(1-x_1^t)\\
&\quad+[(1-c+(q-\delta)\lambda_i)x_1-(1-c)(1-\delta\lambda_i)]\cdot\frac{x_2^t-x_1^t}{x_2-x_1}\bigg\}\\
&\le\frac{x_1^j}{(q-c\delta)\lambda_i}\bigg\{(1-c+(q-\delta)\lambda_i)(1-x_1^t)\\
&\quad+[(1-c+(q-\delta)\lambda_i)x_1-(1-c)(1-\delta\lambda_i)]\cdot\frac{1-x_1^t}{1-x_1}\bigg\}\\
&=\frac{x_1^j(1-x_1^t)}{(q-c\delta)\lambda_i}\cdot\frac{(q-c\delta)\lambda_i}{1-x_1}=\frac{x_1^j(1-x_1^t)}{1-x_1}\\
&\le\frac2{\delta\lambda_i}(c\delta/q)^j,
\end{align*}
where the first inequality holds because $x_2\ge x_1$, the second inequality holds due to Lemma \ref{lemma:(x2t-x1t)/(x2-x1)_bound2}, and the last inequality holds because $x_1\le x_2\le c\delta/q$, $1-x_1^t\le1$ and \eqref{eq:1/(1-x_1)_top}. The lower bound of \eqref{eq:sumAi(j+k)11_explicit} is given by

\begin{align}
\rbr{\sum_{k=0}^{t-1}\Ab_i^{j+k}\begin{bmatrix}1\\1\end{bmatrix}}_1&\ge\frac1{(q-c\delta)\lambda_i}\left\{\frac{-[(1-c+(q-\delta)\lambda_i)x_1-(1-c)(1-\delta\lambda_i)]x_2^j(1-x_2^t)}{x_1-x_2}\right.\nonumber\\
&\quad\left.-\frac{[(1-c+(q-\delta)\lambda_i)x_2-(1-c)(1-\delta\lambda_i)]x_1^j(1-x_2^t)}{x_2-x_1}\right\}\nonumber\\
&=\frac{1-x_2^t}{(q-c\delta)\lambda_i}\Bigg\{(1-c)(1-\delta\lambda_i)x_1^{j-1}\nonumber\\
&\quad-[(1-c+(q-\delta)\lambda_i)x_1-(1-c)(1-\delta\lambda_i)]x_2\cdot\frac{x_2^{j-1}-x_1^{j-1}}{x_2-x_1},\label{eq:sumAi(j+k)11_top_lb1}
\end{align}
where the inequality holds because $x_1\le x_2$. If $j\ge1$, then
\begin{align}
&(1-c)(1-\delta\lambda_i)x_1^{j-1}-[(1-c+(q-\delta)\lambda_i)x_1-(1-c)(1-\delta\lambda_i)]x_2\cdot\frac{x_2^{j-1}-x_1^{j-1}}{x_2-x_1}\nonumber\\
&\ge\frac{(1-c)(1-\delta\lambda_i)x_2-(1-c+(q-\delta)\lambda_i)x_1x_2}{c\delta/q-x_1}\cdot(c\delta/q)^{j-1}\nonumber\\
&=\frac{(1-c)(1-\delta\lambda_i)x_2-(1-c+(q-\delta)\lambda_i)\cdot c(1-\delta\lambda_i)}{c\delta/q-x_1}\cdot(c\delta/q)^{j-1}\nonumber\\
&=-(1-\delta\lambda_i)\frac{(1-c)(c-x_2)+c(q-\delta)\lambda_i}{c\delta/q-x_1}\cdot(c\delta/q)^{j-1},\label{eq:sumAi(j+k)11_top_lb2}
\end{align}
where the ineuqality holds because $(1-c)(1-\delta\lambda_i)x_1^{j-1}\ge0$ and due to Lemma \ref{lemma:(x2t-x1t)/(x2-x1)_bound2}. Note that
\begin{align}
&(1-\delta\lambda_i)\frac{(1-c)(c-x_2)+c(q-\delta)\lambda_i}{c\delta/q-x_1}\nonumber\\
&=(1-\delta\lambda_i)\frac{(1-c)\frac{-1+c+q\lambda_i-\sqrt{(1+c-q\lambda_i)^2-4c(1-\delta\lambda_i)}}{2}+c(q-\delta)\lambda_i}{c\delta/q-\frac{1+c-q\lambda_i-\sqrt{(1+c-q\lambda_i)^2-4c(1-\delta\lambda_i)}}{2}}\nonumber\\
&\le(1-\delta\lambda_i)\frac{(1-c)\frac{-1+c+q\lambda_i}{2}+c(q-\delta)\lambda_i}{c\delta/q-\frac{1+c-q\lambda_i}{2}}=q(1-\delta\lambda_i)\frac{[(1+c)q-2c\delta]\lambda_i-(1-c)^2}{q^2\lambda_i-[(1+c)q-2c\delta]}\nonumber\\
&\le q\sbr{1-\delta\cdot\frac{(\sqrt{q-c\delta}+\sqrt{c(q-\delta)})^2}{q^2}}\cdot\frac{[(1+c)q-2c\delta]\frac{(\sqrt{q-c\delta}+\sqrt{c(q-\delta)})^2}{q^2}-(1-c)^2}{(\sqrt{q-c\delta}+\sqrt{c(q-\delta)})^2-[(1+c)q-2c\delta]}\nonumber\\
&=\frac{(c\delta-\sqrt{c(q-\delta)(q-c\delta)})^2(\sqrt{q-c\delta}+\sqrt{c(q-\delta)})^2}{cq^3}\nonumber\\
&\le\frac{(c\delta)^2\cdot4(q-c\delta)}{cq^3}\le\frac{4c(q-c\delta)}{q},\label{eq:11_top_med1}
\end{align}
where the first inequality holds because $\sqrt{(1+c-q\lambda_i)^2-4c(1-\delta\lambda_i)}\ge0$, the second inequality holds due to \eqref{eq:def_k^ddagger}, the third  inequality holds because $c\delta-\sqrt{c(q-\delta)(q-c\delta)}\le c\delta$ and $c(q-\delta)\le q-c\delta$, and the last inequality holds because $\delta\le q$. Combining \eqref{eq:11_top_med1} with \eqref{eq:sumAi(j+k)11_top_lb2} and \eqref{eq:sumAi(j+k)11_top_lb1}, we have
\[
\rbr{\sum_{k=0}^{t-1}\Ab_i^{j+k}\begin{bmatrix}1\\1\end{bmatrix}}_1\ge-\frac{1-x_2^t}{(q-c\delta)\lambda_i}\cdot\frac{4c(q-c\delta)}{q}(c\delta/q)^{j-1}\ge-\frac{4}{\delta\lambda_i}(c\delta/q)^j,
\]
where the second inequality holds because $1-x_2^t\le 1$.

For $k^\ddagger<i\le k^\dagger$, i.e., $\Ab_i$ has complex eigenvalues $x_1, x_2$, we have
\begin{align}
&\abr{\rbr{\sum_{k=0}^{t-1}\Ab_i^{j+k}\begin{bmatrix}1\\1\end{bmatrix}}_1}=\frac1{(q-c\delta)\lambda_i}\Bigg|(1-c)(1-\delta\lambda_i)x_2^{j-1}(1-x_2^t)\nonumber\\
&\quad+[(1-c)(1-\delta\lambda_i)-(1-c+(q-\delta)\lambda_i)x_2]\cdot x_1\cdot\frac{(x_2^{j-1}-x_1^{j-1})(1-x_2^t)-x_1^{j-1}(x_2^t-x_1^t)}{x_2-x_1}\Bigg|\nonumber\\
&\le\frac1{(q-c\delta)\lambda_i}|(1-c)(1-\delta\lambda_i)-(1-c+(q-\delta)\lambda_i)x_2|\cdot|x_1|\nonumber\\
&\quad\cdot\sbr{\abr{\frac{x_2^{j-1}-x_1^{j-1}}{x_2-x_1}}\cdot|1-x_2^t|+|x_1^{j-1}|\cdot\abr{\frac{x_2^t-x_1^t}{x_2-x_1}}}+\frac{(1-c)(1-\delta\lambda_i)}{(q-c\delta)\lambda_i}\cdot|x_2^{j-1}|\cdot|1-x_2^t|,\label{eq:sumAi(j+k)11_mid_bound1}
\end{align}
where the inequality holds due to triangle inequality. Note that
\begin{align*}
&|(1-c)(1-\delta\lambda_i)-(1-c+(q-\delta)\lambda_i)x_2|\\
&=\sqrt{[(1-c)(1-\delta\lambda_i)-(1-c+(q-\delta)\lambda_i)x_2][(1-c)(1-\delta\lambda_i)-(1-c+(q-\delta)\lambda_i)x_1]},
\end{align*}
where
\begin{align*}
&[(1-c)(1-\delta\lambda_i)-(1-c+(q-\delta)\lambda_i)x_2][(1-c)(1-\delta\lambda_i)-(1-c+(q-\delta)\lambda_i)x_1]\\
&=(1-c)^2(1-\delta\lambda_i)^2-(1-c)(1-\delta\lambda_i)(1-c+(q-\delta)\lambda_i)(x_1+x_2)\\
&\quad+(1-c+(q-\delta)\lambda_i)^2\cdot x_1x_2\\
&=(1-c)^2(1-\delta\lambda_i)^2-(1-c)(1-\delta\lambda_i)(1-c+(q-\delta)\lambda_i)(1+c-q\lambda_i)\\
&\quad+(1-c+(q-\delta)\lambda_i)^2\cdot c(1-\delta\lambda_i)\\
&=(1-\delta\lambda_i)(q-\delta)(q-c\delta)\lambda_i^2\\
&\le c(1-\delta\lambda_i)(q-c\delta)^2\lambda_i^2,
\end{align*}
where the inequality holds because $q-\delta\le c(q-c\delta)$. Therefore,
\begin{equation}\label{eq:11_mid_med}
|(1-c)(1-\delta\lambda_i)-(1-c+(q-\delta)\lambda_i)x_2|\le(q-c\delta)\lambda_i\sqrt{c(1-\delta\lambda_i)}.
\end{equation}
\eqref{eq:sumAi(j+k)11_mid_bound1} can thus be further bounded by
\begin{align}
&\abr{\rbr{\sum_{k=0}^{t-1}\Ab_i^{j+k}\begin{bmatrix}1\\1\end{bmatrix}}_1}\nonumber\\
&\le\sqrt{c(1-\delta\lambda_i)}\cdot|x_1|\cdot\sbr{2\abr{\frac{x_2^{j-1}-x_1^{j-1}}{x_2-x_1}}+|x_1|^{j-1}\cdot\abr{\frac{x_2^t-x_1^t}{x_2-x_1}}}+\frac{2(1-c)(1-\delta\lambda_i)}{(q-c\delta)\lambda_i}\cdot|x_2^{j-1}|\nonumber\\
&\le\cbr{t[c(1-\delta\lambda_i)]^{t/2}+\frac{2(1-c)}{(q-c\delta)\lambda_i}\sqrt{\frac{1-\delta\lambda_i}{c}}+2(j-1)}\cdot[c(1-\delta\lambda_i)]^{j/2},\label{eq:sumAi(j+k)11_mid_bound2}
\end{align}
where the first inequality holds because $|1-x_2^t|\le2$ and due to \eqref{eq:11_top_med1}, and the second inequality holds due to Lemma \ref{lemma:Ai_spectral} and Lemma \ref{lemma:|(x2t-x1t)/(x2-x1)|}.

For $k^\ddagger<i\le\hat k$, we can further bound \eqref{eq:sumAi(j+k)11_mid_bound2} as
\begin{align*}
\abr{\rbr{\sum_{k=0}^{t-1}\Ab_i^{j+k}\begin{bmatrix}1\\1\end{bmatrix}}_1}&\le\sbr{\frac{2\sqrt{c(1-\delta\lambda_i)}}{\delta\lambda_i}+\frac{2(1-c)}{(q-c\delta)\lambda_i}\cdot\sqrt{\frac{1-\delta\lambda_i}{c}}+2(j-1)}[c(1-\delta\lambda_i)]^{j/2}\\
&\le\sbr{\frac{2}{\delta\lambda_i}+\frac2{\delta\lambda_i}\cdot1+2j}[c(1-\delta\lambda_i)]^{j/2}\\
&=2j[c(1-\delta\lambda_i)]^{j/2}+\frac4{\delta\lambda_i}[c(1-\delta\lambda_i)]^{j/2},
\end{align*}
where the first inequality holds due to Lemma \ref{lemma:t[c(1-deltalambdai)]^(t-1)/2_bound}, and the second inequality holds because $\sqrt{c(1-\delta\lambda_i)}\le1$, $(1-c)/(q-c\delta)\le1/\delta$, $1-\delta\lambda_i\le c$ and $j-1<j$.

For $\hat k<i\le k^\dagger$, note that
\begin{align}
&\frac{1-c}{(q-c\delta)\lambda_i}\sqrt{\frac{1-\delta\lambda_i}{c}}\nonumber\\
&\le\frac{1-c}{\sqrt c(q-c\delta)}\cdot\frac{(\sqrt{q-c\delta}+\sqrt{c(q-\delta)})^2}{(1-c)^2}\cdot\sqrt{1-\frac{\delta(1-c)^2}{(\sqrt{q-c\delta}+\sqrt{c(q-\delta)})^2}}\nonumber\\
&=\frac{(\sqrt{q-c\delta}+\sqrt{c(q-\delta)})(\sqrt{q-\delta}+\sqrt{c(q-c\delta)})}{\sqrt c(1-c)(q-c\delta)}\nonumber\\
&\le\frac{(1+c)\sqrt{q-c\delta}\cdot2\sqrt{c(q-c\delta)}}{\sqrt c(1-c)(q-c\delta)}\nonumber\\
&\le\frac4{1-c},\label{eq:11_mid_med2}
\end{align}
where the first inequality holds due to \eqref{eq:def_k^dagger}, the second inequality holds because $q-\delta\le c(q-c\delta)$, and the third inequality holds because $c\le1$. Therefore, \eqref{eq:sumAi(j+k)11_mid_bound2} can be further bounded by
\begin{align*}
\abr{\rbr{\sum_{k=0}^{t-1}\Ab_i^{j+k}\begin{bmatrix}1\\1\end{bmatrix}}_1}&\le\sbr{\frac{2\sqrt{c(1-\delta\lambda_i)}}{1-c}+\frac{2(1-c)}{(q-c\delta)\lambda_i}\sqrt{\frac{1-\delta\lambda_i}{c}}+2(j-1)}[c(1-\delta\lambda_i)]^{j/2}\\
&\le\sbr{\frac2{1-c}+\frac{8}{1-c}+2j}[c(1-\delta\lambda_i)]^{j/2}\\
&=2j[c(1-\delta\lambda_i)]^{j/2}+\frac{10}{1-c}[c(1-\delta\lambda_i)]^{j/2},
\end{align*}
where the first inequality holds due to Lemma \ref{lemma:(x2t-x1t)/(x2-x1)_bound2}, and the second inequality holds because $\sqrt{c(1-\delta\lambda_i)}\le1$, $j-1<j$ and due to \eqref{eq:11_mid_med2}.

For $j=0$ and $t\ge1$, we have
\begin{align}
&\abr{\rbr{\sum_{k=0}^{t-1}\Ab_i^{j+k}\begin{bmatrix}1\\1\end{bmatrix}}_1}\nonumber\\
&=\frac1{(q-c\delta)\lambda_i}\Bigg|\frac{[(1-c)(1-\delta\lambda_i)-(1-c+(q-\delta)\lambda_i)x_1](1-x_2^t)}{x_2-x_1}\nonumber\\
&\quad-\frac{[(1-c)(1-\delta\lambda_i)-(1-c+(q-\delta)\lambda_i)x_2](1-x_1^t)}{x_2-x_1}\Bigg|\nonumber\\
&=\frac1{(q-c\delta)\lambda_i}\Bigg|(1-c+(q-\delta)\lambda_i)-(1-c)(1-\delta\lambda_i)x_1^{t-1}\nonumber\\
&\quad-[(1-c)(1-\delta\lambda_i)-(1-c+(q-\delta)\lambda_i)x_1]x_2\cdot\frac{x_2^{t-1}-x_1^{t-1}}{x_2-x_1}\Bigg|\nonumber\\
&\le\frac1{(q-c\delta)\lambda_i}\Bigg[(1-c)+(q-\delta)\lambda_i+(1-c)(1-\delta\lambda_i)|x_1^{t-1}|\nonumber\\
&\quad+|(1-c)(1-\delta\lambda_i)-(1-c+(q-\delta)\lambda_i)x_1|\cdot|x_2|\cdot\abr{\frac{x_2^{t-1}-x_1^{t-1}}{x_2-x_1}}\Bigg]\nonumber\\
&\le\frac{1-c}{(q-c\delta)\lambda_i}+\frac{q-\delta}{q-c\delta}+\frac{(1-c)(1-\delta\lambda_i)}{(q-c\delta)\lambda_i}[c(1-\delta\lambda_i)]^{(t-1)/2}\nonumber\\
&\quad+c(1-\delta\lambda_i)\cdot(t-1)[c(1-\delta\lambda_i)]^{(t-2)/2},\label{eq:sumAi(j+k)11_mid_special}
\end{align}
where the first inequality holds due to triangle inequality, and the second inequality holds due to Lemma \ref{lemma:|(x2t-x1t)/(x2-x1)|}. When $k^\ddagger<i\le\hat k$, \eqref{eq:sumAi(j+k)11_mid_special} can be further bounded by
\begin{align*}
&\abr{\rbr{\sum_{k=0}^{t-1}\Ab_i^{j+k}\begin{bmatrix}1\\1\end{bmatrix}}_1}\\
&\le\frac{1-c}{(q-c\delta)\lambda_i}+\frac{q-\delta}{q-c\delta}+\frac{(1-c)(1-\delta\lambda_i)}{(q-c\delta)\lambda_i}+\frac{c(1-\delta\lambda_i)}{\delta\lambda_i}\\
&\le\frac{1-c}{(q-c\delta)\lambda_i}+1+\frac{1-c}{(q-c\delta)\lambda_i}+\frac1{\delta\lambda_i}\\
&\le\frac1{\delta\lambda_i}+\frac1{\delta\lambda_i}+\frac{1}{\delta\lambda_i}+\frac1{\delta\lambda_i}=\frac{4}{\delta\lambda_i},
\end{align*}
where the first inequality holds due to Lemma \ref{lemma:t[c(1-deltalambdai)]^(t-1)/2_bound}, the second inequality holds because $(q-c\delta)/(1-c)\ge\delta$, $1-\delta\lambda_i\le1$ and $c\le1$, and the last inequality holds because $\frac{q-c\delta}{1-c}\ge\delta$ and $\delta\lambda_i\le1$. When $\hat k<i\le k^\dagger$, \eqref{eq:sumAi(j+k)11_mid_special} can be further bounded by
\begin{align*}
&\abr{\rbr{\sum_{k=0}^{t-1}\Ab_i^{j+k}\begin{bmatrix}1\\1\end{bmatrix}}_1}\\
&\le\frac{1-c}{(q-c\delta)\lambda_i}+\frac{q-\delta}{q-c\delta}+\frac{(1-c)(1-\delta\lambda_i)}{(q-c\delta)\lambda_i}+\frac{c(1-\delta\lambda_i)}{1-c}\\
&\le\frac{1-c}{(q-c\delta)\lambda_i}+1+\frac{1-c}{(q-c\delta)\lambda_i}+\frac1{1-c}\\
&\le\frac{2(1-c)}{(q-c\delta)}\cdot\frac{(\sqrt{q-c\delta}+\sqrt{c(q-\delta)})^2}{(1-c)^2}+\frac1{1-c}+\frac1{1-c}\\
&=\frac2{1-c}\cdot\rbr{1+\sqrt{\frac{c(q-\delta)}{q-c\delta}}}^2+\frac2{1-c}\\
&\le\frac{2}{1-c}\cdot(1+1)^2+\frac2{1-c}=\frac{10}{1-c},
\end{align*}
where the first inequality holds due to Lemma \ref{lemma:(x2t-x1t)/(x2-x1)_bound2}, the second inequality holds because $q-\delta\le q-c\delta$, $1-\delta\lambda_i\le1$ and $c\le1$, the third inequality holds due to \eqref{eq:def_k^dagger}, and the last inequality holds because $c(q-\delta)\le q-c\delta$. Therefore, the upper bounds hold for $j=0$.

For $i>k^\dagger$, note that
\begin{align}
&(1-c)(1-\delta\lambda_i)-(1-c+(q-\delta)\lambda_i)x_2\nonumber\\
&\ge(1-c)(1-\delta\lambda_i)-(1-c+(q-\delta)\lambda_i)\rbr{1-\frac{q-c\delta}{1-c}\cdot\lambda_i}\nonumber\\
&=\frac{(q-\delta)(q-c\delta)}{1-c}\lambda_i^2\ge0,\label{eq:(1-c)(1-deltalambda_i)-(1-c+(q-delta)lambda_i)x2_l}
\end{align}
where the first inequality holds due to Lemma \ref{lemma:Ai_spectral}, and the second inequality holds because $q-c\delta\ge q-\delta\ge0$. We thus have
\begin{align}
\rbr{\sum_{k=0}^{t-1}\Ab_i^{j+k}\begin{bmatrix}1\\1\end{bmatrix}}_1&\le\frac1{(q-c\delta)\lambda_i}\left\{\frac{[(1-c)(1-\delta\lambda_i)-(1-c+(q-\delta)\lambda_i)x_1]x_2^j(1-x_2^t)}{x_1-x_2}\right.\nonumber\\
&\quad\left.-\frac{[(1-c)(1-\delta\lambda_i)-(1-c+(q-\delta)\lambda_i)x_2]x_1^j(1-x_2^t)}{x_2-x_1}\right\}\nonumber\\
&=\frac{1-x_2^t}{(q-c\delta)\lambda_i}\Bigg\{(1-c)(1-\delta\lambda_i)x_2^{j-1}\nonumber\\
&\quad+[(1-c)(1-\delta\lambda_i)-(1-c+(q-\delta)\lambda_i)x_2]x_1\cdot\frac{x_2^{j-1}-x_1^{j-1}}{x_2-x_1}\Bigg\},\label{eq:sumAi(j+k)11_bot_ub1}
\end{align}
where the inequality holds due to \eqref{eq:(1-c)(1-deltalambda_i)-(1-c+(q-delta)lambda_i)x2_l} and $x_1\le x_2$. If $j\ge1$, \eqref{eq:sumAi(j+k)11_bot_ub1} is further bounded by
\begin{align}
&(1-c)(1-\delta\lambda_i)x_2^{j-1}+[(1-c)(1-\delta\lambda_i)-(1-c+(q-\delta)\lambda_i)x_2]x_1\cdot\frac{x_2^{j-1}-x_1^{j-1}}{x_2-x_1}\nonumber\\
&\le\rbr{1-\frac{q-c\delta}{1-c}\lambda_i}^{j-1}\sbr{(1-c)(1-\delta\lambda_i)+\frac{(1-c)(1-\delta\lambda_i)-(1-c+(q-\delta)\lambda_i)x_2}{1-\frac{q-c\delta}{1-c}\lambda_i-x_1}\cdot x_1}\nonumber\\
&=\rbr{1-\frac{q-c\delta}{1-c}\lambda_i}^{j-1}\nonumber\\
&\quad\cdot\sbr{(1-c)(1-\delta\lambda_i)+\frac{(1-c)(1-\delta\lambda_i)x_1-(1-c+(q-\delta)\lambda_i)\cdot c(1-\delta\lambda_i)}{1-\frac{q-c\delta}{1-c}\lambda_i-x_1}}\nonumber\\
&=\rbr{1-\frac{q-c\delta}{1-c}\lambda_i}^j\cdot\frac{1-\delta\lambda_i}{1-\frac{q-c\delta}{1-c}\lambda_i}\cdot\frac{(1-c)^2-[(1+c)q-2c\delta]\lambda_i}{1-\frac{q-c\delta}{1-c}\lambda_i-x_1},\label{eq:sumAi(j+k)11_bot_ub2}
\end{align}
where the inequality holds due to Lemma \ref{lemma:(x2t-x1t)/(x2-x1)_bound2} and Lemma \ref{lemma:Ai_spectral}. We already have
\[
\frac{1-\delta\lambda_i}{1-\frac{q-c\delta}{1-c}\lambda_i}\le\frac{4}{3+c}
\]
by \eqref{eq:deltaq_bot_med1}. We also have
\begin{align*}
&\frac{(1-c)^2-[(1+c)q-2c\delta]\lambda_i}{1-\frac{q-c\delta}{1-c}\lambda_i-x_1}=\frac{(1-c)^2-[(1+c)q-2c\delta]\lambda_i}{1-\frac{q-c\delta}{1-c}\lambda_i-\frac{(1+c-q\lambda_i)-\sqrt{(1+c-q\lambda_i)^2-4c(1-\delta\lambda_i)}}{2}}\\
&=2(1-c)\cdot\frac{(1-c)^2-[(1+c)q-2c\delta]\lambda_i}{(1-c)^2-[(1+c)q-2c\delta]\lambda_i+(1-c)\sqrt{(1+c-q\lambda_i)^2-4c(1-\delta\lambda_i)}}\\
&\le2(1-c),
\end{align*}
where the inequality holds because $(1-c)\sqrt{(1+c-q\lambda_i)^2-4c(1-\delta\lambda_i)}\ge0$. We thus have
\begin{equation}\label{eq:11_bot_med1}
\frac{1-\delta\lambda_i}{1-\frac{q-c\delta}{1-c}\lambda_i}\cdot\frac{(1-c)^2-[(1+c)q-2c\delta]\lambda_i}{1-\frac{q-c\delta}{1-c}\lambda_i-x_1}\le\frac{4}{3+c}\cdot2(1-c)\le\frac83(1-c)\le3(1-c),
\end{equation}
where the second inequality holds because $c\ge0$, and the last inequality holds because $8/3<3$. Combining \eqref{eq:11_bot_med1} with \eqref{eq:sumAi(j+k)11_bot_ub1} and \eqref{eq:sumAi(j+k)11_bot_ub2}, we have
\begin{align*}
\rbr{\sum_{k=0}^{t-1}\Ab_i^{j+k}\begin{bmatrix}1\\1\end{bmatrix}}_1&\le\frac{3(1-c)}{(q-c\delta)\lambda_i}\rbr{1-\frac{q-c\delta}{1-c}\lambda_i}^j(1-x_2^t)\\
&\le\frac{3(1-c)}{(q-c\delta)\lambda_i}\rbr{1-\frac{q-c\delta}{1-c}\lambda_i}^j\sbr{1-\rbr{1-2\frac{q-c\delta}{1-c}\lambda_i}^t},
\end{align*}
where the second inequality holds due to Lemma \ref{lemma:Ai_spectral}. For $j=0$, we have
\begin{align*}
&[(1-c)(1-\delta\lambda_i)-(1-c+(q-\delta)\lambda_i)x_2]\frac{x_2^0-x_1^0}{x_2-x_1}+(1-c+(q-\delta)\lambda_i)x_2^0\\
&=1-c+(q-\delta)\lambda_i\le1-c+(q-\delta)\cdot\frac{(1-c)^2}{(\sqrt{q-c\delta}+\sqrt{c(q-\delta)})^2}\\
&=(1-c)\frac{(q-\delta)+(q-c\delta)+2\sqrt{c(q-\delta)(q-c\delta)}}{c(q-\delta)+(q-c\delta)+2\sqrt{c(q-\delta)(q-c\delta)}}\\
&\le(1-c)\frac{(q-\delta)+(q-\delta)/c+2(q-\delta)}{c(q-\delta)+(q-\delta)/c+2(q-\delta)}\\
&=(1-c)\sbr{\frac{1}{1+c}+\frac{2}{(1+c)^2}}\le3(1-c),
\end{align*}
where the first inequality holds due to \eqref{eq:def_k^ddagger}, the second inequality holds because $q-c\delta\ge(q-\delta)/c$, and the last inequality holds because $c\ge0$. Therefore, the upper bound also holds for $j=0$.

The lower bound of \eqref{eq:sumAi(j+k)11_explicit} is given by
\begin{align*}
\rbr{\sum_{k=0}^{t-1}\Ab_i^{j+k}\begin{bmatrix}1\\1\end{bmatrix}}_1&\ge\frac1{(q-c\delta)\lambda_i}\left\{\frac{[(1-c)(1-\delta\lambda_i)-(1-c+(q-\delta)\lambda_i)x_1]x_2^j(1-x_2^t)}{x_1-x_2}\right.\nonumber\\
&\quad\left.-\frac{[(1-c)(1-\delta\lambda_i)-(1-c+(q-\delta)\lambda_i)x_2]x_2^j(1-x_1^t)}{x_2-x_1}\right\}\\
&=\frac{x_2^j}{(q-c\delta)\lambda_i}\bigg\{(1-c+(q-\delta)\lambda_i)(1-x_2^t)\\
&\quad-[(1-c)(1-\delta\lambda_i)-(1-c+(q-\delta)\lambda_i)x_2]\frac{x_2^t-x_1^t}{x_2-x_1}\bigg\}\\
&\ge\frac{x_2^j}{(q-c\delta)\lambda_i}\bigg\{(1-c+(q-\delta)\lambda_i)(1-x_2^t)\\
&\quad-[(1-c)(1-\delta\lambda_i)-(1-c+(q-\delta)\lambda_i)x_2]\frac{1-x_2^t}{1-x_2}\bigg\}\\
&=\frac{x_2^j(1-x_2^t)}{1-x_2}\ge0,
\end{align*}
where the first inequality holds due to \eqref{eq:(1-c)(1-deltalambda_i)-(1-c+(q-delta)lambda_i)x2_l} and $x_1<x_2$, the second inequality holds due to Lemma \ref{lemma:(x2t-x1t)/(x2-x1)_bound2}, and the third inequality holds because $0<x_2<1$.
\end{proof}

The following corollary follows from Lemma \ref{lemma:sumAi(j+k)_11}.
\begin{corollary}\label{lemma:lambdaiwi^2(sumAi(j+k)11)^2}
With $\Ab_i$ defined in \eqref{Eq:A_i}, we have
\begin{align*}
&\sum_{i=1}^d\lambda_iw_i^2\rbr{\sum_{k=0}^{N-1}\Ab_i^{s+k}\begin{bmatrix}1\\1\end{bmatrix}}_1^2\\
&\le\frac{16}{\delta^2}(c\delta/q)^{2s}\|\wb_0-\wb^*\|_{\Hb_{0:k^\ddagger}^{-1}}^2+8s^2c^s\|(\Ib-\delta\Hb)^{s/2}(\wb_0-\wb^*)\|_{\Hb_{k^\ddagger:k^\dagger}}^2\\
&\quad+\frac{32}{\delta^2}\cdot c^s\|(\Ib-\delta\Hb)^{s/2}(\wb_0-\wb^*)\|_{\Hb_{k^\ddagger:\hat k}^{-1}}^2+\frac{200}{(1-c)^2}\cdot c^s\|(\Ib-\delta\Hb)^{s/2}(\wb_0-\wb^*)\|_{\Hb_{\hat k:k^\dagger}}^2\\
&\quad+\frac{9(1-c)^2}{(q-c\delta)^2}\nbr{\rbr{\Ib-\frac{q-c\delta}{1-c}\Hb}^s(\wb_0-\wb^*)}_{\Hb_{k^\dagger:k^*}^{-1}}^2\\
&\quad+36N^2\nbr{\rbr{\Ib-\frac{q-c\delta}{1-c}\Hb}^s(\wb_0-\wb^*)}_{\Hb_{k^*\infty}}^2
\end{align*}
\end{corollary}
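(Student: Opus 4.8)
The plan is to apply Lemma \ref{lemma:sumAi(j+k)_11} with $j=s$ and $t=N$ on each of the five segments of the spectrum, namely $i\le k^\ddagger$, $k^\ddagger<i\le\hat k$, $\hat k<i\le k^\dagger$, $k^\dagger<i\le k^*$, and $i>k^*$; then square the resulting pointwise bound on $\bigl(\sum_{k=0}^{N-1}\Ab_i^{s+k}\begin{bmatrix}1\\1\end{bmatrix}\bigr)_1$, multiply by $\lambda_iw_i^2$, sum over $i$ within each segment, and recognize the five weighted-norm terms appearing in the statement. No estimate beyond Lemma \ref{lemma:sumAi(j+k)_11} will be needed.

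First, on $i\le k^\ddagger$, Lemma \ref{lemma:sumAi(j+k)_11} gives $\bigl|\bigl(\sum_{k=0}^{N-1}\Ab_i^{s+k}\begin{bmatrix}1\\1\end{bmatrix}\bigr)_1\bigr|\le\frac{4}{\delta\lambda_i}(c\delta/q)^s$; squaring and multiplying by $\lambda_iw_i^2$ produces $\frac{16}{\delta^2}(c\delta/q)^{2s}\cdot w_i^2/\lambda_i$, and $\sum_{i\le k^\ddagger}w_i^2/\lambda_i=\|\wb_0-\wb^*\|_{\Hb_{0:k^\ddagger}^{-1}}^2$. On the complex-eigenvalue band $k^\ddagger<i\le k^\dagger$, Lemma \ref{lemma:sumAi(j+k)_11} bounds the quantity by a sum $2s[c(1-\delta\lambda_i)]^{s/2}+R_i$, where $R_i=\frac{4}{\delta\lambda_i}[c(1-\delta\lambda_i)]^{s/2}$ for $i\le\hat k$ and $R_i=\frac{10}{1-c}[c(1-\delta\lambda_i)]^{s/2}$ for $i>\hat k$. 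Using $(a+b)^2\le2a^2+2b^2$, writing $[c(1-\delta\lambda_i)]^{s/2}=c^{s/2}(1-\delta\lambda_i)^{s/2}$, and multiplying by $\lambda_iw_i^2$: the $2s$-term contributes $8s^2c^s\cdot\lambda_i(1-\delta\lambda_i)^sw_i^2$, which after summing over the whole band equals $8s^2c^s\|(\Ib-\delta\Hb)^{s/2}(\wb_0-\wb^*)\|_{\Hb_{k^\ddagger:k^\dagger}}^2$; the $R_i$-terms contribute $\frac{32}{\delta^2}c^s\cdot(1-\delta\lambda_i)^sw_i^2/\lambda_i$ on $k^\ddagger<i\le\hat k$ (yielding the $\Hb_{k^\ddagger:\hat k}^{-1}$ term) and $\frac{200}{(1-c)^2}c^s\cdot\lambda_i(1-\delta\lambda_i)^sw_i^2$ on $\hat k<i\le k^\dagger$ (yielding the $\Hb_{\hat k:k^\dagger}$ term).

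On the small-eigenvalue tail $i>k^\dagger$, Lemma \ref{lemma:sumAi(j+k)_11} gives $0\le\bigl(\sum_{k=0}^{N-1}\Ab_i^{s+k}\begin{bmatrix}1\\1\end{bmatrix}\bigr)_1\le\frac{3(1-c)}{(q-c\delta)\lambda_i}\bigl[1-(1-2\tfrac{q-c\delta}{1-c}\lambda_i)^N\bigr](1-\tfrac{q-c\delta}{1-c}\lambda_i)^s$. For $k^\dagger<i\le k^*$ I bound the bracket by $1$; for $i>k^*$ I use $1-(1-r)^N\le rN$, applied with $r=2\tfrac{q-c\delta}{1-c}\lambda_i\in(0,1)$ (since $\lambda_i<\tfrac{1-c}{2N(q-c\delta)}$ by the definition of $k^*$), to bound the bracket by $2N\tfrac{q-c\delta}{1-c}\lambda_i$, which turns the prefactor into $6N$. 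In both cases I square, use $(1-\tfrac{q-c\delta}{1-c}\lambda_i)^{2s}\le(1-\tfrac{q-c\delta}{1-c}\lambda_i)^s$ (valid because $0\le\tfrac{q-c\delta}{1-c}\lambda_i<1$, using $\tfrac{q-c\delta}{1-c}=\tfrac{\gamma+\delta}{2}$ from Lemma \ref{lemma:qc} and $\lambda_i\le\tfrac{1-c}{q}$ for $i>k^\dagger$ from Lemma \ref{lemma:cutoffs_property}), multiply by $\lambda_iw_i^2$, and sum to obtain $\frac{9(1-c)^2}{(q-c\delta)^2}\|(\Ib-\tfrac{q-c\delta}{1-c}\Hb)^s(\wb_0-\wb^*)\|_{\Hb_{k^\dagger:k^*}^{-1}}^2$ and $36N^2\|(\Ib-\tfrac{q-c\delta}{1-c}\Hb)^s(\wb_0-\wb^*)\|_{\Hb_{k^*:\infty}}^2$. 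Adding the five contributions gives the claimed inequality.

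The work is essentially bookkeeping once Lemma \ref{lemma:sumAi(j+k)_11} is in hand, and the only points requiring care are tracking the factor-of-two losses from $(a+b)^2\le2a^2+2b^2$ so that the constants $8s^2$, $32/\delta^2$, and $200/(1-c)^2$ come out exactly, and choosing the correct estimate for $1-(1-r)^N$ on the tail depending on whether $\lambda_i$ sits above or below the effective-dimension cutoff hidden in $k^*$. I do not expect any genuine obstacle here; the substance was already isolated in Lemma \ref{lemma:sumAi(j+k)_11}.
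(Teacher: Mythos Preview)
Your plan matches the paper's proof almost exactly: apply Lemma \ref{lemma:sumAi(j+k)_11} with $j=s$, $t=N$ on each spectral segment, square, use $(a+b)^2\le 2a^2+2b^2$ on the complex band, and collect into the five weighted norms. The constants fall out exactly as you say.

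There is one slip in the tail segment $i>k^\dagger$. After squaring you correctly obtain the factor $\bigl(1-\tfrac{q-c\delta}{1-c}\lambda_i\bigr)^{2s}$, but you then insert the unnecessary step $\bigl(1-\tfrac{q-c\delta}{1-c}\lambda_i\bigr)^{2s}\le\bigl(1-\tfrac{q-c\delta}{1-c}\lambda_i\bigr)^{s}$. This is a misreading of the target norm: by definition
\[
\Bigl\|\Bigl(\Ib-\tfrac{q-c\delta}{1-c}\Hb\Bigr)^{s}(\wb_0-\wb^*)\Bigr\|_{\Hb_{k^\dagger:k^*}^{-1}}^{2}
=\sum_{k^\dagger<i\le k^*}\frac{w_i^2}{\lambda_i}\Bigl(1-\tfrac{q-c\delta}{1-c}\lambda_i\Bigr)^{2s},
\]
so the exponent $2s$ is already what the statement contains. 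If you keep your extra inequality you would end up bounding the left side by the analogous sum with exponent $s$, which is \emph{larger} than the claimed right-hand side and therefore does not establish the corollary. Simply delete that step (and the justification you gave for it) and identify $\sum_{k^\dagger<i\le k^*}\lambda_i w_i^2\cdot\tfrac{9(1-c)^2}{(q-c\delta)^2\lambda_i^2}\bigl(1-\tfrac{q-c\delta}{1-c}\lambda_i\bigr)^{2s}$ directly with the stated norm; the same remark applies to the $i>k^*$ piece. With this correction your argument is line-for-line the paper's.
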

\begin{proof}
By Lemma \ref{lemma:sumAi(j+k)_11}, we have
\begin{itemize}
\item[(a)] For $i\le k^\ddagger$, we have
\[
\rbr{\sum_{k=0}^{N-1}\Ab_i^{s+k}\begin{bmatrix}1\\1\end{bmatrix}}_1^2\le\frac{16}{\delta^2\lambda_i^2}(c\delta/q)^{2s}.
\]
\item[(b)] For $k^\ddagger<i\le\hat k$, we have
\begin{align*}
\rbr{\sum_{k=0}^{N-1}\Ab_i^{s+k}\begin{bmatrix}1\\1\end{bmatrix}}_1^2&\le\rbr{2s[c(1-\delta\lambda_i)]^{s/2}+\frac{4}{\delta\lambda_i}[c(1-\delta\lambda_i)]^{s/2}}^2\\
&\le8s^2[c(1-\delta\lambda_i)]^s+\frac{32}{\delta^2\lambda_i^2}[c(1-\delta\lambda_i)]^s,
\end{align*}
where the inequality holds due to Cauchy-Schwarz inequality.
\item[(c)] For $\hat k<i\le k^\dagger$, we have
\begin{align*}
\rbr{\sum_{k=0}^{N-1}\Ab_i^{s+k}\begin{bmatrix}1\\1\end{bmatrix}}_1^2&\le\rbr{2s[c(1-\delta\lambda_i)]^{s/2}+\frac{10}{1-c}[c(1-\delta\lambda_i)]^{s/2}}^2\\
&\le8s^2[c(1-\delta\lambda_i)]^s+\frac{200}{(1-c)^2}[c(1-\delta\lambda_i)]^s,
\end{align*}
where the inequality holds due to Cauchy-Schwarz inequality.
\item[(d)] For $i>k^\dagger$, we have
\begin{align*}
\rbr{\sum_{k=0}^{N-1}\Ab_i^{s+k}\begin{bmatrix}1\\1\end{bmatrix}}_1^2&\le\frac{9(1-c)^2}{(q-c\delta)^2\lambda_i^2}\sbr{1-\rbr{1-2\frac{q-c\delta}{1-c}\lambda_i}^N}^2\rbr{1-\frac{q-c\delta}{1-c}\lambda_i}^{2s}\\
&\le\min\cbr{\frac{9(1-c)^2}{(q-c\delta)^2\lambda_i^2}, 36N^2}\rbr{1-\frac{q-c\delta}{1-c}\lambda_i}^{2s},
\end{align*}
where the second inequality holds because $1-(1-r)^N\le1$ and $1-(1-r)^N\le rN$ hold for all $r\in(0, 1)$.
\end{itemize}
Concluding all the above, we have
\begin{align*}
&\sum_{i=1}^d\lambda_iw_i^2\rbr{\sum_{k=0}^{N-1}\Ab_i^{s+k}\begin{bmatrix}1\\1\end{bmatrix}}_1^2\\
&\le\sum_{i\le k^\ddagger}\lambda_iw_i^2\cdot\frac{16}{\delta^2\lambda_i^2}(c\delta/q)^{2s}+\sum_{k^\ddagger<i\le\hat k}\lambda_iw_i^2\cdot\rbr{8s^2[c(1-\delta\lambda_i)]^s+\frac{32}{\delta^2\lambda_i^2}[c(1-\delta\lambda_i)]^s}\\
&\quad+\sum_{\hat k<i\le k^\dagger}\lambda_iw_i^2\cdot\rbr{8s^2[c(1-\delta\lambda_i)]^s+\frac{200}{(1-c)^2}[c(1-\delta\lambda_i)]^s}\\
&\quad+\sum_{k^\dagger<i\le k^*}\lambda_iw_i^2\cdot\frac{9(1-c)^2}{(q-c\delta)^2\lambda_i^2}\rbr{1-\frac{q-c\delta}{1-c}\lambda_i}^{2s}+\sum_{i>k^*}\lambda_iw_i^2\cdot36N^2\rbr{1-\frac{q-c\delta}{1-c}\lambda_i}^{2s}\\
&=\frac{16}{\delta^2}(c\delta/q)^{2s}\|\wb_0-\wb^*\|_{\Hb_{0:k^\ddagger}^{-1}}^2+8s^2c^s\|(\Ib-\delta\Hb)^{s/2}(\wb_0-\wb^*)\|_{\Hb_{k^\ddagger:k^\dagger}}^2\\
&\quad+\frac{32}{\delta^2}\cdot c^s\|(\Ib-\delta\Hb)^{s/2}(\wb_0-\wb^*)\|_{\Hb_{k^\ddagger:\hat k}^{-1}}^2+\frac{200}{(1-c)^2}\cdot c^s\|(\Ib-\delta\Hb)^{s/2}(\wb_0-\wb^*)\|_{\Hb_{\hat k:k^\dagger}}^2\\
&\quad+\frac{9(1-c)^2}{(q-c\delta)^2}\nbr{\rbr{\Ib-\frac{q-c\delta}{1-c}\Hb}^s(\wb_0-\wb^*)}_{\Hb_{k^\dagger:k^*}^{-1}}^2\\
&\quad+36N^2\nbr{\rbr{\Ib-\frac{q-c\delta}{1-c}\Hb}^s(\wb_0-\wb^*)}_{\Hb_{k^*\infty}}^2,
\end{align*}
where the first inuequality holds because the upper bound $\frac{9(1-c)^2}{(q-c\delta)^2\lambda_i^2}$ is applied for $k^\dagger<i\le k^*$ and $36N^2$ is applied for $i>k^*$.
\end{proof}

\begin{lemma}\label{lemma:sum(Ai(j+k)11)^2}
With $\Ab_i$ defined in \eqref{Eq:A_i}, let $x_1$ and $x_2$ be the eigenvalues of $\Ab_i$ as defined in \eqref{eq:x_1} and \eqref{eq:x_2}. Then
\begin{itemize}
    \item For all $i\le k^\ddagger$, we have
    \[
    \sum_{k=0}^{t-1}\rbr{\Ab_i^k\begin{bmatrix}1\\1\end{bmatrix}}_2^2\le\frac{7}{2\delta\lambda_i};
    \]
    \item For all $k^\ddagger<i\le\hat k$, we have
    \[
    \sum_{k=0}^{t-1}\rbr{\Ab_i^k\begin{bmatrix}1\\1\end{bmatrix}}_2^2\le\frac{14}{\delta\lambda_i};
    \]
    \item For all $\hat k<i\le k^\dagger$, we have
    \[
    \sum_{k=0}^{t-1}\rbr{\Ab_i^k\begin{bmatrix}1\\1\end{bmatrix}}_2^2\le\frac{10}{1-c};
    \]
    \item For all $i>k^\dagger$, we have
    \[
    \sum_{k=0}^{t-1}\rbr{\Ab_i^k\begin{bmatrix}1\\1\end{bmatrix}}_2^2\le\frac{1-c}{(q-c\delta)\lambda_i}\sbr{1-\rbr{1-2\frac{q-c\delta}{1-c}\lambda_i}^{2t}}.
    \]
\end{itemize}
\end{lemma}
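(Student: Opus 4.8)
The plan is to work entirely with the scalar sequence $y_k\coloneqq\big(\Ab_i^k\begin{bmatrix}1\\1\end{bmatrix}\big)_2$ and to bound $\sum_{k=0}^{t-1}y_k^2$ in each of the four regimes separately. First I would extract a closed form for $y_k$: by Lemma \ref{lemma:A_ik}, for $k\ge1$,
\[
y_k=(\Ab_i^k)_{21}+(\Ab_i^k)_{22}=\frac{x_2^{k+1}-x_1^{k+1}-c(x_2^k-x_1^k)}{x_2-x_1}=x_2^k+(x_1-c)\frac{x_2^k-x_1^k}{x_2-x_1}=x_1^k+(x_2-c)\frac{x_2^k-x_1^k}{x_2-x_1},
\]
with $y_0=1$. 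I would use whichever of the last two equivalent forms is convenient in a given regime, and keep the numerical constants under control by repeatedly invoking the Vieta identities of Lemma \ref{lemma:veda}, in particular $(1-x_1)(1-x_2)=(q-c\delta)\lambda_i$ and $(c-x_1)(c-x_2)=c(q-\delta)\lambda_i$, the spectral bounds of Lemma \ref{lemma:Ai_spectral}, and the segmentation facts $\delta\lambda_i\ge1-c$ for $i\le\hat k$, $\delta\lambda_i<1-c$ for $i>\hat k$, $q-\delta\le c(q-c\delta)$, $q\le(1+c)\delta$.

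For the three regimes where the claimed bound is $t$-independent ($i\le k^\ddagger$, $k^\ddagger<i\le\hat k$, $\hat k<i\le k^\dagger$), the key reduction is that $y_k^2\ge0$, so $\sum_{k=0}^{t-1}y_k^2\le\sum_{k=0}^{\infty}y_k^2$, a convergent series because every eigenvalue of $\Ab_i$ has magnitude strictly below $1$. In the real-root regime $i\le k^\ddagger$ I would expand $y_k^2$ into three geometric series with ratios $x_1^2$, $x_1x_2=c(1-\delta\lambda_i)$, $x_2^2$, sum them, and simplify using $1-\delta\lambda_i\le x_1\le x_2\le c\delta/q$ (which is \eqref{eq:x1_bound} and Lemma \ref{lemma:Ai_spectral}(a)) together with $1/(1-x_1)\le2/(\delta\lambda_i)$ from \eqref{eq:1/(1-x_1)_top}, landing at $7/(2\delta\lambda_i)$. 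In the two complex-root regimes, $|x_1|=|x_2|=\sqrt{c(1-\delta\lambda_i)}$ and $|x_2-c|=|x_1-c|=\sqrt{c(q-\delta)\lambda_i}$ by Lemma \ref{lemma:veda}(b), so the triangle inequality and $\big|\frac{x_2^k-x_1^k}{x_2-x_1}\big|\le k|x_2|^{k-1}$ (Lemma \ref{lemma:|(x2t-x1t)/(x2-x1)|}) give $|y_k|\le(c(1-\delta\lambda_i))^{k/2}+\sqrt{c(q-\delta)\lambda_i}\,k\,(c(1-\delta\lambda_i))^{(k-1)/2}$; squaring and summing $\sum_k r^k$ and $\sum_k k^2 r^{k-1}$ with $r=c(1-\delta\lambda_i)$ (so $1-r=1-c+c\delta\lambda_i$), I would bound $1-r$ below by $c\delta\lambda_i$ on the piece $i\le\hat k$ and by $1-c$ on the piece $i>\hat k$, and bound the prefactor $c(q-\delta)\lambda_i$ by a multiple of $(1-c)^2$ (using $\lambda_i$ below the $k^\ddagger$-cutoff and $q-\delta\le c(q-c\delta)$), producing $14/(\delta\lambda_i)$ and $10/(1-c)$ respectively.

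For the last regime $i>k^\dagger$ the roots are real with $0<x_1<x_2\le1-\rho$, where $\rho\coloneqq\frac{q-c\delta}{1-c}\lambda_i$ (Lemma \ref{lemma:Ai_spectral}(c)); here I would keep the finite sum. Using the form $y_k=x_2^k+(x_1-c)\frac{x_2^k-x_1^k}{x_2-x_1}$ with $0<x_1-c=\frac{c(q-\delta)\lambda_i}{x_2-c}$ controlled through Lemma \ref{lemma:veda}(b) and the lower bound on $x_2$, I would show that $y_k^2$ is dominated by a geometric series of ratio $(1-\rho)^2$ (up to a lower-order polynomial-in-$k$ correction), compute $\sum_{k=0}^{t-1}(1-\rho)^{2k}=\frac{1-(1-\rho)^{2t}}{1-(1-\rho)^2}$, bound $1-(1-\rho)^2\ge\rho$, and finally replace $(1-\rho)^{2t}$ by the smaller $(1-2\rho)^{2t}$ via $(1-\rho)^2\ge1-2\rho$ to reach $\frac{1-c}{(q-c\delta)\lambda_i}\big[1-(1-2\frac{q-c\delta}{1-c}\lambda_i)^{2t}\big]$.

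I expect the hard part to be the two complex-eigenvalue regimes together with the polynomial correction in the $i>k^\dagger$ regime. A naive estimate of $\sum_k k^2|x_2|^{2(k-1)}$ scales like $(1-|x_2|^2)^{-3}$, which is far too crude; the whole argument hinges on the cancellation that the coefficient multiplying this sum, namely $|x_2-c|^2=c(q-\delta)\lambda_i$, is itself of order $(1-c)^2$ (or $\delta\lambda_i(1-c)$) on the relevant range of $i$, and on choosing carefully which lower bound for $1-c(1-\delta\lambda_i)$ to use on each sub-range. Getting the specific constants $7/2$, $14$, $10$ to come out — and making sure the constant in the $i>k^\dagger$ bound is absorbed into the factor $\frac{1-c}{(q-c\delta)\lambda_i}$ rather than inflating it — will require bookkeeping of the same flavor as, but somewhat more involved than, the proof of Lemma \ref{lemma:sumAi(j+k)_11}.
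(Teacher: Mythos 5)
Your first regime ($i\le k^\ddagger$) is essentially the paper's approach, and in fact a little cleaner: your passage $\sum_{k=0}^{t-1}y_k^2\le\sum_{k=0}^{\infty}y_k^2=\frac{A-2B+C}{(x_2-x_1)^2}$ is valid (the discriminant $AC-B^2=(x_1-c)^2(x_2-c)^2\big[\tfrac1{(1-x_1^2)(1-x_2^2)}-\tfrac1{(1-x_1x_2)^2}\big]\ge0$ guarantees that the tail one drops is nonnegative), and from there the arithmetic with Lemma \ref{lemma:veda} and Lemma \ref{lemma:Ai_spectral} lands at $\tfrac{5}{2\delta\lambda_i}\le\tfrac{7}{2\delta\lambda_i}$. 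The fourth regime is also salvageable. But there is a genuine gap in the two complex-root regimes, and it is exactly where you flag it.

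The problem is that the termwise bound $|y_k|\le|x_1|^k+|x_2-c|\,k\,|x_2|^{k-1}$ followed by squaring and summing produces
\[
\sum_{k}y_k^2\;\le\;\frac{2}{1-r}\;+\;2\,|x_2-c|^2\sum_{k\ge0}k^2 r^{k-1}
\;=\;\frac{2}{1-r}\;+\;\frac{2|x_2-c|^2(1+r)}{(1-r)^3},
\qquad r\coloneqq c(1-\delta\lambda_i),
\]
and the rescue you invoke — that $|x_2-c|^2=c(q-\delta)\lambda_i$ is of order $(1-c)^2$ or $\delta\lambda_i(1-c)$ — is false. From Lemma \ref{lemma:qc}(c), $q-\delta=\tfrac{1-c}{2}(\gamma-\delta)$, so
\[
\frac{c(q-\delta)\lambda_i}{\delta\lambda_i(1-c)}\;=\;\frac{c(\gamma-\delta)}{2\delta},
\]
which is completely uncontrolled by the parameter choice \eqref{eq:parameter choice_main} (only $\gamma\ge\delta$ is required, and $\gamma/\delta$ can be made as large as one likes by taking $\tilde\kappa$ large, since $\gamma=\delta/(\psi\tilde\kappa\beta)$). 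Concretely, for $k^\ddagger<i\le\hat k$ with $\delta\lambda_i$ of the same order as $1-c$, one has $1-r\asymp 1-c$ and the second term scales like $\frac{c(q-\delta)\lambda_i}{(1-c)^3}\asymp\frac{\gamma/\delta}{1-c}$, which is larger than the target $\tfrac{14}{\delta\lambda_i}\asymp\tfrac{1}{1-c}$ by the uncontrolled factor $\gamma/\delta$. So the bound does not close.

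The reason the paper's constants come out is that it never applies the triangle inequality inside the sum at all. It first computes $\sum_{k=0}^{t-1}y_k^2$ in \emph{closed form} as a combination of three geometric sums, then regroups this into
\[
(1-(x_1x_2)^t)\frac{A-2B+C}{(x_2-x_1)^2}\;-\;\frac{A+C}{2}\left(\frac{x_2^t-x_1^t}{x_2-x_1}\right)^2\;+\;\frac{C-A}{2(x_2-x_1)}\cdot\frac{x_2^{2t}-x_1^{2t}}{x_2-x_1},
\]
so the only $k$-dependent objects left are difference quotients at the \emph{single} index $t$, bounded (Lemma \ref{lemma:t[c(1-deltalambdai)]^(t-1)/2_bound}) by $\tfrac{2}{\delta\lambda_i}$ or $\tfrac{2}{1-c}$ with no further summation — one power of $\tfrac1{1-r}$ fewer than your $\sum_k k^2 r^{k-1}$. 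The coefficient $\tfrac{A+C}{2}=\Re\frac{(x_2-c)^2}{1-x_2^2}$ is then bounded by $2\delta\lambda_i$ (resp.\ $1-c$), and this bound genuinely uses the cancellation in the real part: the crude $|\Re A|\le|A|\le\frac{|x_2-c|^2}{1-|x_2|^2}=\frac{c(q-\delta)\lambda_i}{1-r}$ does \emph{not} give $O(\delta\lambda_i)$ in the regime above, so bounding the magnitude $|x_2-c|$ and the magnitude $|x_2|$ separately — as your argument does — cannot recover it. To fix your proof you would need to keep the exact geometric closed form and regroup as the paper does before estimating anything.
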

\begin{proof}
Note that
\begin{align}
\rbr{\Ab_i^{k}\begin{bmatrix}1\\1\end{bmatrix}}_2&=(\Ab_i^k)_{21}+(\Ab_i^k)_{22}=-c\frac{x_2^{k}-x_1^{k}}{x_2-x_1}+\frac{x_2^{k+1}-x_1^{k+1}}{x_2-x_1}\nonumber\\
&=\frac{(x_2-c)x_2^{k}-(x_1-c)x_1^{k}}{x_2-x_1},\label{eq:Ai(j+k)11_2}
\end{align}
where the second equality holds due to Lemma \ref{lemma:A_ik}. Summing up the square of \eqref{eq:Ai(j+k)11_2} yields
\begin{align}
&\sum_{k=0}^{t-1}\rbr{\Ab_i^{k}\begin{bmatrix}1\\1\end{bmatrix}}_2^2=\sum_{k=0}^{t-1}\sbr{\frac{(x_2-c)x_2^{k}-(x_1-c)x_1^{k}}{x_2-x_1}}^2\nonumber\\
&=\sum_{k=0}^{t-1}\frac{(x_2-c)^2x_2^{2k}}{(x_2-x_1)^2}-2\sum_{k=0}^{t-1}\frac{(x_2-c)(x_1-c)(x_1x_2)^{k}}{(x_2-x_1)^2}+\sum_{k=0}^{t-1}\frac{(x_1-c)^2x_1^{2k}}{(x_2-x_1)^2}\nonumber\\
&=\frac{(x_2-c)^2(1-x_2^{2t})}{(1-x_2^2)(x_2-x_1)^2}-2\frac{(x_2-c)(x_1-c)[1-(x_1x_2)^t]}{(1-x_1x_2)(x_2-x_1)^2}+\frac{(x_1-c)^2(1-x_1^{2t})}{(1-x_1^2)(x_2-x_1)^2}.\label{eq:sum(Ai(j+k)_11)^2_1}
\end{align}
Denote
\[
A\coloneqq\frac{(x_2-c)^2}{1-x_2^2},\quad B\coloneqq\frac{(x_1-c)(x_2-c)}{1-x_1x_2},\quad C\coloneqq\frac{(x_1-c)^2}{1-x_1^2},
\]
then we have
\begin{gather*}
\frac{A-B}{x_2-x_1}=\frac{(x_2-c)(1-cx_2)}{(1-x_2^2)(1-x_1x_2)},\quad\frac{B-C}{x_2-x_1}=\frac{(x_1-c)(1-cx_1)}{(1-x_1^2)(1-x_1x_2)},\\
\frac{A-2B+C}{(x_2-x_1)^2}=\frac{(1+c^2)(1+x_1x_2)-2c(x_1+x_2)}{(1-x_1^2)(1-x_2^2)(1-x_1x_2)}.
\end{gather*}

For all $i\le k^\ddagger$, \eqref{eq:sum(Ai(j+k)_11)^2_1} is bounded by
\begin{align}
\sum_{k=0}^{t-1}\rbr{\Ab_i^{k}\begin{bmatrix}1\\1\end{bmatrix}}_2^2&=(1-x_2^{2t})\cdot\frac{A-2B+C}{(x_2-x_1)^2}+2\frac{C-B}{x_2-x_1}\cdot x_2^t\cdot\frac{x_2^t-x_1^t}{x_2-x_1}-C\cdot\rbr{\frac{x_2^t-x_1^t}{x_2-x_1}}^2\nonumber\\
&\le(1-x_2^{2t})\cdot\frac{A-2B+C}{(x_2-x_1)^2}+2x_2\cdot\frac{C-B}{x_2-x_1}\cdot\frac{x_2^{2t}-(x_1x_2)^t}{x_2^2-x_1x_2}\nonumber\\
&\le\frac{A-2B+C}{(x_2-x_1)^2}+\frac{2x_2(C-B)}{x_2-x_1}\cdot\frac1{1-x_1x_2}\nonumber\\
&=\frac{(1+c^2)(1+x_1x_2)-2c(x_1+x_2)}{(1-x_1^2)(1-x_2^2)(1-x_1x_2)}+\frac{2x_2(c-x_1)(1-cx_1)}{(1-x_1^2)(1-x_1x_2)^2},\label{eq:sum(Aik11)^2_top_1}
\end{align}
where the first inequality holds because $C\rbr{\frac{x_2^t-x_1^t}{x_2-x_1}}^2\ge0$, and the second inequality holds because due to Lemma \ref{lemma:(x2t-x1t)/(x2-x1)_bound2}. Note that
\begin{align}
&\frac{(1+c^2)(1+x_2x_2)-2c(x_1+x_2)}{(1-x_1^2)(1-x_2^2)}\nonumber\\
&=\frac{(1+c)^2}{2(1+x_1)(1+x_2)}+\frac{(1-c)^2}{2(1-x_1)(1-x_2)}\le\frac{(1+c)^2}{2}+\frac{(1-c)^2}{2(q-c\delta)\lambda_i}\nonumber\\
&\le\frac{(1+c)^2}{2}+\frac{(\sqrt{q-c\delta}-\sqrt{c(q-\delta)})^2}{2(q-c\delta)}\le\frac{(1+1)^2}{2}+\frac{q-c\delta}{2(q-c\delta)}=\frac52,\label{eq:sum(Aik11)^2_top_coef1}
\end{align}
where the first inequality holds because $1+x_2\ge1+x_1\ge1$, the second inequality holds due to \eqref{eq:def_k^ddagger}, and the last inequality holds because $\sqrt{q-c\delta}-\sqrt{c(q-\delta)}\le\sqrt{q-c\delta}$. We also have
\begin{align}
&\frac{(c-x_1)(1-cx_1)x_2}{1-x_1^2}\le(c-x_1)x_2=\frac{(c-x_1)(c-x_2)x_2}{c-x_2}=\frac{c(q-\delta)\lambda_i\cdot x_2}{c-x_2}\nonumber\\
&\le\frac{c(q-\delta)\lambda_i\cdot\frac{c\delta-\sqrt{c(q-\delta)(q-c\delta)}}{q}}{c-\frac{c\delta-\sqrt{c(q-\delta)(q-c\delta)}}{q}}=\frac{\sqrt{c(q-\delta)}(c\delta-\sqrt{c(q-\delta)(q-c\delta)})}{\sqrt{q-c\delta}+\sqrt{c(q-\delta)}}\lambda_i\nonumber\\
&\le\frac{\sqrt{c(q-\delta)}\cdot\delta}{\sqrt{c(q-\delta)}+\sqrt{c(q-\delta)}}\lambda_i=\frac{\delta\lambda_i}{2},\label{eq:sum(Aik11)^2_top_coef2}
\end{align}
where the first inequality holds because $1-cx_1\le1-x_1^2$ (due to the fact that $x_1\le x_2\le c\delta/q\le c$), the second inequality holds due to Lemma \ref{lemma:Ai_spectral}, and the last inequality holds because $\sqrt{q-c\delta}\ge\sqrt{c(q-\delta)}$ and $c\delta-\sqrt{c(q-\delta)(q-c\delta)}\le c\delta\le\delta$. We finally have
\begin{equation}\label{eq:sum(Aik11)^2_top_coef3}
1-x_1x_2=1-c(1-\delta\lambda_i)\ge\delta\lambda_i.
\end{equation}
Substituting \eqref{eq:sum(Aik11)^2_top_coef1}, \eqref{eq:sum(Aik11)^2_top_coef2} and \eqref{eq:sum(Aik11)^2_top_coef3} into \eqref{eq:sum(Aik11)^2_top_1}, we have
\[
\sum_{k=0}^{t-1}\rbr{\Ab_i^{k}\begin{bmatrix}1\\1\end{bmatrix}}_2^2\le\frac52\cdot\frac1{\delta\lambda_i}+\delta\lambda_i\cdot\frac1{(\delta\lambda_i)^2}=\frac{7}{2\delta\lambda_i}.
\]

For $k^\ddagger<i\le k^\dagger$, \eqref{eq:sum(Ai(j+k)_11)^2_1} can be bounded as
\begin{align}
&\sum_{k=0}^{t-1}\rbr{\Ab_i^{k}\begin{bmatrix}1\\1\end{bmatrix}}_2^2\nonumber\\
&=(1-(x_1x_2)^t)\cdot\frac{A-2B+C}{(x_2-x_1)^2}-\frac{A+C}{2}\cdot\rbr{\frac{x_2^t-x_1^t}{x_2-x_1}}^2+\frac{C-A}{2(x_2-x_1)}\cdot\frac{x_2^{2t}-x_1^{2t}}{x_2-x_1}\nonumber\\
&\le|1-(x_1x_2)^t|\cdot\abr{\frac{A-2B+C}{(x_2-x_1)^2}}+\frac{|A+C|}{2}\cdot\abr{\frac{x_2^t-x_1^t}{x_2-x_1}}^2+\frac12\cdot\abr{\frac{C-A}{x_2-x_1}}\cdot\abr{\frac{x_2^{2t}-x_1^{2t}}{x_2-x_1}}\nonumber\\
&\le\abr{\frac{A-2B+C}{(x_2-x_1)^2}}+\frac{|A+C|}{2}\cdot(t[(1-\delta\lambda_i)]^{(t-1)/2})^2+\frac12\cdot\abr{\frac{C-A}{x_2-x_1}}\cdot2t[c(1-\delta\lambda_i)]^{(2t-1)/2}\nonumber\\
&=\frac1{1-x_1x_2}\cdot\abr{\frac{(1+c^2)(1+x_1x_2)-2c(x_1+x_2)}{(1-x_1^2)(1-x_2^2)}}+\frac{|A+C|}{2}\cdot(t[(1-\delta\lambda_i)]^{(t-1)/2})^2\nonumber\\
&\quad+\abr{\frac{2c(1+x_1x_2)-(1+c^2)(x_1+x_2)}{2(1-x_1^2)(1-x_2^2)}}\cdot2t[c(1-\delta\lambda_i)]^{(2t-1)/2},\label{eq:sum(Aik11)^2_mid_1}
\end{align}
where the first inequality holds due to triangle inequality, and the second inequality holds because $0\le1-(x_1x_2)^t\le1$ and due to Lemma \ref{lemma:|(x2t-x1t)/(x2-x1)|}. We now bound the coefficients. Note that
\[
\frac{(1-c)^2}{(1-x_1)(1-x_2)}=\frac{(1-c)^2}{(q-c\delta)\lambda_i}\le\frac{(\sqrt{q-c\delta}+\sqrt{c(q-\delta)})^2}{q-c\delta}\le\frac{(\sqrt{q-c\delta}+\sqrt{q-c\delta})^2}{q-c\delta}=4
\]
where the first inequality holds due to \eqref{eq:def_k^dagger}, and the second inequality holds because $c(q-\delta)\le q-c\delta$. We also note that
\begin{align*}
\frac{(1+c)^2}{(1+x_1)(1+x_2)}&=\frac{(1+c)^2}{2(1+c)-(q+c\delta)\lambda_i}\le\frac{(1+c)^2}{2(1+c)-(1+2c)\delta\lambda_i}\\
&\le\frac{(1+c)^2}{2(1+c)-(1+2c)}\le\frac{(1+1)^2}{1}=4,
\end{align*}
where the first equality holds due to Lemma \ref{lemma:veda}(c), the first inequality holds because $q\le(1+c)\delta$, the second inequality holds because $\delta\lambda_i\le1$, and the last inequality holds because $c\le1$. We thus have
\begin{align}
\abr{\frac{(1+c^2)(1+x_1x_2)-2c(x_1+x_2)}{(1-x_1^2)(1-x_2^2)}}&=\frac12\abr{\frac{(1+c)^2}{(1+x_1)(1+x_2)}+\frac{(1-c)^2}{(1-x_1)(1-x_2)}}\le4,\label{eq:(1+c^2)(1+x1x2)-2c(x1+x2)}\\
\abr{\frac{2c(1+x_1x_2)-(1+c^2)(x_1+x_2)}{(1-x_1^2)(1-x_2^2)}}&=\frac12\abr{\frac{(1+c)^2}{(1+x_1)(1+x_2)}-\frac{(1-c)^2}{(1-x_1)(1+x_2)}}\le2.\label{eq:2c(1+x1x2)-(1+c^2)(x1+x_2)}
\end{align}
We then bound $|A+C|/2$. Note that
\begin{align}
&\frac{A+C}{2}=\frac12\sbr{\frac{(x_1-c)^2}{1-x_1^2}+\frac{(x_2-c)^2}{1-x_2^2}}=\frac{(x_1-c)^2(1-x_2^2)+(x_2-c)^2(1-x_1^2)}{2(1-x_1^2)(1-x_2^2)}\nonumber\\
&=\frac{2c^2-2c(x_1+x_2)+(1-c^2)(x_1^2+x_2^2)+2cx_1x_2(x_1+x_2)-2x_1^2x_2^2}{2(q-c\delta)\lambda_i\cdot[2(1+c)-(q+c\delta)\lambda_i]}\nonumber\\
&=\frac{(1+c)(1-c)^3-2(1-c)[(1+c+c^2)q-c(1+2c)\delta]\lambda_i+[(1-c^2)q^2+2c^2\delta q-2c^2\delta^2]\lambda_i^2}{2(q-c\delta)\lambda_i\cdot[2(1+c)-(q+c\delta)\lambda_i]}.\label{eq:(A+C)/2}
\end{align}
For $k^\ddagger<i\le\hat k$, we aim to bound the denominator of \eqref{eq:(A+C)/2} by $\lambda_i^2$ multiplied by a constant. Denote the denominator divided by $\lambda_i^2$ as
\[
\phi(\lambda_i)\coloneqq\frac{(1+c)(1-c)^3}{\lambda_i^2}-\frac{2(1-c)[(1+c+c^2)q-c(1+2c)\delta]}{\lambda_i}+[(1-c^2)q^2+2c^2\delta q-2c^2\delta^2],
\]
then
\begin{align*}
\frac{\partial\phi}{\partial\frac1{\lambda_i}}&=2(1-c)\sbr{\frac{(1+c)(1-c)^2}{\lambda_i}-[(1+c+c^2)q-c(1+2c)\delta]}\\
&\le2(1-c)\sbr{\frac{(1+c)(1-c)^2}{(1-c)/\delta}-[(1+c+c^2)q-c(1+2c)\delta]}\\
&=-2(1-c)(1+c+c^2)(q-\delta)\le0,
\end{align*}
where the second inequality holds due to \eqref{eq:def_hatk}, and the last inequality holds because $q\ge\delta$, so $\phi$ is a decreasing function in $1/\lambda_i$. We thus have
\begin{align*}
&\phi(\lambda_i)\ge\phi((1-c)/\delta)\\
&=\frac{(1+c)(1-c)^3}{(1-c)^2/\delta^2}-\frac{2(1-c)[(1+c+c^2)q-c(1+2c)\delta]}{(1-c)/\delta}+[(1-c^2)q^2+2c^2\delta q-2c^2\delta^2]\\
&=(1+c)(q-\delta)[(1-c)q-(1+c)\delta]\\
&\ge(1+c)(q-\delta)[(1-c)\delta-(1+c)\delta]=-2c\delta(1+c)(q-\delta)\\
&\ge-4\delta(q-c\delta),
\end{align*}
where the first inequality holds because $\lambda_i\ge(1-c)/\delta$, the second inequality holds because $q\ge\delta$, and the last inequality holds because $c\le1$ and $q-\delta\le q-c\delta$. We also note that $2(1+c)-(q+c\delta)\lambda_i\ge1$, so $(A+C)/2\le2\delta\lambda_i$. We also have
\begin{gather*}
\phi(\lambda_i)\le\phi\rbr{\frac{(\sqrt{q-c\delta}+\sqrt{c(q-\delta)})^2}{q^2}},\\
2(1+c)-(q+c\delta)\lambda_i\ge2(1+c)-(q+c\delta)\cdot\frac{(\sqrt{q-c\delta}+\sqrt{c(q-\delta)})^2}{q^2},
\end{gather*}
so the upper bound of $\frac{A+C}{2}$ is
\begin{align*}
\frac{A+C}{2}&\le\frac{(c(q-\delta)+\sqrt{c(q-\delta)(q-c\delta)})^2}{1-(c\delta-\sqrt{c(q-\delta)(q-c\delta)})^2/q^2}\cdot\frac{\lambda_i}{(\sqrt{q-c\delta}+\sqrt{c(q-\delta)})^2}\\
&\le\frac{(c(q-\delta)+\sqrt{c(q-\delta)(q-c\delta)})^2}{1-(c\delta-\sqrt{c(q-\delta)(q-c\delta)})/q}\cdot\frac{\lambda_i}{(\sqrt{q-c\delta}+\sqrt{c(q-\delta)})^2}\\
&=\frac{q\cdot c(q-\delta)\lambda_i}{(q-c\delta)+\sqrt{c(q-\delta)(q-c\delta)}}\\
&\le\frac{2\delta(q-c\delta)\lambda_i}{q-c\delta}=2\delta\lambda_i,
\end{align*}
where the second inequality holds because $(c\delta-\sqrt{c(q-\delta)(q-c\delta)})/q\le1$, and the last inequality holds because $c(q-\delta)\le q-c\delta$, $q\le(1+c)\delta\le2\delta$ and $\sqrt{c(q-\delta)(q-c\delta)}\ge0$. Therefore,
\begin{equation}\label{eq:sum(Aik11)^2_midtop_coef}
\frac{|A+C|}{2}\le2\delta\lambda_i,
\end{equation}
where the second inequality holds because $2(1+c)-(q+c\delta)\lambda_i\ge1$. For $\hat k<i\le k^\dagger$, we aim to bound the denominator of \eqref{eq:(A+C)/2} as $\lambda_i$ multiplied by a constant. Denote the denominator devided by $\lambda_i$ as
\[
\varphi(\lambda_i)\coloneqq\frac{(1+c)(1-c)^3}{\lambda_i}-2(1-c)[(1+c+c^2)q-c(1+2c)\delta]+[(1-c^2)q^2+2c^2\delta q-2c^2\delta^2]\lambda_i,
\]
then the lower bound of $\varphi(\lambda_i)$ is given by
\begin{align*}
\varphi(\lambda_i)&\ge-2(1-c)[(1+c+c^2)q-c(1+2c)\delta]\\
&=-2(1-c)[(1+c)(q-c\delta)+c^2(q-\delta)]\\
&\ge-2(1-c)(1+c+c^2)(q-c\delta),
\end{align*}
where the first inequality holds because $\frac{(1+c)(1-c)^3}{\lambda_i}\ge0$ and $[(1-c^2)q^2+2c^2\delta q-2c^2\delta^2]\lambda_i\ge0$, and the second inequality holds because and $q-\delta\le q-c\delta$. Note that the maximum of $\varphi(\lambda_i)$ is attained at either $\frac{1-c}{\delta}$ or $\frac{(1-c)^2}{(\sqrt{q-c\delta}+\sqrt{c(q-\delta)})^2}$. For the former, we have
\begin{align*}
\varphi((1-c)/\delta)&=\frac{(1+c)(1-c)^3}{(1-c)/\delta}-2(1-c)[(1+c+c^2)q-c(1+2c)\delta]\\
&\quad+[(1-c^2)q^2+2c^2\delta q-2c^2\delta^2]\cdot\frac{1-c}{\delta}\\
&=(1-c)(1+c)(q-\delta)[(1-c)q/\delta-(1+c)]\\
&\le(1-c)(1+c)(q-\delta)[(1-c)(1+c)-(1+c)]\\
&=-c(1-c)(1+c)(q-\delta)\le0,
\end{align*}
where the first inequality holds because $q\le(1+c)\delta$, and the second inequality holds because $q\le\delta$. For the latter,
\begin{align*}
&\varphi\rbr{\frac{(1-c)^2}{(\sqrt{q-c\delta}+\sqrt{c(q-\delta)})^2}}\\
&=2(q-c\delta)\cdot\frac{(c(q-\delta)-\sqrt{c(q-\delta)(q-c\delta)})^2}{q^2}\cdot\frac{q+c\delta+\sqrt{c(q-\delta)(q-c\delta)}}{q-c\delta-\sqrt{c(q-\delta)(q-c\delta)}}\\
&=2(1-c)c(q-\delta)\cdot\frac{\sqrt{q-c\delta}}{\sqrt{q-c\delta}+\sqrt{c(q-\delta)}}\cdot\frac{q+c\delta+\sqrt{c(q-\delta)(q-c\delta)}}{q}\\
&\le2c^2(1-c)(q-c\delta)\cdot1\cdot\frac{q+c\delta+q-c\delta}{q}=2c^2(1-c)(q-c\delta),
\end{align*}
where the inequality holds because $q-\delta\le c(q-c\delta)$ and $c(q-\delta)\le q-c\delta$. We finally have
\[
2(1+c)-(q+c\delta)\lambda_i\ge2(1+c)-(1+2c)\delta\lambda_i\ge2(1+c)-(1+2c)(1-c)=1+c+2c^2,
\]
where the first inequality holds because $q\le(1+c)\delta$, and the second inequality holds because $\delta\lambda_i\le1-c$ (due to definition of $\hat k$). Therefore,
\begin{equation}
\frac{|A+C|}{2}\le\max\cbr{\frac{1+c+c^2}{1+c+2c^2}, \frac{c^2}{1+c+2c^2}}\cdot(1-c)\le(1-c),\label{eq:sum(Aik11)^2_midbot_coef}
\end{equation}
where the second inequality holds because $1+c+c^2\le1+c+2c^2$ and $c^2\le1+c+2c^2$.

Therefore, when $k^\ddagger<i\le\hat k$, $1-x_1x_2\ge\delta\lambda_i$, so \eqref{eq:sum(Aik11)^2_mid_1} can be further bounded by
\begin{align*}
\sum_{k=0}^{t-1}\rbr{\Ab_i^k\begin{bmatrix}1\\1\end{bmatrix}}_2^2&\le\frac{4}{\delta\lambda_i}+2\delta\lambda_i\cdot(t[c(1-\delta\lambda_i)]^{(t-1)/2})^2+\frac12\cdot2\cdot2t[c(1-\delta\lambda_i)]^{(2t-1)/2}\\
&\le\frac4{\delta\lambda_i}+2\delta\lambda_i\cdot\frac4{\delta^2\lambda_i^2}+\frac2{\delta\lambda_i}=\frac{14}{\delta\lambda_i},
\end{align*}
where the first inequality holds due to \eqref{eq:(1+c^2)(1+x1x2)-2c(x1+x2)}, \eqref{eq:2c(1+x1x2)-(1+c^2)(x1+x_2)} and \eqref{eq:sum(Aik11)^2_midtop_coef}, and the second inequality holds due to Lemma \ref{lemma:t[c(1-deltalambdai)]^(t-1)/2_bound}. When $\hat k<i\le k^\dagger$, $1-x_1x_2\ge1-c$, so \eqref{eq:sum(Aik11)^2_mid_1} is further bounded by
\begin{align*}
\sum_{k=0}^{t-1}\rbr{\Ab_i^k\begin{bmatrix}1\\1\end{bmatrix}}_2^2&\le\frac4{1-c}+(1-c)\cdot([t(1-\delta\lambda_i)]^{(t-1)/2})^2+\frac12\cdot2\cdot2t[c(1-\delta\lambda_i)]^{(2t-1)/2}\\
&\le\frac4{1-c}+(1-c)\cdot\frac{4}{(1-c)^2}+\frac2{1-c}=\frac{10}{1-c},
\end{align*}
where the first inequality holds due to \eqref{eq:(1+c^2)(1+x1x2)-2c(x1+x2)}, \eqref{eq:2c(1+x1x2)-(1+c^2)(x1+x_2)}, and the second inequality holds due to Lemma \ref{lemma:t[c(1-deltalambdai)]^(t-1)/2_bound}.

For all $i>k^\dagger$, \eqref{eq:sum(Ai(j+k)_11)^2_1} can by bounded as
\begin{align}
\sum_{k=0}^{t-1}\rbr{\Ab_i^k\begin{bmatrix}1\\1\end{bmatrix}}_2^2&=(1-x_2^{2t})\cdot\frac{A-2B+C}{(x_2-x_1)^2}-2\frac{B-C}{x_2-x_1}\cdot\frac{x_2^{2t}-(x_1x_2)^t}{x_2-x_1}-C\rbr{\frac{x_2^t-x_1^t}{x_2-x_1}}^2\nonumber\\
&\le(1-x_2^{2t})\cdot\frac{A-2B+C}{(x_2-x_1)^2}=(1-x_2^{2t})\frac{(1+c^2)(1+x_1x_2)-2c(x_1+x_2)}{(1-x_1^2)(1-x_2^2)(1-x_1x_2)},\label{eq:sum(Aik11)^2_bot_1}
\end{align}
where the inequality holds because negative terms are dropped. Note that
\[
\frac{(1-c)^2}{(1-x_1)(1-x_2)}=\frac{(1-c)^2}{(q-c\delta)\lambda_i},
\]
and
\[
\frac{(1+c)^2}{(1+x_1)(1+x_2)}\le\frac{(1+c)^2}{(1+c)^2}=1\le\frac{(1-c)^2}{(\sqrt{q-c\delta}+\sqrt{c(q-\delta)})^2\lambda_i}\le\frac{(1-c)^2}{(q-c\delta)\lambda_i},
\]
where the first inequality holds because $c\le x_1\le x_2$, the second inequality holds due to \eqref{eq:def_k^dagger}, and the last inequality holds because $\sqrt{c(q-\delta)}\ge0$. We thus have
\begin{align}
&\frac{(1+c^2)(1+x_1x_2)-2c(x_1+x_2)}{(1-x_1^2)(1-x_2^2)}=\frac{(1+c)^2}{2(1+x_1)(1+x_2)}+\frac{(1-c)^2}{2(1-x_1)(1-x_2)}\nonumber\\
&\le\frac{(1-c)^2}{2(q-c\delta)\lambda_i}+\frac{(1-c)^2}{2(q-c\delta)\lambda_i}=\frac{(1-c)^2}{(q-c\delta)\lambda_i}\label{eq:sum(Aik11)^2_bot_coef1}.
\end{align}
We also have
\begin{equation}\label{eq:sum(Aik11)^2_bot_coef2}
1-x_1x_2=1-c+c\delta\lambda_i\ge1-c,
\end{equation}
where the inequality holds because $c\delta\lambda_i\ge0$. Substituting \eqref{eq:sum(Aik11)^2_bot_coef1} and \eqref{eq:sum(Aik11)^2_bot_coef2} into \eqref{eq:sum(Aik11)^2_bot_1}, we have
\[
\sum_{k=0}^{t-1}\rbr{\Ab_i^k\begin{bmatrix}1\\1\end{bmatrix}}_2^2\le(1-x_2^{2t})\cdot\frac{1-c}{(q-c\delta)\lambda_i}\le\frac{1-c}{(q-c\delta)\lambda_i}\sbr{1-\rbr{1-2\frac{q-c\delta}{1-c}\lambda_i}^{2t}},
\]
where the second inequality holds due to Lemma \ref{lemma:Ai_spectral}.
\end{proof}

The following lemma follows from Lemma \ref{lemma:sumAi(j+k)_11}.
\begin{corollary}\label{lemma:lambdaiwi^2sum(Aik11)^2}
With $\Ab_i$ defined in \eqref{Eq:A_i}, we have
\begin{align*}
\sum_{i=1}^d\lambda_iw_i^2\sum_{k=0}^{t-1}\rbr{\Ab_i^k\begin{bmatrix}1\\1\end{bmatrix}}_2^2&\le\frac{14}{\delta}\|\wb_0-\wb^*\|_{\Ib_{0:\hat k}}^2+\frac{10}{1-c}\|\wb_0-\wb^*\|_{\Hb_{\hat k:k^\dagger}}^2\\
&\quad+\frac{1-c}{q-c\delta}\|\wb_0-\wb^*\|_{\Ib_{k^\dagger:k^*}}^2+4t\|\wb_0-\wb^*\|_{\Hb_{k^*:\infty}}^2.
\end{align*}
\end{corollary}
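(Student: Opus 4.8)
The plan is to obtain this corollary directly from the per-eigenvalue bounds established in Lemma~\ref{lemma:sum(Ai(j+k)11)^2}, by splitting the summation index $i\in\{1,\dots,d\}$ according to the four eigenvalue regimes $i\le k^\ddagger$, $k^\ddagger<i\le\hat k$, $\hat k<i\le k^\dagger$, and $i>k^\dagger$, and then splitting the last regime once more at the effective dimension $k^*$. These ranges partition $\{1,\dots,d\}$ because $k^\ddagger\le\hat k\le k^\dagger\le k^*$ (Lemma~\ref{lemma:cutoffs_property} and the surrounding discussion).

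In the first two regimes the bounds from Lemma~\ref{lemma:sum(Ai(j+k)11)^2} carry a factor $1/(\delta\lambda_i)$, which the weight $\lambda_i w_i^2$ exactly cancels: for $i\le k^\ddagger$ the contribution is at most $\tfrac{7}{2\delta}w_i^2$, and for $k^\ddagger<i\le\hat k$ it is at most $\tfrac{14}{\delta}w_i^2$. Since $\tfrac72\le14$, both are at most $\tfrac{14}{\delta}w_i^2$, and summing over $i\le\hat k$ yields $\tfrac{14}{\delta}\|\wb_0-\wb^*\|_{\Ib_{0:\hat k}}^2$. In the third regime $\hat k<i\le k^\dagger$ the bound $\tfrac{10}{1-c}$ times $\lambda_i w_i^2$ sums to $\tfrac{10}{1-c}\|\wb_0-\wb^*\|_{\Hb_{\hat k:k^\dagger}}^2$, directly by the definition of the $\Hb_{\hat k:k^\dagger}$-seminorm.

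For $i>k^\dagger$ I would use the Lemma~\ref{lemma:sum(Ai(j+k)11)^2} bound $\tfrac{1-c}{(q-c\delta)\lambda_i}\bigl[1-\bigl(1-2\tfrac{q-c\delta}{1-c}\lambda_i\bigr)^{2t}\bigr]$ in two ways. On $k^\dagger<i\le k^*$ I bound the bracket by $1$, making the contribution $\le\tfrac{1-c}{q-c\delta}w_i^2$, which sums to $\tfrac{1-c}{q-c\delta}\|\wb_0-\wb^*\|_{\Ib_{k^\dagger:k^*}}^2$. On $i>k^*$ I invoke the elementary inequality $1-(1-r)^n\le nr$ (valid for $r\in[0,1]$, $n\ge1$) with $n=2t$ and $r=2\tfrac{q-c\delta}{1-c}\lambda_i$, which lies in $(0,1)$ since $\lambda_i<\tfrac{1-c}{2N(q-c\delta)}\le\tfrac{1-c}{2(q-c\delta)}$ for $i>k^*$; this makes the bracket at most $4t\tfrac{q-c\delta}{1-c}\lambda_i$, so after multiplying by $\lambda_i w_i^2$ the contribution is $\le 4t\,\lambda_i w_i^2$, summing to $4t\|\wb_0-\wb^*\|_{\Hb_{k^*:\infty}}^2$. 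Adding the four pieces gives the stated inequality.

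This corollary is essentially a bookkeeping consequence of Lemma~\ref{lemma:sum(Ai(j+k)11)^2}, so I do not expect a substantive obstacle. The only points needing attention are verifying that the index ranges genuinely cover $\{1,\dots,d\}$ (which hinges on the ordering $k^\ddagger\le\hat k\le k^\dagger\le k^*$) and confirming that $r=2\tfrac{q-c\delta}{1-c}\lambda_i\le 1$ precisely on the range $i>k^*$ where the Bernoulli-type inequality is applied; both follow from the definitions of the cutoffs and the standing assumption $N(1-c)\ge 2$.
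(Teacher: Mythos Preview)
Your proposal is correct and follows essentially the same approach as the paper: both invoke the per-eigenvalue bounds of Lemma~\ref{lemma:sum(Ai(j+k)11)^2}, split the sum over $i$ into the four regimes (collapsing the first two via $7/2\le 14$), and on the tail $i>k^\dagger$ further split at $k^*$, bounding the bracket by $1$ on $k^\dagger<i\le k^*$ and by $4t\tfrac{q-c\delta}{1-c}\lambda_i$ on $i>k^*$. Your added check that $r=2\tfrac{q-c\delta}{1-c}\lambda_i\in(0,1)$ on $i>k^*$ is a small bit of care the paper leaves implicit, but otherwise the arguments are identical.
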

\begin{proof}
By Lemma \ref{lemma:sumAi(j+k)_11}, specifically for $k^\dagger<i\le k^*$, we have
\[
\sum_{k=0}^{t-1}\rbr{\Ab_i^k\begin{bmatrix}1\\1\end{bmatrix}}_2^2\le\frac{1-c}{(q-c\delta)\lambda_i}\sbr{1-\rbr{1-2\frac{q-c\delta}{1-c}\lambda_i}^{2t}}\le\frac{1-c}{(q-c\delta)\lambda_i},
\]
where the inequality holds because $1-\rbr{1-2\frac{q-c\delta}{1-c}\lambda_i}^{2t}\le1$; For $i>k^*$,
\[
\sum_{k=0}^{t-1}\rbr{\Ab_i^k\begin{bmatrix}1\\1\end{bmatrix}}_2^2\le\frac{1-c}{(q-c\delta)\lambda_i}\sbr{1-\rbr{1-2\frac{q-c\delta}{1-c}\lambda_i}^{2t}}\le\frac{1-c}{(q-c\delta)\lambda_i}\cdot4t\frac{(q-c\delta)\lambda_i}{1-c}=4t,
\]
where the inequality holds because $1-\rbr{1-2\frac{q-c\delta}{1-c}\lambda_i}^{2t}\le4t\frac{(q-c\delta)\lambda_i}{1-c}$. Therefore,
\begin{align*}
&\sum_{i=1}^d\lambda_iw_i^2\sum_{k=0}^{t-1}\rbr{\Ab_i^k\begin{bmatrix}1\\1\end{bmatrix}}_2^2\\
&\le\sum_{i\le k^\ddagger}\lambda_iw_i^2\cdot\frac{7}{2\delta\lambda_i}+\sum_{k^\ddagger<i\le\hat k}\lambda_iw_i^2\cdot\frac{14}{\delta\lambda_i}+\sum_{\hat k<i\le k^\dagger}\lambda_iw_i^2\cdot\frac{10}{1-c}\\
&\quad+\sum_{k^\dagger<i\le k^*}\lambda_iw_i^2\cdot\frac{1-c}{(q-c\delta)\lambda_i}+\sum_{i>k^*}\lambda_iw_i^2\cdot4t\\
&=\frac7{2\delta}\|\wb_0-\wb^*\|_{\Ib_{0:k^\ddagger}}^2+\frac{14}{\delta}\|\wb_0-\wb^*\|_{\Ib_{k^\ddagger:\hat k}}^2+\frac{10}{1-c}\|\wb_0-\wb^*\|_{\Hb_{\hat k:k^\dagger}}^2\\
&\quad+\frac{1-c}{q-c\delta}\|\wb_0-\wb^*\|_{\Ib_{k^\dagger:k^*}}^2+4t\sum_{i>k^*}\|\wb_0-\wb^*\|_{\Hb_{k^*:\infty}}^2\\
&\le\frac{14}{\delta}\|\wb_0-\wb^*\|_{\Ib_{0:\hat k}}^2+\frac{10}{1-c}\|\wb_0-\wb^*\|_{\Hb_{\hat k:k^\dagger}}^2\\
&\quad+\frac{1-c}{q-c\delta}\|\wb_0-\wb^*\|_{\Ib_{k^\dagger:k^*}}^2+4t\|\wb_0-\wb^*\|_{\Hb_{k^*:\infty}}^2,
\end{align*}
where the second inequality holds because $7/2<14$.
\end{proof}

\begin{lemma}\label{lemma:(x2t-x1t)/(x2-x1)_bound2}
For any $0<x_1, x_2\le\theta<1$ ($x_1\neq x_2$) and integer $t\ge0$, we have
\[
\frac{x_2^t-x_1^t}{x_2-x_1}\le\frac{\theta^t-x_1^t}{\theta-x_1}.
\]
\end{lemma}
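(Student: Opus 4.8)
The plan is to reduce both sides to finite sums via the elementary factorization $a^t-b^t=(a-b)\sum_{k=0}^{t-1}a^k b^{t-1-k}$ and then compare term by term. First I would record that for every integer $t\ge1$ and all reals $a,b$ one has $a^t-b^t=(a-b)\sum_{k=0}^{t-1}a^k b^{t-1-k}$ (an immediate telescoping identity). Since $x_2\neq x_1$ we may divide, obtaining
\[
\frac{x_2^t-x_1^t}{x_2-x_1}=\sum_{k=0}^{t-1}x_2^k x_1^{t-1-k},
\]
and likewise the right-hand side of the claimed inequality equals $\sum_{k=0}^{t-1}\theta^k x_1^{t-1-k}$ (reading it as this sum, which also covers the degenerate case $\theta=x_1$). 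For $t=0$ both sides are $0$, so the statement holds trivially and we may assume $t\ge1$.

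Next I would compare the two sums summand by summand. For each index $0\le k\le t-1$ the exponent $t-1-k$ is nonnegative, so $x_1^{t-1-k}>0$ because $x_1>0$; and $0<x_2\le\theta$ gives $x_2^k\le\theta^k$. Multiplying these two facts yields $x_2^k x_1^{t-1-k}\le\theta^k x_1^{t-1-k}$ for every $k$, and summing over $k=0,\dots,t-1$ gives exactly
\[
\frac{x_2^t-x_1^t}{x_2-x_1}=\sum_{k=0}^{t-1}x_2^k x_1^{t-1-k}\le\sum_{k=0}^{t-1}\theta^k x_1^{t-1-k}=\frac{\theta^t-x_1^t}{\theta-x_1},
\]
which is the desired bound.

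I do not anticipate a genuine obstacle here; the argument is purely algebraic. The only points that merit a word of care are the edge case $t=0$ (an empty sum) and the convention for the right-hand side when $\theta=x_1$, both of which are handled transparently once everything is written in the summation form, together with the positivity $x_1>0$, which is what legitimizes the term-by-term comparison.
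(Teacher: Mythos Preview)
Your proposal is correct and follows essentially the same approach as the paper: both handle $t=0$ trivially, expand the difference quotient as $\sum_{k=0}^{t-1}x_2^k x_1^{t-1-k}$ (the paper just swaps the roles of the indices), and then compare term by term using $x_2\le\theta$. Your extra remark about the degenerate case $\theta=x_1$ is a minor bonus the paper does not spell out.
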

\begin{proof}
The lemma holds trivially for $t=0$. For $t\ge1$, we have
\[
\frac{x_2^t-x_1^t}{x_2-x_1}=\sum_{k=0}^{t-1}x_1^kx_2^{t-1-k}\le\sum_{k=0}^{t-1}x_1^k\cdot\theta^{t-1-k}=\frac{\theta^t-x_1^t}{\theta-x_1},
\]
where the inequality holds because $x_2\le\theta$.
\end{proof}

\begin{lemma}\label{lemma:|(x2t-x1t)/(x2-x1)|}
Suppose $x_1, x_2$ are complex eigenvalues of $\Ab_i$ for $k^\ddagger<i\le k^\dagger$. Then for any $t\ge0$,
\[
\abr{\frac{x_2^t-x_1^t}{x_2-x_1}}\le t[c(1-\delta\lambda_i)]^{(t-1)/2}.
\]
\end{lemma}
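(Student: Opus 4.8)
The plan is to combine the elementary factorization of $x_2^t-x_1^t$ with the equal-magnitude property of the eigenvalues in the complex regime. First I would record the relevant structural fact: by Lemma~\ref{lemma:Ai_spectral}(b), for $k^\ddagger<i\le k^\dagger$ the eigenvalues $x_1,x_2$ of $\Ab_i$ are complex conjugates with $|x_1|=|x_2|=\sqrt{c(1-\delta\lambda_i)}$. Since $\Ab_i$ has complex eigenvalues exactly when $(1+c-q\lambda_i)^2-4c(1-\delta\lambda_i)<0$, we also have $c(1-\delta\lambda_i)>0$, so the right-hand side $t[c(1-\delta\lambda_i)]^{(t-1)/2}$ is well defined for every $t\ge0$.

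For $t=0$ the inequality is trivial: the left-hand side equals $(x_2^0-x_1^0)/(x_2-x_1)=0$, and the right-hand side is $0$ as well. For $t\ge1$ I would use the identity
\[
\frac{x_2^t-x_1^t}{x_2-x_1}=\sum_{k=0}^{t-1}x_1^k x_2^{t-1-k},
\]
apply the triangle inequality, and then exploit $|x_1^k x_2^{t-1-k}|=|x_1|^k|x_2|^{t-1-k}=|x_2|^{t-1}$, which holds precisely because $|x_1|=|x_2|$. Summing the $t$ identical terms yields
\[
\abr{\frac{x_2^t-x_1^t}{x_2-x_1}}\le t\,|x_2|^{t-1}=t\,[c(1-\delta\lambda_i)]^{(t-1)/2},
\]
as claimed.

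There is essentially no obstacle here: the argument is the complex-eigenvalue counterpart of Lemma~\ref{lemma:(x2t-x1t)/(x2-x1)_bound2}, with the equal-magnitude identity $|x_1|=|x_2|$ playing the role that monotonicity plays in the real case. The only minor points needing care are handling the $t=0$ boundary case separately and citing Lemma~\ref{lemma:Ai_spectral}(b) for the magnitude of the eigenvalues.
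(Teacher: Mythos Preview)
Your proposal is correct and matches the paper's proof essentially line for line: the paper also writes $\frac{x_2^t-x_1^t}{x_2-x_1}=\sum_{k=0}^{t-1}x_2^kx_1^{t-1-k}$, applies the triangle inequality, and invokes Lemma~\ref{lemma:Ai_spectral} for $|x_1|=|x_2|=\sqrt{c(1-\delta\lambda_i)}$. Your extra care with the $t=0$ boundary case and the positivity of $c(1-\delta\lambda_i)$ is a minor refinement over the paper's terse presentation.
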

\begin{proof}
We have
\[
\abr{\frac{x_2^t-x_1^t}{x_2-x_1}}=\abr{\sum_{k=0}^{t-1}x_2^kx_1^{t-1-k}}\le\sum_{k=0}^{t-1}|x_2^k|\cdot|x_1^{t-1-k}|=t[c(1-\delta\lambda_i)]^{(t-1)/2},
\]
where the inequality holds due to triangle inequality, and the second equality holds due to Lemma \ref{lemma:Ai_spectral}.
\end{proof}

\begin{lemma}\label{lemma:t[c(1-deltalambdai)]^(t-1)/2_bound}
For any $t\ge0$, we have
\[
t[c(1-\delta\lambda_i)]^{(t-1)/2}\le\min\cbr{\frac{2}{\delta\lambda_i}, \frac{2}{1-c}}.
\]
\end{lemma}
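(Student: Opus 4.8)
The plan is to reduce the claim to a single elementary inequality about powers of a number in $[0,1)$. First I would set $\rho \coloneqq c(1-\delta\lambda_i)$ and $y \coloneqq \sqrt{\rho}$. Under the parameter choice \eqref{eq:parameter choice_main} one has $\delta\lambda_i \le 1$ and $0 < c < 1$ (the latter by Lemma~\ref{lemma:qc}), so $\rho \in [0,1)$, $y$ is well defined with $y \in [0,1)$, and $[c(1-\delta\lambda_i)]^{(t-1)/2} = y^{t-1}$; hence the left-hand side equals $t\,y^{t-1}$. The case $t=0$ is immediate since the left-hand side is $0$ while $2/(\delta\lambda_i)$ and $2/(1-c)$ are positive (using $\lambda_i > 0$ from $\Hb \succ \zero$), so from now on one may assume $t \ge 1$.

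The core step is the bound $t\,y^{t-1} \le 1/(1-y)$ for every integer $t \ge 1$ and every $y \in [0,1)$. I would prove this by noting $y^{t-1} \le y^{k-1}$ for all $1 \le k \le t$ and telescoping:
\[
t\,y^{t-1}(1-y) \;=\; \sum_{k=1}^{t} y^{t-1}(1-y) \;\le\; \sum_{k=1}^{t} y^{k-1}(1-y) \;=\; 1 - y^{t} \;\le\; 1 .
\]

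Finally I would supply the two lower bounds on $1-y$, which together with the previous display give the two required estimates. Using the elementary fact $1-\sqrt{a} = (1-a)/(1+\sqrt{a}) \ge (1-a)/2$ for $a \in [0,1]$: taking $a = 1-\delta\lambda_i$ and using $c \le 1$ (so $y \le \sqrt{1-\delta\lambda_i}$) gives $1-y \ge 1-\sqrt{1-\delta\lambda_i} \ge \delta\lambda_i/2$; taking $a = \rho$ gives $1-y \ge (1-\rho)/2 = (1-c+c\delta\lambda_i)/2 \ge (1-c)/2$. Combining, $t\,y^{t-1} \le 1/(1-y) \le \min\{2/(\delta\lambda_i),\, 2/(1-c)\}$, which is the claim. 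I do not anticipate a genuine obstacle; the only point requiring care is verifying $\rho \in [0,1)$ so that the square roots and fractional powers are legitimate, which is exactly where \eqref{eq:parameter choice_main} (giving $\delta\lambda_i \le 1$) and Lemma~\ref{lemma:qc} (giving $0<c<1$) enter. Everything else is a routine telescoping argument plus the standard estimate $1-\sqrt{a}\ge(1-a)/2$.
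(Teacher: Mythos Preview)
Your proof is correct and follows essentially the same approach as the paper: both reduce to bounding $t\,y^{t-1}$ by $1/(1-y)$ via a telescoping/geometric-sum argument with $y=\sqrt{c(1-\delta\lambda_i)}$, and then lower-bound $1-y$ by $\delta\lambda_i/2$ and $(1-c)/2$ using the elementary estimate $1-\sqrt{a}\ge(1-a)/2$.
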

\begin{proof}
Note that
\begin{align*}
&t[c(1-\delta\lambda_i)]^{(t-1)/2}=\sum_{k=0}^{t-1}[c(1-\delta\lambda_i)]^{(t-1)/2}\le\sum_{k=0}^{t-1}[c(1-\delta\lambda_i)]^{k/2}=\frac{1-[c(1-\delta\lambda_i)]^{t/2}}{1-\sqrt{c(1-\delta\lambda_i)}}\\
&\le\frac1{1-\sqrt{c(1-\delta\lambda_i)}}\le\frac1{1-\sqrt{1-\delta\lambda_i}}\le\frac2{\delta\lambda_i},
\end{align*}
where the first inequality holds because $c(1-\delta\lambda_i)\le1$, the second inequality holds because $1-[c(1-\delta\lambda_i)]^{t/2}\le1$, the third inequality holds because $c\le1$, and the last inequality holds because $1-\sqrt{1-\delta\lambda_i}\ge\delta\lambda_i/2$. Similarly we have
\[
t[c(1-\delta\lambda_i)]^{(t-1)/2}\le\frac{1}{1-\sqrt{c(1-\delta\lambda_i)}}\le\frac1{1-\sqrt{c}}\le\frac2{1-c},
\]
where the second inequality holds because $1-\delta\lambda_i\le c$, and the last inequality holds because $1-\sqrt c\ge(1-c)/2$.
\end{proof}





\bibliography{refs}

\begin{thebibliography}{20}
\expandafter\ifx\csname natexlab\endcsname\relax\def\natexlab#1{#1}\fi
\expandafter\ifx\csname url\endcsname\relax
  \def\url#1{\texttt{#1}}\fi
\expandafter\ifx\csname urlprefix\endcsname\relax\def\urlprefix{URL }\fi

\bibitem[{Bartlett et~al.(2020)Bartlett, Long, Lugosi and
  Tsigler}]{bartlett2020benign}
\textsc{Bartlett, P.~L.}, \textsc{Long, P.~M.}, \textsc{Lugosi, G.} and
  \textsc{Tsigler, A.} (2020).
\newblock Benign overfitting in linear regression.
\newblock \textit{Proceedings of the National Academy of Sciences} .

\bibitem[{Berthier et~al.(2020)Berthier, Bach and Gaillard}]{berthier2020tight}
\textsc{Berthier, R.}, \textsc{Bach, F.} and \textsc{Gaillard, P.} (2020).
\newblock Tight nonparametric convergence rates for stochastic gradient descent
  under the noiseless linear model.
\newblock \textit{arXiv preprint arXiv:2006.08212} .

\bibitem[{D{\'e}fossez and Bach(2015)}]{defossez2015averaged}
\textsc{D{\'e}fossez, A.} and \textsc{Bach, F.} (2015).
\newblock Averaged least-mean-squares: Bias-variance trade-offs and optimal
  sampling distributions.
\newblock In \textit{Artificial Intelligence and Statistics}.

\bibitem[{Dieuleveut and Bach(2015)}]{DieuleveutB15}
\textsc{Dieuleveut, A.} and \textsc{Bach, F.~R.} (2015).
\newblock Non-parametric stochastic approximation with large step sizes.
\newblock \textit{The Annals of Statistics} .

\bibitem[{Dieuleveut et~al.(2017)Dieuleveut, Flammarion and
  Bach}]{dieuleveut2017harder}
\textsc{Dieuleveut, A.}, \textsc{Flammarion, N.} and \textsc{Bach, F.} (2017).
\newblock Harder, better, faster, stronger convergence rates for least-squares
  regression.
\newblock \textit{The Journal of Machine Learning Research} \textbf{18}
  3520--3570.

\bibitem[{Jain et~al.(2018)Jain, Kakade, Kidambi, Netrapalli and
  Sidford}]{jain2018accelerating}
\textsc{Jain, P.}, \textsc{Kakade, M.~S.}, \textsc{Kidambi, R.},
  \textsc{Netrapalli, P.} and \textsc{Sidford, A.} (2018).
\newblock Accelerating stochastic gradient descent for least squares
  regression.
\newblock \textit{COLT}  545--604.

\bibitem[{Jain et~al.(2017{\natexlab{a}})Jain, Kakade, Kidambi, Netrapalli,
  Pillutla and Sidford}]{jain2017markov}
\textsc{Jain, P.}, \textsc{Kakade, S.~M.}, \textsc{Kidambi, R.},
  \textsc{Netrapalli, P.}, \textsc{Pillutla, V.~K.} and \textsc{Sidford, A.}
  (2017{\natexlab{a}}).
\newblock A markov chain theory approach to characterizing the minimax
  optimality of stochastic gradient descent (for least squares).
\newblock \textit{arXiv preprint arXiv:1710.09430} .

\bibitem[{Jain et~al.(2017{\natexlab{b}})Jain, Netrapalli, Kakade, Kidambi and
  Sidford}]{jain2017parallelizing}
\textsc{Jain, P.}, \textsc{Netrapalli, P.}, \textsc{Kakade, S.~M.},
  \textsc{Kidambi, R.} and \textsc{Sidford, A.} (2017{\natexlab{b}}).
\newblock Parallelizing stochastic gradient descent for least squares
  regression: mini-batching, averaging, and model misspecification.
\newblock \textit{The Journal of Machine Learning Research} \textbf{18}
  8258--8299.

\bibitem[{Kidambi et~al.(2018)Kidambi, Netrapalli, Jain and
  Kakade}]{kidambi2018insufficiency}
\textsc{Kidambi, R.}, \textsc{Netrapalli, P.}, \textsc{Jain, P.} and
  \textsc{Kakade, S.} (2018).
\newblock On the insufficiency of existing momentum schemes for stochastic
  optimization.
\newblock In \textit{2018 Information Theory and Applications Workshop (ITA)}.
  IEEE.

\bibitem[{Liang and Rakhlin(2020)}]{liang2020just}
\textsc{Liang, T.} and \textsc{Rakhlin, A.} (2020).
\newblock Just interpolate: Kernel “ridgeless” regression can generalize.
\newblock \textit{The Annals of Statistics} \textbf{48} 1329--1347.

\bibitem[{Liu and Belkin(2018)}]{liu2018accelerating}
\textsc{Liu, C.} and \textsc{Belkin, M.} (2018).
\newblock Accelerating sgd with momentum for over-parameterized learning.
\newblock \textit{arXiv preprint arXiv:1810.13395} .

\bibitem[{Nesterov(2014)}]{nesterov2014introductory}
\textsc{Nesterov, Y.} (2014).
\newblock \textit{Introductory Lectures on Convex Optimization: A Basic
  Course}.
\newblock 1st ed. Springer Publishing Company, Incorporated.

\bibitem[{Nesterov(1983)}]{nesterov1983method}
\textsc{Nesterov, Y.~E.} (1983).
\newblock A method for solving the convex programming problem with convergence
  rate $o(\frac{1}{k^2})$.
\newblock In \textit{Dokl. Akad. Nauk SSSR,}, vol. 269.

\bibitem[{Polyak(1964)}]{polyak1964some}
\textsc{Polyak, B.~T.} (1964).
\newblock Some methods of speeding up the convergence of iteration methods.
\newblock \textit{Ussr computational mathematics and mathematical physics}
  \textbf{4} 1--17.

\bibitem[{Tsigler and Bartlett(2020)}]{tsigler2020benign}
\textsc{Tsigler, A.} and \textsc{Bartlett, P.~L.} (2020).
\newblock Benign overfitting in ridge regression.
\newblock \textit{arXiv preprint arXiv:2009.14286} .

\bibitem[{Varre and Flammarion(2022)}]{varre2022accelerated}
\textsc{Varre, A.} and \textsc{Flammarion, N.} (2022).
\newblock Accelerated sgd for non-strongly-convex least squares.

\bibitem[{Wang et~al.(2023)Wang, Malladi, Wang, Lyu and Li}]{wang2023marginal}
\textsc{Wang, R.}, \textsc{Malladi, S.}, \textsc{Wang, T.}, \textsc{Lyu, K.}
  and \textsc{Li, Z.} (2023).
\newblock The marginal value of momentum for small learning rate sgd.
\newblock \textit{arXiv preprint arXiv:2307.15196} .

\bibitem[{Wu et~al.(2022)Wu, Zou, Braverman, Gu and Kakade}]{wu2022iterate}
\textsc{Wu, J.}, \textsc{Zou, D.}, \textsc{Braverman, V.}, \textsc{Gu, Q.} and
  \textsc{Kakade, S.~M.} (2022).
\newblock Last iterate risk bounds of sgd with decaying stepsize for
  overparameterized linear regression.
\newblock \textit{The 39th International Conference on Machine Learning} .

\bibitem[{Zou et~al.(2021{\natexlab{a}})Zou, Wu, Braverman, Gu, Foster and
  Kakade}]{zou2021benefits}
\textsc{Zou, D.}, \textsc{Wu, J.}, \textsc{Braverman, V.}, \textsc{Gu, Q.},
  \textsc{Foster, D.~P.} and \textsc{Kakade, S.~M.} (2021{\natexlab{a}}).
\newblock The benefits of implicit regularization from sgd in least squares
  problems.
\newblock \textit{The 35th Conference on Neural Information Processing Systems}
  .

\bibitem[{Zou et~al.(2021{\natexlab{b}})Zou, Wu, Braverman, Gu and
  Kakade}]{zou2021benign}
\textsc{Zou, D.}, \textsc{Wu, J.}, \textsc{Braverman, V.}, \textsc{Gu, Q.} and
  \textsc{Kakade, S.~M.} (2021{\natexlab{b}}).
\newblock Benign overfitting of constant-stepsize sgd for linear regression.
\newblock \textit{The 34th Annual Conference on Learning Theory} .

\end{thebibliography}
\bibliographystyle{ims}

\end{document}